\documentclass[mnsc,nonblindrev]{informs3}

\OneAndAHalfSpacedXI % current default line spacing
\usepackage{natbib}
 \bibpunct[, ]{(}{)}{,}{a}{}{,}%
 \def\bibfont{\small}%

\definecolor{myblue}{RGB}{0, 20, 114}
\usepackage[hidelinks,colorlinks=true,urlcolor=myblue,linkcolor=myblue,citecolor=myblue]{hyperref}
\def\EMAIL#1{\href{mailto:#1}{#1}}

\def\theARTICLETOP{}
\def\theLRHFirstLine{}
\def\theLRHSecondLine{}
\def\theRRHFirstLine{}
\def\theRRHSecondLine{}
\usepackage[ruled,vlined]{algorithm2e}
\usepackage{tikz}
\usepackage{enumitem}
\usepackage{subcaption}
\TheoremsNumberedThrough     % Preferred (Theorem 1, Lemma 1, Theorem 2)
\ECRepeatTheorems

\EquationsNumberedThrough    % Default: (1), (2), ...
\MANUSCRIPTNO{}

\begin{document}
%%%%%%%%%%%%%%%%

% Outcomment only when entries are known. Otherwise leave as is and
%   default values will be used.
%\setcounter{page}{1}
%\VOLUME{00}%
%\NO{0}%
%\MONTH{Xxxxx}% (month or a similar seasonal id)
%\YEAR{0000}% e.g., 2005
%\FIRSTPAGE{000}%
%\LASTPAGE{000}%
%\SHORTYEAR{00}% shortened year (two-digit)
%\ISSUE{0000} %
%\LONGFIRSTPAGE{0001} %
%\DOI{10.1287/xxxx.0000.0000}%

% Author's names for the running heads
% Sample depending on the number of authors;
% \RUNAUTHOR{Jones}
% \RUNAUTHOR{Jones and Wilson}
% \RUNAUTHOR{Jones, Miller, and Wilson}
% \RUNAUTHOR{Jones et al.} % for four or more authors
% Enter authors following the given pattern:
%\RUNAUTHOR{}
\RUNAUTHOR{Hu, Hu and Zhou}

% Title or shortened title suitable for running heads. Sample:
% \RUNTITLE{Predictive Maintenance in Manufacturing}
% Enter the (shortened) title:
\RUNTITLE{Optimal Data Acquisition for Reinforcement Learning}

% Full title. Sample:
% \TITLE{Optimal Resource Allocation in Humanitarian Logistics: A Stochastic Programming Approach}
% Enter the full title:
\TITLE{Optimal Data Acquisition for Reinforcement Learning: A Large Deviations Perspective}

% Block of authors and their affiliations starts here:
% NOTE: Authors with same affiliation, if the order of authors allows,
%   should be entered in ONE field, separated by a comma.
%   \EMAIL field can be repeated if more than one author
\ARTICLEAUTHORS{%
\AUTHOR{Mingjie Hu}
\AFF{School of Management, Fudan University \\
H. Milton Stewart School of Industrial and Systems Engineering, Georgia Institute of Technology\\
\EMAIL{23110690009@m.fudan.edu.cn}}
\AUTHOR{Jian-Qiang Hu}
\AFF{School of Management, Fudan University, \EMAIL{hujq@fudan.edu.cn }}\AUTHOR{Enlu Zhou}
\AFF{H. Milton Stewart School of Industrial and Systems Engineering, Georgia Institute of Technology, \EMAIL{enlu.zhou@isye.gatech.edu}}
% Enter all authors
} % end of the block

\ABSTRACT{%
% Enter your abstract
Data acquisition efficiency is a central challenge in deploying reinforcement learning in business and healthcare operations, where interactions are costly, slow, and often involve humans in the loop. This paper develops a unified large deviations framework for data acquisition in infinite-horizon reinforcement learning. We introduce the exponential decay rate of the policy-selection error probability as a principled efficiency metric and derive a variational characterization of this rate via large deviations theory for Markov chains, yielding a nested optimization problem. Based on this characterization, we formalize two complementary notions of optimality in terms of the optimal solution of the nested problem. Because the resulting program is implicit and generally intractable, we propose a tractable convex relaxation with explicit constraints. We then develop a lazy one-step projected subgradient method to solve the relaxed problem and use its iterates to construct an adaptive data acquisition policy. We prove that the resulting reinforcement learning algorithm is near-robustly optimal under our optimality criterion, up to a constant factor. Finally, we extend the framework to linear function approximation to improve scalability, and numerical experiments support the effectiveness of the proposed approach.
}%

% \FUNDING{This research was supported by [grant number, funding agency].}

%Supplemental Material:
%Data Ethics & Reproducibility Note:

% Sample
%\KEYWORDS{Stochastic programming, Decision support,Uncertainty, Disaster response, Optimization}

% Fill in data. If unknown, outcomment the field
\KEYWORDS{Data acquisition, Reinforcement learning, Large deviations theory, Markov decision process} 
%\HISTORY{Received: Month DD, YYYY; Accepted: Month DD, YYYY; Published Online: Month DD, YYYY}

\maketitle
%%%%%%%%%%%%%%%%%%%%%%%%%%%%%%%%%%%%%%%%%%%%%%%%%%%%%%%%%%%%%%%%%%%%%%

% Text of your paper here

\section{Introduction}\label{sec:Intro}

Reinforcement learning studies how an agent can learn an optimal control policy through sequential interaction with an unknown, stochastic environment, typically modeled as a Markov decision process (MDP). As a core paradigm in artificial intelligence, RL has become a central framework for decision-making under uncertainty. Over the past decade, it has delivered striking successes across diverse domains, including Go \citep{silver2016mastering}, recommendation systems \citep{zhao2023kuaisim}, robotics, and large language model training \citep{guo2025deepseek}. 

Although reinforcement learning has achieved remarkable performance in domains such as Go and robotics, where agents can generate essentially unlimited experience by interacting with a simulated environment, its deployment in business and healthcare operations is far more challenging. In these settings, data collection often requires field experiments or human-in-the-loop interactions, making each additional sample costly and time-consuming. As a result, data acquisition efficiency becomes a primary consideration. Standard reinforcement learning algorithms, which typically overlook the cost of operational experimentation, can therefore be inefficient, impractical, and, in some cases, unacceptable.

\begin{example}[Operational Experiment Design]
    Consider a company preparing to launch a new product. Before the full roll-out, the company runs a short staged experiment to learn an effective selling policy. In each period, it observes the current market context (e.g., traffic composition, early retention signals, and inventory pressure) and selects operational actions such as discount levels and recommendation exposure. Collecting data is costly and slow because it requires real users and operational resources and may degrade user experience. The goal of the experiment is pure exploration: identify a high-performing policy with minimal interactions, and then deploy it at scale after the roll-out.
\end{example}

\begin{example}[Preclinical Treatment Experiment] 
Consider the preclinical evaluation of a multi-stage treatment protocol. Researchers conduct controlled animal studies to learn an effective treatment policy. At each stage, physiological measurements (e.g., biomarkers and vital signs) summarize the subject’s health status and guide subsequent interventions, such as selecting a drug, adjusting dosage, or switching therapies. These experiments are costly and time-consuming because each sample corresponds to a full treatment trajectory requiring monitoring and lab assays. The goal is to identify a high-performing policy using as few experimental trajectories as possible, and then advance the selected protocol to subsequent clinical testing.
\end{example}

There is also a growing literature on improving the data efficiency of reinforcement learning, often studied under the $\delta$-probably approximately correct (PAC) paradigm. In this line of work, the learner is typically allowed to interact with the environment until a prescribed confidence requirement is met, for example, returning an optimal (or near-optimal) policy whose probability of correct selection (PCS) exceeds a target threshold. While this fixed-confidence formulation is theoretically appealing, it is often ill-suited to operational settings in which experimentation cost is the binding constraint and the total number of interactions is predetermined, as in fixed-budget deployments. Moreover, meeting a stringent confidence target typically requires conservative exploration and stopping rules to control the error probability, which can introduce substantial statistical conservatism and lead to higher-than-necessary sample usage in practice. Therefore, the resulting algorithms and technical tools may not be well-suited to the fixed-budget setting.

Developing data-efficient acquisition algorithms for reinforcement learning under a fixed-budget formulation raises several fundamental challenges. First, unlike the fixed-confidence $\delta$-PAC paradigm, the fixed-budget formulation requires an efficiency criterion that directly characterizes the probability of outputting an optimal policy after a given number of interactions. Such a criterion should admit a sharp, closed-form characterization and explicitly identify the fundamental factors that determine how quickly the success probability improves as the budget increases. Second, the data acquisition process is fully adaptive: as new data arrive, the model estimate changes, and the acquisition policy must be updated accordingly. This feedback loop couples estimation and control over time, making both theoretical analysis and algorithm design substantially more challenging. Third, the fixed-budget regime shifts the goal from certifying correctness to minimizing the error probability under an adaptive, nonstationary sampling process. This departs from the classical fixed-confidence change-of-measure framework and calls for new analytical tools that optimize information accumulation along a time-varying, data-dependent trajectory.

\subsection{Main Contributions}
\subsubsection{Efficiency Metric.} We propose a new efficiency measure for fixed-budget data acquisition in reinforcement learning: the exponential decay rate of the probability of false selection (PFS). This criterion is analytically convenient because it admits a large deviations characterization, and it is operationally meaningful because it directly quantifies how quickly the policy-identification error decreases as the budget increases. It also identifies the fundamental instance-dependent factors that govern data acquisition efficiency. Using large deviations theory for Markov chains, we derive a variational representation of this rate through a nested optimization problem. To the best of our knowledge, this is the first large-deviations-based efficiency characterization for data acquisition in reinforcement learning, and it provides a concrete optimization target for designing provably efficient data acquisition policies.

\subsubsection{Notions of Optimality.} Based on this variational characterization, we introduce two notions of optimality for data acquisition: exact optimality and robust optimality. Exact optimality is instance-wise: it requires the algorithm to be consistent and its empirical data acquisition policy to converge to an optimal solution of the nested program. This requirement is generally unrealistic because the nested problem is rarely tractable; its objective is implicit, and its constraints are complex. This limitation motivates robust optimality, which requires only consistency and asymptotically optimal sampling on hard MDP instances, where identifying the optimal policy is statistically most challenging. Robust optimality, therefore, provides a practical and theoretically grounded benchmark for comparing reinforcement learning algorithms.

\subsubsection{Algorithm.} We begin the algorithm design by addressing the nested optimization problem in the rate function. Leveraging its structural properties, we derive a tractable convex surrogate problem, which directly motivates a new optimization-guided data acquisition strategy for reinforcement learning. Because the surrogate objective is non-smooth, we solve it using subgradient descent within a joint estimation-optimization loop: the MDP is updated online using the collected data, and the sampling policy is adjusted accordingly. To keep the computation lightweight, we adopt a lazy update scheme that performs only one subgradient step at selected times. We prove that the resulting algorithm is near-robustly optimal, up to a multiplicative factor of $1-\gamma$, where $\gamma$ denotes the discount factor. Our algorithm and convergence analysis use tools from convex optimization in a new way and may be of independent interest for other data acquisition problems in experimental design. To address large-scale problems, we further extend our results to the function-approximation setting. Specifically, we adopt the linear MDP framework and derive an analogous convex optimization program. The algorithm and analysis then carry over, yielding a principled approach to data acquisition in this setting.

\subsubsection{Empirical Validity.} Finally, we evaluate our method on a standard Gridworld benchmark and an operational experiment-design case study. In both settings, our method consistently achieves higher policy value and substantially higher correct-selection accuracy than state-of-the-art model-free and model-based baselines under the same budget. These results demonstrate clear gains in data acquisition efficiency. 

\subsection{Literature Review}
\subsubsection{Reinforcement Learning.} A large body of work studies reinforcement learning to achieve optimal regret \citep{jin2018q, azar2017minimax}. Regret is most natural in online learning, where the learner must balance exploration and exploitation while accruing reward. In contrast, we focus on a pure-exploration setting that prioritizes data acquisition efficiency to identify the optimal policy as quickly as possible. Closely related is the literature on $\delta$-PAC reinforcement learning \citep{fiechter1994efficient}, which adopts a fixed-confidence formulation: the objective is to identify an optimal (or near-optimal) policy with probability at least $1-\delta$ while minimizing the required number of samples, so performance is typically quantified by sample complexity. Depending on the MDP model, existing results can be grouped into finite-horizon episodic settings \citep{domingues2021episodic, tirinzoni2022near}, infinite-horizon discounted settings \citep{zanette2019almost,al2021adaptive,al2021navigating,russo2023model}, and infinite-horizon average-reward settings \citep{jin2021towards, wang2022near,tuynman2024finding}. Extensions to improve scalability under linear function approximation have also been studied \citep{wagenmaker2022instance, taupin2023best}. 

However, our work is fundamentally different because we study the fixed-budget setting. Fixed-confidence algorithms terminate once a stopping rule certifying the target error level is met; such rules are typically conservative and can lead to higher-than-necessary sample usage. In addition, our theoretical approach departs from the standard $\delta$-PAC analysis: sample complexity results are usually derived via change-of-measure arguments from a hypothesis-testing viewpoint, whereas we adopt a large deviations framework that directly characterizes error exponents. This perspective leads to different notions of optimality, informs algorithm design through variational characterizations, and yields a distinct route to performance guarantees.

\subsubsection{Ranking and Selection.} Our work is also related to fixed-budget ranking and selection in the simulation literature, where large deviations theory is used to characterize data acquisition efficiency. Ranking and selection can be viewed as a special case of reinforcement learning with a single state: the objective is to identify the best action when each action’s performance follows an unknown distribution. \cite{glynn2004large} established the large-deviation characterization for this classical setting using the G\"{a}rtner-Ellis theorem~\citep{dembo2009large}. Subsequent work has extended this framework to settings with stochastic constraints \citep{hunter2013optimal,hu2024multi}, alternative performance criteria such as expected opportunity cost \citep{gao2017new}, tom-$m$ selection \citep{zhang2023asymptotically}, contextual information \citep{du2024contextual}, similarity information \citep{zhou2024sequential,hu2024optimal}, multi-objective selection \citep{xiao2024optimal}, and input uncertainty \citep{wang2025optimal, kim2025selection}.

However, reinforcement learning is substantially more complex than ranking and selection, so extending the large deviations framework is nontrivial and introduces new technical and algorithmic challenges. Unlike the independent and identically distributed (i.i.d.) setting, data are generated by a Markov chain induced by the behavior policy, and uncertainty arises jointly from transitions and rewards, which enter the value function through a highly nonlinear dependence. Moreover, the rate function involves an intractable nested optimization, making exact optimality largely unattainable, and the Bellman flow constraints prevent arbitrary sampling of state-action pairs, which can render the optimal sampling ratios non-unique and complicate convergence analysis. Finally, scalability is critical, as computational cost grows rapidly with the size of the state-action space, requiring highly efficient algorithms.

Three papers are closely related to ours. \citet{li2021optimal} extends classical ranking and selection methods to identify the best action at the root in Monte Carlo tree search, which can be viewed as a finite-horizon MDP problem and differs from our formulation. \citet{zhu2024uncertainty} establishes central limit theorem behavior for estimated $Q$-values in infinite-horizon MDPs and proposes an exploration policy based on relative discrepancy. \citet{shi2025sample} further extends these results to asynchronous $Q$-iterations. Our contribution differs from these works in three key respects. First, while these papers primarily emphasize uncertainty quantification, we focus on data acquisition efficiency and develop a unified large deviations framework that supports optimality definitions, algorithm design, and performance guarantees. Second, their exploration policies are driven by heuristic discrepancy criteria without an explicit optimality characterization, whereas our approach is derived from a variational nested optimization characterization of the error decay rate and admits near-optimality guarantees. Third, we introduce a lazy one-step projected subgradient algorithm that is computationally lightweight, performing updates only at selected time steps rather than repeatedly solving an optimization problem to high accuracy. Empirically, our method also achieves higher sample efficiency and better scalability on larger problem instances.

\subsubsection{Adaptive Experiment Design.} Our method can also be viewed through the lens of adaptive experimental design, which is widely used in industry and has attracted substantial recent attention in academia \citep{johari2022experimental,bastani2022meta,liu2024large,chen2025efficient}.
In the literature, experiments are designed sequentially to collect data toward operational goals such as estimating average treatment effects, identifying the best decision, learning causal effects, or minimizing cumulative regret \citep{simchi2023multi,simchi2023pricing}. In contrast, we study a multi-stage experimental design problem naturally modeled as an MDP. We introduce a principled fixed-budget efficiency measure and corresponding optimality notions, and develop an optimization-guided algorithm with performance guarantees. These tools may be of independent interest and could be adapted to improve data acquisition efficiency in other adaptive experimental design settings.

The paper is organized as follows. Section \ref{sec: MDP} formulates the infinite-horizon discounted MDP setting, introduces our efficiency measure, and defines two notions of optimality. Section \ref{sec: algorithm} presents the algorithm and establishes its optimality guarantees. Section \ref{sec: large scale} extends the framework to large-scale reinforcement learning. Section \ref{sec: case study} reports numerical results that validate the algorithm’s performance. Finally, Section \ref{sec: conslusion} concludes the paper. All technical proofs are deferred to the Electronic Companion.

\section{Data Acquisition for Reinforcement Learning}
\label{sec: MDP}
In this section, we first introduce the infinite-horizon discounted MDP formulation. We then propose an efficiency measure for data acquisition in reinforcement learning from a large deviations perspective. Finally, we conclude by introducing two notions of optimality: exact optimality and robust optimality, which serve as benchmarks for evaluating reinforcement learning algorithms. 

\subsection{Infinite Horizon Discounted MDPs}
Consider an infinite-horizon, time-homogeneous, discounted tabular MDP defined by the tuple $\mathcal{M}:= (\mathcal{S},\mathcal{A}, P_{\mathcal{M}},r_{\mathcal{M}},\gamma)$, where 
\begin{itemize}
    \item[$\mathcal{S}$:] the finite state space of size $S$;
    \item[$\mathcal{A}$:] the finite action space of size $A$;
    \item[$P_{\mathcal{M}}$:] the transition kernel with $P_{\mathcal{M}}(s^\prime|s, a)$ being the probability of transitioning from state $s$ to state $s^\prime$ after taking action $a$;
    \item[$r_{\mathcal{M}}$:] the expected reward, with $r_{\mathcal{M}}(s,a)=\mathbb{E}[R_{\mathcal{M}}(s, a)]$, where $R_{\mathcal{M}}(s,a)$ is the random reward obtained when action $a$ is taken in state $s$, supported on $[0,1]$;
    \item[$\gamma$:] the discount factor $\in [0,1)$
\end{itemize}
%$\mathcal{S}$ denotes the finite state space of size $S$, $\mathcal{A}$ the finite action space of size $A$, $P_{\mathcal{M}}$ the transition kernel with $P_{\mathcal{M}}(s^\prime|s, a)$ being the probability of transitioning from state $s$ to state $s^\prime$ after taking action $a$. Let $R_{\mathcal{M}}(s,a)$ be the random reward obtained when action $a$ is taken in state $s$, supported on $[0,1]$; the expected reward is then $r_{\mathcal{M}}(s,a) = \mathbb{E}[R_{\mathcal{M}}(s, a)]$. The bounded-reward assumption is standard in the infinite-horizon MDP literature~\citep{puterman2014markov}. The scalar $\gamma \in [0,1)$ denotes the discount factor. A (stationary) policy is a mapping from the state space to distributions over actions, i.e., $\pi:\mathcal{S}\rightarrow\Delta(\mathcal{A})$. 
For a given policy $\pi$, the value function and state-action value function (Q-function) are defined as
\begin{equation*}
    V_{\mathcal{M}}^{\pi}(s) := \mathbb{E}_{\mathcal{M}}\left[\sum_{t=0}^{\infty}\gamma^tr_{\mathcal{M}}(s^{\pi}_t,a_t^{\pi})\bigg|s^{\pi}_0 = s\right]
\end{equation*}
and
\begin{equation*}
    Q_{\mathcal{M}}^{\pi}(s,a) := r_{\mathcal{M}}(s,a)+\gamma\sum_{s^\prime\in\mathcal{S}}P_{\mathcal{M}}(s^\prime|s,a)V_{\mathcal{M}}^{\pi}(s^\prime),
\end{equation*}
respectively, where $s_t^\pi$ and $a_t^\pi$ denote the state and action at time $t$ under policy $\pi$, respectively, and $\mathbb{E}_{\mathcal{M}}[\cdot]$ denotes expectation with respect to the randomness induced by $\mathcal{M}$ and $\pi$. Our goal is to identify the optimal control policy $\pi_{\mathcal{M}}^*$ that maximizes the value function. Let $V^*_{\mathcal{M}}$ denote the optimal value function of $\mathcal{M}$ and $Q^*_{\mathcal{M}}$ the corresponding optimal Q-function. Since a deterministic \textcolor{black}{stationary} Markovian policy can attain the optimal value in our setting \citep{puterman2014markov}, we focus on identifying an optimal policy within this class. Throughout, we assume that the optimal control policy is unique, and we discuss how to extend our results to identifying an $\epsilon$-optimal policy in Section \ref{sec: epsilon-optimal}. 

Throughout this paper, both the transition kernel $P_{\mathcal{M}}$ and the reward distribution $R_{\mathcal{M}}$ are unknown, placing the problem in a reinforcement learning setting that can be viewed as an adaptive data acquisition process. At each time step, the agent selects an action, observes the resulting transition and reward, and uses the accumulated samples to estimate an optimal control policy. In applications such as operational decision-making and preclinical treatment experiments, this task is particularly challenging because interactions are costly and data acquisition efficiency is critical. Moreover, the fixed-budget setting lacks a suitable efficiency measure to guide algorithm design and analysis.

To address these challenges, we adopt a fixed-budget framework in which the agent is allowed a total of $T$ interactions (samples). A learning algorithm is specified by a sampling rule and a decision rule, and the interaction proceeds sequentially. At the beginning of time $t$, the agent observes the current state $s_t$ and the history up to time $t$, $\mathcal{H}_t = \{s_0,a_0,R_0,s_1,\ldots,R_{t-1},s_t\}$, where $R_{l}$ denotes the realized reward at time $l$. Based on the observed history, the sampling rule selects an action $a_t\in\mathcal{A}$, and uses one unit of the budget. The environment then generates a reward according to $R_{\mathcal{M}}(s_t,a_t)$ and transitions to the next state $s_{t+1}\sim P_{\mathcal{M}}(\cdot|s_t,a_t)$ according to the unknown transition kernel. Once the total budget is exhausted, the algorithm terminates, and the decision rule outputs an estimated optimal policy based on all collected data.

\subsection{Efficiency Measure for Data Acquisition}

In this subsection, we quantify the efficiency of data acquisition in reinforcement learning from a large deviations perspective.
Under a fixed-budget setting, a natural performance metric is the PCS, defined as $\mathbb{P}_{\mathcal{M}}(\hat{\pi}_{T}= \pi^*_{\mathcal{M}})$. An optimal data acquisition method would maximize this quantity by suitably designing its sampling and decision rules. However, for a finite budget $T$, the PCS is generally difficult to characterize because the sampling process is adaptive and history-dependent. Therefore, this probability is not well-suited for theoretical analysis. To overcome this difficulty, we study the large-deviation behavior of the PFS and define data acquisition efficiency by its asymptotic exponential decay rate:
\begin{equation*}
    \lim_{T\rightarrow \infty} -\frac{1}{T}\log \mathbb{P}_{\mathcal{M}}\left(\hat{\pi}_{T}\neq \pi^*_{\mathcal{M}}\right) = I,
\end{equation*}
where $I$ is the associated error decay rate. Compared with the PCS, this criterion has three advantages: (1) $I$ often admits a tractable characterization via large deviations theory; (2) it identifies the fundamental instance-dependent factors that govern data acquisition efficiency; and (3) it yields an asymptotic notion of optimality, in which an efficient method drives the PFS to zero at the fastest exponential rate. This measure is widely used in the fixed-budget ranking and selection literature~\citep{glynn2004large,hunter2013optimal,gao2017new} to quantify the data acquisition efficiency of ranking and selection algorithms. 

Establishing a closed-form expression for the rate function $I$ is relatively straightforward in the ranking and selection setting. There, the false selection event can be written as the union of pairwise comparison events, and the overall exponential decay rate is determined by the bottleneck event with the slowest decay rate.  In reinforcement learning, however, deriving the corresponding rate function is substantially more challenging. First, such a pairwise decomposition is no longer tractable because the policy space grows exponentially with the number of state-action pairs. Second, state-action samples in an MDP are not i.i.d., i.e., they are generated by the dynamics of an underlying Markov chain, so standard large deviations principles for i.i.d.\ observations do not directly apply. Third, unlike classical ranking and selection, where uncertainty typically arises from reward noise, reinforcement learning must account for uncertainty in both the reward function and the transition kernel. Therefore, the rate function must jointly capture the large-deviation effects of both reward and transition uncertainty.

To derive the rate function $I$, we first specify the decision rule. Let the empirical MDP after $T$ interactions be $\bar{\mathcal{M}}(T) = (\mathcal{S},\mathcal{A}, P_{\bar{\mathcal{M}}(T)},R_{\bar{\mathcal{M}}(T)},\gamma)$, where $P_{\bar{\mathcal{M}}(T)}$ and 
$R_{\bar{\mathcal{M}}(T)}$ are the empirical transition kernel and empirical mean reward function estimated from the sampled trajectory $\{(s_t,a_t,r_t,s_{t+1})\}_{t=1}^{T}$.  For each $(s,a)\in\mathcal{S}\times\mathcal{A}$, define the visitation count up to time $T$ as
\begin{equation*}
    N(s,a;T) := \sum_{t=1}^{T} \mathbb{I}\{(s_t,a_t) = (s,a)\},
\end{equation*}
where $\mathbb{I}(\cdot)$ denotes the indicator function. Let $X_{\mathcal{M}}(s,a) = (X_1, \ldots,X_{S})$ denote the random basis vector corresponding to the next state after taking action $a$ in state $s$: $X_{s^\prime} = 1$ if the realized next state is $s^\prime \in \mathcal{S}$, and $X_{u}=0$ for all $u\neq s^\prime$. Then, $\mathbb{E}[X_{\mathcal{M}}(s,a)] = P_\mathcal{M}(s,a)$,
where $P_\mathcal{M}(s,a) = (P_{\mathcal{M}}(s^\prime|s,a))_{s^\prime \in \mathcal{S}}$ denotes the transition probability vector. The empirical transition probabilities are defined component-wise as
\begin{equation}
\label{eq: empirical_transitions}  
P_{\bar{\mathcal{M}}(T)}(s^\prime|s,a) :=  \frac{\sum_{t=1}^{T} \mathbb{I}\{(s_t,a_t,s_{t+1}) = (s,a,s^\prime)\}}{N(s,a;T)}
\end{equation}
whenever $N(s,a;T)>0$, and are set to zero otherwise. Similarly, the empirical expected rewards are defined as
\begin{equation}
\label{eq: empirical_rewards}  
 R_{\bar{\mathcal{M}}(T)}(s,a) :=  \frac{\sum_{t=1}^{T}R_{\mathcal{M}}(s_t,a_t)\mathbb{I}\{(s_t,a_t) = (s,a)\}}{N(s,a;T)} 
\end{equation}
whenever $N(s,a;T)>0$, and are set to zero otherwise. Finally, the decision rule $\hat{\pi}_{T}$ is defined as an optimal policy for the empirical MDP $\bar{\mathcal{M}}(T)$, computed for example via value iteration or policy iteration. 

To characterize the asymptotic PFS, we first circumvent the combinatorial complexity of the policy space by invoking the policy improvement theorem. This reformulates the global error event as a union of local improvement events:
\begin{equation*}
    \left\{\bigcup_{s\in \mathcal{S},a\in \mathcal{A}\setminus\{\pi^*_{\mathcal{M}}(s)\}}
    Q^{\pi^*_{\mathcal{M}}}_{\bar{\mathcal{M}}(T)}(s,a)>V^{\pi^*_{\mathcal{M}}}_{\bar{\mathcal{M}}(T)}(s)\right\}.
\end{equation*}
This reduction shifts the analysis from the high-dimensional policy space to local state-action deviations under the empirical MDP $\bar{\mathcal{M}}(T)$.
Building on this decomposition, Lemma~\ref{lemma: rate function G} relates the global large deviations rate to the corresponding local rates. 

Throughout this paper, we assume that $\bar{\mathcal{M}}(T)$ satisfies the relevant regularity conditions of $\mathcal{M}$, including ergodicity and uniqueness of the optimal policy. This assumption rules out finite-sample pathologies in which the empirical MDP may be ill-defined or degenerate. Since $\bar{\mathcal{M}}(T)$ converges to $\mathcal{M}$ under sufficient exploration, one can enforce this assumption through a uniformly exploratory data-collection policy without changing the asymptotic analysis. Similar treatments of finite-sample irregularities are common in ranking and selection \citep{du2024contextual}.
\begin{lemma}
\label{lemma: rate function G}
Assume that for each $s\in \mathcal{S}$ and each suboptimal action $a\in \mathcal{A}\setminus \{\pi^*_{\mathcal{M}}(s)\}$, the local large deviations limit exists:
\begin{equation*}
    \lim_{T\rightarrow \infty} -\frac{1}{T}\log \mathbb{P}\left(Q^{\pi^*_{\mathcal{M}}}_{\bar{\mathcal{M}}(T)}(s,a)>V^{\pi^*_{\mathcal{M}}}_{\bar{\mathcal{M}}(T)}(s)\right) = \mathcal{G}_{s,a}.
\end{equation*}
Then, the rate function for the PFS is governed by the minimum of these local rates:
\begin{equation*}
    \lim_{T\rightarrow \infty}-\frac{1}{T}\log\mathbb{P}\left(\hat{\pi}_{T}\neq \pi^*_\mathcal{M}\right) = \min_{s\in \mathcal{S}, a\neq \pi_{\mathcal{M}}^*(s)} \mathcal{G}_{s,a}.
\end{equation*}
\end{lemma}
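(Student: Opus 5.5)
The plan is to show that the false-selection event $\{\hat{\pi}_T\neq \pi^*_{\mathcal{M}}\}$ coincides (up to ties) with the union of local improvement events $E_{s,a}(T):=\{Q^{\pi^*_{\mathcal{M}}}_{\bar{\mathcal{M}}(T)}(s,a)>V^{\pi^*_{\mathcal{M}}}_{\bar{\mathcal{M}}(T)}(s)\}$. With that identity in hand, the standard ``principle of the largest term'' for a finite union gives the rate. The union is over at most $S(A-1)$ events, a number that does not depend on $T$.

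\textbf{Step 1 (event identity).} I will apply the policy improvement theorem in the empirical MDP $\bar{\mathcal{M}}(T)$, which by the standing assumption is well defined and has a unique optimal policy.

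First suppose some $E_{s,a}(T)$ occurs. Then the policy that agrees with $\pi^*_{\mathcal{M}}$ except at $s$, where it plays $a$, strictly improves on $\pi^*_{\mathcal{M}}$ in $\bar{\mathcal{M}}(T)$. Hence $\pi^*_{\mathcal{M}}$ is not optimal there, and $\hat{\pi}_T\neq\pi^*_{\mathcal{M}}$.

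Conversely, suppose $Q^{\pi^*_{\mathcal{M}}}_{\bar{\mathcal{M}}(T)}(s,a)\le V^{\pi^*_{\mathcal{M}}}_{\bar{\mathcal{M}}(T)}(s)$ for all $s$ and all $a$. Then $V^{\pi^*_{\mathcal{M}}}_{\bar{\mathcal{M}}(T)}$ satisfies the Bellman optimality equation of $\bar{\mathcal{M}}(T)$, so $\pi^*_{\mathcal{M}}$ is optimal in $\bar{\mathcal{M}}(T)$. By uniqueness, $\hat{\pi}_T=\pi^*_{\mathcal{M}}$.

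This establishes
\begin{equation*}
\{\hat{\pi}_T\neq\pi^*_{\mathcal{M}}\}=\bigcup_{s,\,a\neq\pi^*_{\mathcal{M}}(s)}E_{s,a}(T).
\end{equation*}
The only delicate point is equality cases $Q=V$. Under uniqueness of the optimal policy in $\bar{\mathcal{M}}(T)$, such a tie would produce a second optimal policy, so ties are excluded and the equality above holds exactly.

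\textbf{Step 2 (sandwich bounds).} Let $\mathcal{G}^*:=\min_{s,\,a\neq\pi^*_{\mathcal{M}}(s)}\mathcal{G}_{s,a}$ and $K:=S(A-1)$.

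For the lower bound on the probability, $\mathbb{P}(\hat{\pi}_T\neq\pi^*_{\mathcal{M}})\ge \mathbb{P}(E_{s,a}(T))$ for every pair $(s,a)$. This yields
\begin{equation*}
\limsup_{T\to\infty} -\frac{1}{T}\log\mathbb{P}(\hat{\pi}_T\neq\pi^*_{\mathcal{M}})\le \mathcal{G}_{s,a}\quad\text{for all }(s,a),
\end{equation*}
and hence the $\limsup$ is at most $\mathcal{G}^*$.

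For the upper bound, the union bound gives
\begin{equation*}
\mathbb{P}(\hat{\pi}_T\neq\pi^*_{\mathcal{M}})\le K\max_{s,a}\mathbb{P}(E_{s,a}(T)).
\end{equation*}
Therefore
\begin{equation*}
-\frac{1}{T}\log\mathbb{P}(\hat{\pi}_T\neq\pi^*_{\mathcal{M}})\ge -\frac{\log K}{T}+\min_{s,a}\Bigl(-\frac{1}{T}\log\mathbb{P}(E_{s,a}(T))\Bigr).
\end{equation*}
Since the index set is finite, the $\liminf$ of a minimum equals the minimum of the limits. Together with the assumed existence of each local limit, this gives a $\liminf$ of at least $\mathcal{G}^*$. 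Combining the two bounds, the limit exists and equals $\mathcal{G}^*$.

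\textbf{Main obstacle.} Step 2 is routine, so the main obstacle is Step 1, namely justifying the event identity rigorously. This requires care on two points. The first is the exact form of policy improvement for randomized versus deterministic policies. The second is that strict improvement at a single state propagates to a strict inequality in value, which holds at that state via the discounted Bellman operator. I would first handle this through the monotonicity and contraction of the Bellman operator $T^{\pi}$. If exact equality of events fails on some null or tie set, it suffices to show that set has probability zero or is contained in the union, since only the two one-sided inclusions are used.
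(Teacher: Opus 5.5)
Your proposal is correct and follows essentially the same route as the paper. Both proofs establish the event identity using the Bellman optimality equation in one direction and policy improvement in the other; you switch the action at a single state where the paper uses the full greedy policy, which is an inessential difference. Both then sandwich the false-selection probability between $\max_{s,a}\mathbb{P}(E_{s,a}(T))$ and $S(A-1)\max_{s,a}\mathbb{P}(E_{s,a}(T))$. Your remark about ties is harmless but unnecessary, since the converse direction already covers the non-strict case $Q\le V$.
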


Lemma~\ref{lemma: explicit G} provides an explicit variational representation of the local rate $\mathcal{G}_{s,a}$ under a fixed behavior policy $\pi$ used for data collection. This result is technically challenging because the observations are generated along a Markov trajectory and are therefore temporally dependent. The main \textcolor{black}{technical} novelty is to bring large deviations principles for Markov chains into the analysis of reinforcement learning data acquisition. Specifically, we first characterize the limiting log-moment generating function as a log-spectral radius using the Perron-Frobenius theorem~\citep{dembo2009large}, and then establish the large deviations principle through the G\"{a}rtner-Ellis theorem. We finally apply the Donsker-Varadhan variational formula to express the spectral radius in terms of local rate functions for transition and reward uncertainty. To state the result, we introduce the following notation. For any transition kernel $x$ and mean reward function $y$, let $\tilde{\mathcal{M}}:=(\mathcal{S},\mathcal{A},x,y,\gamma)$ denote the corresponding MDP. Define
\begin{equation}
\label{eq:error_event}
\mathcal{E}_{s,a}
:=
\left\{
(x,y):
Q^{\pi^*_{\mathcal M}}_{\tilde{\mathcal M}}(s,a)
>
V^{\pi^*_{\mathcal M}}_{\tilde{\mathcal M}}(s)
\right\},
\end{equation}
the set of alternative models under which  \(\pi^*_{\mathcal M}\) is not optimal.
For a given transition kernel \(x\), let \(\mathcal F_\pi(x)\) denote the set of stationary state-action distributions induced by \(x\) under the behavior policy \(\pi\):
$$
\mathcal F_\pi(x)
:=
\left\{
\eta_1\in\Omega:
\eta_1(s',a')
=
\sum_{s\in\mathcal S,a\in\mathcal A}
\eta_1(s,a)\,x(s'|s,a)\,\pi(a'|s'),
\ \forall (s',a')
\right\},
$$
where 
$$
\Omega
:=
\left\{
\eta_1\in\mathbb R^{S\times A}:
\eta_1(s,a)\ge 0,\ \forall (s,a),
\sum_{s\in\mathcal S,a\in\mathcal A}\eta_1(s,a)=1
\right\},
$$
\textcolor{black}{where \(\eta_1\) denotes a stationary state-action distribution induced by the behavior policy \(\pi\) under the transition kernel \(x\).}

We introduce the formal definition of ergodicity in Definition~\ref{def:ergodic-mdp}. This condition is widely used in MDP analysis and ensures well-behaved long-run dynamics \citep{puterman2014markov}.

\begin{definition}
\label{def:ergodic-mdp}
An MDP is called \emph{ergodic} if the transition matrix under every deterministic stationary policy consists of a single recurrent class.
\end{definition}

\begin{lemma}
\label{lemma: explicit G}
Suppose that the MDP $\mathcal M$ is ergodic and that the fixed behavior policy $\pi$ has full support. Then, for each \(s\in\mathcal S\) and each suboptimal action \(a\in\mathcal A\setminus\{\pi^*_{\mathcal M}(s)\}\),
$$
\mathcal G_{s,a}
=
\inf_{(x,y)\in\mathcal E_{s,a}}
\inf_{\eta_1\in\mathcal F_\pi(x)}
\sum_{s'\in\mathcal S,a'\in\mathcal A}
\eta_1(s',a')
\Big(
I_1\big(x(s',a')\big)
+
I_2\big(y(s',a')\big)
\Big),
$$
where \(I_1\big(x(s',a')\big)\) and \(I_2\big(y(s',a')\big)\) are the Fenchel-Legendre transforms of the logarithmic moment generating functions of \(X_{\mathcal M}(s',a')\) and \(R_{\mathcal M}(s',a')\), respectively.
\end{lemma}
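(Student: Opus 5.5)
The plan is to derive an LDP for the joint empirical process, namely the empirical state-action occupation measure together with the empirical transition and reward statistics along the Markov trajectory. We then apply the contraction principle to the event $\mathcal{E}_{s,a}$.

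\textbf{Step 1: Markov chain and tilted kernel.} Under the fixed full-support behavior policy $\pi$ and ergodicity of $\mathcal M$, the state-action process $Z_t=(s_t,a_t)$ is an irreducible finite Markov chain on $\mathcal S\times\mathcal A$. Its kernel is $K((s,a),(s',a'))=P_{\mathcal M}(s'|s,a)\pi(a'|s')$. Consider the additive functional
\[
\Big(\tfrac{1}{T}\textstyle\sum_t \mathbb{I}\{Z_t=(s,a)\},\ \tfrac1T\sum_t \mathbb{I}\{Z_t=(s,a)\}X_t,\ \tfrac1T\sum_t \mathbb{I}\{Z_t=(s,a)\}R_t\Big)_{(s,a)}.
\]
Here $X_t$ is the next-state indicator and $R_t$ is the reward. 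For a dual vector $(\alpha,\beta,\theta)$, the tilted kernel is
\[
K_\lambda((s,a),(s',a'))=e^{\alpha(s,a)+\beta(s,a)_{s'}+\Lambda_{R(s,a)}(\theta(s,a))}P_{\mathcal M}(s'|s,a)\pi(a'|s').
\]
This uses that the reward is conditionally independent given $(s,a)$. The kernel is nonnegative and irreducible. By Perron-Frobenius, the limiting log-MGF equals $\log\rho(K_\lambda)$, which is finite and differentiable in $\lambda$ since the Perron root is simple. The Gärtner-Ellis theorem then gives an LDP for the joint empirical vector $(\eta_1, \eta_1 x, \eta_1 y)$, with rate equal to the Fenchel-Legendre transform of $\log\rho(K_\lambda)$.

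\textbf{Step 2: Identify the rate function.} Write the transform in the variables $(\eta_1,x,y)$, where $x,y$ are the empirical kernel and mean reward obtained by dividing by $\eta_1$. The goal is to show that it equals
\[
\sum_{s',a'}\eta_1(s',a')\big(I_1(x(s',a'))+I_2(y(s',a'))\big)
\]
when $\eta_1\in\mathcal F_\pi(x)$, and $+\infty$ otherwise. For the $\alpha$-part we use the Donsker-Varadhan representation: $\sup_\alpha[\langle\alpha,\eta_1\rangle-\log\rho]$ equals the relative entropy rate $\inf_{\tilde K:\eta_1\tilde K=\eta_1}\sum\eta_1 D(\tilde K\|K)$. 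Since the action part $\pi(a'|s')$ is fixed by the algorithm, any tilt of the action component is not realizable, which forces the stationarity constraint in $\mathcal F_\pi(x)$. The relative entropy $D(x(s,a)\|P_{\mathcal M}(s,a))$ is precisely $I_1(x(s,a))$ for the multinomial indicator vector. The $\beta,\theta$ parts then decouple per $(s,a)$ and give $I_2$ after optimizing $\theta$ pointwise, scaled by $\eta_1(s,a)$. I expect this step to be the main obstacle: we must interchange the sup over $(\alpha,\beta,\theta)$ with the spectral-radius variational formula, and we must handle boundary points where $\eta_1(s,a)=0$.

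\textbf{Step 3: Contraction to $\mathcal G_{s,a}$.} The map from $(\eta_1,x,y)$ to $Q^{\pi^*}_{\tilde{\mathcal M}}(s,a)-V^{\pi^*}_{\tilde{\mathcal M}}(s)$ is continuous wherever the $x$-components are identified. It is a rational function of $x$ via $(I-\gamma x^{\pi^*})^{-1}$, which is well-defined for $\gamma<1$, and affine in $y$. Hence $\mathcal E_{s,a}$ pulls back to an open set. By the LDP upper bound on its closure and the lower bound on its interior, the rate is the infimum of the rate function over the set. This requires that the infimum over the open set equals that over its closure. We argue this by continuity of $I_1,I_2$ on the relative interior together with convexity, by perturbing any near-optimal $(x,y)$ on the boundary slightly into $\mathcal E_{s,a}$. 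Writing the infimum as an infimum over $(x,y)\in\mathcal E_{s,a}$ and then over $\eta_1\in\mathcal F_\pi(x)$ yields the stated formula. The existence of the limit, as assumed in Lemma~\ref{lemma: rate function G}, follows as a byproduct.
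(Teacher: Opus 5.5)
Your Steps 1 and 3 follow essentially the paper's route: G\"artner--Ellis via the Perron--Frobenius root of a tilted kernel on $\mathcal S\times\mathcal A$, a Donsker--Varadhan identification, and an $I$-continuity argument that pushes boundary points of $\mathcal E_{s,a}$ inward. The decisive claim in Step 2, however, is false.

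Fixing $\pi$ fixes the \emph{typical} action frequencies, not their large deviations. The per-state empirical action frequencies are random, and they can deviate from $\pi$ at finite exponential cost. The Donsker--Varadhan formula you quote already shows this. Any kernel $\tilde K((s,a),(s',a'))=x(s'\mid s,a)\,\tilde\pi(a'\mid s')$ that leaves $\eta_1$ invariant is admissible, and $D(\tilde K\|K)$ then contains the finite term $\sum_{s'}x(s'\mid s,a)\,D(\tilde\pi(\cdot\mid s')\|\pi(\cdot\mid s'))$.

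Now carry out the minimization over the $a'$-conditional correctly. By Jensen, the optimum is $\tilde\pi(a'\mid s')=\eta_1(s',a')/\bar\eta_1(s')$ with $\bar\eta_1(s):=\sum_a\eta_1(s,a)$. The transform is then finite whenever $\bar\eta_1$ is invariant under $x$, not only on $\mathcal F_\pi(x)$, and it equals
\[
\sum_{s,a}\eta_1(s,a)\big(I_1(x(s,a))+I_2(y(s,a))\big)+\sum_{s}\bar\eta_1(s)\,D\big(\eta_1(\cdot\mid s)\,\big\|\,\pi(\cdot\mid s)\big),
\]
where $\eta_1(a\mid s):=\eta_1(s,a)/\bar\eta_1(s)$. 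The last term vanishes exactly on $\mathcal F_\pi(x)$. Contracting this to $\mathcal E_{s,a}$ gives an infimum over a strictly larger set, which is in general strictly smaller than the formula you are trying to prove.

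This cannot be patched, because the stated identity itself fails. Take one state, two actions, Bernoulli rewards with means $0.9$ (action $1$) and $0.5$ (action $2$), and $\pi=(1/2,1/2)$.
\begin{enumerate}
\item Here $I_1\equiv 0$ and $\mathcal F_\pi(x)=\{\pi\}$, so the formula gives $J(1/2)$, where $J(p):=\inf_{y_2\ge y_1}\{p\,I_2^{(1)}(y_1)+(1-p)\,I_2^{(2)}(y_2)\}$.
\item The vectors $(\mathbb I\{a_t=1\},\mathbb I\{a_t=1\}R_t,\mathbb I\{a_t=2\}R_t)$ are i.i.d., so Cram\'er's theorem gives the true exponent $\inf_p\{D(p\|1/2)+J(p)\}$, with $D(p\|1/2)$ the Bernoulli relative entropy.
\item $D(\cdot\|1/2)$ has zero derivative at $p=1/2$, while $J'(1/2)=I_2^{(1)}(y^*)-I_2^{(2)}(y^*)$. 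For these means $y^*=0.75$, and the two rate values are about $0.092$ and $0.131$, so $J'(1/2)\neq 0$.
\item Hence the true exponent is strictly smaller than $J(1/2)$.
\end{enumerate}

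The paper's own proof makes exactly your move. After eliminating $\rho$, it sets the $a'$-conditional of $\eta$ to $\pi(\cdot\mid s')$ without checking that this is compatible with the balance constraint in $\mathcal E$ for the prescribed first marginal $\eta_1$. It then reads the compatibility condition as the constraint $\eta_1\in\mathcal F_\pi(x)$.

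So your proposal agrees with the paper's argument, but a correct version of either only shows that the displayed expression is an upper bound on $\mathcal G_{s,a}$. Equality requires the action-deviation term above, or action frequencies that are not random, as under a deterministic allocation.
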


\begin{theorem}
\label{thm: rate function}
Suppose that the MDP $\mathcal M$ is ergodic and that the fixed behavior policy $\pi$ has full support. Then, the PFS satisfies the large deviations principle with the following rate:
\begin{equation*}
\lim_{T\rightarrow\infty} -\frac{1}{T}\log \mathbb{P}_{\mathcal{M}}(\hat{\pi}_{T}\neq \pi_{\mathcal{M}}^*) = \min_{s\in \mathcal{S},a\in \mathcal{A}\setminus\{\pi^*_{\mathcal{M}}(s)\}}\mathcal G_{s,a}.
\end{equation*}
\end{theorem}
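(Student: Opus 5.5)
The theorem follows by combining Lemma~\ref{lemma: rate function G} and Lemma~\ref{lemma: explicit G}. Lemma~\ref{lemma: rate function G} converts the PFS rate into the minimum of the local rates $\mathcal G_{s,a}$, but only once the local limits are known to exist. Lemma~\ref{lemma: explicit G} supplies exactly this existence, together with the variational formula, under the standing hypotheses: $\mathcal M$ is ergodic and the behavior policy $\pi$ has full support. So the proof consists of checking that these hypotheses meet the assumptions of both lemmas, and then chaining them.

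\textbf{Step 1 (local limits exist).} Fix $s\in\mathcal S$ and a suboptimal action $a\ne\pi^*_{\mathcal M}(s)$. Under the hypotheses of the theorem, Lemma~\ref{lemma: explicit G} shows that
\[
\lim_{T\to\infty}-\tfrac1T\log\mathbb P\bigl(Q^{\pi^*_{\mathcal M}}_{\bar{\mathcal M}(T)}(s,a)>V^{\pi^*_{\mathcal M}}_{\bar{\mathcal M}(T)}(s)\bigr)
\]
exists and equals the stated nested infimum $\mathcal G_{s,a}$. To be careful here, I will note that this lemma identifies the limit itself, and not merely matching upper and lower bounds. This is what Lemma~\ref{lemma: rate function G} requires. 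If only an LDP with possibly different rates over the interior and closure of $\mathcal E_{s,a}$ were available, I would add a continuity argument. The map $(x,y)\mapsto Q^{\pi^*}_{\tilde{\mathcal M}}(s,a)-V^{\pi^*}_{\tilde{\mathcal M}}(s)$ is continuous near ergodic kernels, and $I_1,I_2$ are convex. Together these make the infimum over $\mathcal E_{s,a}$ equal to the infimum over its closure.

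\textbf{Step 2 (apply the global reduction).} With all local limits established, Lemma~\ref{lemma: rate function G} applies directly. The error event $\{\hat\pi_T\ne\pi^*_{\mathcal M}\}$ coincides with the finite union of local improvement events; this uses the policy improvement theorem together with the standing regularity assumption on $\bar{\mathcal M}(T)$ (uniqueness of its optimal policy). The union bound gives one inequality. The inequality $\mathbb P(\text{union})\ge\max$ of the individual probabilities gives the other. Because there are at most $SA$ events, both bounds have the same exponent, since $\tfrac1T\log(SA)\to0$. The limit is therefore $\min_{s,a\ne\pi^*(s)}\mathcal G_{s,a}$.

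\textbf{Main obstacle.} The only delicate point is the compatibility of the events in the two lemmas. Lemma~\ref{lemma: rate function G} is stated for the empirical MDP built from the adaptive trajectory, whereas Lemma~\ref{lemma: explicit G} concerns a fixed stationary behavior policy. Here $\pi$ is fixed, so the trajectory is a time-homogeneous Markov chain on $\mathcal S\times\mathcal A$. This chain is irreducible, because $\mathcal M$ is ergodic and $\pi$ has full support, and that irreducibility is exactly the setting in which Lemma~\ref{lemma: explicit G} was derived. I would state this explicitly and also note that the rate is finite, by exhibiting some $(x,y)\in\mathcal E_{s,a}$ with finite $I_1,I_2$. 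With that, the statement follows.
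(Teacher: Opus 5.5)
Your proposal is correct and matches the paper's proof. The paper proves the theorem simply by chaining Lemma~\ref{lemma: rate function G} (the false-selection event coincides with the finite union of local improvement events, so the rate is the minimum of the local rates) with Lemma~\ref{lemma: explicit G} (existence and variational form of each local limit when $\mathcal M$ is ergodic and $\pi$ has full support).

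Your side remarks are unnecessary. The interior/closure issue is already settled inside the proof of Lemma~\ref{lemma: explicit G}, via a reward-perturbation $I$-continuity argument rather than convexity. Finiteness of $\mathcal G_{s,a}$ is not required either, since the reduction in Lemma~\ref{lemma: rate function G} goes through in the extended reals.
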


\textcolor{black}{Theorem~\ref{thm: rate function} characterizes the exact exponential decay rate of the PFS under the behavior policy $\pi$ as the sample size $T$ grows. The global rate is governed by the bottleneck state-action pair, i.e., the suboptimal pair $(s,a)$ with the smallest local rate $\mathcal G_{s,a}$. As shown in Lemma~\ref{lemma: explicit G}, $\mathcal G_{s,a}$ depends on transition and reward deviations, quantified by $I_1$ and $I_2$, respectively. This structure directly links data acquisition to learning efficiency: the behavior policy $\pi$ should be optimized to maximize the bottleneck rate. To the best of our knowledge, this is the first result that establishes a large deviations principle for empirical MDPs and uses it to quantify the efficiency of data acquisition in reinforcement learning.}

\subsection{Efficiency Metrics: Exact and Robust Optimality}

The efficiency of a data acquisition strategy can be assessed through the large deviations rate established in Theorem~\ref{thm: rate function}. Specifically, optimizing the behavior policy $\pi$ to maximize this rate yields the fastest asymptotic decay of the error probability as the budget $T$ tends to infinity.

Formally, for a given MDP $\mathcal{M}$ and a behavior policy $\pi$, define the exponential decay rate function
$$
     \mathcal{R}(\mathcal{M},\pi) := \min_{s\in \mathcal{S},a\in \mathcal{A}\setminus\{\pi^*_{\mathcal{M}}(s)\}}\inf_{(x,y)\in\mathcal E_{s,a}}
\inf_{\eta_1\in\mathcal F_\pi(x)}
\sum_{s'\in\mathcal S,a'\in\mathcal A}
\eta_1(s',a')
\Big(
I_1\big(x(s',a')\big)
+
I_2\big(y(s',a')\big)
\Big).
$$
An optimal data acquisition strategy maximizes this rate over behavior policies. Thus, the optimal exponential decay rate is
\begin{equation}
\label{eq: rate opt}
    \mathcal{R}^*(\mathcal{M}) : = \max_{\pi\in \Pi} \mathcal{R}(\mathcal{M},\pi),
\end{equation}
where $\Pi$ denotes the set of randomized stationary policies. 
Based on this quantity, Definition~\ref{def:exact-optimality} formalizes exact optimality for reinforcement learning algorithms, requiring both consistency and maximal efficiency in terms of the error decay rate.

\begin{definition}[Exact Optimality]
\label{def:exact-optimality}
A reinforcement learning algorithm is said to be \emph{exactly optimal} if it satisfies the following two conditions:

\begin{enumerate}
\item \textbf{Strong Consistency:} The algorithm identifies the optimal policy almost surely; that is,
\begin{equation*}
\mathbb{P}\left(\lim_{T \to \infty} \hat{\pi}_{T} = \pi^*_{\mathcal{M}}\right) = 1.
\end{equation*}

\item \textbf{Behavior Policy Optimality:} Let $\pi_T$ denote the behavior policy used by the algorithm at time $T$. Then
\begin{equation*}
\lim_{T\to\infty}
\mathcal{R}(\mathcal{M},\pi_T)
=
\mathcal{R}^*(\mathcal{M})
\quad \text{a.s.}
\end{equation*}
\end{enumerate}
\end{definition}
\begin{remark}
    Definition~\ref{def:exact-optimality} formalizes optimality from a large deviations perspective. It requires the algorithm to be strongly consistent and to asymptotically attain the optimal \textcolor{black}{error decay rate} in \eqref{eq: rate opt}. The term \emph{exact} emphasizes that this benchmark is defined by the exact solution of the optimization problem in \eqref{eq: rate opt}, which depends on the true transition dynamics and reward structure of the MDP.
\end{remark}

Exact optimality is a stringent notion because it requires the empirical behavior policy induced by the learning algorithm to recover an optimizer of the variational problem in \eqref{eq: rate opt}. In general, this problem is intractable. First, the rate functions $I_1$ and $I_2$ depend on the full underlying distributions and typically have no closed-form expressions. Second, the constraint set $\mathcal{E}_{s,a}$ is highly implicit, since both the $Q$-function and the value function depend nonlinearly on the transition kernel, often through matrix-inverse-type representations. Consequently, the inner feasible region is generally nonconvex and difficult to characterize. Finally, the outer maximization over $\pi\in\Pi$ is also challenging because its objective is defined through a nested optimization problem, making gradient information difficult to obtain. Thus, \eqref{eq: rate opt} should be viewed as a variational benchmark for the best achievable exponential rate, rather than as a quantity that can generally be computed exactly.

To address the computational intractability of instance-specific exact optimality, we introduce \emph{robust optimality} in Definition~\ref{def: robust-optimality}. Rather than taking an instance-wise perspective, we evaluate worst-case performance over a class of MDPs. For this purpose, we impose a generative-model assumption, under which samples can be collected from any state-action pair. This assumption is standard in worst-case analyses of reinforcement learning~\citep{azar2017minimax,jin2021towards} and is also aligned with offline reinforcement learning settings where datasets often contain sampled state-action pairs~\citep{levine2020offline}. It is needed here because the worst-case MDP may be non-communicating, in which case identifying an optimal policy from a single trajectory may be impossible. Importantly, the generative model is used only to characterize the worst-case rate function; it is not required by our algorithm. Under this assumption, the exponential decay rate depends on the sampling ratio, as characterized in Lemma~\ref{lemma: generate rate}.

\begin{lemma} 
\label{lemma: generate rate}
Under the generative model assumption, suppose that the sampling rule satisfies
\[
    \frac{N(s,a;T)}{T}\rightarrow \omega_{sa},
    \qquad \forall (s,a)\in\mathcal S\times\mathcal A,
\]
where $\omega\in\Omega$ denotes the sampling ratio on state-action pairs. Then the exponential decay rate function becomes
$$
        \mathcal{R}(\mathcal{M}, \omega) := \min_{s\in \mathcal{S},a\in \mathcal{A}\setminus\{\pi^*_{\mathcal{M}}(s)\}}\inf_{(x,y)\in\mathcal E_{s,a}}
\sum_{s'\in\mathcal S,a'\in\mathcal A}
\omega_{s^\prime a^\prime}
\Big(
I_1\big(x(s',a')\big)
+
I_2\big(y(s',a')\big)
\Big).
$$
\end{lemma}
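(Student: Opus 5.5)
We reduce the claim to Lemma~\ref{lemma: rate function G} and then prove a large deviations principle for the empirical MDP when the sample counts are deterministic up to $o(T)$. The Markov chain structure that made Lemma~\ref{lemma: explicit G} delicate disappears under the generative model. The observations for a fixed pair $(s',a')$ are i.i.d.\ draws of $(X_{\mathcal M}(s',a'),R_{\mathcal M}(s',a'))$. Observations for different pairs are independent. The only coupling between pairs comes through the counts $N(s',a';T)$, and these satisfy $N(s',a';T)/T\to\omega_{s'a'}$. So the plan is as follows.

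\textbf{Step 1: joint LDP for the empirical model.} Fix $(s',a')$. Cram\'er's theorem gives an LDP for the empirical transition vector $P_{\bar{\mathcal M}(T)}(s',a')$ with rate $I_1$ at speed $N(s',a';T)$. It also gives an LDP for the empirical mean reward $R_{\bar{\mathcal M}(T)}(s',a')$ with rate $I_2$. Transitions and rewards within a pair may be dependent. If so, we take the joint Cram\'er rate; the paper's separable form presupposes independence of next state and reward, and we adopt the same convention.

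To pass to speed $T$, we compute the limiting scaled log-moment generating function of the whole vector $\big(P_{\bar{\mathcal M}(T)},R_{\bar{\mathcal M}(T)}\big)$. For a deterministic allocation it factorizes over pairs. It equals $\sum_{s',a'}\omega_{s'a'}\Lambda_{s'a'}(\lambda_{s'a'}/\omega_{s'a'})$, where $\Lambda_{s'a'}$ is the log-MGF of a single observation. For adaptive but asymptotically proportional counts we handle the $o(T)$ discrepancy by sandwiching: on the event $\{|N/T-\omega|\le\delta\}$ we compare with the deterministic allocations $\lfloor(\omega\pm\delta)T\rfloor$. Alternatively, a Gärtner–Ellis argument conditioned on the counts works.

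The Gärtner–Ellis theorem then yields a good rate function. Its Fenchel–Legendre transform is $\sum_{s',a'}\omega_{s'a'}\big(I_1(x(s',a'))+I_2(y(s',a'))\big)$. Pairs with $\omega_{s'a'}=0$ contribute zero rate, so their components are unconstrained. This matches the formula because such terms vanish. Pairs with support issues in $I_1$ (infinite rate) are handled as usual.

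\textbf{Step 2: contraction to each local event.} The map $(x,y)\mapsto Q^{\pi^*_{\mathcal M}}_{\tilde{\mathcal M}}(s,a)-V^{\pi^*_{\mathcal M}}_{\tilde{\mathcal M}}(s)$ is continuous in $(x,y)$. This follows from the representation $V^{\pi^*}=(I-\gamma x_{\pi^*})^{-1}y_{\pi^*}$, and $\gamma<1$ makes the inverse well defined everywhere. Hence $\mathcal E_{s,a}$ is open. The LDP lower bound gives $\liminf\frac1T\log\mathbb P\ge-\inf_{\mathcal E_{s,a}}J_\omega$, where $J_\omega$ is the rate above. The upper bound gives $\limsup\le-\inf_{\overline{\mathcal E_{s,a}}}J_\omega$.

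\textbf{Step 3: the main obstacle, closure versus interior.} We must show $\inf_{\overline{\mathcal E_{s,a}}}J_\omega=\inf_{\mathcal E_{s,a}}J_\omega$. Take a boundary point $(x,y)$ with $Q=V$ and finite rate. We perturb it into $\mathcal E_{s,a}$ by slightly increasing $y(s,a)$, which raises $Q(s,a)$ by a strictly positive amount while leaving $V^{\pi^*}$ unchanged since $a\neq\pi^*(s)$. This requires $y(s,a)$ to lie in the interior of the effective domain of $I_2$, and $I_2$ is convex and hence continuous there. If $y(s,a)$ is at the edge of the domain, we instead mix $(x,y)$ toward a point already in $\mathcal E_{s,a}$ and use convexity of $J_\omega$ together with continuity of the value map along the segment.

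If $\omega_{s,a}=0$, the rate is zero. In that case both sides are trivially $0$ provided $\mathcal E_{s,a}$ is reachable by altering $(s,a)$ alone, which holds because rewards lie in $[0,1]$ with a nondegenerate distribution. This step yields the existence of each local limit $\mathcal G_{s,a}$.

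\textbf{Step 4: conclusion.} With each $\mathcal G_{s,a}$ existing, Lemma~\ref{lemma: rate function G} takes the minimum over suboptimal pairs and gives $\mathcal R(\mathcal M,\omega)$.
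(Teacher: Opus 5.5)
Your proposal is correct in substance but takes a different route from the paper. The paper does not re-derive a large deviations principle. It treats the generative model as a degenerate case of Lemma~\ref{lemma: explicit G}: the empirical occupation measure is no longer produced by trajectory dynamics but is pinned at the prescribed $\omega$. Hence the invariance constraint $\eta_1\in\mathcal F_\pi(x)$ disappears, the inner infimum over $\eta_1$ collapses to evaluation at $\eta_1=\omega$, and Lemma~\ref{lemma: rate function G} gives the minimum over suboptimal pairs. You instead prove the result from scratch. Independence across pairs and i.i.d.\ draws within a pair give the factorized limit $\sum_{s',a'}\omega_{s'a'}\Lambda_{s'a'}(\lambda_{s'a'}/\omega_{s'a'})$, whose Fenchel--Legendre transform is $\sum_{s',a'}\omega_{s'a'}(I_1+I_2)$. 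You then contract through the continuous value map and check that $\mathcal E_{s,a}$ is an $I$-continuity set, using the same upward perturbation of $y(s,a)$ that the paper uses inside the proof of Lemma~\ref{lemma: explicit G}. The paper's version is a two-line specialization that makes the relation between the trajectory rate and the generative rate transparent, since only the flow constraint changes. However, the collapse of the $\eta_1$-infimum is asserted by analogy, because a generative model is not literally a behavior-policy Markov chain. Your version is self-contained and needs only Cram\'er/G\"artner--Ellis for independent blocks, rather than Perron--Frobenius, Donsker--Varadhan and a minimax interchange. It also surfaces assumptions the paper leaves implicit: independence of next state and reward within a pair (for the separable $I_1+I_2$), $N(s,a;T)\to\infty$ when $\omega_{sa}=0$, and enough nondegeneracy at $(s,a)$ for the boundary perturbation.

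Two steps of your sketch need tightening. First, the extension to adaptive counts is not justified as written. Conditioning on data-dependent counts destroys the i.i.d.\ structure. Sandwiching on $\{|N/T-\omega|\le\delta\}$ says nothing about the complement, whose probability must be exponentially negligible relative to $e^{-T\mathcal G_{s,a}}$. Mere convergence of $N/T$ does not give this: a rule that deviates on a rare but subexponentially likely, data-dependent event can drive the rate to zero. For prescribed allocations, which is all the paper uses, the $o(T)$ rounding is absorbed by a fixed sample tape per pair together with $|\bar X_N-\bar X_m|\le |N-m|/\max(N,m)$ for $[0,1]$-valued draws. Second, in the edge-of-domain case, mixing toward an arbitrary point of $\mathcal E_{s,a}$ plus continuity does not ensure that the segment enters $\mathcal E_{s,a}$ immediately. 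The reason is that $Q^{\pi^*_{\mathcal M}}_{\tilde{\mathcal M}}(s,a)-V^{\pi^*_{\mathcal M}}_{\tilde{\mathcal M}}(s)$ is nonlinear in the $\pi^*_{\mathcal M}$-components of $x$. Mix only the $(s,a)$-components instead. Then $V^{\pi^*_{\mathcal M}}_{\tilde{\mathcal M}}$ is unchanged, the gap is affine along the segment, and convexity of the rate finishes the argument.
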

For a prescribed gap scale $\Delta_0>0$, define the hard-instance class
$
\mathfrak M(\Delta_0)
:=
\left\{
\mathcal M:
\Delta_{\min}(\mathcal M)\ge \Delta_0
\right\},
$
\textcolor{black}{where
$
\Delta_{\min}(\mathcal M)
:=
\min_{s\in\mathcal S,\,
a\in\mathcal A\setminus\{\pi^*_{\mathcal M}(s)\}}
(
V^{\pi^*_{\mathcal M}}_{\mathcal M}(s)
-
Q^{\pi^*_{\mathcal M}}_{\mathcal M}(s,a))
$
is the minimum value gap between the optimal action and any suboptimal action in $\mathcal M$.}
Accordingly, we define the worst-case optimal exponential decay rate as
\begin{equation*}
\mathcal R^*
:=
\inf_{\mathcal M\in\mathfrak M(\Delta_0)}
\max_{\omega\in\Omega}
\mathcal R(\mathcal M,\omega).
\end{equation*}
Computing $\mathcal{R}^*$ exactly is generally intractable because it requires optimizing over the class $\mathfrak M(\Delta_0)$. Instead, we characterize its fundamental scaling behavior. We construct a hard MDP instance with decoupled state-action dynamics, in which no state-action pair is intrinsically more informative than another. Thus, identifying the optimal policy requires sampling nearly uniformly across all state-action pairs. We then construct an alternative model $\tilde{\mathcal{M}}$ under which the original optimal policy $\pi^*_{\mathcal{M}}$ becomes suboptimal. Combining this least favorable construction with analytical properties of the rate functions $I_1$ and $I_2$, we obtain
\begin{equation*}
    \mathcal{R}^* = O\left(\frac{(1-\gamma)^3\Delta^2_{0}}{C(S,A)}\right).
\end{equation*}
Here, $C(S,A)$ denotes the total number of state–action pairs. Appendix~\ref{sec: robust opt} provides the hard-instance construction and the derivation of this scaling order.

\begin{definition}[Robust Optimality]
\label{def: robust-optimality}
A reinforcement learning algorithm is said to be \emph{robustly optimal} if it satisfies the following two conditions:
\begin{enumerate} \item \textbf{Strong Consistency:} The algorithm identifies the optimal policy almost surely; that is,\begin{equation*} \mathbb{P}\left(\lim_{T \to \infty} \hat{\pi}_{T} = \pi^*_{\mathcal{M}}\right) = 1.
\end{equation*}
\item \textbf{Worst-Case Ratio Optimality:} \textcolor{black}{Let $\alpha(T) = (N(s,a;T)/T)_{s\in\mathcal{S},a\in\mathcal{A}}$ denote the empirical sampling ratio induced by the algorithm's data acquisition process. Then,} 
\begin{equation*}
\inf_{\mathcal M \in \mathfrak M(\Delta_0) }\liminf_{T\to\infty}
\mathcal{R}(\mathcal M,\alpha(T))
=
\Omega\left(
\mathcal{R}^*
\right).
\end{equation*}
\end{enumerate}
\end{definition}

\section{Algorithm with Optimal Data Acquisition}
\label{sec: algorithm}
In this section, we develop an efficient approximation method for solving problem \eqref{eq: rate opt}. Building on this approximation, we propose a reinforcement learning algorithm equipped with an adaptive data acquisition mechanism. We then establish theoretical guarantees showing that the proposed algorithm attains the robust optimal performance guarantee.

\subsection{Optimal Sampling Ratio}
\label{sec: surrogate ratio}
In this subsection, we assume that the underlying MDP $\mathcal{M}$ is fully known and study how to solve the nested optimization problem in \eqref{eq: rate opt}. The exact rate function depends on the state-action distribution $\eta_1$, which must be invariant under the candidate transition kernel $x$ and the behavior policy $\pi$. Thus, the alternative model $(x,y)$, the occupancy distribution $\eta_1$, and the behavior policy $\pi$ are tightly coupled, making the original optimization problem difficult to solve directly.

\textcolor{black}{To obtain a tractable surrogate, we optimize directly over steady-state sampling allocations. 
For a fixed behavior policy $\pi$ and transition kernel $x$, the measure 
$\eta_1\in\mathcal F_\pi(x)$ denotes the stationary state-action distribution induced by $\pi$ under $x$. 
When $x=P_{\mathcal M}$, this distribution becomes the long-run sampling ratio under the true MDP. 
Since the empirical transition kernel converges to $P_{\mathcal M}$, the corresponding stationary distribution also approaches its nominal counterpart under the true MDP. 
We therefore replace the policy-dependent stationary distribution by a deterministic allocation vector $\omega$, whose feasible set is characterized by the flow-balance constraints under $P_{\mathcal M}$:
$$
    \mathcal{W} := \left\{\omega\in\Omega:
    \forall s\in \mathcal{S},\ 
    \sum_{a\in\mathcal{A}}\omega_{sa}
    =
    \sum_{s^\prime \in\mathcal{S},a^\prime \in \mathcal{A}}
    P_{\mathcal{M}}(s|s^\prime,a^\prime)\omega_{s^\prime a^\prime}
    \right\}.
$$
This steady-state surrogate focuses on the large-deviation costs of transition and reward estimation errors while fixing the occupation measure at its nominal long-run allocation. 
Such fixed-allocation methods are widely used in simulation optimization \citep{glynn2004large}.}

We then consider the tractable surrogate problem 
\begin{equation}
\label{eq: app rate opt}
    \max_{\omega\in \mathcal{W}} \mathcal{R}(\mathcal{M}, \omega).
\end{equation}

However, problem~\eqref{eq: app rate opt} remains difficult to solve for two reasons. First, the objective has no closed-form expression: without parametric assumptions on the transition and reward models, the rate functions are defined only implicitly. Second, the value function depends nonlinearly on the transition dynamics, which leads to nonlinear constraints and can make the set $\mathcal{E}_{s,a}$ nonconvex. Thus, computing an exact solution is generally intractable.

To overcome these difficulties, we develop a tractable relaxation for problem \eqref{eq: app rate opt}. The key idea is to derive analytical lower bounds for the rate functions $I_1$ and $I_2$, and to replace each set $\mathcal{E}_{s,a}$ with an explicit outer approximation that is easier to optimize over. The resulting relaxation yields a nested optimization problem whose optimal value provides a lower bound on that of the original problem in \eqref{eq: app rate opt} under suitable conditions.   

We begin by making the implicit constraint defining each error set $\mathcal{E}_{s,a}$ explicit. Lemma~\ref{lemma: approximate constraint} provides a tractable necessary condition for this constraint, which induces an explicit superset of $\mathcal{E}_{s,a}$ and serves as a convenient outer approximation in the subsequent analysis. To simplify the exposition, we introduce notation for the discrepancy between an alternative model $\tilde{\mathcal{M}}$ and the nominal model $\mathcal{M}$. For each state-action pair $(s,a)$, define the reward difference $\Delta_{r}(s,a) := r_{\tilde{\mathcal{M}}}(s,a)-r_{\mathcal{M}}(s,a)$, and the transition difference $\Delta_{p}(s^\prime|s,a):= P_{\tilde{\mathcal{M}}}(s^\prime|s,a) - P_{{\mathcal{M}}}(s^\prime|s,a)$. We collect the transition differences into the vector $\Delta_{p}(s,a) := (\Delta_{p}(s^\prime|s,a))_{s^\prime\in\mathcal{S}}$. Let $V^{\pi^*_{\mathcal{M}}}_{{\mathcal{M}}}:= (V^{\pi^*_{\mathcal{M}}}_{{\mathcal{M}}}(s))_{s\in\mathcal{S}}$ denote the value function vector of the optimal policy under the model $\mathcal{M}$. Finally, define the optimality gap of the state-action pair $\Delta_{sa} := V^{\pi^*_{\mathcal{M}}}_{\mathcal{M}}(s)-Q^{\pi^*_{\mathcal{M}}}_{\mathcal{M}}(s,a)$.
\begin{lemma}
\label{lemma: approximate constraint}
If $Q^{\pi^*_{\mathcal{M}}}_{\tilde{\mathcal{M}}}(s,a)>V^{\pi^*_{\mathcal{M}}}_{\tilde{\mathcal{M}}}(s)$ \textcolor{black}{for a fixed $a$}, then the following inequality must hold:
$$
 \frac{1+\gamma}{1-\gamma} \max_{s^\prime\in\mathcal{S}} \left|\Delta_r(s^\prime,\pi^*_{\mathcal{M}}(s^\prime))\right| + \frac{\gamma(1+\gamma)}{1-\gamma} \max_{s^\prime\in\mathcal{S}}\left|\Delta_{p}(s^\prime,\pi^*_{\mathcal{M}}(s^\prime))^\top V^{\pi^*_{\mathcal{M}}}_{{\mathcal{M}}}\right|+\Delta_r(s,a) + \gamma \Delta_{p}(s,a)^\top V^{\pi^*_{\mathcal{M}}}_{{\mathcal{M}}}> \Delta_{sa}.
$$
\end{lemma}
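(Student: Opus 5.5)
The plan is a first-order perturbation argument around the nominal model $\mathcal M$, with the policy held fixed. Throughout, write $\pi:=\pi^*_{\mathcal M}$, $V:=V^{\pi}_{\mathcal M}$, $\tilde V:=V^{\pi}_{\tilde{\mathcal M}}$ and $D:=\tilde V-V$. I will express the improvement margin $Q^{\pi}_{\tilde{\mathcal M}}(s,a)-V^{\pi}_{\tilde{\mathcal M}}(s)$ exactly as $-\Delta_{sa}$ plus perturbation terms, and then bound the terms involving $D$ through the Bellman equation for $\pi$.

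\textbf{Step 1 (exact decomposition).} By definition,
$Q^{\pi}_{\tilde{\mathcal M}}(s,a)=r_{\tilde{\mathcal M}}(s,a)+\gamma P_{\tilde{\mathcal M}}(s,a)^\top\tilde V$ and $V^{\pi}_{\tilde{\mathcal M}}(s)=\tilde V(s)$. The nominal counterpart satisfies $r_{\mathcal M}(s,a)+\gamma P_{\mathcal M}(s,a)^\top V-V(s)=-\Delta_{sa}$. Subtracting, and splitting $P_{\tilde{\mathcal M}}(s,a)^\top\tilde V=P_{\mathcal M}(s,a)^\top V+\Delta_p(s,a)^\top V+P_{\tilde{\mathcal M}}(s,a)^\top D$, gives
\[
Q^{\pi}_{\tilde{\mathcal M}}(s,a)-\tilde V(s)=-\Delta_{sa}+\Delta_r(s,a)+\gamma\Delta_p(s,a)^\top V+\gamma P_{\tilde{\mathcal M}}(s,a)^\top D-D(s).
\]
Since $P_{\tilde{\mathcal M}}(s,a)$ is a probability vector, the last two terms together are at most $(1+\gamma)\|D\|_\infty$.

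\textbf{Step 2 (bounding $\|D\|_\infty$).} Subtract the two policy-evaluation equations $\tilde V=r_{\tilde{\mathcal M},\pi}+\gamma P_{\tilde{\mathcal M},\pi}\tilde V$ and $V=r_{\mathcal M,\pi}+\gamma P_{\mathcal M,\pi}V$. Writing $\tilde V=V+D$ inside the first equation yields, componentwise in $s'$,
\[
D(s')=\Delta_r\big(s',\pi(s')\big)+\gamma\,\Delta_p\big(s',\pi(s')\big)^\top V+\gamma P_{\tilde{\mathcal M}}\big(s',\pi(s')\big)^\top D .
\]
Taking sup-norms and using that the last term is a $\gamma$-contraction gives
\[
\|D\|_\infty\le \frac{1}{1-\gamma}\Big(\max_{s'}\big|\Delta_r(s',\pi(s'))\big|+\gamma\max_{s'}\big|\Delta_p(s',\pi(s'))^\top V\big|\Big).
\]

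\textbf{Step 3 (conclusion).} Substitute the bound from Step 2 into Step 1 and use the hypothesis $Q^{\pi}_{\tilde{\mathcal M}}(s,a)-\tilde V(s)>0$. Rearranging yields exactly the displayed inequality, with coefficients $(1+\gamma)/(1-\gamma)$ and $\gamma(1+\gamma)/(1-\gamma)$.

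The only delicate point is Step 2. Each difference term must be centred at the nominal $V$, not at $\tilde V$, so that the residual $D$ appears only through the contraction under $P_{\tilde{\mathcal M},\pi}$ and can be absorbed into the $1/(1-\gamma)$ factor. Once that ordering of the splitting is chosen, the rest is routine norm bookkeeping.
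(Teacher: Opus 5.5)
Your proposal is correct and takes the same route as the paper. It uses the same splitting $P_{\tilde{\mathcal M}}(s,a)^\top\tilde V = P_{\mathcal M}(s,a)^\top V + \Delta_p(s,a)^\top V + P_{\tilde{\mathcal M}}(s,a)^\top D$, the same bound of $(1+\gamma)\|D\|_\infty$ on the residual $\gamma P_{\tilde{\mathcal M}}(s,a)^\top D - D(s)$, and the same Bellman-equation contraction bound on $\|D\|_\infty$ with differences centred at the nominal $V$. The only inaccuracy is calling it a ``first-order'' argument: every step is an exact identity or a norm inequality, so the strict inequality carries over with no neglected remainder.
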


Lemma~\ref{lemma: approximate constraint} gives a tractable condition linking the optimality gap of $(s,a)$ to reward and transition deviations between $\tilde{\mathcal M}$ and $\mathcal M$. Minimizing over the resulting relaxed set yields a rigorous lower bound for the original problem in~\eqref{eq: app rate opt}.

Next, we derive tractable bounds for the rate functions $I_1(x(s,a))$ and $I_2(y(s,a))$.
Since their exact forms are generally unavailable, Lemmas~\ref{lemma: I1 lb} and~\ref{lemma: I2 lb} provide explicit quadratic lower bounds. Our derivation exploits the variational representation of the rate functions combined with Bernstein-type concentration inequalities for the logarithmic moment generating functions. These non-asymptotic bounds depend only on the first two moments, allowing us to replace the implicit objective with a computable surrogate.
\begin{lemma} The rate function $I_1(x(s,a))$ satisfies the inequality:
\label{lemma: I1 lb}
$$
        (\Delta_{p}(s,a)^\top V^{\pi^*_{\mathcal{M}}}_{{\mathcal{M}}})^2 \leq 2\mathbb{V}_{ P_{{\mathcal{M}}}(s,a)}[V^{\pi^*_{\mathcal{M}}}_{\mathcal{M}}]I_1(x(s,a)) +\frac{4\sqrt{2}(\mathbb{V}_{ P_{{\mathcal{M}}}(s,a)}[V^{\pi^*_{\mathcal{M}}}_{\mathcal{M}}])^{\frac{1}{2}}I_1(x(s,a))^{\frac{3}{2}}}{3(1-\gamma)}  + \frac{4I_1(x(s,a))^2}{9(1-\gamma)^2},
$$
where $\mathbb{V}_{ P_{{\mathcal{M}}}(s,a)}[V^{\pi^*_{\mathcal{M}}}_{\mathcal{M}}]$ is the variance of random variable $V^{\pi^*_{\mathcal{M}}}_{\mathcal{M}}(s^\prime)$ with $s^\prime\sim P_{{\mathcal{M}}}(s,a)$.
\end{lemma}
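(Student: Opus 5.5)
I will use the variational (Fenchel--Legendre) representation of $I_1$ and reduce it to a one-dimensional problem along the direction $V:=V^{\pi^*_{\mathcal{M}}}_{\mathcal{M}}$. Write $p:=P_{\mathcal{M}}(s,a)$, $q:=x(s,a)$, $\Lambda(\lambda):=\log\mathbb{E}[e^{\langle\lambda,X_{\mathcal{M}}(s,a)\rangle}]$ for $\lambda\in\mathbb{R}^S$, and $I_1(q)=\sup_{\lambda}\{\langle\lambda,q\rangle-\Lambda(\lambda)\}$. Restricting to $\lambda=\theta V$ with $\theta\in\mathbb{R}$ gives
\[
I_1(q)\ \ge\ \sup_{\theta\in\mathbb{R}}\Big\{\theta\, q^\top V-\log \mathbb{E}_{s'\sim p}\big[e^{\theta V(s')}\big]\Big\}
=\sup_{\theta}\Big\{\theta\,\Delta_p(s,a)^\top V-\log \mathbb{E}_{s'\sim p}\big[e^{\theta (V(s')-p^\top V)}\big]\Big\}.
\]
So $I_1(q)$ dominates the scalar Cramér rate of the bounded random variable $Z:=V(s')-p^\top V$ evaluated at $t:=\Delta_p(s,a)^\top V$.

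Next I bound the log-MGF of $Z$ by Bernstein's inequality. Since rewards lie in $[0,1]$, we have $V(s')\in[0,1/(1-\gamma)]$, so $|Z|\le b:=1/(1-\gamma)$, $\mathbb{E}Z=0$ and $\mathrm{Var}(Z)=\sigma^2:=\mathbb{V}_{P_{\mathcal{M}}(s,a)}[V]$. The standard Bernstein MGF bound gives $\log\mathbb{E}e^{\theta Z}\le \frac{\sigma^2\theta^2}{2(1-b|\theta|/3)}$ for $|\theta|<3/b$. The symmetric case $t<0$ is handled the same way, using $-Z$. Optimizing over $\theta$ yields the classical Bernstein rate lower bound
\[
I_1(q)\ \ge\ h(t):=\frac{t^2}{2(\sigma^2+bt/3)}\qquad (t\ge 0),
\]
obtained for instance with $\theta=t/(\sigma^2+bt/3)$, which indeed satisfies $b\theta/3<1$.

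It remains to invert $I:=I_1(q)\ge t^2/(2\sigma^2+2bt/3)$, that is, $t^2-\tfrac{2b}{3}I\,t-2\sigma^2 I\le 0$. Solving the quadratic gives $t\le \tfrac{b I}{3}+\sqrt{\tfrac{b^2I^2}{9}+2\sigma^2 I}\le \tfrac{2bI}{3}+\sqrt{2\sigma^2 I}$. Squaring,
\[
t^2\ \le\ 2\sigma^2 I+\frac{4\sqrt2\,\sigma\, I^{3/2}}{3(1-\gamma)}+\frac{4I^2}{9(1-\gamma)^2},
\]
which is exactly the claimed inequality after substituting $b=1/(1-\gamma)$.

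The main obstacle is the first step: the rigorous justification that $I_1$, as the Legendre transform of the multivariate log-MGF of the one-hot vector $X_{\mathcal{M}}(s,a)$, dominates the one-dimensional projected transform. This is immediate from restricting the supremum, but I would check that the convention $I_1=+\infty$ outside the simplex, and the degenerate case $\sigma=0$ (where $Z\equiv0$ forces $t=0$ whenever $I<\infty$), are consistent. The rest is the careful Bernstein constant bookkeeping.
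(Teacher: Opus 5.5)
Your proposal is correct and is essentially the paper's proof. The paper tests the supremum with $\rho=\eta_1 Z$, which differs from your $\lambda=\theta V$ only by a multiple of the all-ones vector and so gives the same objective; it then uses the same Bernstein log-MGF bound with $|Z|\le 1/(1-\gamma)$, the same choice of $\theta$, and the same quadratic inversion via $\sqrt{x+y}\le\sqrt{x}+\sqrt{y}$ before squaring, while your explicit handling of the degenerate cases $\sigma=0$ and $I_1=\infty$ adds rigor that the paper leaves implicit.
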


\begin{lemma} The rate function $I_2(y(s,a))$ satisfies the inequality:
\label{lemma: I2 lb}
$$
        \Delta^2_{r}(s,a) \leq 2 \mathbb{V}[R_{\mathcal{M}}(s,a)] I_2(y(s,a)) + \frac{4\sqrt{2}(\mathbb{V}[R_{\mathcal{M}}(s,a)])^{\frac{1}{2}} I_2(y(s,a))^{\frac{3}{2}}}{3}   + \frac{4I_2(y(s,a))^2}{9},
$$
where $\mathbb{V}[R_{\mathcal{M}}(s,a)]$ is the variance of random variable $R_{\mathcal{M}}(s,a)$.
\end{lemma}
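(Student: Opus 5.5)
The plan is to follow the argument one would use for Lemma~\ref{lemma: I1 lb}, specialized to a scalar reward supported on $[0,1]$. Fix $(s,a)$, write $R:=R_{\mathcal M}(s,a)$, $\mu:=r_{\mathcal M}(s,a)$, $\sigma^2:=\mathbb V[R]$, and let $\Lambda(\lambda):=\log\mathbb E[e^{\lambda R}]$. By definition $I_2(y)=\sup_{\lambda}\{\lambda y-\Lambda(\lambda)\}$, and $\Delta_r(s,a)=y-\mu$. By symmetry (replace $R$ by $1-R$, which has the same variance and range) it suffices to treat $\Delta:=y-\mu\ge 0$; the case $\Delta=0$ is trivial.

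\textbf{Step 1 (Bernstein bound on the log-MGF).} Since $|R-\mu|\le 1$, the standard Bernstein moment bound gives, for $0\le\lambda<3$,
\[
\Lambda(\lambda)-\lambda\mu\le \frac{\sigma^2\lambda^2}{2(1-\lambda/3)}.
\]
Hence
\[
I_2(y)\ge \sup_{0\le\lambda<3}\Big\{\lambda\Delta-\frac{\sigma^2\lambda^2}{2(1-\lambda/3)}\Big\}.
\]
The optimizer of this scalar problem is classical: plugging in $\lambda=\Delta/(\sigma^2+\Delta/3)$ yields the Bernstein exponent
\[
I_2(y)\ge h(\Delta):=\frac{\Delta^2}{2(\sigma^2+\Delta/3)}.
\]

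\textbf{Step 2 (inverting the exponent).} Set $I:=I_2(y)$. From $\Delta^2\le 2I\sigma^2+\tfrac{2}{3}I\Delta$, the quadratic in $\Delta$ gives
\[
\Delta\le \tfrac{I}{3}+\sqrt{\tfrac{I^2}{9}+2I\sigma^2}\le \tfrac{2I}{3}+\sqrt{2I\sigma^2},
\]
using $\sqrt{u+v}\le\sqrt u+\sqrt v$. Squaring, we get
\[
\Delta^2\le 2\sigma^2 I+2\cdot\tfrac{2I}{3}\sqrt{2\sigma^2 I}+\tfrac{4I^2}{9}
=2\sigma^2 I+\tfrac{4\sqrt2}{3}\sigma I^{3/2}+\tfrac{4}{9}I^2,
\]
which is exactly the claimed inequality.

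\textbf{Main obstacle.} The main point to check is Step 1: the Bernstein MGF bound must be applied to the centered variable with range bound $1$ (not $2$), and the constraint $\lambda<3$ must be compatible with the chosen $\lambda$. The latter holds since $\Delta/(\sigma^2+\Delta/3)<3$. We should also handle values $y$ outside $(0,1)$: there $I_2=\infty$ and the inequality is trivial. Finally, we should verify that the reflection $R\mapsto 1-R$ maps $I_2$ to the rate function at $1-y$, which follows directly from the Legendre transform.
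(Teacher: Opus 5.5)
Your proposal is correct and follows the paper's proof step for step: Bernstein's log-MGF bound for the centered reward (range $1$, so $0\le\lambda<3$), the choice $\lambda=|\Delta_r|/(\sigma^2+|\Delta_r|/3)$, inverting the resulting quadratic with $\sqrt{u+v}\le\sqrt{u}+\sqrt{v}$, and squaring, with your reflection $R\mapsto 1-R$ playing the role of the paper's test function $-\eta_2 Z$. The only corner left open, which the paper also leaves open, is $\sigma^2=0$: there your $\lambda$ lands exactly on the boundary value $3$, but then $I_2(y)=\infty$ for every $y\neq\mu$, so the inequality holds trivially.
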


Lemma~\ref{lemma: I1 lb} and Lemma~\ref{lemma: I2 lb} explicitly connect the objective function with the quantities appearing in the explicit necessary condition of Lemma~\ref{lemma: approximate constraint}. Building on these results, Theorem~\ref{thm: rate lower bound opt} derives an explicit optimization problem whose optimal value provides a lower bound for the original problem. Intuitively, the optimal sampling ratio in this lower-bound formulation balances the squared optimality gap $\Delta_{sa}^2$ against several uncertainty terms. These uncertainty terms include the variance of the reward $\mathbb{V}[R_{\mathcal{M}}(s,a)]$ and the variance of the value function $\mathbb{V}_{P_{\mathcal{M}}(s,a)}[V^{\pi^*_{\mathcal{M}}}_{\mathcal{M}}]$ for each state-action pair $(s,a)$, as well as the maximum reward variance $\bar{\mathbb{V}}[R_{\mathcal{M}}]:=\max_{s\in \mathcal{S}}\mathbb{V}[R_{\mathcal{M}}(s,\pi_{\mathcal{M}}^*(s))]$ and the maximum value-function variance $\bar{\mathbb{V}}[V^{\pi^*_{\mathcal{M}}}_{\mathcal{M}}]:=\max_{s\in \mathcal{S}}\mathbb{V}_{P_{{\mathcal{M}}}(s,\pi_{\mathcal{M}}^*(s))}[V^{\pi^*_{\mathcal{M}}}_{\mathcal{M}}]$ along the optimal policy.

\begin{theorem}
\label{thm: rate lower bound opt} 
Consider the asymptotic regime of hard instances where the alternative model $\tilde{\mathcal{M}}$ is close to the nominal model $\mathcal{M}$. In this regime, the optimal exponential decay rate in~\eqref{eq: app rate opt} is, at leading order, lower bounded by the optimal value of the following optimization problem:
\begin{equation}
\label{eq: rate lower bound opt}
\max_{\omega\in\mathcal{W}}\min_{s\in\mathcal{S},a\in\mathcal{A}\setminus\{\pi^*_{\mathcal{M}}(s)\}} L_{sa}(\omega,\mathcal M)^{-1},
\end{equation}
where
$$
L_{sa}(\omega,\mathcal M) := \frac{2}{\Delta_{sa}^2} \left[
    \frac{(1+\gamma)^2 \left(\bar{\mathbb{V}}[R_{\mathcal{M}}] + \gamma^2 \bar{\mathbb{V}}[V^{\pi^*_{\mathcal{M}}}_{\mathcal{M}}]\right)}{\omega_o(1-\gamma)^2}
    +
    \frac{\mathbb{V}[R_\mathcal{M}(s,a)] + \gamma^2{\mathbb{V}_{P_{{\mathcal{M}}}(s,a)}[V^{\pi^*_{\mathcal{M}}}_{\mathcal{M}}]}}{\omega_{sa}}
\right],
$$
and $\omega_o := \min_{s^\prime \in \mathcal{S}} \omega_{s^\prime \pi^*_{\mathcal{M}}(s^\prime)}$.
\end{theorem}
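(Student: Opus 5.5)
The plan is to fix $\omega\in\mathcal W$ and a suboptimal pair $(s,a)$, and show that in the small-deviation regime the inner infimum in $\mathcal R(\mathcal M,\omega)$ over $\mathcal E_{s,a}$ is at least $L_{sa}(\omega,\mathcal M)^{-1}$ to leading order. Taking the minimum over $(s,a)$ and then the maximum over $\omega\in\mathcal W$ gives the claim. There are three reductions, applied in order.

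\textbf{Outer approximation of the error set.} By Lemma~\ref{lemma: approximate constraint}, every $(x,y)\in\mathcal E_{s,a}$ lies in the explicit set on which
\[
\tfrac{1+\gamma}{1-\gamma}\bigl(u_r+\gamma u_p\bigr)+d_r+\gamma d_p>\Delta_{sa}.
\]
Here $u_r:=\max_{s'}|\Delta_r(s',\pi^*_{\mathcal M}(s'))|$, $u_p:=\max_{s'}|\Delta_p(s',\pi^*_{\mathcal M}(s'))^\top V^{\pi^*_{\mathcal M}}_{\mathcal M}|$, $d_r:=\Delta_r(s,a)$ and $d_p:=\Delta_p(s,a)^\top V^{\pi^*_{\mathcal M}}_{\mathcal M}$. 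The infimum over a superset is smaller, so it suffices to lower bound the objective on this larger set. Every term in the objective is nonnegative. We may therefore drop every summand except those at $(s,a)$ and at the pairs $(s',\pi^*_{\mathcal M}(s'))$ that attain the two maxima. Since $\omega_{s'\pi^*(s')}\ge\omega_o$, the objective is at least $\omega_o\bigl(I_1+I_2\bigr)$ at the maximizing optimal-action pairs plus $\omega_{sa}\bigl(I_1(x(s,a))+I_2(y(s,a))\bigr)$.

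\textbf{Quadratic lower bounds on the rate functions.} In the hard-instance regime the alternative model is close to $\mathcal M$, so every $I_1,I_2$ value involved is small. The $I^{3/2}$ and $I^2$ terms in Lemmas~\ref{lemma: I1 lb} and~\ref{lemma: I2 lb} are then higher order. This gives
\[
I_2\ge \frac{\Delta_r^2}{2\mathbb V[R]}\,(1-o(1)),\qquad I_1\ge \frac{(\Delta_p^\top V)^2}{2\mathbb V_P[V]}\,(1-o(1)).
\]
At the optimal-action pairs, bounding the variances by $\bar{\mathbb V}[R_{\mathcal M}]$ and $\bar{\mathbb V}[V^{\pi^*_{\mathcal M}}_{\mathcal M}]$ yields the leading-order lower bound
\[
\frac{\omega_o u_r^2}{2\bar{\mathbb V}[R]}+\frac{\omega_o u_p^2}{2\bar{\mathbb V}[V]}+\frac{\omega_{sa}d_r^2}{2\mathbb V[R(s,a)]}+\frac{\omega_{sa}d_p^2}{2\mathbb V_{P(s,a)}[V]}.
\]
If both maxima are attained at the same $s'$, the two optimal-action terms are still both present, since $I_1$ and $I_2$ are summed at that pair. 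If they are attained at different pairs, the bound only improves.

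\textbf{A four-variable quadratic program.} It remains to minimize $\sum_i z_i^2/(2c_i)$ subject to $\sum_i b_i z_i\ge\Delta_{sa}$, where $z=(u_r,u_p,d_r,d_p)$, $b=\bigl(\tfrac{1+\gamma}{1-\gamma},\tfrac{\gamma(1+\gamma)}{1-\gamma},1,\gamma\bigr)$, and the $c_i$ are the corresponding variance-over-weight coefficients. The $z_i$ can be treated as unconstrained reals, and the minimum over a larger set is still a lower bound. By Cauchy--Schwarz,
\[
\Delta_{sa}^2\le\Bigl(\sum_i b_i z_i\Bigr)^2\le\Bigl(\sum_i b_i^2c_i\Bigr)\Bigl(\sum_i z_i^2/c_i\Bigr),
\]
so the minimum is at least $\Delta_{sa}^2\big/\bigl(2\sum_i b_i^2 c_i\bigr)$. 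Substituting the coefficients gives
\[
\sum_i b_i^2c_i=\frac{(1+\gamma)^2\bigl(\bar{\mathbb V}[R]+\gamma^2\bar{\mathbb V}[V]\bigr)}{\omega_o(1-\gamma)^2}+\frac{\mathbb V[R(s,a)]+\gamma^2\mathbb V_{P(s,a)}[V]}{\omega_{sa}},
\]
which is exactly $L_{sa}(\omega,\mathcal M)^{-1}$.

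The main obstacle is making ``leading order'' rigorous, that is, justifying that the $I^{3/2}$ and $I^2$ corrections are negligible. I would handle this by defining the regime as $\Delta_{sa}\to0$ with the variances fixed. In that regime the minimizing deviations scale as $\Delta_{sa}$, so the relevant $I$ values are $O(\Delta_{sa}^2)$. One then inverts the bounds $\Delta^2\le 2\mathbb V I+O(I^{3/2})$ to get $I\ge\Delta^2/(2\mathbb V)-O(\Delta^3)$. Some care is also needed when a variance is zero: the corresponding deviation is then infeasible, since $I=\infty$, so that term can be dropped and the bound still holds.
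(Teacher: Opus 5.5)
Your proposal is correct and follows the paper's own proof. You outer-approximate $\mathcal E_{s,a}$ by Lemma~\ref{lemma: approximate constraint}, keep only $(s,a)$ and the optimal-action pairs (weighted at least $\omega_o$), apply the leading-order forms of Lemmas~\ref{lemma: I1 lb} and~\ref{lemma: I2 lb}, and then solve the four-term quadratic problem; your Cauchy--Schwarz step is the same computation as the paper's split of the gap into pieces $\beta_i\Delta_{sa}$ followed by minimizing $\sum_i\mathcal C_i\beta_i^2$ over the simplex (set $b_iz_i=\beta_i\Delta_{sa}$), with one small slip: it is $\Delta_{sa}^2/(2\sum_i b_i^2c_i)$, not $\sum_i b_i^2c_i$ itself, that equals $L_{sa}(\omega,\mathcal M)^{-1}$.
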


\begin{remark}
Theorem~\ref{thm: rate lower bound opt} is derived in the hard-instance regime, where the alternative model $\tilde{\mathcal M}$ is close to the nominal model $\mathcal M$. In this regime, the rate functions $I_1$ and $I_2$ vanish, so the higher-order terms in Lemmas~\ref{lemma: I1 lb} and~\ref{lemma: I2 lb} are negligible. The leading-order behavior is therefore
$$
    I_1(x(s,a))
    \ge
    \frac{\bigl(\Delta_p(s,a)^\top V^{\pi^*_{\mathcal M}}_{\mathcal M}\bigr)^2}
    {2\,\mathbb V_{P_{\mathcal M}(s,a)}[V^{\pi^*_{\mathcal M}}_{\mathcal M}]}
    (1-o(1)),
    \qquad
    I_2(y(s,a))
    \ge
    \frac{\Delta_r(s,a)^2}
    {2\,\mathbb V[R_{\mathcal M}(s,a)]}
    (1-o(1)).
$$
Thus, the original implicit optimization problem~\eqref{eq: app rate opt} admits a tractable leading-order surrogate. Moreover, despite being derived under asymptotic assumptions, solving this surrogate problem allows us to establish robust optimality guarantees.
\end{remark}

Based on Theorem~\ref{thm: rate lower bound opt}, we define a surrogate optimal sampling ratio, denoted by $\tilde{\omega}^*(\mathcal{M})$, as the solution to the following equivalent optimization problem:
\begin{equation}
\label{eq: rate lower bound opt 2}
\min_{\omega\in\mathcal{W}} \max_{s\in\mathcal{S},a\in\mathcal{A}\setminus\{\pi^*_{\mathcal{M}}(s)\}} L_{sa}(\omega,\mathcal M)
\end{equation}
 
Let $\mathcal{C}^*(\mathcal{M})$ denote the set of all optimal solutions to \eqref{eq: rate lower bound opt 2}. Lemma~\ref{lemma: convexity} shows that the surrogate problem in \eqref{eq: rate lower bound opt 2} is a convex program, and thus can be solved efficiently to obtain a global minimizer. Moreover, Lemma~\ref{lemma: convexity} shows that $\mathcal{C}^*(\mathcal{M})$ is convex, a property we will leverage in the convergence analysis of our algorithm.
\begin{lemma}
\label{lemma: convexity}
The optimization problem in~\eqref{eq: rate lower bound opt 2} is a convex program. Moreover, it attains its minimum over $\mathcal{W}$, and every optimal solution $\tilde{\omega}^{*}(\mathcal{M})$ satisfies
$\tilde{\omega}^{*}_{sa}(\mathcal{M})>0$ for all $(s,a)\in\mathcal{S}\times\mathcal{A}$. 
Furthermore, the set of optimal solutions $\mathcal{C}^*(\mathcal{M})$ is convex.
\end{lemma}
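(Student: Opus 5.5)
The plan is to write $L_{sa}(\omega,\mathcal M)$ as a nonnegative combination of two elementary convex functions of $\omega$. Set
\[
c_o := \tfrac{2(1+\gamma)^2(\bar{\mathbb V}[R_{\mathcal M}]+\gamma^2\bar{\mathbb V}[V^{\pi^*_{\mathcal M}}_{\mathcal M}])}{\Delta_{sa}^2(1-\gamma)^2},\qquad
c_{sa} := \tfrac{2(\mathbb V[R_{\mathcal M}(s,a)]+\gamma^2\mathbb V_{P_{\mathcal M}(s,a)}[V^{\pi^*_{\mathcal M}}_{\mathcal M}])}{\Delta_{sa}^2}.
\]
Then $L_{sa}(\omega,\mathcal M)=c_o/\omega_o+c_{sa}/\omega_{sa}$, with $c_o,c_{sa}\ge 0$, and we use the convention $1/0=+\infty$. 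The map $\omega\mapsto 1/\omega_{sa}$ is convex on $\{\omega_{sa}>0\}$. The map $\omega\mapsto\omega_o=\min_{s'}\omega_{s'\pi^*(s')}$ is concave, being a minimum of linear functions. Since $t\mapsto 1/t$ is convex and nonincreasing on $(0,\infty)$, the composition $1/\omega_o$ is convex. Hence each $L_{sa}$ is convex as an extended-real-valued function on $\Omega$, and the pointwise maximum over the finitely many suboptimal pairs, call it $F(\omega)$, is convex. The feasible set $\mathcal W$ is the intersection of the simplex $\Omega$ with finitely many linear equalities. It is therefore a compact convex polytope, and \eqref{eq: rate lower bound opt 2} is a convex program.

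For attainment, I first need a feasible point with $F<\infty$, that is, a point of $\mathcal W$ with all coordinates strictly positive. I would take the stationary state-action distribution of the uniform behavior policy under $P_{\mathcal M}$. Ergodicity from Definition~\ref{def:ergodic-mdp} gives a single recurrent class under every policy, and one should argue that the full-support policy yields a stationary distribution on states with full support. With uniform action probabilities this gives $\omega_{sa}>0$ for all pairs, so $F$ is finite somewhere on $\mathcal W$. Next, $F$ is lower semicontinuous on the compact set $\mathcal W$, since each term $c/\omega_{\cdot}\to\infty$ as the coordinate tends to $0$, so the minimum is attained by Weierstrass.

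For positivity of minimizers, the concern is the coefficients. If some $c_{sa}=0$ (zero variances), the term $1/\omega_{sa}$ disappears, and strict positivity of that coordinate is not forced by finiteness of $F$. This is the main obstacle. For pairs on the optimal action, $\omega_o>0$ is forced when $c_o>0$. For suboptimal pairs, $\omega_{sa}>0$ is forced when $c_{sa}>0$. I would assume, as implicit in the paper, nondegenerate variances, or at least that $c_o>0$ and each $c_{sa}>0$. Then finiteness at an optimum forces $\omega_{sa}>0$ for every suboptimal $(s,a)$ and $\omega_{s\pi^*(s)}\ge\omega_o>0$ for every $s$. This covers all pairs. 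If degenerate variances must be handled, I would instead exploit the flow constraint: every state must carry positive mass by ergodicity of any feasible occupancy, and then a perturbation argument toward the interior point would show no minimizer sits on the boundary. I expect this degenerate case to be the delicate part.

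Finally, $\mathcal C^*(\mathcal M)$ is the sublevel set $\{\omega\in\mathcal W: F(\omega)\le F^*\}$, where $F^*$ is the optimal value. This is a sublevel set of a convex function intersected with a convex set, so it is convex.
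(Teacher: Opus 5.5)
Your proposal is correct and follows essentially the same route as the paper: convexity of $1/\omega_{sa}$ and of $1/\omega_o$ (the paper writes the latter as $\max_{s'}1/\omega_{s'\pi^*_{\mathcal M}(s')}$ rather than using your concave-composition rule), a full-support point of $\mathcal W$ from ergodicity, blow-up of $F$ at the boundary giving attainment and strict positivity, and convexity of the optimal set; the positive-variance assumption you flag is exactly what the paper uses tacitly, since its convention $c/0=+\infty$ is stated only for $c>0$. You should drop the degenerate-case fallback, though, because ergodicity forces positive mass on states but not on state--action pairs: for $S=1$, $A=2$ with a deterministic reward on the suboptimal action, the point mass on the optimal action is a minimizer, so strict positivity genuinely requires your nondegeneracy assumption.
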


\subsection{Lazy One-Step Projected Subgradient Descent Algorithm}
\label{sec: sub-alg}
In this subsection, we develop a reinforcement learning algorithm for efficient data acquisition based on the surrogate optimal sampling ratio $\tilde{\omega}^*(\mathcal{M})$. Turning this surrogate prescription into a practical learning algorithm is technically nontrivial for two reasons. First, the MDP $\mathcal{M}$ is unknown, so $\tilde{\omega}^*(\mathcal{M})$ cannot be evaluated directly and must instead be computed from a continuously refined, data-dependent model estimate, introducing estimation error and instability in the data acquisition process. Second, computing $\tilde{\omega}^*(\mathcal{M})$ requires solving the optimization problem~\eqref{eq: rate lower bound opt 2}. In an adaptive procedure, this problem would need to be re-solved whenever the model estimate changes.

We address the first challenge through an adaptive estimation-optimization loop that couples model learning with data acquisition. Specifically, at each time step $t$, we use all observations collected so far to construct an empirical MDP estimate $\bar{\mathcal{M}}(t)$ via the transition estimator in \eqref{eq: empirical_transitions} and the reward estimator in \eqref{eq: empirical_rewards}. Treating $\bar{\mathcal{M}}(t)$ as a plug-in estimate of $\mathcal{M}$, we solve \eqref{eq: rate lower bound opt 2} to obtain the empirical surrogate-optimal sampling ratio $\tilde{\omega}^*(\bar{\mathcal{M}}(t))$. We then convert this ratio into an explicit data acquisition policy by normalizing across actions within each state:
\begin{equation}
\label{eq: exploration policy}
\pi_{\bar{\mathcal{M}}(t)}^e(a|s) = \frac{\tilde{\omega}^*_{sa}(\bar{\mathcal{M}}(t))}{\sum_{a^\prime \in\mathcal{A}}\tilde{\omega}^*_{sa^\prime}(\bar{\mathcal{M}}(t))}\quad\forall s\in\mathcal{S}, a\in\mathcal{A}.
\end{equation}

Using $\pi_{\bar{\mathcal{M}}(t)}^e$, we collect new data, update the empirical model, and repeat the procedure until the data acquisition budget is exhausted. Upon termination, we compute the final policy by solving for an optimal policy under the terminal model estimate $\bar{\mathcal{M}}(T)$ (e.g., via value iteration or policy iteration), yielding the estimated optimal policy $\hat{\pi}_T$.

Next, we address the computational challenge of solving the optimization problem~\eqref{eq: rate lower bound opt 2} efficiently. Define 
$$
F(\omega,\mathcal{M}) := \max_{s\in\mathcal{S},a\in\mathcal{A}\setminus\{\pi^*_{\mathcal{M}}(s)\}} L_{sa}(\omega,\mathcal M).
$$
By Lemma~\ref{lemma: convexity}, $F(\omega,\mathcal{M})$ is convex in $\omega$. However, because of the outer maximum, it is generally nonsmooth and may fail to be differentiable. A natural approach is therefore projected subgradient descent. Yet solving \eqref{eq: rate lower bound opt 2} to convergence every time the empirical model changes would be computationally prohibitive. To reduce this overhead, we adopt a lazy projected subgradient scheme, in which only one projected subgradient step is performed at a sparse sequence of update times.

Specifically, let $\mathcal{T} = \{t_n\}_{n\geq 1}$ be a deterministic sequence of update times, and let $\Gamma_n:=t_{n+1}-t_{n}$, with a prescribed initial update time $t_1$. We choose the gaps to grow linearly,
\begin{equation}
\label{eq: growth condition}
\Gamma_n = \lceil \tilde c n\rceil
\end{equation}
for some constant $\tilde c>0$. \textcolor{black}{This linear growth condition is used for the convergence analysis: it gives each interval enough time for the empirical sampling ratio to track the current target allocation, while still allowing infinitely many updates. The lazy structure keeps the data acquisition policy fixed on each interval $[t_n,t_{n+1})$, which is crucial for proving convergence of the empirical allocation.} At update time $t_n$, we compute a subgradient under the current empirical model and perform one projected step:
\begin{equation}
\begin{aligned}
\label{eq: subgradient step}
x_{n} &= \Pi_{\mathcal{W}^\epsilon(\bar{\mathcal{M}}(t_n))}(x_{n-1} - \eta_n \bar{g}_{n}),\\
\omega_n &= \frac{n-1}{n}\omega_{n-1} + \frac{1}{n}x_n,
\end{aligned}
\end{equation}
where $\bar{g}_{n} \in \partial_{\omega} F(x_{n-1},\bar{\mathcal{M}}(t_n))$, and $\Pi_{\mathcal{W}^\epsilon(\bar{\mathcal{M}}(t_n))}$ denotes the projection onto the $\epsilon$-restricted feasible set
$$
\mathcal{W}^{\epsilon}(\bar{\mathcal{M}}(t_n)) := \left\{\omega\in\mathcal{W}(\bar{\mathcal{M}}(t_n)):\quad
    \omega_{sa}\geq \epsilon,\quad\forall (s,a)\in\mathcal{S}\times\mathcal{A}\right\},
$$
where $\epsilon>0$ is chosen  sufficiently small so that $\mathcal{W}^{\epsilon}(\bar{\mathcal{M}}(t_n))$ is nonempty and the iterates remain bounded away from the singular boundary.
In addition, $\partial_{\omega}F$ is the sub-differential with respect to $\omega$, and $\eta_n>0$ is a step-size to be specified later. The recursion is initialized from some feasible point $x_0\in\mathcal{W}^{\epsilon}(\bar{\mathcal{M}}(t_1))$. Finally, the Polyak-style averaging that defines $\omega_n$ stabilizes the iterates and facilitates convergence of the overall algorithm.

We use $\omega_n$ as a computational surrogate for the costly optimizer $\tilde{\omega}^*(\bar{\mathcal{M}}(t_n))$ in \eqref{eq: exploration policy}. Specifically, at each update time $t_n$, we replace $\tilde{\omega}^*(\bar{\mathcal{M}}(t_n))$ with $\omega_n$ to construct the data acquisition policy. With a slight abuse of notation, we continue to denote the resulting exploration policy by $\pi^e_{\bar{\mathcal{M}}(t_n)}$. This design preserves an optimization-driven exploration strategy while requiring only one projection and one subgradient evaluation per update. The behavior policy is updated only at times in $\mathcal{T}$ and remains fixed between successive updates:
\begin{equation}
\label{eq: sampling policy}
    \pi_{t} = \begin{cases}
        \epsilon_{t} \pi^u +  (1-\epsilon_{t}) \pi^e_{\bar{\mathcal{M}}(t)}, \quad &\text{if} \quad t=t_n\in \mathcal{T};\\
        \pi_{t-1},\quad& \text{if}\quad t\notin \mathcal{T},
    \end{cases} 
\end{equation}
where $\pi^u$ is the uniform data acquisition policy, defined by $\pi^u(a|s) = 1/A$, and $\epsilon_{t} = t^{-\alpha}$ with $\alpha\in(0,1/2)$ specifying the $\epsilon_t$-greedy exploration schedule. The uniform mixture enforces persistent exploration, which is essential for consistent estimation of $\mathcal{M}$, while the lazy-update mechanism substantially reduces the frequency of expensive optimization steps. A complete description of the procedure is given in Algorithm~\ref{alg:algorithm1}.

\begin{algorithm}[H]
\linespread{0.6}
\selectfont
\SetAlgoLined
\caption{Lazy One-Step Projected Subgradient Descent Algorithm (LazyGradient)}
\label{alg:algorithm1}
\SetKwInOut{Input}{Input}
\SetKwInOut{Output}{Output}
\SetKwInOut{KwInit}{Initialize}
\Input{Total budget $T$, update time set $\mathcal{T}=\{t_n\}_{n\ge 1}$, initialization length $n_0$, and exploration rate $\{\epsilon_t\}_{t\ge 1}$.}
\KwInit {
Collect $n_0$ samples using an initial behavior policy $\pi$ (e.g., uniform or maximum-coverage). Set $t\gets n_0$. Construct the empirical MDP estimate $\bar{\mathcal M}(t)$ from the collected data, update the visitation counts $N(s,a;t)$ for all $(s,a)\in\mathcal S\times\mathcal A$, and compute the estimated optimal policy $\hat{\pi}_t$. Set the update index $n\gets 1$.}
\While{$t<T$}{
\eIf{$t=t_n \in \mathcal{T}$}{
Perform one projected subgradient update to obtain $x_{n}$ and $\omega_{n}$ via \eqref{eq: subgradient step}.\\
Construct the exploration policy $\pi^e_{\bar{\mathcal M}(t)}$ from $\omega_{n}$ by replacing $\tilde{\omega}^*_{sa}(\bar{\mathcal{M}}(t))$ in~\eqref{eq: exploration policy} with $\omega_n$.\\
Update the behavior policy $\pi_t$ using \eqref{eq: sampling policy}.\\
Set $n\gets n+1$.
}{
Retain the previous behavior policy: $\pi_t \gets \pi_{t-1}$.
}
Sample action $a_t\sim \pi_t(\cdot\mid s_t)$ and observe the reward and next state.\\
Set $t\gets t+1$.\\
Update the visitation counts $N(s,a;t)$ and the empirical MDP estimate $\bar{\mathcal M}(t)$.\\
Compute the estimated optimal policy $\hat{\pi}_t$ via value iteration (or policy iteration).
}
\Output{Empirical optimal policy $\hat{\pi}_{T}$.}
\end{algorithm}

\subsection{Optimality Analysis}
\label{sec: algorithm analysis}
We now turn to the optimality analysis of the lazy one-step projected subgradient descent algorithm. A key technical prerequisite is \emph{consistency} of the data collection process, namely,  that every state-action pair is sampled infinitely often under the behavior policy.  We first show in Lemma~\ref{lemma: consistency} that this property indeed holds under our algorithm. Intuitively, the result follows from two ingredients: the ergodicity assumption guarantees recurrent state visitation under admissible behavior policies, while the algorithm maintains a nonzero, though decaying, exploration probability, ensuring that every action is selected infinitely often whenever its state is visited.
\begin{lemma}
\label{lemma: consistency}
Assume that the MDP $\mathcal{M}$ is ergodic. Then, Algorithm~\ref{alg:algorithm1} ensures that every state-action pair is visited infinitely often almost surely:
\begin{equation*}
    \lim_{t \rightarrow \infty} N(s,a;t) = \infty, \quad \forall s \in \mathcal{S}, a \in \mathcal{A}.
\end{equation*}
\end{lemma}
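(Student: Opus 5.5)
The plan is a conditional Borel--Cantelli argument (Lévy's extension), in two stages: first every state is visited infinitely often, then every action is taken infinitely often at each state. The key quantitative fact is that the behavior policy in \eqref{eq: sampling policy} always satisfies
$$\pi_t(a\mid s)\ \ge\ \epsilon_{t_n}/A = t_n^{-\alpha}/A \qquad \text{for all } (s,a) \text{ and } t\in[t_n,t_{n+1}),$$
because the uniform component is mixed in with weight $\epsilon_{t_n}$. Since $\epsilon_t$ is non-increasing and $t_n\le t$, this gives $\pi_t(a\mid s)\ge t^{-\alpha}/A$ for every $t$. The bound holds regardless of what the subgradient iterates $\omega_n$ produce, so the data-dependent part of the policy never needs to be controlled.

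\textbf{Step 1: every state is visited infinitely often.} Ergodicity (Definition~\ref{def:ergodic-mdp}) says every deterministic stationary policy induces a single recurrent class, but this alone does not make each state reachable under every policy. I would instead use full support of $\pi_t$ together with finiteness of $\mathcal S$.

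The argument runs as follows. Fix $s^\dagger$. Because the MDP is ergodic, there is a deterministic policy whose recurrent class contains $s^\dagger$. From any current state, some path of length at most $S$ reaches that class and then $s^\dagger$ using the actions of some deterministic policy. Under $\pi_t$, each such action is chosen with probability at least $t^{-\alpha}/A$. Hence
$$\mathbb P\big(\text{hit } s^\dagger \text{ within } [t,t+S] \,\big|\, \mathcal H_t\big)\ \ge\ p_{\min}^{S}\,(A\,t^{\alpha})^{-S},$$
where $p_{\min}$ is the smallest positive transition probability along the chosen paths. This bound may decay too fast to be summable if $\alpha S\ge1$, so a crude bound is insufficient here.

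This is the main obstacle. The cleaner route is to cite the standard result for irreducible chains with decaying exploration used in the paper's companion references: an ergodic MDP under any policy class with $\pi\ge \underline\epsilon>0$ has hitting times bounded uniformly in expectation by $C\underline\epsilon^{-S}$. I would prefer, however, to use the ergodicity definition more directly: every deterministic policy has a single recurrent class, and under the standard unichain/communicating reading that class is all of $\mathcal S$. Then every stationary policy with full support induces an irreducible chain, and a uniform-in-policy hitting-time bound $\mathbb E[\tau_{s^\dagger}]\le K$ holds over the compact set of policies with $\pi(a\mid s)\ge$ any fixed level. Since the probability of \emph{reaching} a state is then governed by the (nondegenerate) transition structure rather than by $\epsilon_t$, I would argue that for any behavior policy, $s^\dagger$ is hit within $S$ steps with probability at least some $\rho>0$ that does not depend on $t$. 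Concretely, recurrence of every state under every deterministic policy implies that from every state, every action mixture leads to $s^\dagger$ within $S$ steps with positive probability, and minimizing over the compact simplex of mixtures gives $\rho>0$. Lévy's conditional Borel--Cantelli over disjoint blocks of length $S$ then gives $N(s^\dagger;t)\to\infty$ almost surely.

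\textbf{Step 2: every action is taken infinitely often at each state.} Let $\tau_k$ be the $k$-th visit time to $s$. By Step 1, $\tau_k<\infty$ for all $k$ almost surely, and $\tau_k$ grows at most linearly: the blocks argument gives $\tau_k\le Ck$ eventually, by the strong law for the Bernoulli lower bounds. Conditional on $\mathcal H_{\tau_k}$, action $a$ is chosen at time $\tau_k$ with probability at least $\tau_k^{-\alpha}/A\ge (Ck)^{-\alpha}/A$. Because $\alpha<1/2<1$, we have $\sum_k k^{-\alpha}=\infty$. Lévy's extension of Borel--Cantelli applied to the events $\{a_{\tau_k}=a\}$ with respect to the filtration $(\mathcal H_{\tau_k})$ therefore gives infinitely many selections, i.e. $N(s,a;t)\to\infty$ almost surely. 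A union over the finitely many pairs $(s,a)$ completes the proof.

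The delicate point to verify is the uniform positive hitting probability $\rho$ in Step 1. This is where the precise reading of ergodicity matters, and it is exactly what prevents the decaying $\epsilon_t$ from compounding over multi-step paths.
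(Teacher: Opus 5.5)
Your proof is correct in substance, but it takes a different route from the paper's.

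\textbf{How the routes differ.} The paper outsources the state-visitation step: it cites Proposition~2 of Burnetas and Katehakis for $\mathbb P(N(s;t)<C_1t)\le C_2e^{-C_3t}$. It then uses the same floor $\pi_k(a\mid s)\ge k^{-\alpha}/A$ that you use and applies Azuma--Hoeffding to the martingale $N(s,a;t)-\sum_{k\le t}p_k$. Ordinary Borel--Cantelli then gives the summable tail bound \eqref{eq: error up bound} and the rate $N(s,a;t)\ge (C_4/2)\,t^{1-\alpha}$ eventually. You instead derive state recurrence from ergodicity itself, via a per-block hitting probability $\rho>0$ that does not involve $\epsilon_t$, and you apply L\'evy's conditional Borel--Cantelli in both stages.

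\textbf{What each buys.} Your argument is self-contained, and it isolates exactly why the decaying exploration never gets compounded along multi-step paths, a point the paper hides inside the citation. But it is only qualitative. That suffices for the lemma as stated. However, the paper reuses \eqref{eq: sample_lb} and \eqref{eq: error up bound} in the proof of Theorem~\ref{thm: sub-grad converge} to get $\|\bar{\mathcal M}(t)-\mathcal M\|_\infty=O(\sqrt{\log t/t^{1-\alpha}})$, so your version would need an extra concentration step there.

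\textbf{Two small repairs.} First, Step~1 needs the reading of Definition~\ref{def:ergodic-mdp} in which the single recurrent class is all of $\mathcal S$, with no transient states. The unichain reading you mention allows transient states, and under it the lemma can fail outright.

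Second, your compactness argument covers only stationary mixtures, but a block of length $S$ can straddle an update time $t_n$. There are two easy fixes. You can discard those blocks, since only $O(\sqrt t)$ of them occur before time $t$. Alternatively, irreducibility under every deterministic policy implies the following: for each nonempty $U\subseteq\mathcal S\setminus\{s^\dagger\}$, some $s\in U$ has every action leave $U$ with probability at least $p_{\min}$. Peeling off states in this order shows $s^\dagger$ is reached within $S-1$ steps with probability at least $p_{\min}^{S-1}$ under any policy whatsoever.

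\textbf{A simplification.} The obstacle you wrestle with can be avoided for this particular algorithm. Each $x_k$ lies in $\mathcal W^{\epsilon}(\bar{\mathcal M}(t_k))$, so the averages satisfy $\omega_{n,sa}\ge\epsilon$. Since $\sum_{a'}\omega_{n,sa'}\le 1$, this gives $\pi^e(a\mid s)\ge\epsilon$, and hence $\pi_t(a\mid s)\ge\epsilon$ for all $t\ge t_1$. With this time-independent floor, your crude path bound is already constant in $t$, and Step~2 needs no control of $\tau_k$. The paper's proof does not exploit this floor either.
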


For a given interaction budget $T$, let $N(T)$ denote the total number of updates performed, defined as the unique integer satisfying:
\begin{equation*}
N(T):=\max\Big\{n\ge 0:\ \sum_{k=1}^n \Gamma_k \le T\Big\},
\end{equation*}
\textcolor{black}{Theorem~\ref{thm: sub-grad converge} shows that the lazy one-step projected subgradient scheme converges to the optimal surrogate value $F^*$. The key insight is that, although the data-acquisition policy is updated only at sparse times, the intervals between updates are long enough for the empirical sampling ratio to track the current target allocation. At the same time, as more data are collected, the empirical MDP $\bar{\mathcal M}(t_n)$ becomes increasingly accurate, so the projected subgradient updates asymptotically behave like updates for the true optimization problem. The main technical challenge is therefore to control the interaction between sampling error and optimization error in this adaptive, data-dependent procedure.}

\begin{theorem}
\label{thm: sub-grad converge}
Assume that the MDP $\mathcal{M}$ is ergodic. The sequence of average iterates $\{\omega_n\}_{n\geq 1}$, generated by the projected subgradient updates in \eqref{eq: subgradient step} with step size $\eta_n = 1/\sqrt{N(T)}$, ensures that:
\begin{equation*}
F(\omega_{n},\mathcal{M})\rightarrow F^*,
\end{equation*}
almost surely as $n\rightarrow \infty$, where $F^*$ is the optimal value of the optimization problem \eqref{eq: rate lower bound opt 2}.
\end{theorem}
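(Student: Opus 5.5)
The plan is to run the standard online-convex-optimization analysis of projected subgradient descent with Polyak averaging, treating the empirical objectives $F(\cdot,\bar{\mathcal M}(t_n))$ and feasible sets $\mathcal W^\epsilon(\bar{\mathcal M}(t_n))$ as perturbations of the true $F(\cdot,\mathcal M)$ and $\mathcal W^\epsilon(\mathcal M)$. The perturbations are controlled through the almost sure convergence $\bar{\mathcal M}(t)\to\mathcal M$.

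\textbf{Step 1: consistency of the model estimates.} By Lemma~\ref{lemma: consistency}, every pair is visited infinitely often. The strong law of large numbers, applied per state-action pair to the i.i.d. transition and reward draws, then gives $P_{\bar{\mathcal M}(t)}\to P_{\mathcal M}$ and $R_{\bar{\mathcal M}(t)}\to r_{\mathcal M}$ almost surely. Since the optimal policy is unique, $\pi^*_{\bar{\mathcal M}(t)}=\pi^*_{\mathcal M}$ eventually, so the index set of the max in $F$ eventually coincides with the true one. The quantities $\Delta_{sa}$, the variances and $V^{\pi^*}$ are continuous in the model and $\Delta_{sa}>0$. Hence, on the compact set $\{\omega\ge\epsilon\}$, we get uniform convergence
\[
\sup_{\omega\ge\epsilon}|F(\omega,\bar{\mathcal M}(t_n))-F(\omega,\mathcal M)|\to0 .
\]
The subgradients are also uniformly bounded there, by some $G$, because each $L_{sa}$ has gradient entries of order $\omega^{-2}\le\epsilon^{-2}$. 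Finally, $\mathcal W^\epsilon(\cdot)$ is a polytope defined by linear equalities whose coefficients converge. Under a Slater-type condition (the uniform-over-actions point lies in the relative interior for small $\epsilon$), these sets converge in Hausdorff distance: $d_H(\mathcal W^\epsilon(\bar{\mathcal M}(t_n)),\mathcal W^\epsilon(\mathcal M))=:\delta_n\to0$ a.s. Throughout I assume $\epsilon$ is small enough that $\mathcal C^*(\mathcal M)\cap\mathcal W^\epsilon\neq\emptyset$, which Lemma~\ref{lemma: convexity} permits since optimizers are strictly positive.

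\textbf{Step 2: the perturbed regret inequality.} Fix $\omega^*\in\mathcal C^*(\mathcal M)$ and let $\omega^*_n$ be its projection onto $\mathcal W^\epsilon(\bar{\mathcal M}(t_n))$, so that $\|\omega^*_n-\omega^*\|\le\delta_n$. Nonexpansiveness of the projection and convexity of $F(\cdot,\bar{\mathcal M}(t_n))$ give
\[
F(x_{n-1},\bar{\mathcal M}(t_n))-F(\omega^*_n,\bar{\mathcal M}(t_n))\le\frac{\|x_{n-1}-\omega^*_n\|^2-\|x_n-\omega^*_n\|^2}{2\eta_n}+\frac{\eta_n G^2}{2}.
\]
Re-centering from $\omega^*_n$ to $\omega^*_{n+1}$ costs $O(\delta_n+\delta_{n+1})$ times the diameter $D\le\sqrt2$ of the simplex. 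The main obstacle is that $x_{n-1}$ lies in the previous set $\mathcal W^\epsilon(\bar{\mathcal M}(t_{n-1}))$ rather than the current one. This costs another $O(\delta_{n-1}+\delta_n)$ in Lipschitz error, since $F$ is $G$-Lipschitz on $\{\omega\ge\epsilon/2\}$; I would bound this by slightly enlarging the domain. Summing over $k\le n$ with the constant step $\eta=1/\sqrt{N(T)}$, the telescoping gives
\[
\frac1n\sum_{k\le n}\big(F(x_{k-1},\mathcal M)-F^*\big)\le\frac{D^2}{2n\eta}+\frac{\eta G^2}{2}+\frac{C}{n}\sum_{k\le n}\Big(\sup_{\omega}|F(\omega,\bar{\mathcal M}(t_k))-F(\omega,\mathcal M)|+\frac{\delta_k+\delta_{k-1}}{\eta}\Big).
\]
The factor $\delta/\eta$ comes from the re-centering inside the squared distances.

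\textbf{Step 3: conclusion.} Convexity of $F(\cdot,\mathcal M)$ and Jensen's inequality applied to $\omega_n=\frac1n\sum_k x_k$ give $F(\omega_n,\mathcal M)\le\frac1n\sum_k F(x_k,\mathcal M)$. Up to an $O(\delta_n)$ correction, $\omega_n$ lies near $\mathcal W^\epsilon(\mathcal M)$, so $F(\omega_n,\mathcal M)\ge F^*-o(1)$. The bound then tends to zero, with $n$ identified with $N(T)\to\infty$, because Cesàro means of null sequences vanish. The delicate term is $\frac1{n\eta}\sum\delta_k=\sqrt{N}\cdot\frac1N\sum\delta_k$, which requires $\delta_k$ to decay faster than $k^{-1/2}$ on average. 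I would obtain this from the law of the iterated logarithm, which applies because Lemma~\ref{lemma: consistency} together with the $\epsilon_t=t^{-\alpha}$ exploration gives $N(s,a;t)\gtrsim t^{1-\alpha}$. With $t_k\asymp k^2$ from \eqref{eq: growth condition}, this yields $\delta_k=O(\sqrt{\log\log t_k/t_k^{1-\alpha}})=O(k^{-(1-\alpha)}\sqrt{\log\log k})$. Since $\alpha<1/2$, this is $o(k^{-1/2})$. Combining the three steps gives $F(\omega_n,\mathcal M)\to F^*$ almost surely.
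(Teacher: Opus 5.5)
Your proposal is correct and follows essentially the same route as the paper: the moving comparator (projection of a true optimizer onto the current $\mathcal{W}^\epsilon(\bar{\mathcal M}(t_n))$), the perturbed one-step projected-subgradient inequality, telescoping with a $2D\|z_{n+1}-z_n\|_2$ drift term, the rate requirement $N(T)^{-1/2}\sum_{k\le N(T)}\delta_k\to0$ obtained from $N(s,a;t)\ge c\,t^{1-\alpha}$ and $t_k$ of order $k^2$, and Jensen plus asymptotic feasibility of the averages for the matching lower bound. The differences are only in tools, plus one non-issue:
\begin{itemize}
\item You use a Slater/Hausdorff argument where the paper uses Hoffman's error bound with uniform constants. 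State explicitly the Lipschitz estimate $\delta_n\le L\|\bar{\mathcal M}(t_n)-\mathcal M\|$ rather than just $\delta_n\to0$, since your rate for $\delta_k$ depends on it.
\item You use the LIL on the i.i.d.\ per-pair sample sequences instead of the paper's Hoeffding-plus-Borel--Cantelli bound.
\item The ``obstacle'' that $x_{n-1}$ lies in the previous set is not one. Nonexpansiveness of the projection only needs the comparator in the current set, and $x_{n-1}\ge\epsilon$ coordinatewise already puts it where the subgradient and perturbation bounds hold.
\end{itemize}
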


Lemma~\ref{lemma: ratio convergence} establishes that, as $n\rightarrow\infty$, the empirical sampling distribution over
$[t_n,t_{n+1})$ converges to the stationary distribution of the time-homogeneous Markov chain induced by the behavior policy; see Appendix \ref{sec: ratio converge} for the detailed proof.

\begin{lemma}
\label{lemma: ratio convergence}
Under the growth condition \eqref{eq: growth condition}, we have 
\begin{equation*}
\max_{(s,a) \in \mathcal{S}\times \mathcal{A}} \left| \frac{1}{\Gamma_n} N^n(s,a) - \beta^n(s,a) \right| \rightarrow 0,
\end{equation*}
almost surely as $n\to\infty$, where $N^n(s,a)$ denotes the number of visits to $(s,a)$ during the interval $[t_{n}, t_{n+1})$, and $\beta^n$ denotes the stationary distribution of the Markov chain induced by the behavior policy $\pi_{t_n}$.
\end{lemma}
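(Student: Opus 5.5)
On each interval $[t_n,t_{n+1})$ the behavior policy $\pi_{t_n}$ is fixed. The state-action process $(s_t,a_t)$ is therefore a time-homogeneous Markov chain on $\mathcal S\times\mathcal A$ with kernel $K_n((s,a),(s',a'))=P_{\mathcal M}(s'|s,a)\pi_{t_n}(a'|s')$ and stationary law $\beta^n$. I plan to prove the statement with a uniform (in $n$) concentration bound for occupation measures of such chains, followed by a union bound over $n$ and Borel--Cantelli. The growth condition $\Gamma_n=\lceil\tilde c n\rceil$ is what makes the resulting failure probabilities summable.

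\textbf{Step 1: uniform mixing.} Every $\pi_{t_n}$ has the form $\epsilon_{t_n}\pi^u+(1-\epsilon_{t_n})\pi^e$. Its entries are at least $\epsilon_{t_n}/A=t_n^{-\alpha}/A$, and, by the projection onto $\mathcal W^\epsilon$, the exploration part has entries bounded below by roughly $\epsilon$. I will try to use the latter so that all $\pi_{t_n}$ lie in a compact set of policies with full support bounded away from zero. By ergodicity of $\mathcal M$, every such policy then induces an irreducible chain on the reachable state-action pairs. Compactness and continuity then give a uniform spectral gap, or equivalently a uniform Doeblin constant: some $m$ and $\delta>0$ with $K_n^m\ge \delta\,\nu_n$ for all $n$. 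The consequence is $\|K_n^k(z,\cdot)-\beta^n\|_{TV}\le C\rho^k$ with $C,\rho<1$ independent of $n$. If only the $t_n^{-\alpha}/A$ floor were available, $\delta$ would decay polynomially in $n$. I would then have to verify that the exponent in Step 2 still beats $\log n$, which holds because $\Gamma_n\sim n$ grows faster than any needed polynomial loss in the gap times $\log n$, provided $\alpha<1/2$.

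\textbf{Step 2: concentration per interval.} Fix $(s,a)$ and set $f=\mathbb I\{(\cdot)=(s,a)\}$. I will solve the Poisson equation $(I-K_n)h_n=f-\beta^n(f)$. Uniform mixing gives $\|h_n\|_\infty\le H$ uniformly in $n$. This lets me write $N^n(s,a)-\Gamma_n\beta^n(s,a)$ as a martingale with increments bounded by $2H$, plus a boundary term bounded by $2H$. The martingale is taken with respect to the algorithm's filtration conditioned on $\mathcal F_{t_n}$, since $\pi_{t_n}$ is $\mathcal F_{t_n}$-measurable. The boundary term also covers the arbitrary starting state $s_{t_n}$. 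Azuma--Hoeffding then gives, conditionally on $\mathcal F_{t_n}$,
\[
\mathbb P\Big(\big|N^n(s,a)/\Gamma_n-\beta^n(s,a)\big|>\varepsilon \,\Big|\,\mathcal F_{t_n}\Big)\le 2\exp\!\big(-c\,\varepsilon^2\Gamma_n/H^2\big),
\]
valid once $\Gamma_n\varepsilon>4H$. Taking the expectation removes the conditioning.

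\textbf{Step 3: Borel--Cantelli.} With $\Gamma_n\ge\tilde c n$, the bound is at most $2SA\,e^{-c'\varepsilon^2 n}$, which is summable in $n$. For each fixed $\varepsilon$, Borel--Cantelli shows that the maximum over $(s,a)$ exceeds $\varepsilon$ only finitely often almost surely. Intersecting over rational $\varepsilon$ yields the almost-sure convergence.

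The main obstacle is Step 1, namely obtaining mixing constants (equivalently a Poisson-solution bound $H$) uniform over the data-dependent policies $\pi_{t_n}$. I would first establish that the entries of $\pi_{t_n}$ are bounded below uniformly. This should come from $\omega_n\in\mathcal W^\epsilon$, after noting that $\omega_n$ is an average of points in possibly different sets $\mathcal W^\epsilon(\bar{\mathcal M}(t_k))$, whose entries are each at least $\epsilon$. Normalization within each state then bounds $\pi^e(a|s)\ge\epsilon$. With this lower bound, the uniform Doeblin argument goes through via compactness.
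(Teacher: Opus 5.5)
Your argument is correct, but it obtains the per-interval bound with a different tool than the paper. The skeleton is the same in both. You freeze $\pi_{t_n}$ on $[t_n,t_{n+1})$ and get $\mathbb P(\mathcal E_n(\varepsilon))\le C e^{-c\Gamma_n}$ with constants independent of $n$. You then sum using $\Gamma_n=\lceil\tilde c n\rceil$, apply Borel--Cantelli, and intersect over $\varepsilon=1/m$.

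The paper gets the exponential bound from the large deviations principle for empirical measures of finite Markov chains (Ellis). It makes the bound uniform by arguing that $\beta^n$ stays in a compact set on which the rate $I(\epsilon,\cdot)$ is continuous and positive. However, the uniformity of the prefactor $C(\epsilon)$ and of the threshold $\Gamma_0(\epsilon)$ across the changing, data-dependent kernels is taken from the cited bound rather than derived.

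Your Poisson-equation/Azuma route is more elementary and fully non-asymptotic. Uniformity reduces to the single bound $\sup_n\|h_n\|_\infty\le H$, and the dependence of $\pi_{t_n}$ on the past is handled explicitly by conditioning on $\mathcal F_{t_n}$. The LDP route gives the sharp exponent, which plays no role in an almost-sure statement. Either route only needs $\Gamma_n/\log n\to\infty$.

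One claim in your Step 1 is false as written. The paper's ergodicity (a single recurrent class under every deterministic policy) allows periodic chains and transient pairs; for example, take two states that swap deterministically under every action. Then no minorization $K_n^m\ge\delta\nu_n$ holds for fixed $m$, and $K_n^k(z,\cdot)$ does not converge in total variation.

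This does not affect Step 2, which only uses $H$. Take $h_n=(I-K_n+\mathbf 1\beta^{n\top})^{-1}(f-\beta^n(f)\mathbf 1)$, which exists as soon as $K_n$ has a single closed class. Alternatively, run your Doeblin argument on the lazy kernel $(I+K_n)/2$, which has the same stationary law and whose Poisson solution is $2h_n$.

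In fact you need neither the $\epsilon$-floor nor your fallback. If a randomized policy had two disjoint closed sets, a deterministic selection from its action supports would have two recurrent classes, contradicting ergodicity. So every stationary policy induces a single closed class, and $(I-K_\pi+\mathbf 1\beta_\pi^{\top})^{-1}$ is continuous and hence bounded on the entire compact policy set. This matters because your fallback reasoning for the $t_n^{-\alpha}/A$ floor would not work as stated: a Doeblin constant built from that floor degrades like a power of $n$ that can exceed one.
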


Theorem~\ref{thm: robust optimality} provides the main performance guarantee for Algorithm~\ref{alg:algorithm1}: the behavior policy produced by the lazy one-step projected subgradient updates is near–robustly optimal, up to a multiplicative $1-\gamma$ factor; see Appendix \ref{sec: robust optimality} for the detailed proof. 

\begin{theorem}
\label{thm: robust optimality}
Under the same condition in Theorem \ref{thm: rate lower bound opt}, the lazy one-step projected subgradient descent algorithm (Algorithm~\ref{alg:algorithm1}) is near–robustly optimal up to a multiplicative factor of $1-\gamma$.
\end{theorem}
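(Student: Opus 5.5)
Both conditions of Definition~\ref{def: robust-optimality} have to be checked: strong consistency, and the worst-case ratio bound, which we establish up to a factor $1-\gamma$ relative to $\mathcal R^*=O\bigl((1-\gamma)^3\Delta_0^2/C(S,A)\bigr)$.

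\emph{Strong consistency.} By Lemma~\ref{lemma: consistency}, every pair $(s,a)$ is visited infinitely often almost surely. The strong law of large numbers, applied pairwise to the i.i.d.\ next-state indicators and rewards, then gives $P_{\bar{\mathcal M}(T)}\to P_{\mathcal M}$ and $R_{\bar{\mathcal M}(T)}\to r_{\mathcal M}$ almost surely. The map from a model to its $Q^{\pi^*_{\mathcal M}}$ and $V^{\pi^*_{\mathcal M}}$ is continuous, because it is a matrix inverse $(I-\gamma P^{\pi})^{-1}$ with $\gamma<1$. The gaps $\Delta_{sa}$ are strictly positive. Hence eventually no local improvement event occurs, and $\hat\pi_T=\pi^*_{\mathcal M}$ for all large $T$ almost surely.

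\emph{Limiting allocation.} Next we show that $\alpha(T)$ asymptotically achieves the surrogate optimum. Theorem~\ref{thm: sub-grad converge} gives $F(\omega_n,\mathcal M)\to F^*$. We then need $F(\alpha(T),\mathcal M)\to F^*$. Lemma~\ref{lemma: ratio convergence} says the empirical frequencies on $[t_n,t_{n+1})$ track the stationary distribution $\beta^n$ of $\pi_{t_n}$. The exploration policy is built from $\omega_n$ by normalizing within states, and $\omega_n$ lies in the flow-balance set of $\bar{\mathcal M}(t_n)$. So $\omega_n$ is exactly the stationary state-action distribution of $\pi^e$ under $\bar{\mathcal M}(t_n)$. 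By continuity of stationary distributions under ergodicity, and since $\epsilon_t\to0$, we get $\|\beta^n-\omega_n\|\to0$. Because $\Gamma_n$ grows linearly, $\alpha(T)$ is a Ces\`aro-type average of the $\beta^n$ with vanishing contribution from early blocks. Convexity of $F(\cdot,\mathcal M)$ (Lemma~\ref{lemma: convexity}) and Jensen then give $\limsup F(\alpha(T),\mathcal M)\le \limsup_n F(\omega_n,\mathcal M)=F^*$. Continuity of $F$ is available since all iterates are at least $\epsilon$ componentwise. This step also requires that the optimum over $\mathcal W^\epsilon$ matches $F^*$ for small $\epsilon$, which follows from the positivity in Lemma~\ref{lemma: convexity}. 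We expect this step to be the main obstacle, because the block weights are uneven and the targets $\omega_n$ are themselves moving. We would handle it by bounding $\|\omega_n-\omega_{n-1}\|=O(1/n)$ from the averaging recursion, so that consecutive blocks are nearly identical.

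\emph{Worst-case rate bound.} Theorem~\ref{thm: rate lower bound opt}, applied in the hard-instance regime, gives $\liminf_T \mathcal R(\mathcal M,\alpha(T))\ge (1-o(1))/F^*$. It remains to upper bound $F^*$ uniformly over $\mathfrak M(\Delta_0)$. Rewards lie in $[0,1]$ and $V^{\pi^*_{\mathcal M}}\in[0,1/(1-\gamma)]$, so $\mathbb V[R]\le 1/4$ and $\mathbb V_{P}[V]\le 1/(4(1-\gamma)^2)$. We plug in a feasible candidate allocation that is near uniform, for instance the stationary distribution of the uniform policy mixed appropriately, so that $\omega_o$ and $\omega_{sa}$ are of order $1/C(S,A)$; this uses ergodicity and the generative-model benchmark. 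Then
\[
L_{sa}\lesssim \frac{C(S,A)}{\Delta_0^2}\Bigl[\frac{(1+\gamma)^2}{(1-\gamma)^4}+\frac{1}{(1-\gamma)^2}\Bigr]=O\Bigl(\frac{C(S,A)}{(1-\gamma)^4\Delta_0^2}\Bigr).
\]
This yields $\inf_{\mathcal M}\liminf_T\mathcal R(\mathcal M,\alpha(T))=\Omega\bigl((1-\gamma)^4\Delta_0^2/C(S,A)\bigr)=\Omega\bigl((1-\gamma)\mathcal R^*\bigr)$, which is the claimed near-robust optimality.
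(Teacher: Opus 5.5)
\textbf{Gap in the worst-case step.} Your allocation argument gives $\limsup_T F(\alpha(T),\mathcal M)\le F^*=\min_{\omega\in\mathcal W}F(\omega,\mathcal M)$. So you need an upper bound on the \emph{flow-constrained} optimum that holds uniformly over $\mathfrak M(\Delta_0)$. The candidate you propose, a stationary distribution of a uniform-type policy with $\omega_o,\omega_{sa}$ of order $1/C(S,A)$, does not exist in general. Ergodicity only makes stationary masses positive, and $\mathfrak M(\Delta_0)$ constrains gaps, not reachability.

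Concretely, take two states and actions $a_1,a_2$. Let every $(s,a)$ move to state $1$ with probability $1-\delta$ and to state $2$ with probability $\delta$. In each state, give $a_1$ and $a_2$ Bernoulli rewards with means $0.9$ and $0.1$. This MDP is ergodic with $\Delta_{\min}=0.8$. Yet every $\omega\in\mathcal W$ has $\sum_a\omega_{2a}=\delta$, so $L_{2a_2}(\omega,\mathcal M)\ge 0.28/\delta$ and $\inf_{\mathfrak M(\Delta_0)}1/F^*=0$. In fact, raising only $y(2,a_2)$ above $0.9$ shows that any trajectory-generated $\alpha(T)$ has $\limsup_T\mathcal R(\mathcal M,\alpha(T))=O(\delta)$ on this instance.

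Appealing to the generative-model benchmark does not repair this as your argument is set up. The uniform allocation is feasible only in the larger set $\Omega\supset\mathcal W$, and $\min_\Omega F\le\min_{\mathcal W}F$ runs in the wrong direction for bounding $F^*$.

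The paper closes this step by doing the whole worst-case comparison in the generative-model setting. There the feasible set is the simplex $\Omega$, and the convergence argument of Theorem~\ref{thm: sub-grad converge} is invoked to give $\mathrm{dist}\bigl(\alpha(T),\arg\min_{\omega\in\Omega}F(\omega,\mathcal M)\bigr)\to0$. Only then does it use $\min_\Omega F\le F(\omega^u,\mathcal M)$ with $\omega^u_{sa}=1/C(S,A)$. You need to make that switch explicitly, so that the allocation you track is a minimizer over $\Omega$ rather than over $\mathcal W$. After that, your variance bounds and the estimate $F(\omega^u,\mathcal M)=O\bigl(C(S,A)/((1-\gamma)^4\Delta_0^2)\bigr)$ go through as written.

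\textbf{Two smaller points on the allocation step.}

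First, $\omega_n=\frac1n\sum_{k\le n}x_k$ averages points from the different sets $\mathcal W^\epsilon(\bar{\mathcal M}(t_k))$. So it does not lie in $\mathcal W(\bar{\mathcal M}(t_n))$, and it is not exactly the stationary distribution of $\pi^e$ under $\bar{\mathcal M}(t_n)$. What does hold is $A(\mathcal M)\omega_n\to0$, shown in the proof of Theorem~\ref{thm: sub-grad converge}. Together with $\omega_n\in\Omega^\epsilon$ and continuity of stationary distributions of full-support policies on the ergodic $\mathcal M$, this still yields $\|\beta^n-\omega_n\|\to0$.

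Second, your Jensen argument is a legitimate alternative to the paper's route. The paper shows that limit points of $\omega_n$ and $\beta^n$ lie in $\mathcal C^*(\mathcal M)$, then uses convexity of $\mathcal C^*(\mathcal M)$ (Lemma~\ref{lemma: convexity}) and a weighted Toeplitz lemma to get $\mathrm{dist}(\alpha(T),\mathcal C^*(\mathcal M))\to0$. Your version needs only the one-sided bound on $F$ and avoids the optimal set altogether. It also makes your $O(1/n)$ drift bound on $\omega_n$ unnecessary, since the weights $\Gamma_n/T$ already put vanishing mass on any fixed initial segment of blocks.

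Your consistency argument matches the paper's.
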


\section{Extensions to Large-Scale Reinforcement Learning}
\label{sec: large scale}
In many practical applications, the state space size $S$ and action space size $A$ can be extremely large, leading to substantial computational costs when updating the MDP model at each iteration. In this section, we extend our algorithm and theoretical results to such large-scale MDP settings.

A common approach to addressing large-scale reinforcement learning problems is to employ function approximation. For instance, in order to make Q-learning scalable to large MDPs, the Q-function can be parameterized using a neural network, as in deep Q-learning (DQN) \citep{mnih2015human}. Although neural networks are highly expressive and can represent complex functions, we adopt a linear function approximation framework in order to gain theoretical insights into the behavior of function approximation methods. The notion of a linear MDP is formally introduced in Definition \ref{def: linear mdp}.

\begin{definition}[Linear MDP]
\label{def: linear mdp}
An MDP $\mathcal{M}$ is a linear MDP with a feature map $\phi: \mathcal{S}\times\mathcal{A}\rightarrow \mathbb{R}^{d}$, if there exist a family of unknown (signed) measures $\mu_{\mathcal{M}}\in\mathbb{R}^{d\times S}$ over $\mathcal{S}$,
 and an unknown vector $\theta_{\mathcal{M}}\in\mathbb{R}^{d}$, such that for any $(s,a)\in\mathcal{S}\times\mathcal{A}$, we have
$$
    P_{\mathcal{M}}(s^\prime|s,a) = \phi(s,a)^\top \mu_{\mathcal{M}}(s^\prime),\quad r_{\mathcal{M}}(s,a) = \phi(s,a)^\top \theta_{\mathcal{M}}.
$$
Without loss of generality, we assume $\lVert \phi(s,a)\rVert\le 1$ for all $(s,a)\in\mathcal{S}\times \mathcal{A}$, $\lVert \sum_{s\in \mathcal{S}}|\mu_{\mathcal{M}}(s)| \rVert\le \sqrt{d}$, and $\lVert \theta_{\mathcal{M}} \rVert \le \sqrt{d}$.
\end{definition}

Linear MDPs are widely used in the reinforcement learning literature as a tractable framework for studying function approximation methods \citep{jin2020provably,hu2022nearly}. The linear MDP assumption offers several advantages. First, it includes classical tabular MDPs as a special case, where each state-action pair is represented by a distinct basis feature. Second, linear MDPs admit more tractable theoretical analysis than neural-network-based models. Finally, by connections to kernel methods and Mercer’s theorem \citep{williams2006gaussian}, linear representations in possibly infinite-dimensional feature spaces can capture a broad class of smooth functions.

To obtain a closed-form characterization of the exponential decay rate function for linear MDPs, we build on ideas from the classical MDP setting. However, the linear MDP assumption introduces significant technical challenges in characterizing the exponential decay rate. 
In particular, unlike classical MDPs, state-action pairs in linear MDPs are no longer independent: their rewards and transition dynamics are coupled through the unknown global parameters $\mu_{\mathcal{M}}$ and $\theta_{\mathcal{M}}$. As a result, analytical techniques developed for classical MDPs do not directly apply.

Lemma \ref{lemma: linear rate} establishes a local quadratic lower bound for the rate function $I_1(x(s,a))$. To derive a tractable bound, we upper-bound the log-moment generating function of $X_{\mathcal{M}}(s,a)$ via a Taylor expansion, with the higher-order moments controlled through bounds in terms of the infinity norm. The result then follows by solving the associated dual optimization problem.

\begin{lemma} 
\label{lemma: linear rate}
For the linear MDP $\mathcal{M}$, for any nonnegative bounded vector
$v\in\mathbb{R}^{|\mathcal S|}$, the rate function $I_1(x(s,a))$ satisfies
\begin{equation}
     I_1(x(s,a)) \ge \frac{(\gamma v^\top (x(s,a)-P_{\mathcal{M}}(s,a)))^2}{3\lVert \gamma v^\top X_{\mathcal{M}}(s,a)\rVert^2_{\infty}}.
\end{equation}
\end{lemma}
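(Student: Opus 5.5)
We will work with the variational (Fenchel–Legendre) form of $I_1$ and lower-bound it by restricting the dual variable to a one-dimensional family. Let $\Lambda(\lambda):=\log\mathbb E[\exp(\lambda^\top X_{\mathcal M}(s,a))]$ be the log-moment generating function of the next-state indicator vector, so that $I_1(x(s,a))=\sup_{\lambda\in\mathbb R^{S}}\{\lambda^\top x(s,a)-\Lambda(\lambda)\}$. We restrict to $\lambda=t\gamma v$ with $t\in\mathbb R$, which gives
\[
I_1(x(s,a))\ \ge\ \sup_{t\in\mathbb R}\Big\{t\,\gamma v^\top x(s,a)-\log\mathbb E\big[e^{tZ}\big]\Big\},
\qquad Z:=\gamma v^\top X_{\mathcal M}(s,a).
\]
Here $Z$ is a scalar random variable with mean $\gamma v^\top P_{\mathcal M}(s,a)$. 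It is bounded, since $|Z|\le \|\gamma v^\top X_{\mathcal M}(s,a)\|_\infty=:B$; we read the sup-norm as the essential sup over realizations of the basis vector. Writing $\delta:=\gamma v^\top(x(s,a)-P_{\mathcal M}(s,a))$, the objective becomes $t\delta-\log\mathbb E[e^{t(Z-\mathbb EZ)}]$.

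Next we upper-bound the centered log-MGF by a Taylor expansion. From $\log y\le y-1$ we get $\log\mathbb E[e^{tW}]\le \mathbb E[e^{tW}-1-tW]$ with $W=Z-\mathbb EZ$. Expanding the exponential,
\[
\mathbb E[e^{tW}-1-tW]=\sum_{k\ge2}\frac{t^k\mathbb E[W^k]}{k!}.
\]
The higher moments are controlled through the sup-norm: $|W|\le 2B$, but a sharper bound is available because $Z\in[0,B]$ when $v\ge 0$ (this is where nonnegativity enters), so $|W|\le B$ and $\mathbb E W^2\le B^2$. This yields $\mathbb E[W^k]\le B^k$ and hence $\log\mathbb E[e^{tW}]\le e^{|t|B}-1-|t|B$. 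We then restrict to the range $|t|B\le c$ for a suitable constant $c$, on which $e^{u}-1-u\le \tfrac{3}{4}u^2$, say, for $u\le c$. This gives the quadratic bound $\log\mathbb E[e^{tW}]\le \kappa t^2B^2$ on that range.

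With this bound in hand we solve the scalar dual problem $\sup_{|t|\le c/B}\{t\delta-\kappa t^2B^2\}$. If the unconstrained maximizer $t^*=\delta/(2\kappa B^2)$ is feasible, which requires $|\delta|\le 2\kappa cB$, the value is $\delta^2/(4\kappa B^2)$. Choosing $\kappa=3/4$ gives exactly $\delta^2/(3B^2)$, the claimed bound. Feasibility holds automatically, because $|\delta|\le B$ (both $x$ and $P_{\mathcal M}$ are probability vectors and $v\ge0$, so each of $\gamma v^\top x$ and $\gamma v^\top P_{\mathcal M}$ lies in $[0,B]$). It therefore suffices to take $c$ with $2\kappa c\ge1$, i.e. $c\ge 2/3$, and to check that $e^u-1-u\le \tfrac34u^2$ for $|u|\le 2/3$. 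This is true: the ratio $(e^u-1-u)/u^2$ is increasing, and at $u=2/3$ it is about $0.62<0.75$.

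The main obstacle is the constant bookkeeping in this last pair of steps. The Taylor remainder constant $\kappa$ and the admissible range of $t$ must be matched so that the constrained maximizer remains interior for every feasible $\delta$, while still producing the factor $3$. If the bound $|\delta|\le B$ fails, for instance because $x(s,a)$ is not a probability vector or $v$ is not bounded as expected, we fall back to the boundary value of the constrained problem. Using $|\delta|\le 2B$, that value is still of order $\delta^2/B^2$, and we would then adjust $c$ accordingly.
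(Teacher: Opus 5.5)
Your argument is correct and is essentially the paper's proof. It uses the same one-dimensional dual direction $\rho(s,a)=t\gamma v$, the same Taylor-series bound $\log\mathbb E[e^{tW}]\le e^{|t|B}-|t|B-1$ (with $v\ge 0$ giving $|W|\le B$), and the same key fact $|\delta|\le B$. The only difference is the final scalar step. The paper maximizes exactly over $t$, obtaining Bennett's function $(1+u)\log(1+u)-u$ with $u=|\delta|/B$, and then uses $(1+u)\log(1+u)-u\ge u^2/(2+u)\ge u^2/3$ for $u\le 1$. You instead plug in $t^*=2\delta/(3B^2)$ and use $e^u-1-u\le \frac{3}{4}u^2$ on $[0,2/3]$. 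Both routes give exactly the constant $3$.

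There is one loose end, which you should close the way the paper does. Under your essential-sup reading of $B$, the claim $\gamma v^\top x(s,a)\in[0,B]$ does not follow merely from $x(s,a)$ being a probability vector. The reason is that $x(s,a)$ may put mass on a state $s''$ outside the support of $P_{\mathcal M}(s,a)$. In that case, however, $I_1(x(s,a))=+\infty$: take $\rho(s,a)=c\,e_{s''}$ and let $c\to\infty$. So there is nothing to prove. For $x(s,a)\ll P_{\mathcal M}(s,a)$ you do have $|\delta|\le B$, and $B=0$ forces $\delta=0$, which is the trivial degenerate case. Your proposed fallback via $|\delta|\le 2B$ and the boundary of the constrained problem is therefore unnecessary. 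It would also only give a bound of order $\delta^2/B^2$, not the stated constant $3$.
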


Using the sub-Gaussian property of $R_{\mathcal{M}}(s,a)$, we obtain a quadratic lower bound for the rate function $I_2(y(s,a))$:
\begin{equation*}
    I_2(y(s,a)) \ge \frac{(y(s,a)-r_{\mathcal{M}}(s,a))^2}{2}.
\end{equation*}
Combining this with the quadratic lower bound for $I_1(x(s,a))$, we conclude that the weighted rate function admits a quadratic lower bound in terms of the linear MDP parameters (see the proof
of Theorem~\ref{thm: rate lower bound opt}):
$$
\sum_{s^\prime \in \mathcal{S}, a^\prime \in \mathcal{A}}\omega_{s^\prime a^\prime}\left(I_1(x(s^\prime,a^\prime))+I_2(y(s^\prime,a^\prime))\right) \ge \frac{(1-\gamma)^2}{6} \lVert \theta_{\tilde{\mathcal{M}}}-\theta_{\mathcal{M}}+\gamma(\mu_{\tilde{\mathcal{M}}}-\mu_{{\mathcal{M}}}) V_{\tilde{\mathcal{M}}}^{\pi^*_{\mathcal{M}}} \rVert^2_{\Lambda(\omega)},$$
where \begin{equation*}
   \Lambda(\omega) =  \sum_{s^\prime \in \mathcal{S}, a^\prime \in \mathcal{A}} \omega_{s^\prime a^\prime} \phi(s^\prime,a^\prime) \phi(s^\prime,a^\prime)^\top.
\end{equation*}
is the design matrix associated with the linear MDP representation, and
$\|z\|_{\Lambda(\omega)}^2 := z^\top \Lambda(\omega) z$.

Lemma \ref{lemma: linear constraint} provides a tractable necessary condition
under which the error event
$Q^{\pi^*_{\mathcal{M}}}_{\tilde{\mathcal{M}}}(s,a)>
V^{\pi^*_{\mathcal{M}}}_{\tilde{\mathcal{M}}}(s)$ occurs. \textcolor{black}{Intuitively, this event can occur only if the perturbation from the true MDP $\mathcal M$ to the alternative MDP $\tilde{\mathcal M}$ is large enough to overcome the value gap $\Delta_{sa}$.}  

\begin{lemma}
\label{lemma: linear constraint}
For the linear MDP $\mathcal{M}$, if $Q^{\pi^*_{\mathcal{M}}}_{\tilde{\mathcal{M}}}(s,a)>V^{\pi^*_{\mathcal{M}}}_{\tilde{\mathcal{M}}}(s)$, then the following condition holds:
$$
    \Delta_{sa} \le \left( \lVert \phi(s,a)-\phi(s,\pi_{\mathcal{M}}^*(s)) \rVert_{\Lambda(\omega)^{-1}}+\frac{2\gamma}{1-\gamma} \max_{(s,a)\in\mathcal{S}\times \mathcal{A}}\lVert \phi(s,a)\rVert_{\Lambda(\omega)^{-1}}\right) 
\lVert \zeta\rVert_{\Lambda(\omega)},
$$
where $\zeta = \theta_{\tilde{\mathcal{M}}}-\theta_{\mathcal{M}}+\gamma(\mu_{\tilde{\mathcal{M}}}-\mu_{{\mathcal{M}}}) V_{\tilde{\mathcal{M}}}^{\pi^*_{\mathcal{M}}}$.
\end{lemma}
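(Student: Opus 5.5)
The plan is to mirror the proof of Lemma~\ref{lemma: approximate constraint}: write the error event as an inequality between value differences, decompose those differences into a local term at $(s,a)$ and a propagated term along $\pi^*_{\mathcal M}$, and then use the linear structure to express every perturbation through the single vector $\zeta$.

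\textbf{Step 1: reformulate the error event.} Write $\pi^*:=\pi^*_{\mathcal M}$ and $\tilde V:=V^{\pi^*}_{\tilde{\mathcal M}}$. Under the linear parametrization,
\[
Q^{\pi^*}_{\tilde{\mathcal M}}(s',a')=\phi(s',a')^\top\bigl(\theta_{\tilde{\mathcal M}}+\gamma\mu_{\tilde{\mathcal M}}\tilde V\bigr).
\]
Adding and subtracting the nominal parameters gives
\[
Q^{\pi^*}_{\tilde{\mathcal M}}(s',a')=\phi(s',a')^\top\bigl(\theta_{\mathcal M}+\gamma\mu_{\mathcal M}\tilde V\bigr)+\phi(s',a')^\top\zeta,
\]
where $\zeta$ is exactly the vector in the statement.

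Define $D:=\tilde V-V^{\pi^*}_{\mathcal M}$. Then
\[
Q^{\pi^*}_{\tilde{\mathcal M}}(s',a')-Q^{\pi^*}_{\mathcal M}(s',a')=\phi(s',a')^\top\zeta+\gamma P_{\mathcal M}(s',a')^\top D.
\]
The event $Q^{\pi^*}_{\tilde{\mathcal M}}(s,a)>\tilde V(s)=Q^{\pi^*}_{\tilde{\mathcal M}}(s,\pi^*(s))$ therefore becomes
\[
\Delta_{sa}<\bigl(\phi(s,a)-\phi(s,\pi^*(s))\bigr)^\top\zeta+\gamma\bigl(P_{\mathcal M}(s,a)-P_{\mathcal M}(s,\pi^*(s))\bigr)^\top D.
\]

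\textbf{Step 2: bound $\|D\|_\infty$.} Evaluating the identity of Step 1 at $a'=\pi^*(s')$ gives the Bellman-type recursion
\[
D(s')=\phi(s',\pi^*(s'))^\top\zeta+\gamma P_{\mathcal M}(s',\pi^*(s'))^\top D.
\]
Hence $D=(I-\gamma P^{\pi^*}_{\mathcal M})^{-1}\Phi_{\pi^*}\zeta$. Since $P^{\pi^*}_{\mathcal M}$ is stochastic, the usual Neumann-series argument yields
\[
\|D\|_\infty\le \frac{1}{1-\gamma}\max_{s'}|\phi(s',\pi^*(s'))^\top\zeta|.
\]

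\textbf{Step 3: apply Cauchy--Schwarz.} For any positive definite $\Lambda(\omega)$ (positive definite for $\omega$ with full support when the features span $\mathbb R^d$), we have $|\phi^\top\zeta|\le\|\phi\|_{\Lambda(\omega)^{-1}}\|\zeta\|_{\Lambda(\omega)}$. This gives two bounds:
\[
\|D\|_\infty\le\frac{1}{1-\gamma}\max_{(s',a')}\|\phi(s',a')\|_{\Lambda(\omega)^{-1}}\|\zeta\|_{\Lambda(\omega)},
\]
and, for the first term in Step 1,
\[
\bigl(\phi(s,a)-\phi(s,\pi^*(s))\bigr)^\top\zeta\le\|\phi(s,a)-\phi(s,\pi^*(s))\|_{\Lambda(\omega)^{-1}}\|\zeta\|_{\Lambda(\omega)}.
\]

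\textbf{Step 4: control the transition-difference term and combine.} Both $P_{\mathcal M}(s,a)$ and $P_{\mathcal M}(s,\pi^*(s))$ are probability vectors, so
\[
\bigl|\bigl(P_{\mathcal M}(s,a)-P_{\mathcal M}(s,\pi^*(s))\bigr)^\top D\bigr|\le 2\|D\|_\infty.
\]
Substituting the bounds of Step 3 into the inequality of Step 1 produces exactly the factor $\frac{2\gamma}{1-\gamma}\max\|\phi\|_{\Lambda(\omega)^{-1}}$ and proves the claim; the non-strict inequality follows a fortiori.

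\textbf{Main obstacle.} The delicate step is the algebra in Step 1. One must expand around $\tilde V$ rather than $V^{\pi^*}_{\mathcal M}$, so that the $\gamma(\mu_{\tilde{\mathcal M}}-\mu_{\mathcal M})\tilde V$ term merges cleanly into $\zeta$ and all remaining propagation is under the nominal kernel $P_{\mathcal M}$. This also requires $P_{\mathcal M}$ to be a genuine stochastic kernel even though $\mu$ is signed, which holds by the definition of a linear MDP.
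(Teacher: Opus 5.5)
Your proposal is correct and follows essentially the same route as the paper: write $Q^{\pi^*_{\mathcal M}}_{\tilde{\mathcal M}}-Q^{\pi^*_{\mathcal M}}_{\mathcal M}=\phi^\top\zeta+\gamma P_{\mathcal M}^\top(V^{\pi^*_{\mathcal M}}_{\tilde{\mathcal M}}-V^{\pi^*_{\mathcal M}}_{\mathcal M})$, bound the propagated term by $2\gamma\|V^{\pi^*_{\mathcal M}}_{\tilde{\mathcal M}}-V^{\pi^*_{\mathcal M}}_{\mathcal M}\|_\infty$ using a $1/(1-\gamma)$ contraction argument, and finish with Cauchy--Schwarz in the $\Lambda(\omega)$ norms. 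The one cosmetic difference is in the contraction step: you use the on-policy recursion, which gives a max over the pairs $(s',\pi^*_{\mathcal M}(s'))$ only, and then relax to all pairs, whereas the paper passes directly through $\|Q^{\pi^*_{\mathcal M}}_{\tilde{\mathcal M}}-Q^{\pi^*_{\mathcal M}}_{\mathcal M}\|_\infty$.
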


Theorem \ref{thm: linear rate lower bound opt} establishes a closed-form lower bound on the optimal exponential decay rate from \eqref{eq: app rate opt}. This formulation reduces the variational problem to a constrained quadratic optimization, whose optimal value yields the explicit lower bound.

\begin{theorem}
\label{thm: linear rate lower bound opt} 
For the linear MDP $\mathcal{M}$, the optimal decay rate in~\eqref{eq: app rate opt} is lower bounded by the optimal value of the following optimization problem:
$$
\max_{\omega\in\mathcal{W}}\min_{s\in\mathcal{S},a\in\mathcal{A}\setminus\{\pi^*_{\mathcal{M}}(s)\}}\frac{(1-\gamma)^2}{6} \left(\frac{\Delta_{sa}}{\big\|
\phi(s,a)-\phi\!\left(s,\pi_{\mathcal{M}}^*(s)\right)
\big\|_{\Lambda(\omega)^{-1}}
+
\frac{2\gamma}{1-\gamma}
\max_{(s^\prime,a^\prime)\in\mathcal{S}\times\mathcal{A}}
\big\|
\phi(s^\prime,a^\prime)
\big\|_{\Lambda(\omega)^{-1}}}\right)^2.
$$
\end{theorem}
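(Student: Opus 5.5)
The plan is to fix $\omega\in\mathcal W$ and a suboptimal pair $(s,a)$, and lower bound the inner infimum over $(x,y)\in\mathcal E_{s,a}$ in $\mathcal R(\mathcal M,\omega)$ from Lemma~\ref{lemma: generate rate}. Taking the minimum over $(s,a)$ and then the maximum over $\omega\in\mathcal W$ gives the claim, since maximizing a pointwise lower bound over the same feasible set preserves the inequality.

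\textbf{Step 1 (quadratic objective).} Use the linear MDP structure to write $x(s',a')=\phi(s',a')^\top\mu_{\tilde{\mathcal M}}$ and $y(s',a')=\phi(s',a')^\top\theta_{\tilde{\mathcal M}}$. (Alternatives outside the linear class can first be restricted to the linear family; I would note that the relevant alternatives can be taken linear, or else state the bound for linear alternatives as the paper does.) Apply Lemma~\ref{lemma: linear rate} with $v=V^{\pi^*_{\mathcal M}}_{\tilde{\mathcal M}}$. This vector is nonnegative and bounded by $1/(1-\gamma)$ because rewards lie in $[0,1]$, so $\|\gamma v^\top X\|_\infty^2\le \gamma^2/(1-\gamma)^2\le (1-\gamma)^{-2}$. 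Together with the sub-Gaussian bound $I_2\ge (y-r)^2/2$, this gives
\[
I_1+I_2\ \ge\ \tfrac{(1-\gamma)^2}{3}\big(\gamma\Delta_p(s',a')^\top v\big)^2+\tfrac12\Delta_r(s',a')^2 .
\]
Next use $(u+w)^2\le 2u^2+2w^2$ and $(1-\gamma)^2\le1$. Since $\Delta_r(s',a')+\gamma\Delta_p(s',a')^\top v=\phi(s',a')^\top\zeta$, the sum is at least $\frac{(1-\gamma)^2}{6}(\phi(s',a')^\top\zeta)^2$. Weighting by $\omega_{s'a'}$ and summing then yields the displayed bound $\frac{(1-\gamma)^2}{6}\|\zeta\|^2_{\Lambda(\omega)}$.

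\textbf{Step 2 (constraint).} By Lemma~\ref{lemma: linear constraint}, every $(x,y)\in\mathcal E_{s,a}$ satisfies $\|\zeta\|_{\Lambda(\omega)}\ge \Delta_{sa}/D_{sa}(\omega)$, where
\[
D_{sa}(\omega)=\|\phi(s,a)-\phi(s,\pi^*_{\mathcal M}(s))\|_{\Lambda(\omega)^{-1}}+\tfrac{2\gamma}{1-\gamma}\max_{(s',a')}\|\phi(s',a')\|_{\Lambda(\omega)^{-1}} .
\]
Plugging this into Step 1 gives the inner infimum $\ge \frac{(1-\gamma)^2}{6}\,\Delta_{sa}^2/D_{sa}(\omega)^2$. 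That is the bound for a constrained quadratic problem, which here is solved simply by the lower bound on the norm. This step needs $\Lambda(\omega)$ to be invertible. If it is singular, $D_{sa}=\infty$ and the bound is trivially $0$, or one restricts to $\omega$ with spanning support.

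\textbf{Main obstacle.} The delicate point is Step 1. The vector $v=V^{\pi^*_{\mathcal M}}_{\tilde{\mathcal M}}$ depends on the alternative model itself, so one must check that Lemma~\ref{lemma: linear rate} applies with a model-dependent $v$. It does, because that lemma holds for any fixed nonnegative bounded $v$, and we apply it pointwise for each $(x,y)$. One must also make sure the constants combine exactly to $(1-\gamma)^2/6$; the $\gamma^2\le1$ and $(1-\gamma)^2\le 1$ relaxations are where I would be careful.
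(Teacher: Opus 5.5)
Your proposal follows the paper's proof essentially step for step. Lemma~\ref{lemma: linear rate} with $v=V^{\pi^*_{\mathcal M}}_{\tilde{\mathcal M}}$, the sub-Gaussian bound $I_2\ge (y-r_{\mathcal M})^2/2$, and $a^2+b^2\ge\frac12(a+b)^2$ give $\frac{(1-\gamma)^2}{6}\|\zeta\|^2_{\Lambda(\omega)}$. Lemma~\ref{lemma: linear constraint} then bounds $\|\zeta\|_{\Lambda(\omega)}$ below by $\Delta_{sa}/D_{sa}(\omega)$. Your constant bookkeeping via $\gamma^2\le 1$ and $(1-\gamma)^2\le 1$ is more explicit than the paper's.

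One caveat: your remark that non-linear alternatives ``can first be restricted to the linear family'' is not a valid reduction for a lower bound, because shrinking the feasible set can only raise the infimum. Your argument, exactly like the paper's (which writes $\zeta$ through $\theta_{\tilde{\mathcal M}}$ and $\mu_{\tilde{\mathcal M}}$), therefore proves the bound only for alternatives taken within the linear class.
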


We can solve the optimization problem in Theorem \ref{thm: linear rate lower bound opt} to obtain the optimal sampling ratio. However, the complicated denominators
in the objective make the problem computationally challenging. To address this issue, we consider two surrogate optimization problems that provide computationally tractable approximations. Specifically, we can solve
$$
\max_{\omega\in\mathcal{W}}\min_{s\in\mathcal{S},a\in\mathcal{A}\setminus\{\pi^*_{\mathcal{M}}(s)\}} \frac{\Delta^2_{sa}}{\big\|
\phi(s,a)-\phi\!\left(s,\pi_{\mathcal{M}}^*(s)\right)
\big\|^2_{\Lambda(\omega)^{-1}}}
$$
or alternatively,
\begin{equation*}
\max_{\omega\in\mathcal{W}}\min_{s\in\mathcal{S},a\in\mathcal{A}\setminus\{\pi^*_{\mathcal{M}}(s)\}} \frac{\Delta^2_{\min}}{\big\|
\phi(s,a)
\big\|^2_{\Lambda(\omega)^{-1}}}.
\end{equation*}
Both surrogate problems are convex and can be efficiently solved using off-the-shelf convex optimization solvers or the projected subgradient descent method developed in Section~\ref{sec: sub-alg}. The first
surrogate becomes more accurate as $\gamma\to 0$, since the propagated-error
term vanishes in this regime. Although these
surrogates are computationally tractable, establishing their optimality
guarantees remains an open problem, which we leave for future work. 

\section{Numerical Experiments}
\label{sec: case study}
In this section, we compare the data acquisition efficiency of different reinforcement learning algorithms using a standard Gridworld example and an operational experiment design case study. We conduct a thorough comparison with state-of-the-art model-based and model-free reinforcement learning algorithms:
\begin{itemize}
    \item \textbf{Q-learning} \citep{sutton1998reinforcement}: a canonical tabular, model-free, value-based baseline.
    \item \textbf{Actor--critic} \citep{konda1999actor}: a basic policy-gradient baseline equipped with a learned critic.
    \item \textbf{PPO/TRPO} \citep{schulman2017proximal,schulman2015trust}: widely used actor--critic methods that stabilize policy updates via proximal or trust-region constraints.
    \item \textbf{PSRL} \citep{osband2013more}: a principled model-based exploration baseline based on posterior sampling over MDP models.
    \item \textbf{QOCBA} \citep{zhu2024uncertainty}: an optimization-guided, model-based algorithm that allocates data adaptively using uncertainty-aware sampling rules.
\end{itemize}
These benchmarks are widely used in the reinforcement learning literature and cover complementary algorithmic paradigms, including value-based and policy-based learning as well as heuristic and Bayesian exploration. As a result, they provide representative and appropriate reference points for comparing data acquisition efficiency. Our proposed Algorithm~\ref{alg:algorithm1}, referred to as \emph{LazyGradient}, is a model-based method.

\subsection{Gridworld}
We consider a stochastic Gridworld with $S=16$ states and $A=4$ actions under a limited interaction budget. Each state has a unique preferred action: choosing it moves the agent toward the goal with high probability, while other actions tend to move away and markedly increase the chance of staying in place. We add a small uniform mixing term to keep the Markov chain communicating; the goal state is nearly absorbing but occasionally transitions back to the start.

For non-goal states, the mean reward is a small negative baseline plus a progress-based shaping term, with small bonuses for taking the preferred action and for reaching the goal; \textcolor{black}{it is clipped to $[-0.08,0.20]$ to keep rewards bounded and make the problem sufficiently challenging.} At the goal, the mean reward is $1.0$. Rewards are observed with zero-mean Gaussian noise whose variance is in $[0.006, 0.02]$.  The discount factor is set to $\gamma=0.99$.

We run $50$ independent macro replications for each algorithm under different interaction budgets. Table~\ref{tab:gridworld_pcs_pm_90ci} reports the empirical PCS for the optimal policy after the sampling budget $T$ is fully exhausted, along with the corresponding $90\%$ confidence intervals. LazyGradient is consistently the most data-efficient method: it achieves high PCS already at $T=600$ and exceeds $0.9$ by $T=800$, reaching perfect selection at $T\ge 1000$. In contrast, QOCBA and PSRL improve more gradually and only approach high reliability at much larger budgets, while the standard model-free baselines (Q-learning, Actor-Critic, PPO, TRPO) remain far below in this range. Overall, the table shows that LazyGradient attains near-100\% correct selection with much fewer interactions than competing model-based and model-free alternatives.
\begin{table}[htbp]
\centering
\small
\caption{Probability of correct selection at selected budgets (mean $\pm$ 90\% CI)}
\label{tab:gridworld_pcs_pm_90ci}
\begin{tabular}{lcccc}
\hline
Algorithm & $T=600$ & $T=800$ & $T=1000$ & $T=1200$ \\
\hline
Q-Learning   & $0.04 \pm 0.046$ & $0.02 \pm 0.033$ & $0.12 \pm 0.076$ & $0.16 \pm 0.085$ \\
Actor-Critic & $0.02 \pm 0.033$ & $0.02 \pm 0.033$ & $0.08 \pm 0.063$ & $0.26 \pm 0.102$ \\
PPO          & $0.00 \pm 0.000$ & $0.00 \pm 0.000$ & $0.10 \pm 0.070$ & $0.16 \pm 0.085$ \\
TRPO         & $0.00 \pm 0.000$ & $0.00 \pm 0.000$ & $0.00 \pm 0.000$ & $0.04 \pm 0.046$ \\
PSRL         & $0.02 \pm 0.033$ & $0.18 \pm 0.089$ & $0.38 \pm 0.113$ & $0.56 \pm 0.115$ \\
QOCBA & 0.46 $\pm$ 0.116 & 0.70 $\pm$ 0.107 & 0.92 $\pm$ 0.063 & 0.92 $\pm$ 0.063 \\
\textbf{LazyGradient} & $\mathbf{0.78 \pm 0.096}$ & $\mathbf{0.92 \pm 0.063}$ & $\mathbf{1.00 \pm 0.000}$ & $\mathbf{1.00 \pm 0.000}$ \\
\hline
\end{tabular}
\end{table}

Figure \ref{fig: policy value} compares the state-averaged policy values learned under two interaction budgets. LazyGradient achieves the highest return at both $T=600$ and $T=800$, and its confidence intervals are tight, indicating stable performance across replications. In particular, it already reaches near-optimal value at $T=600$ and improves further by $T=800$, while the closest competitor, QOCBA, remains noticeably lower at both budgets. The remaining baselines perform worse, indicating slower policy improvement under the same interaction budget. Overall, the figure shows that LazyGradient attains higher-quality policies with fewer interactions than competing model-based and model-free methods, demonstrating superior sample efficiency.
\begin{figure}
    \centering
    \includegraphics[width=0.98\linewidth]{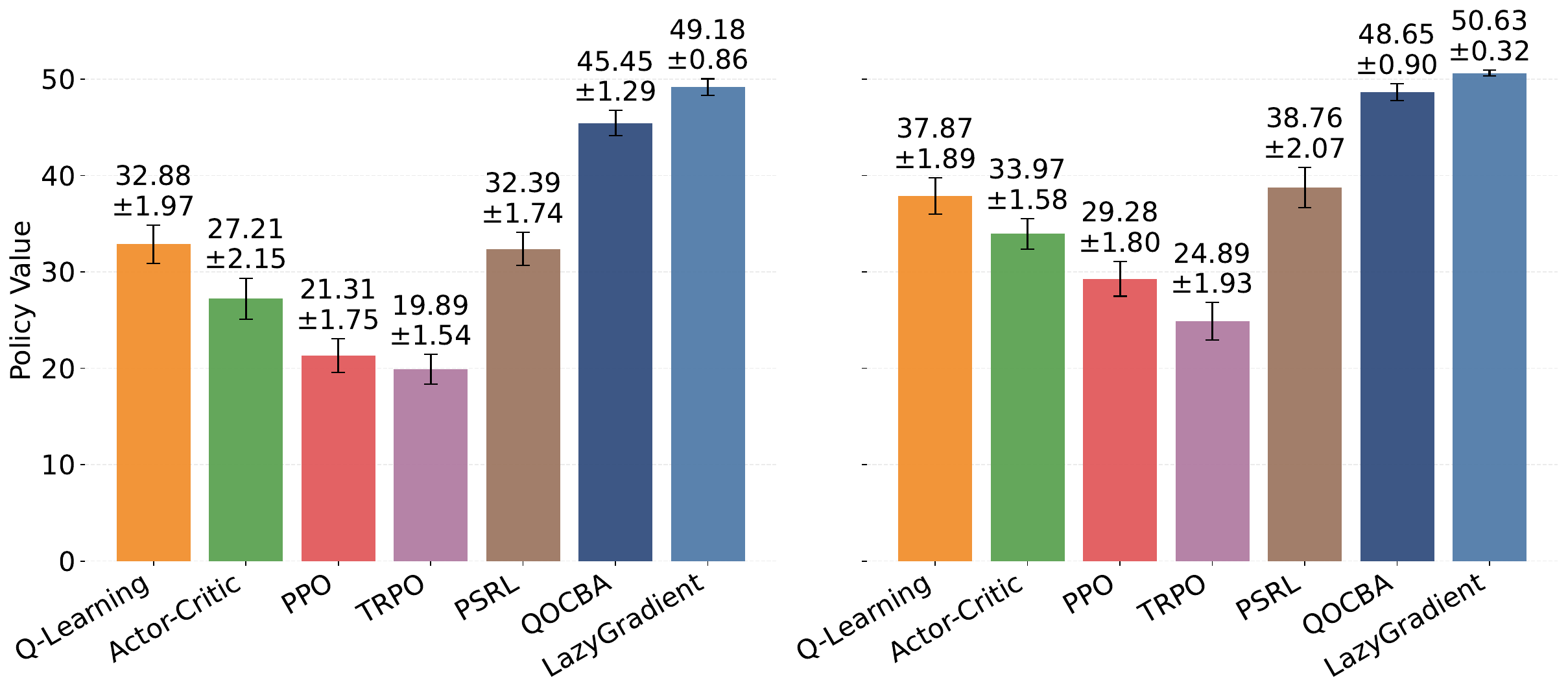}
    \caption{Policy Value Comparison, State-Averaged. $T=600$ (left) vs. $T=800$ (right). 90\% CI}
    \label{fig: policy value}
\end{figure}
In LazyGradient, we first run an initialization policy to collect data and obtain an initial MDP model; a better initial estimate leads to more efficient downstream data acquisition. \textcolor{black}{For initialization}, we propose a heuristic-maximum-coverage policy: at each state, it selects the action that maximizes a weighted coverage score that favors under-visited $(s,a)$ pairs, actions that are likely to transition to under-visited states, and actions that target globally rare states; ties are broken uniformly at random. We fix the initialization budget to $n_0=450$ and compare this policy with a uniform policy over 50 independent macro replications. Figure \ref{fig:warmup_method} shows that maximum-coverage initialization yields a clear and consistent improvement in PCS across all budgets, with the largest gap at smaller $T$. In contrast, the state-averaged policy value changes only marginally, and the two policies are often close within the confidence intervals. Notably, even with uniform initialization, the resulting performance remains higher than that of all benchmark algorithms.
\begin{figure}
    \centering
    \includegraphics[width=0.98\linewidth]{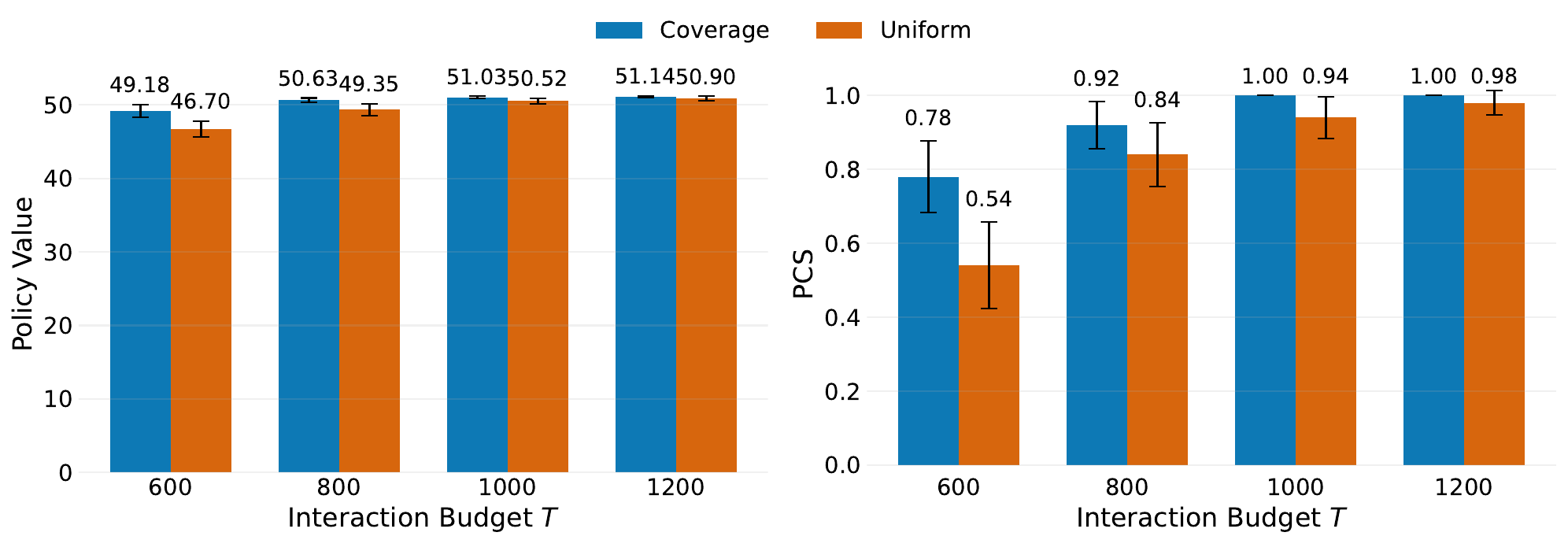}
    \caption{Comparison of Different Initialization Policies. 90\% CI}
    \label{fig:warmup_method}
\end{figure}

Figure~\ref{fig:warmup_step} compares performance across different initialization budgets $n_0$ over 50 independent macro replications. The results show that varying $n_0$ has only a modest effect: both the state-averaged policy value and PCS exhibit similar trends across all tested $n_0$, and the curves remain close within the confidence intervals. This indicates that the method is not sensitive to the exact choice of initialization budget in this range. Moreover, for every $n_0$, the resulting performance is consistently strong across budgets $T$ and remains above the benchmark algorithms reported earlier, demonstrating robust data acquisition efficiency.
\begin{figure}
    \centering
    \includegraphics[width=0.98\linewidth]{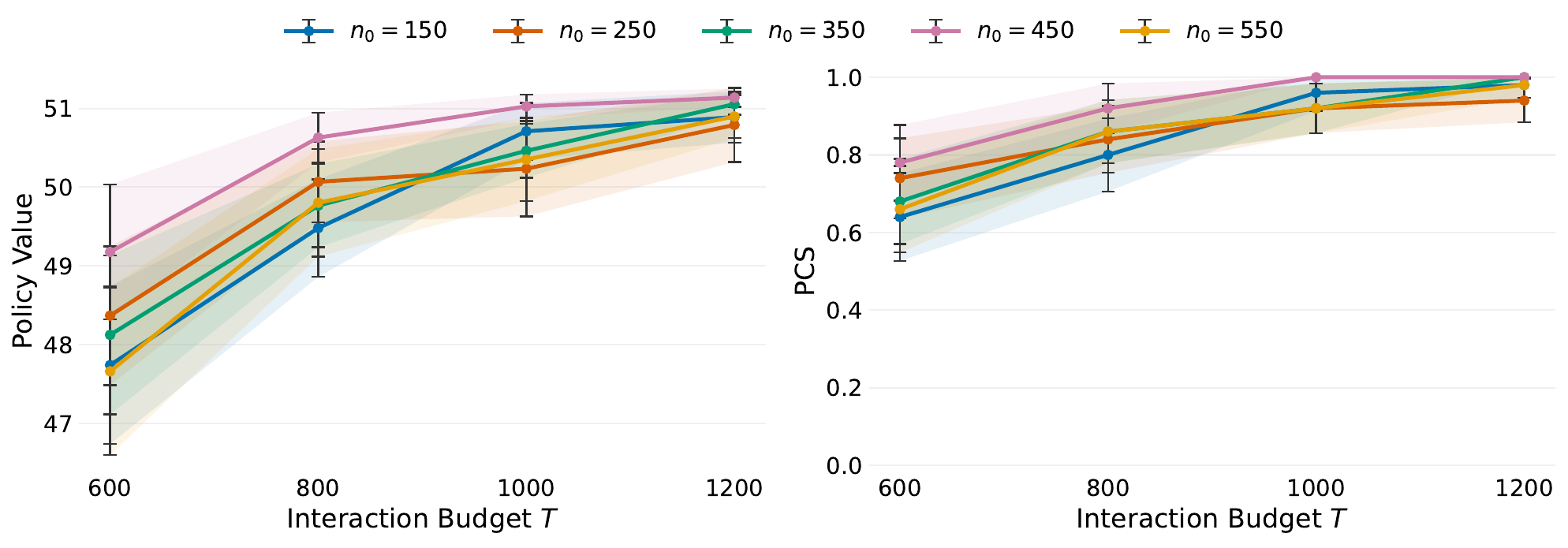}
    \caption{Effect of Initialization Budget $n_0$. 90\% CI}
    \label{fig:warmup_step}
\end{figure}

\subsection{Case Study: Operational Experiment Design}
We study a product-launch experiment in which a company interacts with users over a limited number of periods to identify a high-performing selling policy prior to full deployment. In each period, the platform observes a coarse market context and selects an operational action, such as a discount level and recommendation exposure. Because interactions consume real traffic and operational resources and may affect user experience, the experiment is a pure-exploration problem: given a fixed budget, the goal is to identify a near-optimal stationary policy for deployment. This setting is motivated by online experimentation and RL-based decision systems in e-commerce and digital platforms \citep{liu2023dynamic}, where pricing, promotions, and exposure decisions must be optimized under limited experimentation budgets. 

We model this operational problem as an infinite-horizon discounted tabular MDP. The environment has $S=50$ states, each representing a coarse market context that summarizes user-segment composition, early engagement and retention signals, and inventory pressure. States are obtained by discretizing these operational signals into finitely many bins and mapping the resulting context cells to $S$ tabular states. The action space has $A=30$ actions, each corresponding to a discount level paired with a recommendation-exposure intensity; we discretize discount and exposure levels to form a structured action grid and map the resulting combinations to $A$ actions. Rewards and transitions are generated by a simple, behaviorally motivated mechanism. For each $(s,a)$, we compute an action-dependent purchase-likelihood score from the current context and the chosen discount and exposure. The mean reward $r(s,a)$ is then defined as expected profit, increasing with the likelihood of purchase and per-sale margin, and decreasing with exposure cost, inventory-pressure cost, and a mild penalty for context–action mismatch, and is clipped to $[0.06, 0.40]$. The realized reward is noisy: $R(s,a) = r(s,a)+\xi_{s,a}$, where $\xi_{s,a}\sim\mathcal{N}(0,\sigma^2(s,a))$ captures residual demand and measurement uncertainty. The transition kernel $P(\cdot|s,a)$ is time-homogeneous and stochastic and couples the evolution of user-segment composition, engagement and retention, and inventory pressure: discount and exposure affect how segment mix and engagement drift over time, while inventory pressure updates through action-dependent demand induced by the current context. The discount factor is set to $\gamma=0.99$.

Because the product-launch instance has $1500$ state–action pairs, far more than the Gridworld instance (64), QOCBA is not practical to run: the resulting optimization problem is too large, and the solver does not terminate within a reasonable time. We therefore exclude QOCBA and report results for the remaining benchmarks. We run $5$ independent macro replications for each algorithm with interaction budget $T=2000$. Figure~\ref{fig:operation_exp_dense} reports the state-averaged policy values after $T=2000$ interactions. LazyGradient achieves the highest return among all methods and outperforms the competing benchmarks by a noticeable margin. This advantage is stable within the reported confidence intervals, indicating more reliable policy improvement under the same interaction budget. Overall, the results demonstrate the superior sample efficiency of LazyGradient on the operational experiment design instance.

\begin{figure}
    \centering
    \includegraphics[width=0.6\linewidth]{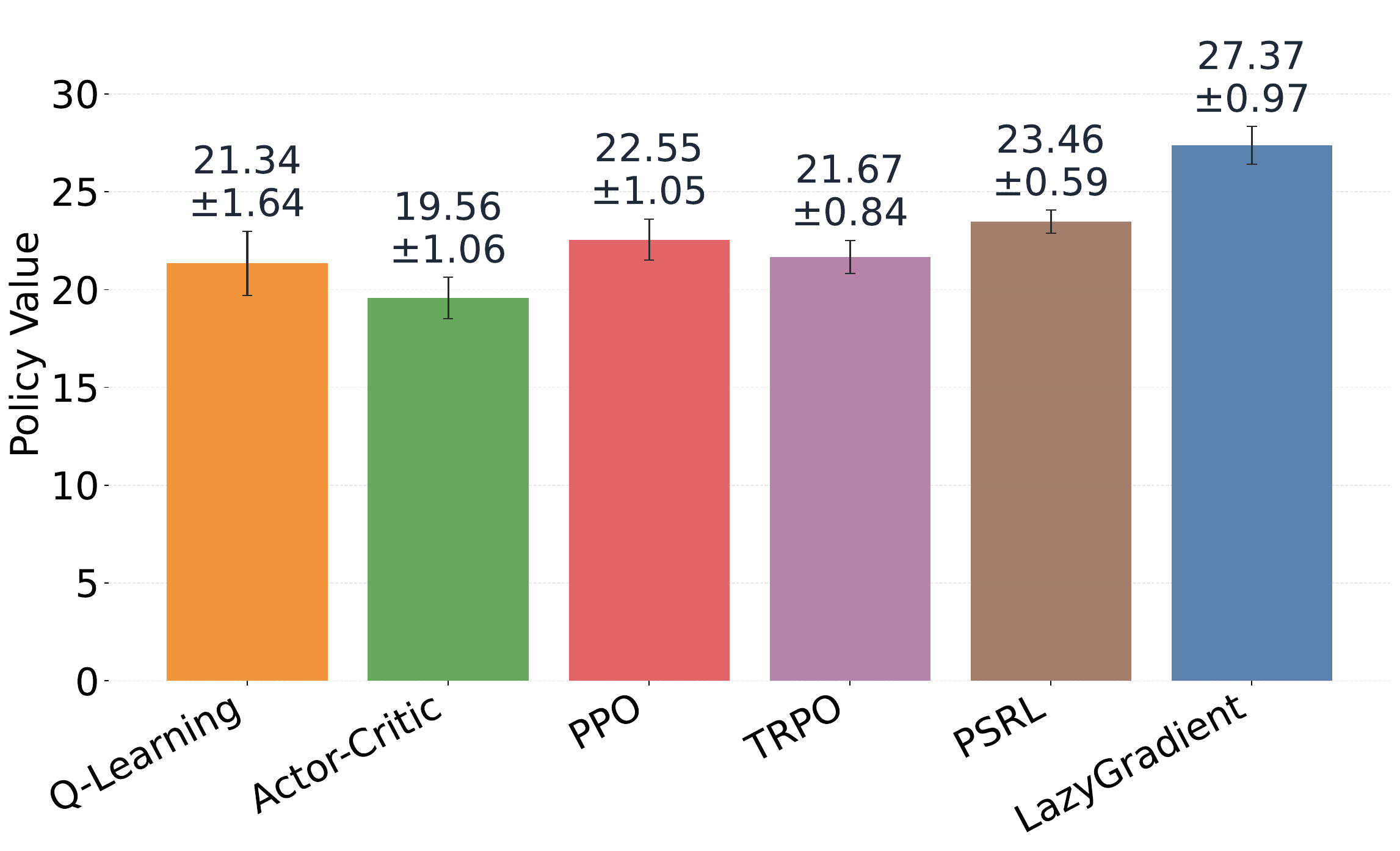}
    \caption{Policy Value Comparison, State-Averaged. $T=2000$. 90\% CI}
    \label{fig:operation_exp_dense}
\end{figure}

Figure \ref{fig:sclae_law} compares the state-averaged policy values across different state sizes. Across all tested numbers of states, LazyGradient consistently attains the highest policy value, and its advantage remains stable as the state space grows. This robustness, together with the relatively tight confidence intervals, suggests that LazyGradient is more data efficient than the baseline methods in this scaling regime.
\begin{figure}
    \centering
    \includegraphics[width=0.6\linewidth]{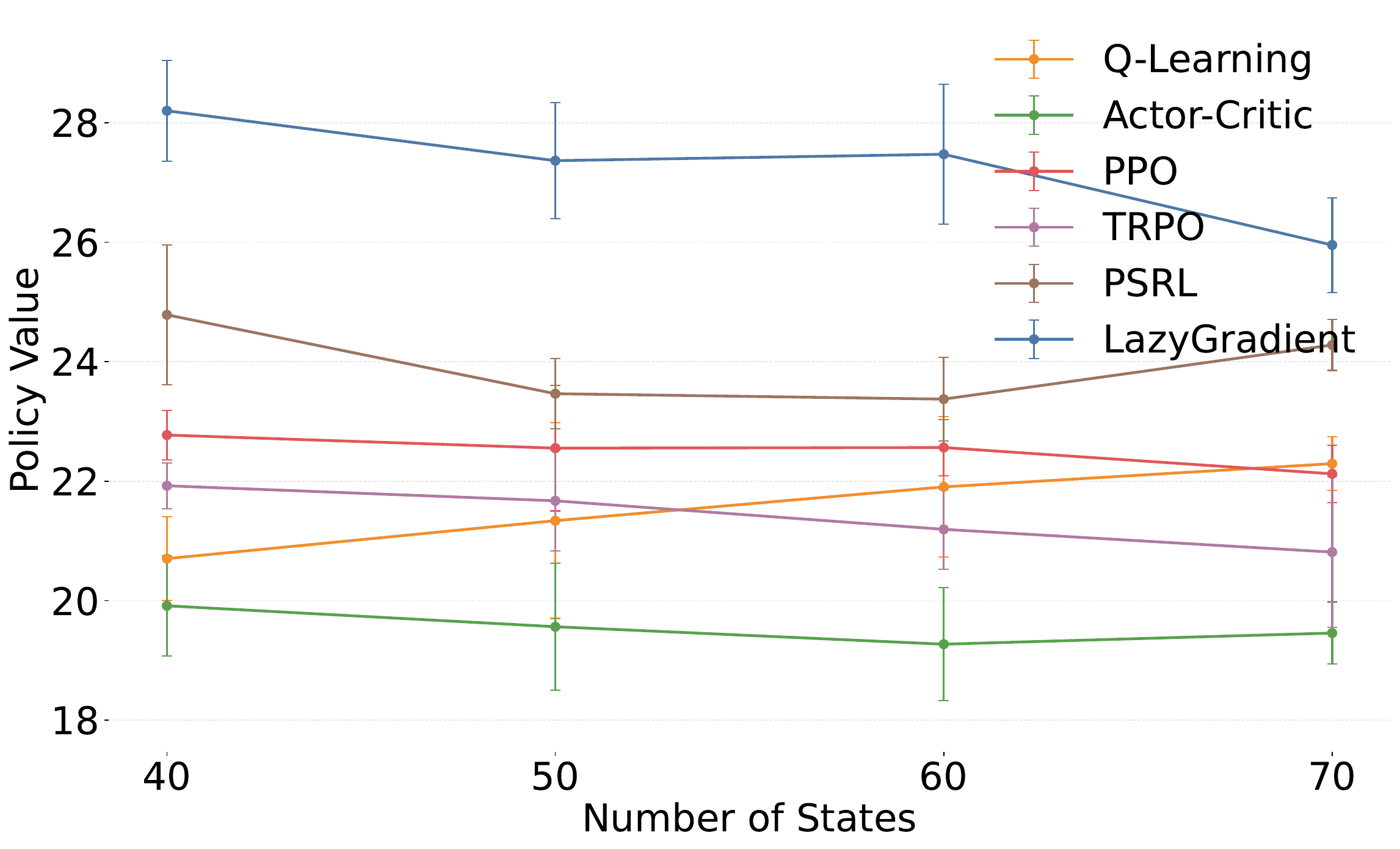}
    \caption{Policy Value under Different State Sizes, State-Averaged. $T=2000$. 90\% CI}
    \label{fig:sclae_law}
\end{figure}
\section{Conclusions}
\label{sec: conslusion}
In this paper, we study optimal data acquisition for infinite-horizon reinforcement learning from a large deviations perspective. We propose the exponential decay rate of the PFS as a principled efficiency metric and derive a variational characterization of this rate using large deviations theory. Building on this characterization, we introduce two complementary notions of optimality and develop a lazy one-step projected subgradient algorithm. We show that the resulting method is near–robustly optimal up to a constant-factor loss. Finally, we extend our analysis to the linear function-approximation setting and validate the proposed approach through a Gridworld example and an operational experiment design study. Several directions remain open. In particular, designing reinforcement learning algorithms that are robustly optimal, as well as methods that attain exact optimality, requires new technical tools and poses substantial analytical challenges. Addressing these problems is an important step toward fully optimizing data acquisition in reinforcement learning.

% NOTE: Use the Code and Data Disclosure section to provide instructions for where your code and data, along with README file, can be found. If the paper received an exemption, then state the reason the exemption was granted.

% \section{Code and Data Disclosure}\label{sec:Code and Data Disclosure}The code and data to support the numerical experiments in this paper can be found at URL.

%\THEEndNotes
% \begingroup \parindent 0pt \parskip 0.0ex \def\enotesize{\normalsize} \theendnotes \endgroup

% Appendix here
% Options are (1) APPENDIX (with or without general title) or
%             (2) APPENDICES (if it has more than one unrelated sections)
% Outcomment the appropriate case if necessary
%
% \begin{APPENDIX}{<Title of the Appendix>}
% \end{APPENDIX}
%
%   or
%
% \begin{APPENDICES}
% \section{<Title of Section A>}
% \section{<Title of Section B>}
% etc
% \end{APPENDICES}

% Acknowledgments here
\ACKNOWLEDGMENT{Jian-Qiang Hu and Mingjie Hu are supported by the
National Natural Science Foundation of China (NSFC) under grants 72033003, 72350710219, 72342006,
and 72293565.}

% References here (outcomment the appropriate case)

% CASE 1: BiBTeX used to constantly update the references
%   (while the paper is being written).
%\bibliographystyle{informs2014} % outcomment this and next line in Case 1
%\bibliography{<your bib file(s)>} % if more than one, comma separated

\bibliographystyle{informs2014} % outcomment this and next line in Case 1
\bibliography{sample} % if more than one, comma separated

% CASE 2: BiBTeX used to generate mypaper.bbl (to be further fine tuned)
%\input{mypaper.bbl} % outcomment this line in Case 2

%If you don't use BiBTex, you can manually itemize references as shown below.

%\bibliographystyle{nonumber}

\ECSwitch
\ECHead{Proofs of Statements}
This document provides proofs for the theoretical statements in the paper “Optimal Data Acquisition for Reinforcement Learning: A Large Deviations Perspective."

\section{Proof of Theorem~\ref{thm: rate function}}

\proof{Proof of Lemma~\ref{lemma: rate function G}} 
Define $\mathcal{E}_1 = \{\hat{\pi}_{T} \neq \pi^*_{\mathcal{M}}\}$ and 
\begin{equation*}
   \mathcal{E}_2 = \left\{\exists s\in \mathcal{S}, a\in \mathcal{A}\setminus\{\pi^*_{\mathcal{M}}(s)\}, Q^{\pi^*_{\mathcal{M}}}_{\bar{\mathcal{M}}(T)}(s,a)>V^{\pi^*_{\mathcal{M}}}_{\bar{\mathcal{M}}(T)}(s)\right\}.
\end{equation*}
We show that $\mathbb{P}(\mathcal{E}_1) = \mathbb{P}(\mathcal{E}_2)$.

We first establish that $\mathcal{E}_1\Rightarrow \mathcal{E}_2$. By the definition of $\mathcal{E}_1$, $\pi^*_{\mathcal{M}}$ is not optimal for the MDP $\bar{\mathcal{M}}(T)$. We claim that there must exist some state $s\in\mathcal S$ and some action $a\in\mathcal A\setminus\{\pi^*_{\mathcal M}(s)\}$ such that
$
Q^{\pi^*_{\mathcal M}}_{\bar{\mathcal M}(T)}(s,a)
>
V^{\pi^*_{\mathcal M}}_{\bar{\mathcal M}(T)}(s).
$

Indeed, suppose otherwise. Then for every $s\in\mathcal S$ and every $a\neq \pi^*_{\mathcal M}(s)$,
$
Q^{\pi^*_{\mathcal M}}_{\bar{\mathcal M}(T)}(s,a)
\le
V^{\pi^*_{\mathcal M}}_{\bar{\mathcal M}(T)}(s).
$
Moreover, by definition,
$
Q^{\pi^*_{\mathcal M}}_{\bar{\mathcal M}(T)}(s,\pi^*_{\mathcal M}(s))
=
V^{\pi^*_{\mathcal M}}_{\bar{\mathcal M}(T)}(s),
\forall s\in\mathcal S.
$
Hence,
$
\max_{a\in\mathcal A}
Q^{\pi^*_{\mathcal M}}_{\bar{\mathcal M}(T)}(s,a)
=
V^{\pi^*_{\mathcal M}}_{\bar{\mathcal M}(T)}(s), \forall s\in\mathcal S,
$
which shows that $\pi^*_{\mathcal M}$ is greedy with respect to its own action-value function in $\bar{\mathcal M}(T)$. Therefore, $\pi^*_{\mathcal M}$ satisfies the Bellman optimality equation in $\bar{\mathcal M}(T)$, and hence is an optimal policy for $\bar{\mathcal M}(T)$. This contradicts $\mathcal E_1$. Consequently, $\mathcal E_2$ must hold.

We next establish that $\mathcal{E}_2\Rightarrow \mathcal{E}_1$. By the definition of $\mathcal{E}_2$, we have there exists a state $s\in \mathcal{S}$ and action $a\in\mathcal{A}\setminus\{\pi^*_{\mathcal{M}}(s)\}$ such that
$
Q^{\pi^*_{\mathcal{M}}}_{\bar{\mathcal{M}}(T)}(s,a)>V^{\pi^*_{\mathcal{M}}}_{\bar{\mathcal{M}}(T)}(s).
$

Define the greedy policy
$$
\pi^g(x)\in \arg\max_{b\in\mathcal A}
Q^{\pi^*_{\mathcal{M}}}_{\bar{\mathcal{M}}(T)}(x,b),
\qquad \forall x\in\mathcal S.
$$

Then for every $x\in\mathcal S$,
$
Q^{\pi^*_{\mathcal{M}}}_{\bar{\mathcal{M}}(T)}(x,\pi^g(x))
\ge
V^{\pi^*_{\mathcal{M}}}_{\bar{\mathcal{M}}(T)}(x),
$ and the above strict inequality implies that this inequality is strict for at least one state. Hence, by policy improvement theorem~\citep{sutton1998reinforcement},
$$
V^{\pi^g}_{\bar{\mathcal{M}}(T)}(x)\ge
V^{\pi^*_{\mathcal{M}}}_{\bar{\mathcal{M}}(T)}(x),\quad \forall x\in\mathcal S,
$$
with strict inequality for at least one state. Therefore, $\pi^*_{\mathcal M}$ is not optimal for $\bar{\mathcal M}(T)$, i.e., $\mathcal E_1$ holds.

The probability of false selection is upper-bounded by
$$
\mathbb{P}\left(\hat{\pi}_{T}\neq \pi^*_\mathcal{M}\right) \leq |\mathcal{S}|(|\mathcal{A}|-1)\max_{s\in\mathcal{S},a\neq \pi^*_{\mathcal{M}}(s)} \mathbb{P}\left(Q^{\pi^*_{\mathcal{M}}}_{\bar{\mathcal{M}}(T)}(s,a)>V^{\pi^*_{\mathcal{M}}}_{\bar{\mathcal{M}}(T)}(s)\right),
$$
and is lower bounded by
\begin{equation*}
\mathbb{P}\left(\hat{\pi}_{T}\neq \pi^*_\mathcal{M}\right) \geq \max_{s\in\mathcal{S},a\neq \pi^*_{\mathcal{M}}(s)} \mathbb{P}\left(Q^{\pi^*_{\mathcal{M}}}_{\bar{\mathcal{M}}(T)}(s,a)>V^{\pi^*_{\mathcal{M}}}_{\bar{\mathcal{M}}(T)}(s)\right).
\end{equation*}

Therefore,
$$
-\frac1T\log M_T-\frac1T\log(|\mathcal S|(|\mathcal A|-1))
\le
-\frac1T\log\mathbb P(\hat\pi_T\neq \pi^*_{\mathcal M})
\le
-\frac1T\log M_T,
$$
where 
$$
M_T:=
\max_{s\in\mathcal S,\;a\neq \pi^*_{\mathcal M}(s)}
\mathbb P\!\left(
Q^{\pi^*_{\mathcal M}}_{\bar{\mathcal M}(T)}(s,a)>
V^{\pi^*_{\mathcal M}}_{\bar{\mathcal M}(T)}(s)
\right).
$$
Since $|\mathcal S|(|\mathcal A|-1)$ is independent of $T$, we have
$
\frac{1}{T}\log\bigl(|\mathcal S|(|\mathcal A|-1)\bigr)\to 0,\text{as }T\to\infty,
$
and hence the middle term has the same limit as $-\frac1T\log M_T$. Because the index set is finite,
$$
-\frac1T\log M_T
=
\min_{s\in\mathcal S,\;a\neq \pi^*_{\mathcal M}(s)}
\left(
-\frac1T\log
\mathbb P\!\left(
Q^{\pi^*_{\mathcal M}}_{\bar{\mathcal M}(T)}(s,a)>
V^{\pi^*_{\mathcal M}}_{\bar{\mathcal M}(T)}(s)
\right)
\right).
$$

Assume that for each $s\in \mathcal{S}, a\in \mathcal{A}\setminus \{\pi^*_{\mathcal{M}}(s)\}$,
\begin{equation}
\label{eq: rate function G}
\lim_{T\rightarrow \infty} -\frac{1}{T}\log \mathbb{P}\left(Q^{\pi^*_{\mathcal{M}}}_{\bar{\mathcal{M}}(T)}(s,a)>V^{\pi^*_{\mathcal{M}}}_{\bar{\mathcal{M}}(T)}(s)\right) = \mathcal{G}_{s,a}
\end{equation}
for some rate function $\mathcal{G}_{s,a}$. Taking $T\to\infty$ and using \eqref{eq: rate function G}, we obtain
\begin{equation*}
\lim_{T\rightarrow \infty}-\frac{1}{T}\log\mathbb{P}\left(\hat{\pi}_{T}\neq \pi^*_\mathcal{M}\right) =\min_{s\in \mathcal{S}, a\neq \pi^*_{\mathcal{M}}(s)}\mathcal{G}_{s,a}.
\end{equation*}
\qed

\begin{theorem}[Perron-Frobenius in \citealt{dembo2009large}] \label{thm:perron-frobenius}
Let $\mathbf{B} = \{B(i,j)\}_{i,j=1}^{|\Sigma|}$ be an irreducible matrix. Then $\mathbf{B}$ possesses an eigenvalue $\rho$ (called the Perron-Frobenius eigenvalue) such that:
\begin{enumerate}
    \item[(a)] $\rho > 0$ is real.
    \item[(b)] For any eigenvalue $\lambda$ of $\mathbf{B}$, $|\lambda| \le \rho$.
    \item[(c)] There exist left and right eigenvectors corresponding to the eigenvalue $\rho$ that have strictly positive coordinates.
    \item[(d)] The left and right eigenvectors $\mu, \vartheta$ corresponding to the eigenvalue $\rho$ are unique up to a constant multiple.
    \item[(e)] For every $i \in \Sigma$ and every $\phi = (\phi_1, \dots, \phi_{|\Sigma|})$ such that $\phi_j > 0$ for all $j$,
    $$
    \lim_{n\to\infty} \frac{1}{n} \log \left[ \sum_{j=1}^{|\Sigma|} B^n(i,j) \phi_j \right]
    =
    \lim_{n\to\infty} \frac{1}{n} \log \left[ \sum_{j=1}^{|\Sigma|} \phi_j B^n(j,i) \right] = \log \rho.
    $$
\end{enumerate}
\end{theorem}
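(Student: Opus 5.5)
The matrix $\mathbf B$ is implicitly nonnegative here, as in \citet{dembo2009large}, since it will be a tilted transition matrix. I would prove the statement by reducing it to the strictly positive case. The tool is the standard observation that irreducibility of a nonnegative $\mathbf B$ is equivalent to $(\mathbf I+\mathbf B)^{|\Sigma|-1}$ having all entries strictly positive. This holds because the $(i,j)$ entry of that power is a positive combination of the weights of all paths from $i$ to $j$ of length at most $|\Sigma|-1$, and irreducibility guarantees that such a path exists.

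\textbf{Existence, parts (a) and (c).} Consider the compact convex simplex $\Delta=\{\vartheta\ge0,\ \sum_j\vartheta_j=1\}$. The map $\vartheta\mapsto \mathbf B\vartheta/\mathbf 1^\top \mathbf B\vartheta$ is well defined on $\Delta$, because an irreducible matrix has no zero column. By Brouwer's fixed point theorem it has a fixed point, i.e.\ $\mathbf B\vartheta=\rho\vartheta$ with $\rho=\mathbf 1^\top\mathbf B\vartheta>0$. Applying $(\mathbf I+\mathbf B)^{|\Sigma|-1}$ gives $(1+\rho)^{|\Sigma|-1}\vartheta=(\mathbf I+\mathbf B)^{|\Sigma|-1}\vartheta$. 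The right-hand side has strictly positive entries, so $\vartheta>0$ coordinatewise. Applying the same argument to $\mathbf B^\top$, which is also irreducible, yields a positive left eigenvector $\mu$ with some eigenvalue $\rho'$. Then $\rho\,\mu^\top\vartheta=\mu^\top\mathbf B\vartheta=\rho'\mu^\top\vartheta$, and since $\mu^\top\vartheta>0$ we get $\rho=\rho'$.

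\textbf{Parts (b) and (d).} For (b), take any eigenpair $\mathbf B z=\lambda z$. The triangle inequality gives $|\lambda||z|\le \mathbf B|z|$ entrywise. Pairing with $\mu$ yields $|\lambda|\,\mu^\top|z|\le\mu^\top \mathbf B|z|=\rho\,\mu^\top|z|$, hence $|\lambda|\le\rho$. For (d), let $w$ be any real eigenvector for $\rho$; complex ones reduce to this by taking real and imaginary parts. Choose $c=\min_j w_j/\vartheta_j$, so that $\vartheta-w/c$ (or the appropriately signed version) is nonnegative, is a $\rho$-eigenvector, and has at least one zero coordinate. The positivity trick from the previous step shows that any nonzero nonnegative $\rho$-eigenvector is strictly positive, so this vector must vanish. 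Hence $w$ is proportional to $\vartheta$. The same argument applied to $\mathbf B^\top$ gives uniqueness of $\mu$.

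\textbf{Part (e).} This is the part used later in the paper, and I would prove it by a sandwich argument. Since $\vartheta>0$ and $\phi>0$, there are constants $0<c_1\le c_2$ with $c_1\vartheta\le\phi\le c_2\vartheta$ entrywise. Because $\mathbf B^n$ is nonnegative, this gives
$$c_1\rho^n\vartheta_i\le\sum_j B^n(i,j)\phi_j\le c_2\rho^n\vartheta_i .$$
Taking $\frac1n\log$ and letting $n\to\infty$, the constants and $\vartheta_i>0$ contribute nothing, and the limit is $\log\rho$. The column version $\sum_j\phi_jB^n(j,i)$ follows identically using $\mu$ in place of $\vartheta$.

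I expect the main obstacle to be the positivity step in the existence argument. It is where irreducibility is genuinely used, both to make the normalization on the simplex well defined and to force eigenvectors to be strictly positive. Once it is in place, (b), (d) and (e) are short comparison arguments.
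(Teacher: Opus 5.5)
Your proof is correct. The paper gives no proof of this statement. It is quoted from Dembo and Zeitouni, and only part (e) is used, in the proof of Lemma~\ref{lemma: explicit G}. As far as I recall, that source refers (a)--(d) to Seneta's monograph and derives (e) by the same sandwich you use, comparing $\phi$ with the strictly positive right eigenvector $\vartheta$ (and with $\mu$ for the column sums). So on (e) you essentially coincide with the cited argument.

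The genuine difference is in (a)--(d). You obtain existence from Brouwer's fixed point theorem applied to $\vartheta\mapsto\mathbf B\vartheta/\mathbf 1^\top\mathbf B\vartheta$ on the simplex. You then get positivity, the bound $|\lambda|\le\rho$, and one-dimensionality of the eigenspaces from the single device $(\mathbf I+\mathbf B)^{|\Sigma|-1}>0$, together with pairing against $\mu$. The classical development instead defines $\rho$ through the Collatz--Wielandt formula $\rho=\max_{x\ge 0,\,x\ne 0}\min_{i:\,x_i>0}(\mathbf Bx)_i/x_i$. That route needs only compactness and continuity and yields a variational characterization of $\rho$, at the cost of a longer argument. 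Yours is shorter and self-contained, but nonconstructive and reliant on a topological fixed-point theorem.

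Two small remarks.
\begin{itemize}
\item In (d), work with $w-c\vartheta$, $c=\min_j w_j/\vartheta_j$, rather than $\vartheta-w/c$. This vector is nonnegative, has a zero coordinate and lies in the $\rho$-eigenspace, hence vanishes. This way you never divide by $c$, which you had not shown to be nonzero.
\item Your sandwich directly handles the form in which the paper actually invokes (e), namely $\alpha^\top\mathbf Q^{T-1}v$ with a nonnegative nonzero initial vector $\alpha$ and a strictly positive $v$. It gives $c_1\sigma^{T-1}\alpha^\top\vartheta\le\alpha^\top\mathbf Q^{T-1}v\le c_2\sigma^{T-1}\alpha^\top\vartheta$, where $\sigma$ is the Perron--Frobenius eigenvalue of $\mathbf Q$ and $\vartheta$ its positive right eigenvector.
\end{itemize}
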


\proof{Proof of Lemma~\ref{lemma: explicit G}}
Define
$
X_{\bar{\mathcal{M}}(T)}(s,a):=P_{\bar{\mathcal{M}}(T)}(\cdot|s,a)\in \mathbb{R}^{S}.
$
Consider the behavior policy $\pi$. For each $s\in\mathcal{S}$ and $a\in\mathcal{A}$, define the random vector $\bar{M}(s,a;T)\in \mathbb{R}^{S+1}$ as the concatenation of $X_{\bar{\mathcal{M}}(T)}(s,a)$ and $R_{\bar{\mathcal{M}}(T)}(s,a)$, i.e.,
$
\bar{M}(s,a;T):=(X_{\bar{\mathcal{M}}(T)}(s,a),R_{\bar{\mathcal{M}}(T)}(s,a)).
$
Then, the empirical MDP model $\bar{\mathcal{M}}(T)$ can be represented by the random array $\bar{M}(T)\in \mathbb{R}^{(S+1)\times S\times A}$ obtained by collecting $\bar{M}(s,a;T)$ for all $(s,a)\in\mathcal{S}\times\mathcal{A}$.

Let $(\rho,\lambda)\in \mathbb{R}^{(S+1)\times S\times A}$ denote the collection of dual variables $(\rho(s,a),\lambda(s,a))$ over all state-action pairs, where $\rho(s,a)\in \mathbb{R}^{S}$ is associated with the transition component and $\lambda(s,a)\in\mathbb{R}$ is associated with the reward component.

To describe the dynamic empirical behavior of the controlled Markov chain, define the empirical flow
$$
\widehat{\eta}(s,a,s',a')
:=
\frac{1}{T}\sum_{t=1}^T
\mathbb I\{(s_t,a_t,s_{t+1},a_{t+1})=(s,a,s',a')\},
$$
and let its first marginal be
$$
\widehat{\eta}_1(s,a):=\sum_{s',a'}\widehat{\eta}(s,a,s',a').
$$
Thus, $\widehat{\eta}_1(s,a)$ represents the empirical occupation ratio of the state-action pair $(s,a)$. 

For $(\rho,\lambda)\in \mathbb{R}^{(S+1)\times S\times A}$, define the pathwise additive functional
$$
S_T(\rho,\lambda)
:=
\sum_{t=1}^T
\Big(
\rho(s_t,a_t)^\top X_{\mathcal M}(s_t,a_t)
+
\lambda(s_t,a_t)R_{\mathcal M}(s_t,a_t)
\Big),
$$
where $X_{\mathcal{M}}(s_t,a_t)$ and $R_{\mathcal{M}}(s_t,a_t)$ denote the one-step transition vector and reward generated at time $t$ from the state-action pair $(s_t,a_t)$.
Since
$$
T\sum_{s\in\mathcal{S},a\in\mathcal{A}}\widehat{\eta}_1(s,a)
\Big(
\rho(s,a)^\top X_{\bar{\mathcal M}(T)}(s,a)
+
\lambda(s,a)R_{\bar{\mathcal M}(T)}(s,a)
\Big)
=
S_T(\rho,\lambda),
$$
the scaled log-moment generating function can be written as
$$
\Lambda_T\!\big(T(\rho,\lambda)\big)
:=\log\mathbb E\Big[\exp\big(S_T(\rho,\lambda)\big)\Big]\\
=\log\mathbb E\Bigg[\exp\Bigg(\sum_{t=1}^T
\Big(\rho(s_t,a_t)^\top X_{\mathcal M}(s_t,a_t)
+\lambda(s_t,a_t)R_{\mathcal M}(s_t,a_t)\Big)\Bigg)\Bigg].
$$

Thus, the limiting log-moment generating function is associated with an additive functional of the controlled Markov chain on $\mathcal S\times\mathcal A$. To apply the G\"artner--Ellis theorem~\citep{dembo2009large}, it suffices to establish the existence of the limit
$$
\Psi((\rho,\lambda))
:=
\lim_{T\to\infty}\frac{1}{T}\Lambda_T\!\big(T(\rho,\lambda)\big).
$$

Since the observations in an MDP form a Markov chain on $\mathcal S\times\mathcal A$, the exponent is an additive functional along the sample path and thus
$$
    \Psi((\rho, \lambda))
    =
    \lim_{T\rightarrow \infty} \frac{1}{T}\log \mathbb{E}\left[\exp\left( \sum_{t=1}^T \left[ \rho(s_t,a_t)^\top X_{\mathcal{M}}(s_t,a_t) + \lambda(s_t,a_t) R_{\mathcal{M}}(s_t,a_t) \right] \right) \right].
$$

We introduce a tilted transition matrix $\mathbf{Q}_{(\rho, \lambda)}$ defined on the state-action space $\mathcal{S} \times \mathcal{A}$. For a transition from $(s,a)$ to $(s',a')$, its matrix element is defined by
$$
    \mathbf{Q}_{(\rho, \lambda)}((s,a), (s',a'))
    :=
    \pi(a'|s') P_{\mathcal{M}}(s'|s,a)
    \exp(\rho(s'|s,a))
    \mathbb{E}\left[\exp(\lambda(s,a) R_{\mathcal{M}}(s,a)) \right],
$$
where $\rho(s'|s,a) \in \mathbb{R}$ denotes the component of the vector $\rho(s,a)$ indexed by the next state $s'$.

Let $\alpha$ denote the initial distribution of the state-action pair $(s_1,a_1)$. By iterating conditional expectations along the Markov chain $(s_t,a_t)$, there exists a strictly positive vector $v_{(\rho,\lambda)}$ such that
$$
\mathbb{E}\Bigg[ \exp\left( \sum_{t=1}^T \left[ \rho(s_t,a_t)^\top X_{\mathcal{M}}(s_t,a_t) + \lambda(s_t,a_t) R_{\mathcal{M}}(s_t,a_t) \right] \right)\Bigg]
=
\alpha^\top \mathbf{Q}_{(\rho,\lambda)}^{\,T-1}v_{(\rho,\lambda)}.
$$
Here the vector $v_{(\rho,\lambda)}$ absorbs the terminal one-step contribution and therefore does not affect the exponential growth rate.

By part (e) of the Perron--Frobenius theorem (Theorem \ref{thm:perron-frobenius}), since $\mathbf Q_{(\rho,\lambda)}$ is irreducible and nonnegative,
$$
\Psi((\rho,\lambda))
=
\lim_{T\to\infty}\frac{1}{T}\log\!\left(\alpha^\top \mathbf{Q}_{(\rho,\lambda)}^{\,T-1}v_{(\rho,\lambda)}\right)
=
\log \sigma(\mathbf Q_{(\rho,\lambda)}),
$$
where $\sigma(\mathbf Q_{(\rho,\lambda)})$ denotes the Perron--Frobenius eigenvalue of $\mathbf Q_{(\rho,\lambda)}$.

Moreover, because $\mathbf Q_{(\rho,\lambda)}$ is a finite-dimensional irreducible nonnegative matrix whose entries depend smoothly on $(\rho,\lambda)$, its Perron--Frobenius eigenvalue $\sigma(\mathbf Q_{(\rho,\lambda)})$ is positive and differentiable with respect to $(\rho,\lambda)$. Therefore, $
\Psi((\rho,\lambda))=\log \sigma(\mathbf Q_{(\rho,\lambda)})
$ is differentiable throughout $\mathbb R^{(S+1)\times S\times A}$. Since the effective domain is all of $\mathbb R^{(S+1)\times S\times A}$, its boundary is empty; thus the steepness requirement in the definition of essential smoothness is vacuous. Consequently, $\Psi$ is essentially smooth \citet[Definition 2.3.5]{dembo2009large}, and the regularity conditions of the G\"artner-Ellis theorem are satisfied.

We now turn to the variational representation involving an auxiliary candidate flow $\eta$, from which the rate function for the transition-reward pair $(x,y)$ is obtained after eliminating $\eta$, where
$$
x=\{x(\cdot\mid s,a):(s,a)\in\mathcal S\times\mathcal A\},
\qquad
y=\{y(s,a):(s,a)\in\mathcal S\times\mathcal A\}.
$$
Here, for each $(s,a)$, $x(\cdot\mid s,a)$ denotes the empirical transition distribution from $(s,a)$, and $y(s,a)$ denotes the corresponding empirical reward mean. 

Since the exponent above is an additive functional of the controlled Markov chain, the G\"artner--Ellis theorem applies once the limiting log-moment generating function
$
\Psi((\rho,\lambda))
=
\log \sigma(\mathbf Q_{(\rho,\lambda)})
$
has been identified and verified to be essentially smooth. This provides the convex-analytic structure underlying the rate function. To derive an explicit expression for the latter in terms of $(x,y)$, we further combine $\Psi$ with Varadhan's variational characterization of the spectral radius.

We next invoke Varadhan’s variational characterization of the spectral radius for irreducible nonnegative matrices; see \citet[Exercise 3.1.9(b)]{dembo2009large}. Applied to $\mathbf Q_{(\rho,\lambda)}$ (indexed by $(s,a)\to(s',a')$), it yields
$$
\log \sigma(\mathbf Q_{(\rho,\lambda)})
=\\
\sup_{\eta\in\mathcal E}
\left\{
\sum_{s,a}\sum_{s',a'} \eta(s,a,s',a')\,
\log \mathbf Q_{(\rho,\lambda)}\big((s,a),(s',a')\big)
-
\sum_{s,a}\sum_{s',a'} \eta(s,a,s',a')\,
\log \frac{\eta(s,a,s',a')}{\eta_1(s,a)}
\right\},
$$
where $\eta_1(s,a)\triangleq \sum_{s',a'}\eta(s,a,s',a')$ and
$$
\mathcal E \triangleq \Big\{\eta\ge 0:\ \sum_{s,a,s',a'}\eta(s,a,s',a')=1,\ 
\sum_{s',a'}\eta(s,a,s',a')=\sum_{\tilde{s},\tilde{a}}\eta(\tilde{s},\tilde{a},s,a),\ \forall(s,a)\Big\},
$$
and $\eta(s,a,s^\prime,a^\prime)=0$ whenever $\mathbf Q_{(\rho,\lambda)}\big((s,a),(s',a')\big) = 0$.

Using the identity
$$
\log \mathbf Q_{(\rho,\lambda)}((s,a),(s',a'))
=
\log(\pi(a'|s')P_{\mathcal M}(s'|s,a))+\rho(s'|s,a)
+\log \mathbb E\!\left[e^{\lambda(s,a)R_{\mathcal M}(s,a)}\right],
$$
we are led to the variational problem
\begin{equation*}
\begin{aligned}
\sup_{\rho,\lambda}\inf_{\eta\in\mathcal E}
\Bigg\{
&\sum_{s,a}\eta_1(s,a)\sum_{s'}x(s'|s,a)\rho(s'|s,a)
+\sum_{s,a}\eta_1(s,a)\lambda(s,a)y(s,a)\\
&-\sum_{s,a,s',a'}\eta(s,a,s',a')\log(\pi(a'|s')P_{\mathcal M}(s'|s,a))\\
&-\sum_{s,a,s',a'}\eta(s,a,s',a')\rho(s'|s,a)
-\sum_{s,a,s',a'}\eta(s,a,s',a')\log \mathbb E\!\left[e^{\lambda(s,a)R_{\mathcal M}(s,a)}\right]\\
&+\sum_{s,a,s',a'}\eta(s,a,s',a')\log\frac{\eta(s,a,s',a')}{\eta_1(s,a)}
\Bigg\},
\end{aligned}
\end{equation*}
where $\eta$ is an auxiliary flow variable introduced through Varadhan's variational formula. The resulting optimization problem is therefore a lifted variational representation, from which the explicit rate function for $(x,y)$ will be obtained after eliminating $\eta$.
We next simplify this variational problem and identify the resulting rate expression for the empirical transition-reward pair $(x,y)$.

The set \(\mathcal E\) is convex and compact. For fixed \(\eta\in\mathcal E\), the objective is concave and upper semicontinuous in \((\rho,\lambda)\); for fixed \((\rho,\lambda)\), it is convex and lower semicontinuous in \(\eta\). Hence, by Sion's minimax theorem, the order of \(\sup_{(\rho,\lambda)}\) and \(\inf_{\eta\in\mathcal E}\) can be interchanged. 

Rearranging the terms involving \(\rho\), we get
$$
\sum_{s,a,s'}
\left(
\eta_1(s,a)x(s'|s,a)-\sum_{a'}\eta(s,a,s',a')
\right)\rho(s'|s,a).
$$
Since each \(\rho(s'|s,a)\in\mathbb R\) is unconstrained, the supremum over \(\rho\) is finite only if
\begin{equation}
\label{eq:eta-marginal-constraint}
\sum_{a'}\eta(s,a,s',a')=\eta_1(s,a)x(s'|s,a), \qquad \forall (s,a,s').
\end{equation}
Indeed, if \eqref{eq:eta-marginal-constraint} fails for some \((s,a,s')\), then by sending the corresponding \(\rho(s'|s,a)\) to \(+\infty\) or \(-\infty\), the objective becomes \(+\infty\). Therefore, only \(\eta\in\mathcal E\) satisfying \eqref{eq:eta-marginal-constraint} can contribute to \(I(x,y)\).

Under the constraint \eqref{eq:eta-marginal-constraint}, the terms involving \(P_{\mathcal M}\), \(\pi\), and the entropy term become
\begin{equation*}
\begin{aligned}
&-\sum_{s,a,s',a'}\eta(s,a,s',a')\log(\pi(a'|s')P_{\mathcal M}(s'|s,a))
+\sum_{s,a,s',a'}\eta(s,a,s',a')\log\frac{\eta(s,a,s',a')}{\eta_1(s,a)}\\
&=
-\sum_{s,a,s',a'}\eta(s,a,s',a')\log P_{\mathcal M}(s'|s,a)
-\sum_{s,a,s',a'}\eta(s,a,s',a')\log \pi(a'|s')\\
&\qquad
+\sum_{s,a,s',a'}\eta(s,a,s',a')\log\frac{\eta(s,a,s',a')}{\eta_1(s,a)}.
\end{aligned}
\end{equation*}
For fixed \((s,a,s')\), the quantity \(\sum_{a'}\eta(s,a,s',a')\) is prescribed by \eqref{eq:eta-marginal-constraint}. Hence, for fixed \((s,a,s')\), the only terms depending on the conditional distribution of \(a'\) are
$$
-\sum_{a'}\eta(s,a,s',a')\log \pi(a'|s')
+\sum_{a'}\eta(s,a,s',a')\log \eta(s,a,s',a').
$$
Subject to the constraint
$$
\sum_{a'}\eta(s,a,s',a')=\eta_1(s,a)x(s'|s,a),
$$
this expression is minimized when
$$
\eta(s,a,s',a')=\eta_1(s,a)x(s'|s,a)\pi(a'|s')
$$
for all \((s,a,s',a')\). Equivalently, the minimizer is attained when the conditional distribution on \(a'\) coincides with the reference distribution \(\pi(\cdot|s')\). Substituting this optimizer yields
$$
\eta_1(s',a')
=
\sum_{s,a}\eta_1(s,a)x(s'|s,a)\pi(a'|s'),
\qquad \forall (s',a'),
$$
that is, $\eta_1$ must be invariant under the kernel induced by $x$ and $\pi$. Moreover,
\begin{equation*}
\begin{aligned}
&-\sum_{s,a,s',a'}\eta(s,a,s',a')\log(\pi(a'|s')P_{\mathcal M}(s'|s,a))
+\sum_{s,a,s',a'}\eta(s,a,s',a')\log\frac{\eta(s,a,s',a')}{\eta_1(s,a)}\\
&=
\sum_{s,a}\eta_1(s,a)\sum_{s'}x(s'|s,a)\log\frac{x(s'|s,a)}{P_{\mathcal M}(s'|s,a)}.
\end{aligned}
\end{equation*}
Therefore, the contribution of the transition component is
$$
I_1\big(x(s,a)\big)
=
D_{\mathrm{KL}}\!\left(x(\cdot\mid s,a)\,\middle\|\,P_{\mathcal M}(\cdot\mid s,a)\right) = \sum_{s'}x(s'|s,a)\log\frac{x(s'|s,a)}{P_{\mathcal M}(s'|s,a)},
$$
which is equivalently the Fenchel-Legendre transform of the logarithmic moment generating function of $X_{\mathcal{M}}(s,a)$.

Moreover, the supremum over $\lambda(s,a)$ is separable across $(s,a)$ and yields the Cram\'er transform
$$
I_2\big(y(s,a)\big)=\sup_{\lambda(s,a)\in\mathbb R}\left\{\lambda(s,a)\,y(s,a)-\log \mathbb E\!\left[e^{\lambda(s,a) R_{\mathcal M}(s,a)}\right]\right\}.
$$
Consequently, for a fixed empirical flow $\eta$ satisfying \eqref{eq:eta-marginal-constraint}, 
$$
J(\eta,x,y)
=
\sum_{s\in\mathcal{S},a\in\mathcal{A}}\eta_1(s,a)\Big(I_1\big(x(s,a)\big)+I_2\big(y(s,a)\big)\Big).
$$

Therefore,
$$
I(x,y)
=
\inf_{\eta\in\mathcal E:\,\eqref{eq:eta-marginal-constraint}\ \text{holds}}
J(\eta,x,y)
=
\inf_{\eta\in\mathcal E:\,\eqref{eq:eta-marginal-constraint}\ \text{holds}}
\sum_{s\in\mathcal{S},a\in\mathcal{A}}\eta_1(s,a)\Big(I_1\big(x(s,a)\big)+I_2\big(y(s,a)\big)\Big).
$$

After minimizing over the conditional distribution on $a'$, the remaining cost depends on $\eta$ only through its first marginal $\eta_1$. Therefore, the rate function can equivalently be written as
$$
I(x,y)
=
\inf
\left\{
\sum_{s,a}\eta_1(s,a)\Big(I_1\big(x(s,a)\big)+I_2\big(y(s,a)\big)\Big)
\right\},
$$
where the infimum is taken over all \(\eta_1 \in \mathcal{F}_{\pi}(x)\), where
$$
\mathcal{F}_{\pi}(x)
:=
\left\{
\eta_1\in\Omega:
\eta_1(s',a')
=
\sum_{s,a}\eta_1(s,a)x(s'|s,a)\pi(a'|s'),
\ \forall (s',a')
\right\}.
$$

Conversely, for any $\eta_1\in\mathcal F_\pi(x)$, defining
$
\eta(s,a,s',a'):=\eta_1(s,a)x(s'|s,a)\pi(a'|s')
$
produces a feasible element of $\mathcal E$ satisfying \eqref{eq:eta-marginal-constraint}.
Hence, the optimization over $\eta$ is equivalent to the optimization over $\eta_1\in\mathcal F_\pi(x)$.

Finally, define
$$
\mathcal{E}_{s,a}
=
\left\{
(x,y): Q^{\pi^*_{\mathcal{M}}}_{(x,y)}(s,a) > V^{\pi^*_{\mathcal{M}}}_{(x,y)}(s)
\right\}.
$$
It remains to verify that the error event \(\mathcal E_{s,a}\) is an \(I\)-continuity set \citep[Section 1.2]{dembo2009large}, so that the large deviations upper and lower bounds coincide on \(\mathcal E_{s,a}\). For the policy \(\pi^*_{\mathcal M}\), both \((x,y)\mapsto V^{\pi^*_{\mathcal M}}_{(x,y)}\) and \((x,y)\mapsto Q^{\pi^*_{\mathcal M}}_{(x,y)}(s,a)\) are continuous. Hence \(\mathcal E_{s,a}\) is open.

Let
$$
I(x,y)
:=
\inf_{\eta_1\in\mathcal F_\pi(x)}
\sum_{s'\in\mathcal S,a'\in\mathcal A}
\eta_1(s',a')
\Big(
I_1\big(x(s',a')\big)
+
I_2\big(y(s',a')\big)
\Big).
$$
To show that \(\mathcal E_{s,a}\) is an \(I\)-continuity set, it remains to prove that
$
\inf_{(x,y)\in\mathcal E_{s,a}} I(x,y)
=
\inf_{(x,y)\in\overline{\mathcal E}_{s,a}} I(x,y).
$
Since \(\mathcal E_{s,a}\subset \overline{\mathcal E}_{s,a}\), we automatically have
$
\inf_{(x,y)\in\overline{\mathcal E}_{s,a}} I(x,y)
\le
\inf_{(x,y)\in\mathcal E_{s,a}} I(x,y).
$
It therefore suffices to prove the reverse inequality.

Fix any \((x,y)\in \partial\mathcal E_{s,a}\) such that \(I(x,y)<\infty\). We first consider the case \(y(s,a)<1\). The boundary case \(y(s,a)=1\) can be handled similarly by perturbing the model in the opposite direction, for example by slightly decreasing the reward of the action prescribed by \(\pi^*_{\mathcal M}\) at state \(s\). This lowers \(V^{\pi^*_{\mathcal M}}_{(x,y)}(s)\) relative to \(Q^{\pi^*_{\mathcal M}}_{(x,y)}(s,a)\) and therefore moves the point into \(\mathcal E_{s,a}\); the continuity argument is unchanged.
For \(\varepsilon>0\) sufficiently small so that \(y(s,a)+\varepsilon\in[0,1]\), define
$$
y^\varepsilon(s',a')
:=
y(s',a')+\varepsilon\,\mathbf 1\{(s',a')=(s,a)\}.
$$
Then
$$
V^{\pi^*_{\mathcal M}}_{(x,y^\varepsilon)}
=
V^{\pi^*_{\mathcal M}}_{(x,y)},
\qquad
Q^{\pi^*_{\mathcal M}}_{(x,y^\varepsilon)}(s,a)
=
Q^{\pi^*_{\mathcal M}}_{(x,y)}(s,a)+\varepsilon.
$$
Since \((x,y)\in\partial\mathcal E_{s,a}\), we have
$
Q^{\pi^*_{\mathcal M}}_{(x,y)}(s,a)
=
V^{\pi^*_{\mathcal M}}_{(x,y)}(s),
$
and therefore
$$
Q^{\pi^*_{\mathcal M}}_{(x,y^\varepsilon)}(s,a)
=
V^{\pi^*_{\mathcal M}}_{(x,y^\varepsilon)}(s)+\varepsilon
>
V^{\pi^*_{\mathcal M}}_{(x,y^\varepsilon)}(s).
$$
Hence \((x,y^\varepsilon)\in\mathcal E_{s,a}\) for all sufficiently small \(\varepsilon>0\), and \((x,y^\varepsilon)\to (x,y)\) as \(\varepsilon\downarrow0\).
Next, we show that
$$
I(x,y^\varepsilon)\to I(x,y)
\qquad\text{as }\varepsilon\downarrow0.
$$
For any \(\delta>0\), choose \(\eta_1^\delta\in\mathcal F_\pi(x)\) such that
$$
\sum_{s',a'}\eta_1^\delta(s',a')
\Big(
I_1(x(s',a'))+I_2(y(s',a'))
\Big)
\le I(x,y)+\delta.
$$
Since \(x\) is unchanged, \(\mathcal F_\pi(x)\) is the same for \((x,y)\) and \((x,y^\varepsilon)\). Therefore,
$$
\begin{aligned}
I(x,y^\varepsilon)
&\le
\sum_{s',a'}\eta_1^\delta(s',a')
\Big(
I_1(x(s',a'))+I_2(y^\varepsilon(s',a'))
\Big)\\
&=
\sum_{s',a'}\eta_1^\delta(s',a')
\Big(
I_1(x(s',a'))+I_2(y(s',a'))
\Big)
+
\eta_1^\delta(s,a)\Big(I_2(y(s,a)+\varepsilon)-I_2(y(s,a))\Big)\\
&\le
I(x,y)+\delta
+
\eta_1^\delta(s,a)\Big(I_2(y(s,a)+\varepsilon)-I_2(y(s,a))\Big).
\end{aligned}
$$
Because the rewards are bounded in \([0,1]\), the logarithmic moment generating function of \(R_{\mathcal M}(s,a)\) is finite for all \(\lambda\in\mathbb R\), and hence its Fenchel--Legendre transform \(I_2\) is continuous on its effective domain. Letting \(\varepsilon\downarrow0\) in the above display yields
$
\limsup_{\varepsilon\downarrow0} I(x,y^\varepsilon)\le I(x,y)+\delta.
$
Since \(\delta>0\) is arbitrary,
$
\limsup_{\varepsilon\downarrow0} I(x,y^\varepsilon)\le I(x,y).
$
On the other hand, \(I\) is lower semicontinuous, so
$
I(x,y)\le \liminf_{\varepsilon\downarrow0} I(x,y^\varepsilon).
$
Combining the last two inequalities gives
$
I(x,y^\varepsilon)\to I(x,y)
~\text{as }\varepsilon\downarrow0.
$
Thus, every boundary point of \(\mathcal E_{s,a}\) can be approximated by points in \(\mathcal E_{s,a}\) without changing the rate in the limit. Consequently,
$
\inf_{(x,y)\in\mathcal E_{s,a}} I(x,y)
\le
\inf_{(x,y)\in\overline{\mathcal E}_{s,a}} I(x,y).
$
Together with the reverse inequality already noted above, we conclude that
$
\inf_{(x,y)\in\mathcal E_{s,a}} I(x,y)
=
\inf_{(x,y)\in\overline{\mathcal E}_{s,a}} I(x,y).
$
Therefore, \(\mathcal E_{s,a}\) is an \(I\)-continuity set. By the large deviations principle,
$$
\lim_{T\to\infty}
-\frac{1}{T}
\log
\mathbb P\!\left(
Q^{\pi^*_{\mathcal M}}_{\bar{\mathcal M}(T)}(s,a)
>
V^{\pi^*_{\mathcal M}}_{\bar{\mathcal M}(T)}(s)
\right)
=
\inf_{(x,y)\in\mathcal E_{s,a}} I(x,y).
$$
\qed

\proof{Proof of Theorem~\ref{thm: rate function}}
The theorem follows directly from combining Lemma~\ref{lemma: rate function G} with Lemma~\ref{lemma: explicit G}.

\section{Proof of Lemma \ref{lemma: generate rate}}
Under the generative-model assumption, each state-action pair can be sampled independently with a prescribed proportion $\omega_{sa}$. Hence the empirical occupation measure is no longer induced by trajectory dynamics and is simply fixed to be $\omega$. This invariance constraint therefore disappears, and the infimum over $\eta_1$ collapses to evaluation at $\omega$. As a result, 
\begin{equation*}
    I(x,y):= \sum_{s'\in\mathcal S,a'\in\mathcal A}
\omega_{s^\prime a^\prime}
\Big(
I_1\big(x(s',a')\big)
+
I_2\big(y(s',a')\big)
\Big),
\end{equation*}
which yields the exponential decay rate
$$
        \mathcal{R}(\mathcal{M}, \omega) := \min_{s\in \mathcal{S},a\in \mathcal{A}\setminus\{\pi^*_{\mathcal{M}}(s)\}}\inf_{(x,y)\in\mathcal E_{s,a}}
\sum_{s'\in\mathcal S,a'\in\mathcal A}
\omega_{s^\prime a^\prime}
\Big(
I_1\big(x(s',a')\big)
+
I_2\big(y(s',a')\big)
\Big).
$$

\section{$\epsilon$-Optimal Policy Identification}
\label{sec: epsilon-optimal}
Define the set of $\epsilon$-optimal policies for an MDP $\mathcal{M}$ by
$
    \Pi^{\epsilon}_\mathcal{M} := \left\{\pi: \max_{s\in\mathcal{S}}(V^*_{\mathcal{M}}(s)-V^{\pi}_{\mathcal{M}}(s))\le \epsilon\right\}.
$
We consider a conservative error event
$
    \mathcal{E}_1 := \{\pi^*_{\mathcal{M}} \notin \Pi^{\epsilon}_{\bar{\mathcal{M}}(T)}\}
$. Lemma~\ref{lemma: epsilon_sufficient} below provides a necessary condition for $\mathcal{E}_1$.
\begin{lemma}
\label{lemma: epsilon_sufficient}
Let $\bar{\mathcal{M}}(T)$ be a discounted MDP with discount factor $\gamma\in(0,1)$. Define
$$
\mathcal{E}_1 = \left\{ {\pi}^*_{{\mathcal{M}}} \notin \Pi^\epsilon_{\bar{\mathcal{M}}(T)}\right\},
\qquad
\mathcal{E}_2 = \left\{\exists s\in \mathcal{S},\ a\in \mathcal{A}\setminus\{\pi^*_{\mathcal{M}}(s)\}:\ 
Q^{\pi^*_{\mathcal{M}}}_{\bar{\mathcal{M}}(T)}(s,a)>V^{\pi^*_{\mathcal{M}}}_{\bar{\mathcal{M}}(T)}(s)+(1-\gamma)\epsilon\right\}.
$$
Then $\mathcal{E}_1$ implies $\mathcal{E}_2$.
\end{lemma}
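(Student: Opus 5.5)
We argue by contraposition: assuming $\mathcal{E}_2$ fails, we show that $\pi^*_{\mathcal M}$ is $\epsilon$-optimal in $\bar{\mathcal M}(T)$. Write $\pi:=\pi^*_{\mathcal M}$, $V:=V^{\pi}_{\bar{\mathcal M}(T)}$, $Q:=Q^{\pi}_{\bar{\mathcal M}(T)}$, and let $V^*$ denote the optimal value function of $\bar{\mathcal M}(T)$. If $\mathcal{E}_2$ does not hold, then for every $s$ and every $a\neq\pi(s)$ we have $Q(s,a)\le V(s)+(1-\gamma)\epsilon$. For $a=\pi(s)$ the identity $Q(s,\pi(s))=V(s)$ holds trivially. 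Hence
\[
\max_{a\in\mathcal A} Q(s,a)\le V(s)+(1-\gamma)\epsilon \qquad \forall s\in\mathcal S .
\]
In Bellman-operator notation this reads $\mathcal{T}^*V\le V+(1-\gamma)\epsilon\mathbf 1$, where $\mathcal{T}^*$ is the optimality operator of $\bar{\mathcal M}(T)$.

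Next we iterate this one-step bound using the monotonicity of $\mathcal{T}^*$ and its translation property $\mathcal{T}^*(v+c\mathbf 1)=\mathcal{T}^*v+\gamma c\mathbf 1$. An induction on $k$ gives
\[
(\mathcal{T}^*)^kV\le V+(1-\gamma)\epsilon\,(1+\gamma+\cdots+\gamma^{k-1})\mathbf 1 .
\]
The induction step is as follows. Suppose $(\mathcal{T}^*)^kV\le V+c_k\mathbf 1$ with $c_k=(1-\gamma)\epsilon\sum_{j<k}\gamma^j$. Applying $\mathcal{T}^*$ to both sides gives $(\mathcal{T}^*)^{k+1}V\le \mathcal{T}^*V+\gamma c_k\mathbf 1$. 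Using the one-step bound $\mathcal{T}^*V\le V+(1-\gamma)\epsilon\mathbf 1$, this is at most $V+\big((1-\gamma)\epsilon+\gamma c_k\big)\mathbf 1$, and $(1-\gamma)\epsilon+\gamma c_k=c_{k+1}$.

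Finally, we let $k\to\infty$. Since $\mathcal{T}^*$ is a $\gamma$-contraction in the sup norm, $(\mathcal{T}^*)^kV\to V^*$. The constant converges to $(1-\gamma)\epsilon/(1-\gamma)=\epsilon$. Therefore $V^*(s)-V^{\pi}_{\bar{\mathcal M}(T)}(s)\le\epsilon$ for all $s$, which means $\pi^*_{\mathcal M}\in\Pi^\epsilon_{\bar{\mathcal M}(T)}$, and so $\mathcal{E}_1$ fails. This proves the contrapositive of the claim.

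Alternatively, the same bound follows from the performance difference lemma, $V^*(s)-V(s)=\frac{1}{1-\gamma}\mathbb{E}_{d^{\pi^{*}}_s}\big[Q(s',\pi^{*}(s'))-V(s')\big]$. Here $\pi^{*}$ denotes the optimal policy of $\bar{\mathcal M}(T)$ and $d^{\pi^{*}}_s$ its normalized discounted occupancy measure started from $s$. Each advantage term is at most $(1-\gamma)\epsilon$, so the right-hand side is at most $\epsilon$. No step is a serious obstacle. The only care required is to keep the $(1-\gamma)$ factor straight, since it exactly cancels the horizon factor $1/(1-\gamma)$, and to note that the case $a=\pi(s)$ is trivially covered.
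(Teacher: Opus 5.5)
Your proof is correct and matches the paper's argument: contraposition, the one-step bound that the Bellman optimality operator applied to $V^{\pi^*_{\mathcal M}}_{\bar{\mathcal M}(T)}$ exceeds it by at most $(1-\gamma)\epsilon$, induction to the bound $\epsilon(1-\gamma^k)$, and passage to the limit via convergence of value iteration. Your induction step uses the translation property $\mathcal{T}^*(v+c\mathbf 1)=\mathcal{T}^*v+\gamma c\mathbf 1$ explicitly, whereas the paper cites only monotonicity, so your version states that step more completely. Your performance-difference alternative is also valid.
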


\proof{Proof of Lemma \ref{lemma: epsilon_sufficient}}
Assume that $\mathcal{E}_2$ does not occur. Since
$Q^{\pi^*_{\mathcal{M}}}_{\bar{\mathcal{M}}(T)}(s,\pi^*_{\mathcal{M}}(s))
=V^{\pi^*_{\mathcal{M}}}_{\bar{\mathcal{M}}(T)}(s)$ for all $s\in\mathcal{S}$,
the event $\mathcal{E}_2^c$ implies that, for all $s\in\mathcal{S}$ and all $a\in\mathcal{A}$,
\begin{equation}
\label{eq:adv_bound_all_sa}
Q^{\pi^*_{\mathcal{M}}}_{\bar{\mathcal{M}}(T)}(s,a)
\le
V^{\pi^*_{\mathcal{M}}}_{\bar{\mathcal{M}}(T)}(s)+(1-\gamma)\epsilon.
\end{equation}
Equivalently, defining
$
\Delta
~:=~
\sup_{s\in\mathcal{S}}\ \max_{a\in\mathcal{A}}
\Big( Q^{\pi^*_{\mathcal{M}}}_{\bar{\mathcal{M}}(T)}(s,a)
      -V^{\pi^*_{\mathcal{M}}}_{\bar{\mathcal{M}}(T)}(s)\Big),
$
we have $\Delta\le (1-\gamma)\epsilon$.

Let $T_{\bar{\mathcal{M}}(T)}$ denote the Bellman optimality operator associated with
$\bar{\mathcal{M}}(T)$. By definition and \eqref{eq:adv_bound_all_sa},
for every $s\in\mathcal{S}$,
\begin{align}
\big(T_{\bar{\mathcal{M}}(T)} V^{\pi^*_{\mathcal{M}}}_{\bar{\mathcal{M}}(T)}\big)(s)
=
\max_{a\in\mathcal{A}} Q^{\pi^*_{\mathcal{M}}}_{\bar{\mathcal{M}}(T)}(s,a)
\notag
\le
V^{\pi^*_{\mathcal{M}}}_{\bar{\mathcal{M}}(T)}(s)+\Delta
\notag\le
V^{\pi^*_{\mathcal{M}}}_{\bar{\mathcal{M}}(T)}(s)+(1-\gamma)\epsilon.
\label{eq:T_on_Vpi}
\end{align}
Using the monotonicity of $T_{\bar{\mathcal{M}}(T)}$, we obtain,
for any integer $n\ge 1$,
\begin{equation}
T_{\bar{\mathcal{M}}(T)}^{\,n} V^{\pi^*_{\mathcal{M}}}_{\bar{\mathcal{M}}(T)}
\le
V^{\pi^*_{\mathcal{M}}}_{\bar{\mathcal{M}}(T)}
+ \Delta\sum_{k=0}^{n-1}\gamma^k\,\mathbf{1}
\le
V^{\pi^*_{\mathcal{M}}}_{\bar{\mathcal{M}}(T)}
+ (1-\gamma)\epsilon\sum_{k=0}^{n-1}\gamma^k\,\mathbf{1}=
V^{\pi^*_{\mathcal{M}}}_{\bar{\mathcal{M}}(T)}
+ \epsilon(1-\gamma^n)\,\mathbf{1},
\label{eq:iterate_bound}
\end{equation}
where $\mathbf{1}$ is the all-ones vector over $\mathcal{S}$.
Taking $n\to\infty$ in \eqref{eq:iterate_bound} and using the standard identity
$
V^*_{\bar{\mathcal{M}}(T)}=\lim_{n\to\infty} T_{\bar{\mathcal{M}}(T)}^{\,n}V$
for any bounded $V$, 
we conclude that
$
V^*_{\bar{\mathcal{M}}(T)}
\le
V^{\pi^*_{\mathcal{M}}}_{\bar{\mathcal{M}}(T)}+\epsilon\,\mathbf{1}.
$
Thus ${\pi}^*_{{\mathcal{M}}}$ is $\epsilon$-optimal under $\bar{\mathcal{M}}(T)$, i.e.,
${\pi}^*_{{\mathcal{M}}}\in \Pi^\epsilon_{\bar{\mathcal{M}}(T)}$, which means $\mathcal{E}_1$ does not occur.
This proves the contrapositive, and hence $\mathcal{E}_1$ implies $\mathcal{E}_2$.
The analysis for identifying an $\epsilon$-optimal policy proceeds as in the unique-optimum case, except for an additional $(1-\gamma)\epsilon$ term. Accordingly, we can follow the same approach to derive a tractable convex relaxation.

\section{Robust Optimality}
\label{sec: robust opt}
\begin{figure}[h]
\centering
\resizebox{0.6\linewidth}{!}{
\begin{tikzpicture}[
    % --- Styles ---
    neuron/.style={
        circle, 
        draw, 
        thick, 
        minimum size=0.9cm, 
        inner sep=0pt, 
        fill=white
    },
    input/.style={
        circle, 
        draw, 
        thick, 
        minimum size=0.9cm, 
        inner sep=0pt, 
        fill=white
    },
    loop top/.style={
        ->, 
        >=latex, 
        thick,
        out=60, 
        in=120, 
        looseness=5, 
        min distance=1.2cm
    },
    loop right/.style={
        ->, 
        >=latex, 
        thick,
        out=-30, 
        in=30, 
        looseness=5, 
        min distance=1.2cm
    },
    conn/.style={
        ->, 
        >=latex, 
        thick
    },
    guideline/.style={
        densely dotted, 
        gray, 
        thin
    }
]

    % ==========================================
    % GROUP 1 (x1) - Center at x=2
    % ==========================================
    \node[input] (x1) at (2, 0) {$s^{1}_{1}$};

    % Nodes (Left at 1.0, Right at 3.0)
    \node[neuron] (y1_1_L) at (1.0, 2.5) {$s^{2}_{11}$}; 
    \node[neuron] (y1_1_R) at (3.0, 2.5) {$s^2_{1L}$};
    \node[neuron] (y2_1_L) at (1.0, 5.0) {$s^3_{11}$};
    \node[neuron] (y2_1_R) at (3.0, 5.0) {$s^3_{1L}$};
    
    % Internal Ellipsis
    \node at (2.0, 2.5) {\Large $\dots$};
    \node at (2.0, 5.0) {\Large $\dots$};

    % Connections
    \draw[conn, densely dotted] (x1) -- node[left, font=\small, pos=0.7] {$a_1$} (y1_1_L);
    \draw[conn] (x1) -- node[right, font=\small, pos=0.7] {$a_L$} (y1_1_R);
    \draw[conn] (y1_1_L) -- (y2_1_L);
    \draw[conn] (y1_1_R) -- (y2_1_R);

    % Loops
    \draw (y1_1_L) edge[loop right] (y1_1_L);
    \draw (y1_1_R) edge[loop right] (y1_1_R);
    \draw (y2_1_L) edge[loop top] (y2_1_L);
    \draw (y2_1_R) edge[loop top] (y2_1_R);

    % ==========================================
    % SEPARATION DOTS (Between Group 1 and 2)
    % ==========================================
    \node at (4.5, 2.5) {\Huge $\dots$};
    \node at (4.5, 5.0) {\Huge $\dots$};
    \node at (4.5, 0.0) {\Huge $\dots$};

    % ==========================================
    % GROUP 2 (x2) - Center at x=7
    % ==========================================
    \node[input] (x2) at (7, 0) {$s^{1}_{2}$};

    % Nodes (Left at 6.0, Right at 8.0)
    \node[neuron] (y1_2_L) at (6.0, 2.5) {$s^2_{21}$};
    \node[neuron] (y1_2_R) at (8.0, 2.5) {$s^2_{2L}$};
    \node[neuron] (y2_2_L) at (6.0, 5.0) {$s^3_{21}$};
    \node[neuron] (y2_2_R) at (8.0, 5.0) {$s^3_{2L}$};

    % Internal Ellipsis
    \node at (7.0, 2.5) {\Large $\dots$};
    \node at (7.0, 5.0) {\Large $\dots$};

    % Connections
    \draw[conn, densely dotted] (x2) -- node[left, font=\small, pos=0.7] {$a_1$} (y1_2_L);
    \draw[conn] (x2) -- node[right, font=\small, pos=0.7] {$a_L$} (y1_2_R);
    \draw[conn] (y1_2_L) -- (y2_2_L);
    \draw[conn] (y1_2_R) -- (y2_2_R);

    % Loops
    \draw (y1_2_L) edge[loop right] (y1_2_L);
    \draw (y1_2_R) edge[loop right] (y1_2_R);
    \draw (y2_2_L) edge[loop top] (y2_2_L);
    \draw (y2_2_R) edge[loop top] (y2_2_R);

    % ==========================================
    % SEPARATION DOTS (Between Group 2 and 3)
    % ==========================================
    \node at (9.5, 2.5) {\Huge $\dots$};
    \node at (9.5, 5.0) {\Huge $\dots$};
    \node at (9.5, 0.0) {\Huge $\dots$};

    % ==========================================
    % GROUP 3 (xK) - Center at x=12
    % ==========================================
    \node[input] (xk) at (12, 0) {$s^{1}_{K}$};

    % Nodes (Left at 11.0, Right at 13.0)
    \node[neuron] (y1_k_L) at (11.0, 2.5) {$s^2_{K1}$};
    \node[neuron] (y1_k_R) at (13.0, 2.5) {$s^2_{KL}$};
    \node[neuron] (y2_k_L) at (11.0, 5.0) {$s^3_{K1}$};
    \node[neuron] (y2_k_R) at (13.0, 5.0) {$s^3_{KL}$};

    % Internal Ellipsis
    \node at (12.0, 2.5) {\Large $\dots$};
    \node at (12.0, 5.0) {\Large $\dots$};

    % Connections
    \draw[conn, densely dotted] (xk) -- node[left, font=\small, pos=0.7] {$a_1$} (y1_k_L);
    \draw[conn] (xk) -- node[right, font=\small, pos=0.7] {$a_L$} (y1_k_R);
    \draw[conn] (y1_k_L) -- (y2_k_L);
    \draw[conn] (y1_k_R) -- (y2_k_R);

    % Loops
    \draw (y1_k_L) edge[loop right] (y1_k_L);
    \draw (y1_k_R) edge[loop right] (y1_k_R);
    \draw (y2_k_L) edge[loop top] (y2_k_L);
    \draw (y2_k_R) edge[loop top] (y2_k_R);

\end{tikzpicture}}
\caption{The constructed MDP problem instance}
\label{fig: MDP instance}
\end{figure}

We now construct a least favorable hard instance to derive an upper bound on the scaling of $\mathcal R^*$. Consider the MDP $\mathcal M$ and the alternative model $\tilde{\mathcal M}$ defined in Figure~\ref{fig: MDP instance}. The construction is inspired by~\citep{gheshlaghi2013minimax}, but is adapted to the problem of identifying the optimal policy rather than estimating the full $Q$-function.

For these two models, there are $C(S,A)=3KL$ state-action pairs, where $K$ and $L$ are positive integers. We assume that the state space $\mathcal{S}$ can be partitioned into three disjoint subsets, $\mathcal{S} = \mathcal{S}_1\cup \mathcal{S}_2\cup\mathcal{S}_3$, where $\mathcal{S}_1 = \{s^{1}_{1},\ldots,s^{1}_{K}\}$, $\mathcal{S}_2 = \{s^{2}_{11},\ldots s^2_{KL}\}$ and $\mathcal{S}_3 = \{s^{3}_{11},\ldots s^3_{KL}\}$. The action space $\mathcal{A}$ is similarly partitioned as $\mathcal{A} = \mathcal{A}_1\cup \mathcal{A}_2$, where $\mathcal{A}_1 = \{a_1,\ldots,a_{L}\}$ and $\mathcal{A}_2 = \{a_0\}$. For states in $\mathcal{S}_1$, the set of admissible actions is $\mathcal{A}_1$. Taking action $a_j\in \mathcal{A}_1$ from any state $s^1_{i}\in\mathcal{S}_1$ leads deterministically to state $s^2_{ij}\in\mathcal{S}_2$ with probability one, and yields a deterministic reward $r_{\mathcal{M}}(s^1_i,a_j)=0$ for all $(s^1_i,a_j)\in\mathcal{S}_1\times \mathcal{A}_1$. For states in $\mathcal{S}_2 \cup \mathcal{S}_3$, the only admissible action is $a_0\in\mathcal{A}_2$. Under MDP $\mathcal{M}$, taking action $a_0$ for any state $s^{2}_{i1}\in\mathcal{S}_2$ induces the transition probabilities: 
\begin{equation*}
    P_{\mathcal{M}}(s^2_{i1}|s^2_{i1},a_0) = p+\alpha, \quad P_{\mathcal{M}}(s^3_{i1}|s^2_{i1},a_0) = 1-(p+\alpha),
\end{equation*}
where $0<p<p+\alpha<1$.
For any $j\neq 1$, the transitions are:
\begin{equation*}
    P_{\mathcal{M}}(s^2_{ij}|s^2_{ij},a_0) = p, \quad P_{\mathcal{M}}(s^3_{ij}|s^2_{ij},a_0) = 1-p.
\end{equation*}
The corresponding reward is deterministic and given by $r_{\mathcal{M}}(s^2_{ij},a_0)=1$ for all $s^2_{ij}\in\mathcal{S}_2$. Finally, taking action $a_0$ in any state $s^3_{ij}\in\mathcal{S}_3$ results in a self-transition with probability one and yields a deterministic reward $r_{\mathcal{M}}(s^3_{ij},a_0) = 0$.

For a given state $s^1_{i}\in\mathcal{S}_1$ and a suboptimal action $a_j\in\mathcal{A}_1\setminus \{a_1\}$, we construct the alternative MDP $\tilde{\mathcal{M}}$ by perturbing the transition at $s^2_{ij}$ such that:
$$
    P_{\tilde{\mathcal{M}}}(s^2_{ij}|s^2_{ij},a_0) = p+\alpha+\epsilon,\quad P_{\tilde{\mathcal{M}}}(s^3_{ij}|s^2_{ij},a_0) = 1-(p+\alpha+\epsilon),
$$
where $\epsilon>0$ satisfies $p+\alpha+\epsilon<1$. All remaining transition probabilities and reward functions coincide with those of $\mathcal{M}$.
According to the Bellman optimality equation, we have that for each state $s^1_{i}\in \mathcal{S}_1$,
$$
Q^{\pi^*_{\mathcal{M}}}_{\mathcal{M}}(s^1_i,a_1) = \frac{\gamma}{1-\gamma (p+\alpha)} > \frac{\gamma}{1-\gamma p} = Q^{\pi^*_{\mathcal{M}}}_{\mathcal{M}}(s^1_i,a_j),\quad \forall a_j\neq a_1.
$$
Since $\alpha>0$, this means $\pi^*_{\mathcal{M}}(s^1_i) = a_1$ for any state $s^1_i\in \mathcal{S}_1$. However, under the MDP $\tilde{\mathcal{M}}$, for the given state $s^1_i\in\mathcal{S}_1$ and action $a_j\in\mathcal{A}_1\setminus \{a_1\}$, we have
$$
Q^{\pi^*_{{\mathcal{M}}}}_{\tilde{\mathcal{M}}}(s^1_i,a_j) = \frac{\gamma}{1-\gamma (p+\alpha+\epsilon)} > \frac{\gamma}{1-\gamma (p+\alpha)} = Q^{\pi^*_{{\mathcal{M}}}}_{\tilde{\mathcal{M}}}(s^1_i,a_1).
$$
and hence $(P_{\tilde{\mathcal{M}}},r_{\tilde{\mathcal{M}}})$ belongs to the set $\mathcal{E}_{s^1_i,a_j}$. 
Based on the definition of $\mathcal{R}^*$ and the setting of MDPs $\mathcal{M}$ and $\tilde{\mathcal{M}}$, we have that
$$
    \mathcal{R}^* \leq \max_{\omega\in\Omega}\mathcal{R}(\mathcal{M},\omega)
    \leq \max_{\omega\in\Omega}\min_{s^1_i\in\mathcal{S}_1,a_j\in\mathcal{A}_1\setminus\{a_1\}}\omega_{s^2_{ij}a_0} I_1(x(s^2_{ij},a_0)).
$$

Note that $I_1(x(s^2_{ij},a_0))$ is the Fenchel-Legendre transform of the logarithmic moment generating function of $X_{\mathcal{M}}(s^2_{ij},a_0) = (X_{\mathcal{M}}(s^2_{ij}|s^2_{ij},a_0),X_{\mathcal{M}}(s^3_{ij}|s^2_{ij},a_0))$, and we have that
\begin{equation*}
    \mathbb{E}[X_{\mathcal{M}}(s^2_{ij},a_0)] = (p,1-p).
\end{equation*}
By the definition of $I_1(x(s^2_{ij},a_0))$, we have
$$
I_1(x(s^2_{ij},a_0)) 
= \sup\{\lambda_1 x(s^2_{ij}|s^2_{ij},a_0)+\lambda_2x(s^3_{ij}|s^2_{ij},a_0)-\log(p\exp(\lambda_1)+(1-p)\exp(\lambda_2))\}.
$$
For notation simplicity, we denote by $x_1 = x(s^2_{ij}|s^2_{ij},a_0), x_2 = x(s^3_{ij}|s^2_{ij},a_0)$ and $A = p\exp(\lambda_1)+(1-p)\exp(\lambda_2)$. By solving the optimization problem, it holds that
\begin{equation*}
    \lambda_1 = \log \left(\frac{x_1A}{p}\right),\quad \lambda_2 = \log\left(\frac{x_2A}{1-p}\right),
\end{equation*}
and 
\begin{equation*}
    I_1(x(s^2_{ij},a_0)) = x_1 \log\left(\frac{x_1}{p}\right) + x_2 \log\left(\frac{x_2}{1-p}\right).
\end{equation*}
Let $x_1 = p+\alpha+\epsilon$ and $x_2 = 1-(p+\alpha+\epsilon)$, we can obtain that
$$
    I_1(x(s^2_{ij},a_0)) = (p+\alpha+\epsilon)\log\frac{p+\alpha+\epsilon}{p} + (1-p-\alpha-\epsilon)\log\frac{1-p-\alpha-\epsilon}{1-p}.
$$
For any $z\in(-1,\infty)$, as $z\rightarrow 0$, the natural logarithm admits the Taylor expansion
\begin{equation*}
    \log(1+z) = z -\frac{z^2}{2} + O(z^3),
\end{equation*}
where $O(z^3)$ denotes the remainder term of the expansion and satisfies $\lim_{z\rightarrow 0}O(z^3)/z=0$. Then, it holds that
$$
(p+\alpha+\epsilon)\log\frac{p+\alpha+\epsilon}{p}
=(\alpha+\epsilon) + \frac{(\alpha+\epsilon)^2}{2p}+O\left((\alpha+\epsilon)^3\right),
$$
and 
$$
(1-p-\alpha-\epsilon)\log \frac{1-p-\alpha-\epsilon}{1-p}
=-(\alpha+\epsilon)+\frac{(\alpha+\epsilon)^2}{2(1-p)} + O\left((\alpha+\epsilon)^3\right)
$$

Therefore, we obtain that as $(\alpha+\epsilon)\rightarrow 0$,
\begin{equation*}
     I_1(x(s^2_{ij},a_0)) = \frac{(\alpha+\epsilon)^2}{2p} + \frac{(\alpha+\epsilon)^2}{2(1-p)} + O\left((\alpha+\epsilon)^3\right).
\end{equation*}
By letting \(\epsilon\downarrow 0\) and choosing
$
    p=\frac{4\gamma-1}{3\gamma},
$
where \(\gamma\in(1/4,1)\), we have
\begin{equation*}
    \Delta_{\min}(\mathcal M)
    =
    \frac{\gamma}{1-\gamma(p+\alpha)}-\frac{\gamma}{1-\gamma p}.
\end{equation*}
We choose $\alpha$ so that $\Delta_{\min}(\mathcal M)=\Delta_0$. Equivalently, as $\Delta_0\downarrow0$, a first-order expansion gives
\begin{equation*}
    \alpha
    =
    \frac{(1-\gamma p)^2}{\gamma^2}\Delta_{0}
    +O(\Delta_{0}^2).
\end{equation*}
Substituting this relation into the above expansion gives
$$
    I_1(x(s^2_{ij},a_0))
    =
    \frac{\alpha^2}{2p}
    +
    \frac{\alpha^2}{2(1-p)}
    +
    O(\alpha^3)
    =
    O\left((1-\gamma)^3\Delta_{0}^2\right).
$$

Therefore,
\[
\mathcal R^*
\le
\max_{\omega\in\Omega}
\min_{s^1_i\in\mathcal S_1,\,
a_j\in\mathcal A_1\setminus\{a_1\}}
\omega_{s^2_{ij}a_0}
I_1(x(s^2_{ij},a_0)).
\]
The right-hand side is maximized, up to a constant factor, by allocating sampling effort uniformly over the critical pairs
$
\{(s^2_{ij},a_0): i\in[K],\ j\in[L]\}.
$
Since the number of such pairs is of order $C(S,A)$, we obtain
\[
\mathcal R^*
=
O\left(
\frac{(1-\gamma)^3\Delta_0^2}{C(S,A)}
\right).
\]
This proves the desired upper bound on the worst-case optimal exponential decay rate.

\section{Proof of Lemma~\ref{lemma: approximate constraint}}
Recall the definitions of the model deviations between the alternative model $\tilde{\mathcal{M}}$ and the nominal model $\mathcal{M}$: $\Delta_{r}(s,a) := r_{\tilde{\mathcal{M}}}(s,a)-r_{\mathcal{M}}(s,a)$, and $\Delta_{p}(\cdot|s,a):= P_{\tilde{\mathcal{M}}}(\cdot|s,a) - P_{{\mathcal{M}}}(\cdot|s,a)$, and $\Delta_{V}:=V^{\pi^*_{\mathcal{M}}}_{\tilde{\mathcal{M}}}- V^{\pi^*_{\mathcal{M}}}_{{\mathcal{M}}}$. Additionally, recall the optimality gap $\Delta_{sa} := V^{\pi^*_{\mathcal{M}}}_{\mathcal{M}}(s)-Q^{\pi^*_{\mathcal{M}}}_{\mathcal{M}}(s,a)$.

By the definition of the $Q$-function, we expand $Q^{\pi^*_{\mathcal{M}}}_{\tilde{\mathcal{M}}}(s,a)$ as follows:
\begin{align}
Q^{\pi^*_{\mathcal{M}}}_{\tilde{\mathcal{M}}}(s,a) &= r_{\tilde{\mathcal{M}}}(s,a) + \gamma \sum_{s^\prime \in \mathcal{S}}P_{\tilde{\mathcal{M}}}(s^\prime|s,a)V^{\pi^*_{\mathcal{M}}}_{\tilde{\mathcal{M}}}(s^\prime)\nonumber\\
 &=r_{{\mathcal{M}}}(s,a) + \Delta_r(s,a) + \gamma \sum_{s^\prime \in \mathcal{S}} (P_{{\mathcal{M}}}(s^\prime|s,a)+\Delta_p(s^\prime|s,a))V^{\pi^*_{\mathcal{M}}}_{\tilde{\mathcal{M}}}(s^\prime)\nonumber\\
 &=r_{{\mathcal{M}}}(s,a) + \gamma P_{\mathcal{M}}(s,a)^\top V^{\pi^*_{\mathcal{M}}}_{{\mathcal{M}}} + \gamma P_{\mathcal{M}}(s,a)^\top \Delta_{V} + \Delta_r(s,a) + \gamma \Delta_{p}(s,a)^\top V^{\pi^*_{\mathcal{M}}}_{\tilde{\mathcal{M}}}\nonumber\\
 &= Q^{\pi^*_{\mathcal{M}}}_{{\mathcal{M}}}(s,a) + \gamma P_{\tilde{\mathcal{M}}}(s,a)^\top \Delta_{V} + \Delta_r(s,a) + \gamma \Delta_{p}(s,a)^\top V^{\pi^*_{\mathcal{M}}}_{{\mathcal{M}}} \nonumber\\
&=V^{\pi^*_{\mathcal{M}}}_{{\mathcal{M}}}(s) - \Delta_{sa} + \gamma P_{\tilde{\mathcal{M}}}(s,a)^\top \Delta_{V} + \Delta_r(s,a) + \gamma \Delta_{p}(s,a)^\top V^{\pi^*_{\mathcal{M}}}_{{\mathcal{M}}}
\label{eq:Q_expansion_final}
\end{align}

Given the false selection condition $Q^{\pi^*_{\mathcal{M}}}_{\tilde{\mathcal{M}}}(s,a) > V^{\pi^*_{\mathcal{M}}}_{\tilde{\mathcal{M}}}(s)$, we substitute \eqref{eq:Q_expansion_final} into the LHS and use $V^{\pi^*_{\mathcal{M}}}_{\tilde{\mathcal{M}}}(s) = V^{\pi^*_{\mathcal{M}}}_{{\mathcal{M}}}(s) + \Delta_V(s)$ for the RHS:

$$
V^{\pi^*_{\mathcal{M}}}_{{\mathcal{M}}}(s) - \Delta_{sa} + \gamma P_{\tilde{\mathcal{M}}}(s,a)^\top \Delta_{V} + \Delta_r(s,a) + \gamma \Delta_{p}(s,a)^\top V^{\pi^*_{\mathcal{M}}}_{{\mathcal{M}}}>V^{\pi^*_{\mathcal{M}}}_{{\mathcal{M}}}(s) + \Delta_V(s).\nonumber
$$
Subtracting $V^{\pi^*_{\mathcal{M}}}_{{\mathcal{M}}}(s)$ from both sides and rearranging terms yields:
\begin{align}
\label{eq:main_inequality}
    (\gamma P_{\tilde{\mathcal{M}}}(s,a)-e_{s})^\top \Delta_{V}+\Delta_r(s,a) + \gamma \Delta_{p}(s,a)^\top V^{\pi^*_{\mathcal{M}}}_{{\mathcal{M}}}> \Delta_{sa}.
\end{align}
where $e_s$ denotes the standard basis vector for state $s$ (i.e., $\Delta_V(s) = e_s^\top \Delta_V$).
We now derive an upper bound for $\Delta_V$. For any state $s \in \mathcal{S}$, the value difference satisfies:
\begin{align}
\Delta_V(s) &=V^{\pi^*_{\mathcal{M}}}_{\tilde{\mathcal{M}}}(s) - V^{\pi^*_{\mathcal{M}}}_{\mathcal{M}}(s)\nonumber\\ 
&=r_{\tilde{\mathcal{M}}}(s,\pi^*_{\mathcal{M}}(s)) - r_{{\mathcal{M}}}(s,\pi^*_{\mathcal{M}}(s)) + \gamma P_{\tilde{\mathcal{M}}}(s,\pi^*_{\mathcal{M}}(s))^\top V^{\pi^*_{\mathcal{M}}}_{\tilde{\mathcal{M}}} - \gamma P_{{\mathcal{M}}}(s,\pi^*_{\mathcal{M}}(s))^\top V^{\pi^*_{\mathcal{M}}}_{{\mathcal{M}}} \nonumber\\
&=\Delta_r(s,\pi^*_{\mathcal{M}}(s)) + \gamma \Delta_{p}(s,\pi^*_{\mathcal{M}}(s))^\top V^{\pi^*_{\mathcal{M}}}_{\tilde{\mathcal{M}}} + \gamma P_{{\mathcal{M}}}(s,\pi^*_{\mathcal{M}}(s))^\top (V^{\pi^*_{\mathcal{M}}}_{\tilde{\mathcal{M}}} - V^{\pi^*_{\mathcal{M}}}_{\mathcal{M}})\nonumber\\
&=\Delta_r(s,\pi^*_{\mathcal{M}}(s)) + \gamma \Delta_{p}(s,\pi^*_{\mathcal{M}}(s))^\top V^{\pi^*_{\mathcal{M}}}_{{\mathcal{M}}}+\gamma P_{\tilde{\mathcal{M}}}(s,\pi^*_{\mathcal{M}}(s))^\top \Delta_V\nonumber.
\end{align}
Taking the absolute value and maximizing over all states $s \in \mathcal{S}$:
$$
    \max_{s^\prime\in\mathcal{S}} \left|\Delta_V(s^\prime)\right| \leq \max_{s^\prime\in\mathcal{S}} \left|\Delta_r(s^\prime,\pi^*_{\mathcal{M}}(s^\prime)) + \gamma \Delta_{p}(s^\prime,\pi^*_{\mathcal{M}}(s^\prime))^\top V^{\pi^*_{\mathcal{M}}}_{{\mathcal{M}}}\right| + \gamma \max_{s^\prime\in\mathcal{S}} \left|\Delta_V(s^\prime)\right|,
$$
Rearranging for $\max_{s\in\mathcal{S}} \left|\Delta_V(s)\right|$, we obtain
\begin{equation}
\label{eq:delta_V_bound}
\begin{aligned}
      \max_{s^\prime\in\mathcal{S}} \left|\Delta_V(s^\prime)\right|  &\leq \frac{1}{1-\gamma}\max_{s^\prime\in\mathcal{S}} \left|\Delta_r(s^\prime,\pi^*_{\mathcal{M}}(s^\prime)) + \gamma \Delta_{p}(s^\prime,\pi^*_{\mathcal{M}}(s^\prime))^\top V^{\pi^*_{\mathcal{M}}}_{{\mathcal{M}}}\right|\\
      &\leq \frac{1}{1-\gamma} \max_{s^\prime\in\mathcal{S}} \left|\Delta_r(s^\prime,\pi^*_{\mathcal{M}}(s^\prime))\right| + \frac{\gamma}{1-\gamma} \max_{s^\prime\in\mathcal{S}}\left|\Delta_{p}(s^\prime,\pi^*_{\mathcal{M}}(s^\prime))^\top V^{\pi^*_{\mathcal{M}}}_{{\mathcal{M}}}\right|.
\end{aligned}
\end{equation}
We now derive a bound to the first term on the LHS of \eqref{eq:main_inequality}. Using the triangle inequality and the property that $\|P_{\tilde{\mathcal{M}}}(s,a)\|_1=1$, we have:
$$
    (\gamma P_{\tilde{\mathcal{M}}}(s,a)-e_s)^\top \Delta_{V} \leq \left|(\gamma P_{\tilde{\mathcal{M}}}(s,a)-e_s)^\top \Delta_{V}\right| \leq \|\gamma P_{\tilde{\mathcal{M}}}(s,a)-e_s\|_1 \|\Delta_V\|_\infty \leq (1+\gamma)\max_{s^\prime\in\mathcal{S}} \left|\Delta_V(s^\prime)\right| .
$$

Substituting $(1+\gamma)\max_{s^\prime\in\mathcal{S}} \left|\Delta_V(s^\prime)\right|$ and then using \eqref{eq:delta_V_bound}:
$$
 \frac{1+\gamma}{1-\gamma} \max_{s^\prime\in\mathcal{S}} \left|\Delta_r(s^\prime,\pi^*_{\mathcal{M}}(s^\prime))\right| + \frac{\gamma(1+\gamma)}{1-\gamma} \max_{s^\prime\in\mathcal{S}}\left|\Delta_{p}(s^\prime,\pi^*_{\mathcal{M}}(s^\prime))^\top V^{\pi^*_{\mathcal{M}}}_{{\mathcal{M}}}\right|+\Delta_r(s,a) + \gamma \Delta_{p}(s,a)^\top V^{\pi^*_{\mathcal{M}}}_{{\mathcal{M}}}> \Delta_{sa}.
$$

\section{Proof of Lemma~\ref{lemma: I1 lb}}
For a given state-action pair $(s,a)$, we define a random variable
$Z:\mathcal{S}\rightarrow\mathbb{R}$ associated with the next state $S^\prime$ (random variable), centered around its expectation under the nominal model $\mathcal{M}$:
\begin{equation*}
    Z(S^\prime) := V^{\pi^*_{\mathcal{M}}}_{\mathcal{M}}(S^\prime) - \sum_{k \in\mathcal{S}}P_{\mathcal{M}}(k|s,a)V^{\pi^*_{\mathcal{M}}}_{\mathcal{M}}(k).
\end{equation*}
Then, the variable $Z$ satisfies the following key properties. First, the expectation of $Z$ under distribution $P_{\mathcal{M}}(s,a)$ satisfies
$$
\mathbb{E}_{P_{\mathcal{M}}(s,a)}[Z] = \sum_{s^\prime\in\mathcal{S}}P_{\mathcal{M}}(s^\prime|s,a)V^{\pi^*_{\mathcal{M}}}_{\mathcal{M}}(s^\prime) - \sum_{s^\prime\in\mathcal{S}}P_{\mathcal{M}}(s^\prime|s,a)V^{\pi^*_{\mathcal{M}}}_{\mathcal{M}}(s^\prime) = 0.
$$
Second, the expectation of $Z$ under distribution $P_{\tilde{\mathcal{M}}}(s,a)$ satisfies
$$
\mathbb{E}_{P_{\tilde{\mathcal{M}}}(s,a)}[Z] = \sum_{s^\prime \in\mathcal{S}}P_{\tilde{\mathcal{M}}}(s^\prime|s,a)V^{\pi^*_{\mathcal{M}}}_{{\mathcal{M}}}(s^\prime) - \sum_{s^\prime \in\mathcal{S}}P_{{\mathcal{M}}}(s^\prime|s,a)V^{\pi^*_{\mathcal{M}}}_{{\mathcal{M}}}(s^\prime) = \Delta_{p}(s,a)^\top V^{\pi^*_{\mathcal{M}}}_{{\mathcal{M}}}. 
$$
Third, the second-order moment of $Z$ under distribution $P_{\mathcal{M}}(s,a)$ satisfies
$$
\mathbb{E}_{P_{{\mathcal{M}}}(s,a)}[Z^2] = \sum_{s^\prime\in\mathcal{S}}P_{\mathcal{M}}(s^\prime|s,a)\left(V^{\pi^*_{\mathcal{M}}}_{\mathcal{M}}(s^\prime) - \sum_{s^\prime\in\mathcal{S}}P_{\mathcal{M}}(s^\prime|s,a)V^{\pi^*_{\mathcal{M}}}_{\mathcal{M}}(s^\prime)\right)^2 =\text{Var}_{s^\prime \sim P_{{\mathcal{M}}}(s,a)}\left[V^{\pi^*_{\mathcal{M}}}_{\mathcal{M}}(s^\prime)\right]
$$
Last, $Z$ is a bounded random variable such that
$|Z| \leq {1}/(1-\gamma)$. 
Since $I_1(x(s,a))$ is the Fenchel-Legendre transform of the logarithmic moment generating functions of $X_{\mathcal{M}}(s,a)$ and is defined as
$$
     I_1(x(s,a)) = \sup_{\rho(s,a)}\left({\rho(s,a)^\top}x(s,a)- \log\mathbb{E}_{P_{\mathcal{M}}(s,a)}\left[\exp(\rho(s,a)^\top X_{\mathcal{M}}(s,a))\right]\right).
$$
By choosing $\rho(s,a)$ as the column vector $(\eta_1 Z(s'))_{s'\in\mathcal S}\in\mathbb R^S$ for some constant $\eta_1>0$,  it holds that
\begin{equation*}
    I_1(x(s,a)) \geq \eta_1 \mathbb{E}_{{P}_{\tilde{\mathcal{M}}}(s,a)}[Z] - \log \mathbb{E}_{P_{\mathcal{M}}(s,a)}[\exp(\eta_1 Z)],
\end{equation*}
which further implies that
\begin{equation}
\label{eq: variational_step}
    \Delta_{p}(s,a)^\top V^{\pi^*_{\mathcal{M}}}_{{\mathcal{M}}} = \mathbb{E}_{{P}_{\tilde{\mathcal{M}}}(s,a)}[Z] \leq \frac{1}{\eta_1}I_1(x(s,a)) +  \frac{1}{\eta_1} \log \mathbb{E}_{P_{\mathcal{M}}(s,a)}[\exp(\eta_1 Z)].
\end{equation}

Since $Z$ is a zero-mean random variable bounded by $1/(1-\gamma)$, for any $\eta_1\in(0,3(1-\gamma))$, the log-moment generating function satisfies the standard Bernstein-type bound:
\begin{equation*}
    \log \mathbb{E}_{P_{\mathcal{M}}(s,a)}[e^{\eta_1 Z}] \leq \frac{\eta_1^2 \mathbb{V}_{ P_{{\mathcal{M}}}(s,a)}[V^{\pi^*_{\mathcal{M}}}_{\mathcal{M}}]}{2\left(1 - \frac{\eta_1}{3(1-\gamma)}\right)},
\end{equation*}
where $\mathbb{V}_{ P_{{\mathcal{M}}}(s,a)}[V^{\pi^*_{\mathcal{M}}}_{\mathcal{M}}]$ is the variance of random variable $V^{\pi^*_{\mathcal{M}}}_{\mathcal{M}}(s^\prime)$ with $s^\prime\sim P_{{\mathcal{M}}}(s,a)$.
Substituting this upper bound into \eqref{eq: variational_step}, we have:
\begin{equation}
    \Delta_{p}(s,a)^\top V^{\pi^*_{\mathcal{M}}}_{{\mathcal{M}}} \leq \frac{1}{\eta_1}I_1(x(s,a)) +  \frac{\eta_1 \mathbb{V}_{ P_{{\mathcal{M}}}(s,a)}[V^{\pi^*_{\mathcal{M}}}_{\mathcal{M}}]}{2\left(1 - \frac{\eta_1}{3(1-\gamma)}\right)}.
\end{equation}
By symmetry (repeating the argument with test function $-\eta_1 Z$), this bound holds for the absolute value $|\Delta_{p}(s,a)^\top V^{\pi^*_{\mathcal{M}}}_{{\mathcal{M}}}|$. This implies:
$$
    I_1(x(s,a)) \geq \sup_{\eta_1 \in (0, 3(1-\gamma))} \left( \eta_1 |\Delta_{p}(s,a)^\top V^{\pi^*_{\mathcal{M}}}_{{\mathcal{M}}}| - \frac{\eta^2_1 \mathbb{V}_{ P_{{\mathcal{M}}}(s,a)}[V^{\pi^*_{\mathcal{M}}}_{\mathcal{M}}]}{2\left(1 - \frac{\eta_1}{3(1-\gamma)}\right)} \right).
$$

By choosing $$
\eta_1 = \frac{|\Delta_{p}(s,a)^\top V^{\pi^*_{\mathcal{M}}}_{{\mathcal{M}}}|}{\mathbb{V}_{ P_{{\mathcal{M}}}(s,a)}[V^{\pi^*_{\mathcal{M}}}_{\mathcal{M}}]+(|\Delta_{p}(s,a)^\top V^{\pi^*_{\mathcal{M}}}_{{\mathcal{M}}}|/3(1-\gamma))}$$ which yields the functional form:
$$
    I_1(x(s,a)) \geq \frac{(\Delta_{p}(s,a)^\top V^{\pi^*_{\mathcal{M}}}_{{\mathcal{M}}})^2}{2\mathbb{V}_{ P_{{\mathcal{M}}}(s,a)}[V^{\pi^*_{\mathcal{M}}}_{\mathcal{M}}] + 2(|\Delta_{p}(s,a)^\top V^{\pi^*_{\mathcal{M}}}_{{\mathcal{M}}}|/3(1-\gamma))}.
$$
Rearranging the inequality, we obtain a quadratic inequality:
$$
    {(\Delta_{p}(s,a)^\top V^{\pi^*_{\mathcal{M}}}_{{\mathcal{M}}})^2}-2\mathbb{V}_{ P_{{\mathcal{M}}}(s,a)}[V^{\pi^*_{\mathcal{M}}}_{\mathcal{M}}]I_1(x(s,a)) -
    \frac{2|\Delta_{p}(s,a)^\top V^{\pi^*_{\mathcal{M}}}_{{\mathcal{M}}}|}{3(1-\gamma)} I_1(x(s,a)) \leq 0.
$$
The positive root of the corresponding quadratic equation sets the upper bound for $|\Delta_{p}(s,a)^\top V^{\pi^*_{\mathcal{M}}}_{{\mathcal{M}}}|$:
$$
    |\Delta_{p}(s,a)^\top V^{\pi^*_{\mathcal{M}}}_{{\mathcal{M}}}| \leq \frac{\frac{2}{3(1-\gamma)} I_1(x(s,a))+\sqrt{\frac{4}{9(1-\gamma)^2} I_1(x(s,a))^2+8\mathbb{V}_{ P_{{\mathcal{M}}}(s,a)}[V^{\pi^*_{\mathcal{M}}}_{\mathcal{M}}]I_1(x(s,a))}}{2}.
$$
Using the inequality $\sqrt{x+y} \leq \sqrt{x} + \sqrt{y}$, we simplify the term:
$$
    \sqrt{\frac{4}{9(1-\gamma)^2} I_1(x(s,a))^2+8\mathbb{V}_{ P_{{\mathcal{M}}}(s,a)}[V^{\pi^*_{\mathcal{M}}}_{\mathcal{M}}]I_1(x(s,a))} \le \frac{2}{3(1-\gamma)}I_1(x(s,a)) + 2\sqrt{2\mathbb{V}_{ P_{{\mathcal{M}}}(s,a)}[V^{\pi^*_{\mathcal{M}}}_{\mathcal{M}}]I_1(x(s,a))}.
$$
Substituting this back, we obtain that
$$
    |\Delta_{p}(s,a)^\top V^{\pi^*_{\mathcal{M}}}_{{\mathcal{M}}}| \leq {\frac{2}{3(1-\gamma)}I_1(x(s,a)) + \sqrt{2\mathbb{V}_{ P_{{\mathcal{M}}}(s,a)}[V^{\pi^*_{\mathcal{M}}}_{\mathcal{M}}]I_1(x(s,a))}}.
$$
Finally, to derive the quadratic bound, we square both sides:
$$
        (\Delta_{p}(s,a)^\top V^{\pi^*_{\mathcal{M}}}_{{\mathcal{M}}})^2 \leq 2\mathbb{V}_{ P_{{\mathcal{M}}}(s,a)}[V^{\pi^*_{\mathcal{M}}}_{\mathcal{M}}]I_1(x(s,a)) +\frac{4\sqrt{2}(\mathbb{V}_{ P_{{\mathcal{M}}}(s,a)}[V^{\pi^*_{\mathcal{M}}}_{\mathcal{M}}])^{\frac{1}{2}}I_1(x(s,a))^{\frac{3}{2}}}{3(1-\gamma)}  + \frac{4I_1(x(s,a))^2}{9(1-\gamma)^2}.
$$

\section{Proof of Lemma~\ref{lemma: I2 lb}}
For a given state-action pair $(s,a)$, let $R$ be the random variable representing the immediate reward. Under the nominal model $\mathcal{M}$, $R$ follows the distribution of $R_{\mathcal{M}}(s,a)$, and under the alternative model $\tilde{\mathcal{M}}$, it follows $R_{\tilde{\mathcal{M}}}(s,a)$.
We define a centered random variable $Z$ as:
$
    Z = R - \mathbb{E}[R_{{\mathcal{M}}}(s,a)].
$
Under the nominal model $\mathcal{M}$, $Z$ satisfies the following properties. First, the expectation of $Z$ is zero $\mathbb{E}_{\mathcal{M}}[Z] = 0$. Second, the expectation under the alternative model $\tilde{\mathcal{M}}$ yields
\begin{equation}
\label{eq: alter_expect}
\mathbb{E}_{\tilde{\mathcal{M}}}[Z] = \mathbb{E}[R_{\tilde{\mathcal{M}}}(s,a)] - \mathbb{E}[R_{{\mathcal{M}}}(s,a)] =\Delta_{r}(s,a).
\end{equation} Third, the second moment matches the variance of the reward:
$
\mathbb{E}_{\mathcal{M}}[Z^2] = \mathbb{V}[R_{\mathcal{M}}(s,a)],$ 
where $\mathbb{V}[R_{\mathcal{M}}(s,a)]$ is the variance of random variable $R_{\mathcal{M}}(s,a)$. Last, $Z$ is a bounded random variable such that $|Z|\le 1$.
The rate function $I_2(y(s,a))$ is defined as the Fenchel-Legendre transform of the logarithmic moment generating function of the reward distribution $R_{\mathcal{M}}(s,a)$. Specifically, we set the target value $y(s,a) = \mathbb{E}_{P_{\tilde{\mathcal{M}}}}[R]$. For any scalar $\eta_2 > 0$, we have:
$$
I_2(y(s,a)) = \sup_{\lambda(s,a)}\left({\lambda(s,a)}y(s,a)-\log\mathbb{E}[\exp(\lambda(s,a)R_{\mathcal{M}}(s,a))]\right)\geq \eta_2\mathbb{E}_{\tilde{\mathcal{M}}}[Z] - \log \mathbb{E}_{\mathcal{M}}[\exp(\eta_2Z)].
$$

Substituting the equation \eqref{eq: alter_expect} and rearranging terms, we obtain:
\begin{equation}
\label{eq: reward_variational_step}
    \eta_2 \Delta_{r}(s,a) \leq I_2(y(s,a)) + \log \mathbb{E}_{\mathcal{M}}[\exp(\eta_2 Z)].
\end{equation}

Since $Z$ is a zero-mean random variable bounded by $1$, for any $\eta_2 \in (0, 3)$, the log-moment generating function satisfies the standard Bernstein-type bound:
\begin{equation*}
    \log \mathbb{E}_{\mathcal{M}}[e^{\eta_2 Z}] \leq \frac{\eta_2^2 \mathbb{V}[R_{\mathcal{M}}(s,a)]}{2(1 - \eta_2 /3)}.
\end{equation*}
Substituting this upper bound into \eqref{eq: reward_variational_step}, we have:
\begin{equation*}
    \eta_2 \Delta_{r}(s,a) \leq I_2(y(s,a)) + \frac{\eta_2^2 \mathbb{V}[R_{\mathcal{M}}(s,a)]}{2(1 - \eta_2 /3)}.
\end{equation*}
By symmetry (applying the same logic with test function $-\eta_2 Z$), this bound holds for the absolute value $|\Delta_{r}(s,a)|$. This implies:
\begin{equation*}
    I_2(y(s,a)) \geq \sup_{\eta_2 \in (0, 3)} \left( \eta_2 |\Delta_{r}(s,a)| - \frac{\eta_2^2 \mathbb{V}[R_{\mathcal{M}}(s,a)]}{2(1 - \eta_2 /3)} \right).
\end{equation*}

By choosing
\begin{equation*}
    \eta_2 = \frac{|\Delta_{r}(s,a)|}{\mathbb{V}[R_{\mathcal{M}}(s,a)] + |\Delta_{r}(s,a)| /3}
\end{equation*} yielding the functional form:
\begin{equation}
\label{eq: reward_bernstein_form}
    I_2(y(s,a)) \geq \frac{\Delta_{r}(s,a)^2}{2\mathbb{V}[R_{\mathcal{M}}(s,a)] + 2|\Delta_{r}(s,a)|/3}.
\end{equation}
Rearranging the inequality \eqref{eq: reward_bernstein_form} leads to a quadratic inequality with respect to $|\Delta_r(s,a)|$:
$$
    \Delta_{r}(s,a)^2 - \frac{2}{3}  I_2(y(s,a)) |\Delta_{r}(s,a)| - 2 \mathbb{V}[R_{\mathcal{M}}(s,a)] I_2(y(s,a)) \leq 0.
$$
The positive root of the corresponding quadratic equation sets the upper bound:
$$
    |\Delta_{r}(s,a)| \leq \frac{\frac{2}{3}  I_2(y(s,a)) + \sqrt{\frac{4}{9}  I_2(y(s,a))^2 + 8 \mathbb{V}[R_{\mathcal{M}}(s,a)] I_2(y(s,a))}}{2}.
$$
Using the sub-additivity of the square root $\sqrt{x+y} \leq \sqrt{x} + \sqrt{y}$, we simplify the bound:
\begin{equation*}
    |\Delta_{r}(s,a)| \leq \frac{2}{3}  I_2(y(s,a)) + \sqrt{2 \mathbb{V}[R_{\mathcal{M}}(s,a)] I_2(y(s,a))}. 
\end{equation*}
Finally, squaring both sides yields the exact quadratic bound:
\begin{align*}
    \Delta^2_{r}(s,a) &\leq \left( \sqrt{2 \mathbb{V}[R_{\mathcal{M}}(s,a)] I_2(y(s,a))} + \frac{2}{3}  I_2(y(s,a)) \right)^2 \\
    &= 2 \mathbb{V}[R_{\mathcal{M}}(s,a)] I_2(y(s,a)) + \frac{4\sqrt{2}(\mathbb{V}[R_{\mathcal{M}}(s,a)])^{\frac{1}{2}} I_2(y(s,a))^{\frac{3}{2}}}{3}   + \frac{4I_2(y(s,a))^2}{9}.
\end{align*}

\section{Proof of Theorem~\ref{thm: rate lower bound opt}}
To make the optimization tractable, we introduce auxiliary variables $\beta = (\beta_1, \beta_2, \beta_3, \beta_4)$ satisfying $\beta_i \geq 0$ and $\sum_{i=1}^4 \beta_i = 1$. These variables partition the total optimality gap $\Delta_{sa}$ among the four sources of error:
\begin{equation}
\label{eq: part_error}
\begin{aligned}
    \frac{1+\gamma}{1-\gamma} \max_{s'\in\mathcal{S}} |\Delta_r(s',\pi_{\mathcal{M}}^*(s'))| &\geq \beta_1 \Delta_{sa}, \\
\frac{\gamma(1+\gamma)}{1-\gamma} \max_{s'\in\mathcal{S}} \left|\Delta_{p}(s',\pi_{\mathcal{M}}^*(s'))^\top V^{\pi^*_{\mathcal{M}}}_{{\mathcal{M}}}\right| &\geq \beta_2 \Delta_{sa}, \\
    |\Delta_r(s,a)| &\geq \beta_3 \Delta_{sa}, \\
    \gamma \left|\Delta_{p}(s,a)^\top V^{\pi^*_{\mathcal{M}}}_{{\mathcal{M}}}\right| &\geq \beta_4 \Delta_{sa}. 
\end{aligned}
\end{equation}

From Lemma \ref{lemma: I1 lb}, we know that
$$
        (\Delta_{p}(s,a)^\top V^{\pi^*_{\mathcal{M}}}_{{\mathcal{M}}})^2 \leq 2\mathbb{V}_{ P_{{\mathcal{M}}}(s,a)}[V^{\pi^*_{\mathcal{M}}}_{\mathcal{M}}]I_1(x(s,a)) +\frac{4\sqrt{2}(\mathbb{V}_{ P_{{\mathcal{M}}}(s,a)}[V^{\pi^*_{\mathcal{M}}}_{\mathcal{M}}])^{\frac{1}{2}}I_1(x(s,a))^{\frac{3}{2}}}{3(1-\gamma)}  + \frac{4I_1(x(s,a))^2}{9(1-\gamma)^2}.
$$
We focus on the \emph{asymptotic regime} relevant to hard instances where the nominal model $\mathcal{M}$ and the alternative model $\tilde{\mathcal{M}}$ are close. In this regime, the rate function value $I_1(x(s,a))\rightarrow 0$, making the higher-order term $I_1(x(s,a))^{\frac{3}{2}}$ and $I_1(x(s,a))^2$ negligible compared to the leading quadratic term $I_1(x(s,a))$. Therefore, we utilize the leading-order lower bounds:
\begin{equation*}
    I_1(x(s,a))
    \ge
    \frac{\bigl(\Delta_{p}(s,a)^\top V^{\pi^*_{\mathcal M}}_{\mathcal M}\bigr)^2}
    {2\,\mathbb V_{P_{\mathcal M}(s,a)}\!\left[V^{\pi^*_{\mathcal M}}_{\mathcal M}\right]}
    \,(1-o(1)),
\end{equation*}

Similarly, from Lemma \ref{lemma: I2 lb} we also have that
\begin{equation*}
    I_2(y(s,a))
    \ge
    \frac{\Delta_r(s,a)^2}
    {2\,\mathbb V\!\left[R_{\mathcal M}(s,a)\right]}
    \,(1-o(1)).
\end{equation*}

Substituting these lower bounds to the partitioned constraints \eqref{eq: part_error}:
$$
        \max_{s^\prime} I_2(y(s^\prime,\pi^*_{\mathcal{M}}(s^\prime)))\geq \frac{\left( \beta_1 \Delta_{sa} \frac{1-\gamma}{1+\gamma} \right)^2}{2\bar{\mathbb{V}}[R_{\mathcal{M}}]} ,\quad \max_{s^\prime} I_1(x(s^\prime,\pi^*_{\mathcal{M}}(s^\prime))) \geq \frac{\left( \beta_2 \Delta_{sa} \frac{1-\gamma}{\gamma(1+\gamma)} \right)^2}{2\bar{\mathbb{V}}[V^{\pi^*_{\mathcal{M}}}_{\mathcal{M}}]} ,
   $$
    and
$$
    I_2(y(s,a)) \geq \frac{(\beta_3 \Delta_{sa})^2}{2\mathbb{V}[R_{\mathcal{M}}(s,a)]} ,\quad I_1(x(s,a)) \geq \frac{\left( \frac{\beta_4 \Delta_{sa}}{\gamma} \right)^2}{2\mathbb{V}_{P_{\mathcal{M}}(s,a)}[V^{\pi^*_{\mathcal{M}}}_{\mathcal{M}}]} ,
$$
where $\bar{\mathbb{V}}[V^{\pi^*_{\mathcal{M}}}_{\mathcal{M}}]$ is defined as the maximum variance over transitions: $\max_{s^\prime\in \mathcal{S}}\mathbb{V}_{P_{{\mathcal{M}}}(s^\prime,\pi_{\mathcal{M}}^*(s^\prime))}[V^{\pi^*_{\mathcal{M}}}_{\mathcal{M}}]$ and $\bar{\mathbb{V}}[R_{\mathcal{M}}]$ is defined as the maximum variance over rewards: $\max_{s^\prime\in \mathcal{S}}\mathbb{V}[R_{\mathcal{M}}(s^\prime,\pi_{\mathcal{M}}^*(s^\prime))]$.

It is straightforward to see that
$$
    \inf_{(x,y) \in \mathcal{E}_{s,a}}\sum_{s^\prime \in \mathcal{S}, a^\prime \in \mathcal{A}}\omega_{s^\prime a^\prime}\left(I_1(x(s^\prime,a^\prime))+I_2(y(s^\prime,a^\prime))\right) = \inf_{(x,y) \in \mathcal{E}_{s,a}}\sum_{(s^\prime,a^\prime)\in\mathcal{K}}\omega_{s^\prime a^\prime}\left(I_1(x(s^\prime,a^\prime))+I_2(y(s^\prime,a^\prime))\right), 
$$
where $\mathcal{K} :=\{(s,a)\} \cup \{(s^\prime, \pi^*_{\mathcal{M}}(s^\prime)):s^\prime \in\mathcal{S}\}.$
We seek to minimize the weighted sum of rate functions under the sampling distribution $\omega$. Let $\omega_o = \min_{s^\prime\in \mathcal{S}}\omega_{s^\prime\pi^*_{\mathcal{M}}(s^\prime)}$ and $\omega_{sa}$ be the weight for the specific pair $(s,a)$. The objective function is lower-bounded by: $ \sum_{i=1}^4 \mathcal{C}_i \beta_i^2$,
where the coefficients $\mathcal{C}_i$ derived from the bounds above are:
\begin{align*}
    \mathcal{C}_1 &= \omega_o \frac{\Delta_{sa}^2 (1-\gamma)^2}{2 \bar{\mathbb{V}}[R_{\mathcal{M}}] (1+\gamma)^2}, & 
    \mathcal{C}_2 &= \omega_o \frac{\Delta_{sa}^2 (1-\gamma)^2}{2 \gamma^2 (1+\gamma)^2 \bar{\mathbb{V}}[V^{\pi^*_{\mathcal{M}}}_{\mathcal{M}}]}, \\
    \mathcal{C}_3 &= \omega_{sa} \frac{\Delta_{sa}^2}{2 \mathbb{V}[R_{\mathcal{M}}(s,a)]}, & 
    \mathcal{C}_4 &= \omega_{sa} \frac{\Delta_{sa}^2}{2 \gamma^2 \mathbb{V}_{P_{\mathcal{M}}(s,a)}[V^{\pi^*_{\mathcal{M}}}_{\mathcal{M}}]}.
\end{align*}

The optimal partition $\beta^*$ that minimizes $\sum_{i=1}^{4} \mathcal{C}_i \beta_i^2$ subject to $\sum_{i=1}^{4} \beta_i = 1,\beta_i\geq 0$ yields the objective value $(\sum_{i=1}^4 \mathcal{C}_i^{-1})^{-1}$.
Substituting the coefficients, the inverse of the optimal value is:
$$
    \sum_{i=1}^4 \mathcal{C}_i^{-1} = \frac{2}{\Delta_{sa}^2} \left[ 
        \frac{\bar{\mathbb{V}}[R_{\mathcal{M}}] (1+\gamma)^2}{\omega_o (1-\gamma)^2} + 
        \frac{\gamma^2 (1+\gamma)^2 \bar{\mathbb{V}}[V^{\pi^*_{\mathcal{M}}}_{\mathcal{M}}]}{\omega_o (1-\gamma)^2} + 
        \frac{\mathbb{V}[R_{\mathcal{M}}(s,a)]}{\omega_{sa}} + 
        \frac{\gamma^2 \mathbb{V}_{P_{\mathcal{M}}(s,a)}[V^{\pi^*_{\mathcal{M}}}_{\mathcal{M}}]}{\omega_{sa}} 
    \right].
$$
Therefore, the optimal rate is given by:
$$
    \frac{\Delta_{sa}^2}{2} \left[ 
        \frac{(1+\gamma)^2}{\omega_o (1-\gamma)^2} \left( \bar{\mathbb{V}}[R_{\mathcal{M}}] + \gamma^2 \bar{\mathbb{V}}[V^{\pi^*_{\mathcal{M}}}_{\mathcal{M}}] \right) + 
        \frac{1}{\omega_{sa}} \left( \mathbb{V}[R_{\mathcal{M}}(s,a)] + \gamma^2 \mathbb{V}_{P_{\mathcal{M}}(s,a)}[V^{\pi^*_{\mathcal{M}}}_{\mathcal{M}}] \right) 
    \right]^{-1}.
$$

\section{Proof of Lemma~\ref{lemma: convexity}}
First, recall the definition of the feasible region $\mathcal{W}$:
$$
    \mathcal{W} := \left\{\omega\in\Omega:\forall s\in \mathcal{S},\sum_{a\in\mathcal{A}}\omega_{sa} = \sum_{s^\prime \in\mathcal{S},a^\prime \in \mathcal{A}}P_{\mathcal{M}}(s|s^\prime,a^\prime)\omega_{s^\prime a^\prime}\right\},
$$
where $\Omega$ represents the probability simplex (i.e., $\omega_{sa}\ge 0$ and $\sum_{s,a}\omega_{sa}=1$). The condition $\sum_{a\in\mathcal{A}}\omega_{sa} = \sum_{s^\prime \in\mathcal{S},a^\prime \in \mathcal{A}}P_{\mathcal{M}}(s|s^\prime,a^\prime)\omega_{s^\prime a^\prime}$ constitutes a system of linear equations. Since $\mathcal{W}$ is defined entirely by the intersection of a finite number of linear equalities and inequalities, it is by definition a polytope and thus a convex set.

Next, consider the objective function in \eqref{eq: rate lower bound opt 2}. Define $\mathcal{Z} := \{(s,a): s\in \mathcal{S},a\in \mathcal{A}\setminus\{\pi^*_{\mathcal{M}}(s)\}\}$ is the set of suboptimal state-action pairs. The objective function is $F(\omega, \mathcal{M}) = \max_{(s,a)\in\mathcal{Z}} L_{sa}(\omega, \mathcal{M})$. Throughout the proof, we use the convention $c/0=+\infty$ for any $c>0$.

Since the function $g(x) = 1/x$ is convex for $x>0$. The term $1/\omega_{sa}$ is a convex function of $\omega$ (since it depends on a single coordinate). The term $1/\omega_o$ can be rewritten as: \begin{equation*}
        \frac{1}{\omega_o} = \frac{1}{\min_{s^\prime} \omega_{s^\prime \pi^*_{\mathcal{M}}(s^\prime)}} = \max_{s^\prime \in \mathcal{S}} \frac{1}{\omega_{s^\prime \pi^*_{\mathcal{M}}(s^\prime)}}.
    \end{equation*}
    Since the point-wise maximum of a family of convex functions is convex, $1/\omega_o$ is convex with respect to $\omega$.
    Since $L_{sa}(\omega, \mathcal{M})$ is a non-negative weighted sum of convex functions ($1/\omega_o$ and $1/\omega_{sa}$), $L_{sa}(\omega, \mathcal{M})$ is convex. Finally, $F(\omega, \mathcal{M})$, being the point-wise maximum of the convex functions $\{L_{sa}(\omega, \mathcal{M})\}_{(s,a)\in\mathcal{Z}}$, is itself a convex function.

We next show that the optimal solution satisfies that $\tilde{\omega}^*_{sa}(\mathcal{M})>0$ for any $s\in\mathcal{S}$ and $a\in\mathcal{A}$. 
Define the set of strictly positive valid distributions as $\mathcal{W}_{++} = \mathcal{W} \cap \{\omega : \omega_{sa} > 0, \forall s,a\}$.
Since the MDP is assumed to be ergodic, there exists at least one stationary distribution with full support (e.g., induced by a policy that selects all actions with positive probability). Thus, $\mathcal{W}_{++}$ is non-empty, and the problem has a finite feasible value.

Observe that if $\omega_{sa} \to 0$ for any $(s,a) \in \mathcal{Z}$, the local term in $L_{sa}(\omega, \mathcal{M})$ tends to infinity. Similarly, if $\omega_{s' \pi^*(s')} \to 0$ for any $s'$, then $\omega_o \to 0$, causing the global term in all $L_{sa}(\omega, \mathcal{M})$ to tend to infinity. In either case, $F(\omega) \to \infty$.
Since a finite objective value is achievable in $\mathcal{W}_{++}$ and $\mathcal{W}$ is compact, an optimal solution exists. Moreover, no minimizer can have $\omega_{sa}=0$ for any $(s,a)$; hence every optimal solution lies in $\mathcal{W}_{++}$, i.e.,
$\tilde{\omega}^*_{sa}(\mathcal{M})>0$ for all $(s,a)\in\mathcal{S}\times\mathcal{A}$.

Finally, we show that the set of optimal solutions $\mathcal{C}^*(\mathcal{M})$ is convex. Let $\omega^1,\omega^2\in\mathcal{C}^*(\mathcal{M})$ and $\lambda\in[0,1]$.
Since $\mathcal{W}$ is convex, $\omega^\lambda := \lambda\omega^1+(1-\lambda)\omega^2 \in \mathcal{W}$.
By convexity of $F(\cdot,\mathcal{M})$,
$$
    F(\omega^\lambda,\mathcal{M})
    \le \lambda F(\omega^1,\mathcal{M}) + (1-\lambda)F(\omega^2,\mathcal{M})
    = \lambda F^* + (1-\lambda)F^* = F^*,
$$
where $F^*:=\min_{\omega\in\mathcal{W}}F(\omega,\mathcal{M})$.
Since $F^*$ is the minimum value, we must have $F(\omega^\lambda,\mathcal{M})=F^*$, implying $\omega^\lambda\in\mathcal{C}^*(\mathcal{M})$.
Therefore, $\mathcal{C}^*(\mathcal{M})$ is a convex set.

\section{Proof of Lemma~\ref{lemma: consistency}}
We prove that every state-action pair is sampled infinitely often almost surely. 
Fix any $s\in\mathcal S$. Define the event
\begin{equation*}
    \mathcal E_s(t):=\{N(s;t)\ge C_1 t\},
\qquad N(s;t)=\sum_{a\in\mathcal A}N(s,a;t).
\end{equation*}
By Proposition 2 of \citealp{burnetas1997optimal}, one obtains that there exist constants $C_1,C_2,C_3>0$ such that for all sufficiently large $t$,
\begin{equation*}
\mathbb P_{\mathcal M}\big(\mathcal E_s(t)^c\big)
=\mathbb P_{\mathcal M}\big(N(s;t)<C_1 t\big)
\le C_2 \exp(-C_3 t).
\end{equation*}

By definition of the behavior policy, at any time step $k$, the probability of sampling action $a\in \mathcal{A}$ is lower bounded by $\epsilon_k / A = 1 / (A k^\alpha)$ for some $\alpha \in (0, 1/2)$. We introduce the filtration $\mathcal G_k:=\sigma(s_1,a_1,R_1,\ldots,s_k)$ (the information available right after observing $s_k$ and before choosing $a_k$), and define the indicator variable $X_k = \mathbb{I}\{s_k = s, a_k = a\}$. Then
$$
    p_k:=\mathbb E[X_k\mid \mathcal G_k]
=\mathbb P(a_k=a\mid \mathcal G_k)\,\mathbb I(s_k=s)
\ \ge\ \frac{1}{A k^\alpha}\,\mathbb I(s_k=s).
$$
Summing over $k\le t$ yields
\begin{equation*}
\sum_{k=1}^t p_k
\ \ge\ \frac{1}{A}\sum_{k=1}^t k^{-\alpha}\mathbb I(s_k=s).
\end{equation*}
On the event $\mathcal E_s(t)$ we have $\sum_{k=1}^t \mathbb I(s_k=s)=N(s;t)\ge C_1 t$. Since $k^{-\alpha}$ is decreasing in $k$, the weighted sum is minimized by placing these
$C_1 t$ indicators on the largest time indices, hence
$$
    \sum_{k=1}^t k^{-\alpha}\mathbb I(s_k=s)
\ \ge\ \sum_{k=t-\lfloor C_1 t\rfloor+1}^t k^{-\alpha}
\ \ge\ \int_{(1-C_1)t}^t x^{-\alpha}\,dx
= \frac{1-(1-C_1)^{1-\alpha}}{1-\alpha}\, t^{1-\alpha}.
$$
Therefore, on $\mathcal E_s(t)$,
\begin{equation*}
\sum_{k=1}^t p_k \ \ge\ C_4 t^{1-\alpha},
\qquad
C_4:=\frac{1-(1-C_1)^{1-\alpha}}{A(1-\alpha)}.
\end{equation*}

Define the martingale difference sequence $D_k:=X_k-p_k$, and the martingale 
\begin{equation*}
    M_t :=\sum_{k=1}^{t}D_{k} =  N(s,a;t) - \sum_{k=1}^{t}p_k
\end{equation*}
with respect to the filtration $\mathcal F_k:=\sigma(\mathcal G_k,a_k,R_k)$. We have $|D_k|\le 1$, hence $|M_t-M_{t-1}|\le 1$. 

Now observe that on the event $\mathcal E_s(t)$, if
$$
N(s,a;t)\le \frac{C_4}{2}t^{1-\alpha},
$$
then
$$
M_t
=
N(s,a;t)-\sum_{k=1}^t p_k
\le
\frac{C_4}{2}t^{1-\alpha}-C_4t^{1-\alpha}
=
-\frac{C_4}{2}t^{1-\alpha}.
$$
Hence,
\begin{equation*}
\left\{N(s,a;t)\le \frac{C_4}{2}t^{1-\alpha}\right\}\cap \mathcal E_s(t)
\subseteq
\left\{M_t\le -\frac{C_4}{2}t^{1-\alpha}\right\}.
\end{equation*}
Therefore,
$$
\mathbb P_{\mathcal M}\left(N(s,a;t)\le \frac{C_4}{2}t^{1-\alpha}\right)
\le
\mathbb P_{\mathcal M}\left(M_t\le -\frac{C_4}{2}t^{1-\alpha}\right)
+
\mathbb P_{\mathcal M}\big(\mathcal E_s(t)^c\big).
$$
Applying Azuma-Hoeffding inequality to the martingale $M_t$ gives
$$
\mathbb P_{\mathcal M}\left(M_t \leq -\frac{C_4}{2}t^{1-\alpha}\right)
\leq \exp\left(-\frac{(C_4/2)^2 t^{2-2\alpha}}{2t}\right)
= \exp\left(-\frac{C_4^2t^{1-2\alpha}}{8}\right).
$$
Since $\alpha<1/2$, the exponent $t^{1-2\alpha}$ diverges.

Combining with $\mathbb P(\mathcal E_s(t)^c)\le C_2 e^{-C_3 t}$ yields, for each fixed $(s,a)$, 
\begin{equation}
\label{eq: error up bound}
    \mathbb P_{\mathcal M}\!\left(N(s,a;t)\le \frac{C_4}{2}t^{1-\alpha}\right)
\le
\exp\!\left(-\frac{C_4^2}{8}\,t^{1-2\alpha}\right)
+ C_2 \exp(-C_3 t).
\end{equation}
Applying a union bound over all $(s,a)\in\mathcal S\times\mathcal A$ gives
\begin{equation*}
    \mathbb P_{\mathcal M}(V_t^c)
\le
SA\exp\!\left(-\frac{C_4^2}{8}\,t^{1-2\alpha}\right)
+ SA\,C_2 \exp(-C_3 t),
\end{equation*}
where 
\begin{equation*}
    V_t = \left\{\forall s\in\mathcal{S},a\in\mathcal{A}: N(s,a;t)>\frac{C_4}{2}t^{1-\alpha}\right\}.
\end{equation*}
The RHS is summable over $t$ because $\alpha<1/2$.
Thus, by the Borel-Cantelli lemma,
\begin{equation}
\label{eq: sample_lb}
    \mathbb{P}_{\mathcal{M}}(\limsup_{t \rightarrow \infty} V^c_t) = 0,
\end{equation}
which implies that almost surely, $V_t$ holds for all sufficiently large $t$.
Consequently, for every $(s,a)$,
$$
\lim_{t\to\infty} N(s,a;t)=\infty \qquad \text{a.s.}
$$
This completes the proof.

\section{Proof of Theorem~\ref{thm: sub-grad converge}}
Recall that $\tilde{\omega}^*(\mathcal M)\in\arg\min_{\omega\in\mathcal W}F(\omega,\mathcal M)$ is an arbitrary optimal solution. Let $z_n := \Pi_{\mathcal W^{\epsilon}(\bar{\mathcal{M}}(t_n))}(\tilde\omega^*(\mathcal M))$ denote the projection of the true optimal solution onto the current estimated feasible set. 

\begin{lemma}
\label{lemma: set stability}
Under the condition $\bar{\mathcal M}(t_n)\to\mathcal M$ a.s., there exist an integer $n_1>0$ and a constant $L_{\mathcal M}>0$ such that for all $n\ge n_1$,
\begin{equation*}
    \lVert z_n - \tilde{\omega}^*(\mathcal{M})\rVert_2 \leq L_{\mathcal{M}} \lVert \bar{\mathcal{M}}(t_n)-\mathcal{M}\rVert.
\end{equation*}
\end{lemma}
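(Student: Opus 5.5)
The plan is to treat this as a Lipschitz stability result for the Euclidean projection onto a parametrized polytope.

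Write $\mathcal W^{\epsilon}(\mathcal M')=\{\omega\in\mathbb R^{SA}: A(\mathcal M')\omega=b,\ \omega\ge\epsilon\mathbf 1\}$. Here $A(\mathcal M')$ stacks two kinds of rows: the flow-balance rows $\sum_a\omega_{sa}-\sum_{s',a'}P_{\mathcal M'}(s|s',a')\omega_{s'a'}=0$, and the normalization row $\mathbf 1^\top\omega=1$. The matrix $A(\mathcal M')$ depends affinely, hence Lipschitz, on $P_{\mathcal M'}$, and the right-hand side $b$ is fixed.

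Since $\tilde\omega^*(\mathcal M)\in\mathcal W^\epsilon(\mathcal M)$ whenever $\epsilon$ is small, the projection is non-expansive in the following sense. Let $w$ be any point of $\mathcal W^{\epsilon}(\bar{\mathcal M}(t_n))$. Then
\[
\|z_n-\tilde\omega^*(\mathcal M)\|_2 \le \|w-\tilde\omega^*(\mathcal M)\|_2 .
\]
It therefore suffices to produce one feasible point $w_n\in\mathcal W^\epsilon(\bar{\mathcal M}(t_n))$ with $\|w_n-\tilde\omega^*(\mathcal M)\|_2\le L_{\mathcal M}\|\bar{\mathcal M}(t_n)-\mathcal M\|$. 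In other words, we need a Hoffman/metric-regularity bound on how far $\tilde\omega^*(\mathcal M)$ is from the perturbed feasible set.

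The key step is to build $w_n$ from stationary distributions. For a full-support stationary policy $\pi$, let $d^\pi_{\mathcal M'}$ denote the stationary state–action distribution; it lies in $\mathcal W(\mathcal M')$. By Lemma~\ref{lemma: convexity}, $\tilde\omega^*(\mathcal M)$ has strictly positive entries, so set $\pi^*(a|s):=\tilde\omega^*_{sa}(\mathcal M)/\sum_{a'}\tilde\omega^*_{sa'}(\mathcal M)$, which gives $d^{\pi^*}_{\mathcal M}=\tilde\omega^*(\mathcal M)$. Ergodicity of $\mathcal M$ holds for every deterministic policy, and hence also for the full-support $\pi^*$. So the state chain $P^{\pi^*}_{\mathcal M}$ is irreducible with a unique stationary law, given by the normalized null vector of $I-(P^{\pi^*}_{\mathcal M})^\top$ (or via the group inverse). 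By standard perturbation theory for irreducible Markov chains (the Cho–Meyer / Schweitzer bounds), the stationary law is locally Lipschitz in the kernel. Taking $w_n:=d^{\pi^*}_{\bar{\mathcal M}(t_n)}$ then gives
\[
\|w_n-\tilde\omega^*(\mathcal M)\|\le K_{\mathcal M}\|\bar{\mathcal M}(t_n)-\mathcal M\|
\]
for all $\bar{\mathcal M}(t_n)$ in a neighborhood of $\mathcal M$.

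Next, $\epsilon$-feasibility must be checked. Choose $\epsilon<\min_{sa}\tilde\omega^*_{sa}(\mathcal M)/2$; this is compatible with the standing requirement that $\epsilon$ be "sufficiently small". Then for $\|\bar{\mathcal M}(t_n)-\mathcal M\|$ small enough, continuity gives $w_n\ge\epsilon\mathbf 1$, so $w_n\in\mathcal W^\epsilon(\bar{\mathcal M}(t_n))$.

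Finally, since $\bar{\mathcal M}(t_n)\to\mathcal M$ a.s., there is a (random) $n_1$ beyond which $\bar{\mathcal M}(t_n)$ lies in this neighborhood. Combining the steps yields the claim with $L_{\mathcal M}=K_{\mathcal M}$.

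The main obstacle is the Lipschitz perturbation bound for the stationary distribution. It requires the perturbed chain $P^{\pi^*}_{\bar{\mathcal M}(t_n)}$ to remain irreducible, and the constant to be uniform over a neighborhood. I would handle this through the fundamental matrix $Z=(I-P+\mathbf 1\nu^\top)^{-1}$. This matrix is invertible at $\mathcal M$ and continuous in $P$, so it stays bounded on a small ball; the identity $\nu'-\nu=\nu'(P'-P)Z$ then gives the linear bound directly.

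If the ergodicity assumption on the empirical model (the paper's standing regularity assumption) is invoked, irreducibility of the perturbed chain is automatic. Otherwise, it follows from the invertibility argument above.
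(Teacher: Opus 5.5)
Your proof is correct, but it reaches the bound by a different route from the paper.

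\textbf{The paper's route.} The paper works at the level of the polyhedron. It bounds the flow residual $r_n=(A(\bar{\mathcal M}(t_n))-A(\mathcal M))\tilde\omega^*(\mathcal M)$ and applies Hoffman's error bound to get $\mathrm{dist}(\tilde\omega^*(\mathcal M),\mathcal W(\bar{\mathcal M}(t_n)))\le C_n\|r_n\|_2$. It then asserts, via a perturbation result of Luo and Tseng, that the Hoffman constants $C_n$ are uniformly bounded near $\mathcal M$. Finally, it shows that for large $n$ the projection onto the unrestricted set $\mathcal W(\bar{\mathcal M}(t_n))$ already has all coordinates at least $\epsilon$, so it coincides with $z_n$.

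\textbf{Your route.} You use only the best-approximation property of $z_n$; that is what your inequality expresses, not non-expansiveness. You then exhibit an explicit competitor in $\mathcal W^\epsilon(\bar{\mathcal M}(t_n))$: the stationary occupancy of the perturbed kernel under the fixed full-support policy induced by $\tilde\omega^*(\mathcal M)$, controlled through the fundamental matrix.

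\textbf{What each buys.} You never need uniform Hoffman constants under perturbation of the constraint matrix. That is the least justified step in the paper's argument, since it is stated for an unspecified ``non-degenerate neighborhood''. Your Lipschitz constant is also explicit, essentially $\|Z\|$ times a norm-conversion factor. You also do not need the projections onto $\mathcal W$ and $\mathcal W^\epsilon$ to coincide. In exchange, the paper's route is generic polyhedral reasoning that does not rely on the occupancy-measure interpretation of $\mathcal W$.

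\textbf{A simplification.} The identity $\nu'-\nu=\nu'(P'-P)Z$ holds for every stationary law $\nu'$ of every stochastic $P'$, with $Z$ evaluated only at the true chain. So you need neither irreducibility of the perturbed chain nor boundedness of $Z$ on a ball, and the Lipschitz estimate is global. Closeness of $\bar{\mathcal M}(t_n)$ to $\mathcal M$ is used only in two places: to get $w_n\ge\epsilon\mathbf 1$, and to guarantee that every empirical transition row is a genuine probability vector (all counts positive). Both hold eventually.
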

\proof{Proof of Lemma~\ref{lemma: set stability}
}
Since the feasible set $\mathcal{W}$ is defined by linear equalities (the Bellman flow constraints) and inequalities (non-negativity), it is a Polyhedron and can be represented as: 
$$
   \Omega:=\{\omega\in\mathbb R^{SA}: \omega\ge 0,\ I^\top \omega=1\},
\qquad \mathcal{W} := \left\{\omega \in \Omega: A(\mathcal{M})\omega = 0\right\},
$$
where $I$ is the vector of all ones, $A(\mathcal{M})$ is a matrix with $S$ rows (one for each state $s\in\mathcal{S}$) and $SA$ columns (one for each state-action pair $(s,a)\in\mathcal{S}\times \mathcal{A}$). The entries of the matrix $A(\mathcal{M})$ at row $s$ and column $(s^\prime,a^\prime)$ are defined as:
\begin{equation*}
    A(\mathcal{M})_{s,(s^\prime,a^\prime)} = \begin{cases}
        1- P_{\mathcal{M}}(s|s^\prime,a^\prime),\quad& \text{if}\quad s^\prime =s\\
        - P_{\mathcal{M}}(s|s^\prime,a^\prime),\quad& \text{if}\quad s^\prime \neq s.
    \end{cases}
\end{equation*}
In matrix notation, $A(\mathcal{M})$ can be written as:
\begin{equation}
\label{eq: constraint matrix}
    A(\mathcal{M}) = \Phi - P_\mathcal{M}^\top,
\end{equation}
where $\Phi$ is a matrix with the element 
$
    \Phi_{s,(s^\prime,a^\prime)} = \mathbb{I}(s=s^\prime).
$

Since $\tilde{\omega}^*(\mathcal{M})$ is the optimal solution for the true model $\mathcal{M}$, by definition, $\tilde{\omega}^*(\mathcal{M})$ satisfies the flow constraints for $\mathcal{M}$:
\begin{equation*}
A(\mathcal{M})\tilde{\omega}^*(\mathcal{M}) = 0,\quad {I}^\top\tilde{\omega}^*(\mathcal{M}) = 1, \quad \tilde{\omega}^*(\mathcal{M})\ge 0.
\end{equation*}
Now, consider the estimated model $\bar{\mathcal{M}}(t_n)$. Note that the solution $\tilde{\omega}^*(\mathcal{M})$ satisfies the normalization and non-negativity constraints but may violate the flow constraints of the estimated model. We quantify this violation as:
$$
    r_n = A(\bar{\mathcal{M}}(t_n))\tilde{\omega}^*(\mathcal{M}) =  A(\bar{\mathcal{M}}(t_n))\tilde{\omega}^*(\mathcal{M}) - A(\mathcal{M})\tilde{\omega}^*(\mathcal{M}) = (A(\bar{\mathcal{M}}(t_n)) - A(\mathcal{M}))  \tilde{\omega}^*(\mathcal{M}).
$$
Using the linearity of the matrix construction, the difference in matrices is exactly the difference in transition probabilities (transposed). Moreover, since $\tilde\omega^*(\mathcal M)\in\Omega$, $\|\tilde\omega^*(\mathcal M)\|_2\le \|\tilde\omega^*(\mathcal M)\|_1=1$. Therefore,
\begin{equation*}
    \lVert r_n\rVert_2 \le \lVert A(\mathcal{M}) - A(\bar{\mathcal{M}}(t_n)) \rVert  \le c_0\lVert \mathcal{M} - \bar{\mathcal{M}}(t_n) \rVert,
\end{equation*}
where $c_0>0$ depends only on the norm used to define $\|\bar{\mathcal M}(t_n)-\mathcal M\|$.

Since $\mathcal W(\bar{\mathcal{M}}(t_n))$ is a polyhedron defined by linear equalities and inequalities,
Hoffman's error bound~\cite{hoffman2003approximate} implies that there exists a constant $C_n>0$ such that for any $x\in\Omega$,
\begin{equation*}
\mathrm{dist}(x,\mathcal W(\bar{\mathcal{M}}(t_n)))\le C_n \|A(\bar{\mathcal M}(t_n))x\|_2.
\end{equation*}
Applying this to $x=\tilde\omega^*(\mathcal M)$ and define
$\hat z_n=\Pi_{\mathcal W(\bar{\mathcal{M}}(t_n))}(\tilde\omega^*(\mathcal M))$ satisfies
$\|\hat z_n-\tilde\omega^*(\mathcal M)\|_2=\mathrm{dist}(\tilde\omega^*(\mathcal M),\mathcal W(\bar{\mathcal{M}}(t_n)))$, we obtain
\begin{equation*}
\|\hat z_n-\tilde\omega^*(\mathcal M)\|_2 \le C_n \|r_n\|_2 \le C_n c_0 \|\bar{\mathcal M}(t_n)-\mathcal M\|.
\end{equation*}
Finally, because $\bar{\mathcal M}(t_n)\to\mathcal M$, we may restrict attention to $n$ large enough so that $\bar{\mathcal M}(t_n)$ stays in a (non-degenerate) neighborhood of $\mathcal M$ on which the Hoffman constants of $\mathcal W(\bar{\mathcal M}(t_n))$ are uniformly bounded (Theorem 5.6 of \citealp{luo1994perturbation}); hence $C_n\le C_{\mathcal M}$ for all $n\ge n_0$.
Therefore, for $n\ge n_0$,
\begin{equation*}
\|\hat z_n-\tilde\omega^*(\mathcal M)\|_2 \le C_{\mathcal M}c_0\, \|\bar{\mathcal M}(t_n)-\mathcal M\|.
\end{equation*}
By Lemma ~\ref{lemma: convexity}, $\tilde{\omega}^*(\mathcal M)$ is strictly positive coordinatewise , we define $
\epsilon_0(\mathcal{M}):=\min_{(s,a)\in\mathcal S\times\mathcal A}\tilde{\omega}^*_{sa}(\mathcal M)>0.
$
Choose $\epsilon\in(0,\epsilon_0)$, and let
$
\delta:=\epsilon_0-\epsilon>0.
$ By the above estimate, there exists $n_1\ge n_0$ such that for all $n\ge n_1$,
$$
\|\hat z_n-\tilde{\omega}^*(\mathcal M)\|_2 \le \delta.
$$
Hence, for every $(s,a)\in\mathcal S\times\mathcal A$ and every $n\ge n_1$,
$$
\hat z_{n,sa}
\ge
\tilde{\omega}^*_{sa}(\mathcal M)-\|\hat z_n-\tilde{\omega}^*(\mathcal M)\|_2
\ge
\epsilon_0-\delta
=
\epsilon.
$$
Therefore, $\hat z_n\in\mathcal W^\epsilon(\bar{\mathcal M}(t_n))$ for all $n\ge n_1$.
Since
$
\mathcal W^\epsilon(\bar{\mathcal M}(t_n))\subseteq \mathcal W(\bar{\mathcal M}(t_n)),
$
we have
$$
\mathrm{dist}\!\left(\tilde{\omega}^*(\mathcal M),
\mathcal W(\bar{\mathcal M}(t_n))\right)
\le
\mathrm{dist}\!\left(\tilde{\omega}^*(\mathcal M),
\mathcal W^\epsilon(\bar{\mathcal M}(t_n))\right).
$$
On the other hand, since $\hat z_n\in\mathcal W^\epsilon(\bar{\mathcal M}(t_n))$, we also have
$$
\mathrm{dist}\!\left(\tilde{\omega}^*(\mathcal M),
\mathcal W^\epsilon(\bar{\mathcal M}(t_n))\right)
\le
\|\hat z_n-\tilde{\omega}^*(\mathcal M)\|_2.
$$
Moreover, because $\hat z_n$ is the Euclidean projection of $\tilde{\omega}^*(\mathcal M)$ onto
$\mathcal W(\bar{\mathcal M}(t_n))$,
$$
\|\hat z_n-\tilde{\omega}^*(\mathcal M)\|_2
=
\mathrm{dist}\!\left(\tilde{\omega}^*(\mathcal M),
\mathcal W(\bar{\mathcal M}(t_n))\right).
$$
Combining the above inequalities gives
$$
\mathrm{dist}\!\left(\tilde{\omega}^*(\mathcal M),
\mathcal W^\epsilon(\bar{\mathcal M}(t_n))\right)
=
\mathrm{dist}\!\left(\tilde{\omega}^*(\mathcal M),
\mathcal W(\bar{\mathcal M}(t_n))\right).
$$
Since $\mathcal W^\epsilon(\bar{\mathcal M}(t_n))$ is closed and convex, the Euclidean projection is unique. Hence, for all $n\ge n_1$,
$$
z_n
=
\Pi_{\mathcal W^\epsilon(\bar{\mathcal M}(t_n))}(\tilde{\omega}^*(\mathcal M))
=
\hat z_n.
$$
Consequently, for all $n\ge n_1$,
$$
\|z_n-\tilde{\omega}^*(\mathcal M)\|_2
=
\|\hat z_n-\tilde{\omega}^*(\mathcal M)\|_2
\le
C_{\mathcal M}c_0\,\|\bar{\mathcal M}(t_n)-\mathcal M\|.
$$
Let $L_{\mathcal M}:=C_{\mathcal M}c_0$. This concludes the proof.

\begin{lemma}[One-step projected subgradient inequality]
\label{lemma: subgradient_inequal}
    For each update index $n\ge1$, 
$$
\|x_n-z_n\|_2^2
\le 
\|x_{n-1}-z_n\|_2^2 + \eta_n^2\|\bar g_n\|_2^2
+2\eta_n\big(F(z_n, \bar{\mathcal{M}}(t_n))-F(x_{n-1},\bar{\mathcal{M}}(t_n))\big),
$$
where $\bar g_n\in\partial_\omega F(x_{n-1},\bar{\mathcal M}(t_n))$ and $x_n=\Pi_{\mathcal W^{\epsilon}(\bar{\mathcal{M}}(t_n))}(x_{n-1}-\eta_n\bar g_n)$.
\end{lemma}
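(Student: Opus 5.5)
This is the standard projected subgradient inequality. The only facts needed are nonexpansiveness of Euclidean projection onto a closed convex set and the subgradient inequality for the convex function $F(\cdot,\bar{\mathcal M}(t_n))$ from Lemma~\ref{lemma: convexity}.

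\textbf{Step 1 (feasibility of the comparator and convexity).} First, $z_n=\Pi_{\mathcal W^{\epsilon}(\bar{\mathcal M}(t_n))}(\tilde\omega^*(\mathcal M))$ lies in $\mathcal W^{\epsilon}(\bar{\mathcal M}(t_n))$. This set is the intersection of the polytope $\mathcal W(\bar{\mathcal M}(t_n))$ with the half-spaces $\{\omega_{sa}\ge\epsilon\}$, so it is closed and convex, and it is nonempty by the choice of $\epsilon$. Second, the convexity argument of Lemma~\ref{lemma: convexity} uses nothing specific to the true model. Applied to $\bar{\mathcal M}(t_n)$, it shows that $F(\cdot,\bar{\mathcal M}(t_n))$ is a convex function, finite on the region $\{\omega_{sa}\ge\epsilon>0\}$. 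Hence the subdifferential at $x_{n-1}$ is nonempty and $\bar g_n$ is well defined, because $x_{n-1}\in\mathcal W^{\epsilon}(\bar{\mathcal M}(t_{n-1}))$ has all coordinates at least $\epsilon$.

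\textbf{Step 2 (nonexpansiveness).} Since $z_n$ lies in the set and the projection onto a closed convex set is $1$-Lipschitz and fixes points of the set,
$$
\|x_n-z_n\|_2=\big\|\Pi(x_{n-1}-\eta_n\bar g_n)-\Pi(z_n)\big\|_2\le \|x_{n-1}-\eta_n\bar g_n-z_n\|_2 .
$$
Squaring and expanding gives
$$
\|x_n-z_n\|_2^2\le \|x_{n-1}-z_n\|_2^2-2\eta_n\bar g_n^\top(x_{n-1}-z_n)+\eta_n^2\|\bar g_n\|_2^2 .
$$

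\textbf{Step 3 (subgradient inequality).} By definition of $\bar g_n\in\partial_\omega F(x_{n-1},\bar{\mathcal M}(t_n))$,
$$
F(z_n,\bar{\mathcal M}(t_n))\ge F(x_{n-1},\bar{\mathcal M}(t_n))+\bar g_n^\top(z_n-x_{n-1}),
$$
which rearranges to $-\bar g_n^\top(x_{n-1}-z_n)\le F(z_n,\bar{\mathcal M}(t_n))-F(x_{n-1},\bar{\mathcal M}(t_n))$. Multiplying by $2\eta_n>0$ and substituting into the display of Step~2 yields the claimed inequality.

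\textbf{Main obstacle.} The only subtle point is that $x_{n-1}$ was projected onto the \emph{previous} set $\mathcal W^{\epsilon}(\bar{\mathcal M}(t_{n-1}))$, not the current one. The argument must therefore avoid assuming $x_{n-1}\in\mathcal W^{\epsilon}(\bar{\mathcal M}(t_n))$. It does: the subgradient inequality holds on the whole domain where $F(\cdot,\bar{\mathcal M}(t_n))$ is finite and convex, and nonexpansiveness only requires the comparator $z_n$ to lie in the current set. I would still check that $F$ is convex as an extended-real function with the convention $c/0=+\infty$. That is exactly what Lemma~\ref{lemma: convexity} gives, so the inequality with $z_n$ (coordinates $\ge\epsilon$) is valid.
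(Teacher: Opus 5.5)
Your proposal is correct and follows the paper's proof: nonexpansiveness of the projection onto $\mathcal W^{\epsilon}(\bar{\mathcal M}(t_n))$, expanding the square, and then the subgradient inequality for $F(\cdot,\bar{\mathcal M}(t_n))$ at $x_{n-1}$. Your additional checks make explicit two points the paper uses without comment. First, $z_n$ lies in the current set and is therefore fixed by the projection. Second, the subgradient inequality does not require $x_{n-1}$ to belong to the current feasible set.
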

\proof{Proof of Lemma \ref{lemma: subgradient_inequal}}
By the non-expansiveness of Euclidean projection onto a closed convex set, and expanding the square, we obtain
\begin{equation}
\label{eq: solution dist}
\begin{aligned}
    \lVert x_n - z_n\rVert_2^2 &= \lVert \Pi_{\mathcal W^{\epsilon}(\bar{\mathcal{M}}(t_n))}(x_{n-1}-\eta_n\bar{g}_n) - z_n\rVert_2^2\\
    &\le \lVert x_{n-1}-\eta_n\bar{g}_n - z_n  \rVert_2^2\\
    &= \lVert x_{n-1} - z_n \rVert_2^2 + \eta_n^2 \lVert \bar{g}_n\rVert_2^2 - 2\eta_n \bar{g}_n^\top (x_{n-1} - z_n).
\end{aligned}
\end{equation}
Since $\bar{g}_n$ is the subgradient of the function $F(\cdot,\bar{\mathcal{M}}(t_n))$ at $x_{n-1}$, by the definition of the subgradient, we have that
\begin{equation*}
    F(z_n, \bar{\mathcal{M}}(t_n)) \geq  F(x_{n-1}, \bar{\mathcal{M}}(t_n)) + \bar{g}_n^\top (z_n - x_{n-1}).
\end{equation*}
It follows that
\begin{equation*}
    -\bar{g}_n^\top (x_{n-1}-z_n) \le F(z_n, \bar{\mathcal{M}}(t_n)) - F(x_{n-1}, \bar{\mathcal{M}}(t_n)).
\end{equation*}
Substitute this back into the equation \eqref{eq: solution dist}:
$$
    \|x_n-z_n\|_2^2
\le 
\|x_{n-1}-z_n\|_2^2 + \eta_n^2\|\bar g_n\|_2^2
+2\eta_n\big(F(z_n, \bar{\mathcal{M}}(t_n))-F(x_{n-1},\bar{\mathcal{M}}(t_n))\big).
$$

\begin{lemma}[Uniform bound on subgradients]
\label{lemma: subgradients bound}
Under the condition $\bar{\mathcal M}(t_n)\to\mathcal M$ a.s., there exists an almost surely finite random constant $G$ such that $\lVert\bar{g}_n\rVert_2\le G$ for all $n\ge 1$.
\end{lemma}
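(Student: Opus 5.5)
The bound will follow from an explicit formula for the subgradients of $F(\cdot,\bar{\mathcal M}(t_n))$, evaluated at points of the restricted set $\mathcal W^{\epsilon}(\bar{\mathcal M}(t_n))$. Since $F$ is a pointwise maximum of the convex functions $L_{sa}$, any $\bar g_n\in\partial_\omega F(x_{n-1},\bar{\mathcal M}(t_n))$ lies in the convex hull of $\bigcup_{(s,a)\in\mathcal A(x_{n-1})}\partial_\omega L_{sa}(x_{n-1},\bar{\mathcal M}(t_n))$, where $\mathcal A(x_{n-1})$ denotes the set of active pairs. It is therefore enough to bound $\|h\|_2$ uniformly for $h\in\partial_\omega L_{sa}$.

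Writing $L_{sa}=\frac{2}{\Delta_{sa}^2}\big[c_o/\omega_o + c_{sa}/\omega_{sa}\big]$ with the empirical coefficients $c_o,c_{sa}$, the term $c_{sa}/\omega_{sa}$ has gradient $-c_{sa}\omega_{sa}^{-2}e_{sa}$. The term $1/\omega_o=\max_{s'}1/\omega_{s'\pi^*(s')}$ has subgradients in the convex hull of $-\omega_{s'\pi^*(s')}^{-2}e_{s'\pi^*(s')}$. Every iterate $x_{n-1}$ lies in $\mathcal W^{\epsilon}(\bar{\mathcal M}(t_{n-1}))$ (or equals the feasible initial point $x_0$), so all of its coordinates are at least $\epsilon$. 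This gives
\[
\|\bar g_n\|_2\le \max_{(s,a)}\frac{2}{\Delta_{sa}(\bar{\mathcal M}(t_n))^2}\cdot\frac{c_o(\bar{\mathcal M}(t_n))+c_{sa}(\bar{\mathcal M}(t_n))}{\epsilon^2}.
\]
The numerators are bounded deterministically: rewards lie in $[0,1]$ and values lie in $[0,1/(1-\gamma)]$, so $\bar{\mathbb V}[R]\le 1$ and $\bar{\mathbb V}[V]\le (1-\gamma)^{-2}$. Hence $c_o+c_{sa}\le C_\gamma$ for a constant $C_\gamma$ that does not depend on $n$.

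The main obstacle is the denominator $\Delta_{sa}(\bar{\mathcal M}(t_n))^2$, together with the fact that the index set of suboptimal pairs is determined by $\pi^*_{\bar{\mathcal M}(t_n)}$. Both require $\bar{\mathcal M}(t_n)\to\mathcal M$. Gaps and values are continuous functions of $(P,r)$, and the optimal policy of $\mathcal M$ is unique with $\Delta_{\min}(\mathcal M)>0$. Hence there is an a.s. finite random index $n_2$ such that for $n\ge n_2$ we have $\pi^*_{\bar{\mathcal M}(t_n)}=\pi^*_{\mathcal M}$ and $\Delta_{sa}(\bar{\mathcal M}(t_n))\ge \Delta_{\min}(\mathcal M)/2$ for all suboptimal $(s,a)$. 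For these $n$, $\|\bar g_n\|_2\le 8C_\gamma/(\epsilon^2\Delta_{\min}(\mathcal M)^2)$.

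For the finitely many indices $n<n_2$, each empirical MDP satisfies the standing regularity assumption of a unique optimal policy, so its gaps are strictly positive and each $\|\bar g_n\|_2$ is finite. Taking
\[
G:=\max\Big\{\max_{n<n_2}\|\bar g_n\|_2,\ 8C_\gamma/(\epsilon^2\Delta_{\min}(\mathcal M)^2)\Big\}
\]
gives an almost surely finite random constant that bounds every $\|\bar g_n\|_2$, which proves the lemma.
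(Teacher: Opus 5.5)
Your proof is correct and follows essentially the same route as the paper. Both arguments use the convex-hull description of $\partial_\omega F$ over active pairs, the $1/\epsilon^2$ bound on subgradients of $1/\omega_o$ and $1/\omega_{sa}$ (because $x_{n-1}$ has all coordinates at least $\epsilon$), the eventual agreement $\pi^*_{\bar{\mathcal M}(t_n)}=\pi^*_{\mathcal M}$ with gaps bounded away from zero, and the finiteness of the finitely many early subgradients. The only difference is minor: you bound the variance coefficients deterministically using rewards in $[0,1]$ and values in $[0,1/(1-\gamma)]$, which yields the explicit constant $8C_\gamma/(\epsilon^2\Delta_{\min}(\mathcal M)^2)$, whereas the paper takes a supremum of continuous coefficients over a compact neighborhood $\mathcal U$ of $\mathcal M$.
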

\proof{Proof of \ref{lemma: subgradients bound}}

According to Lemma \ref{lemma: convexity} the optimal solution $\tilde{\omega}^*(\mathcal{M})$ to the problem \eqref{eq: rate lower bound opt 2} satisfies $\tilde{\omega}_{sa}^*(\mathcal{M})>0$ for all $(s,a)\in\mathcal{S}\times\mathcal{A}$. Hence, there exists a sufficiently small constant $\epsilon_0>0$ such that $\tilde{\omega}_{sa}^*(\mathcal{M})\ge \epsilon_0$ for all $(s,a)\in\mathcal{S}\times\mathcal{A}$.

For any $0<\epsilon\le \epsilon_0$, define
\begin{equation*}
    \mathcal{W}^{\epsilon}(\mathcal M^\prime) := \left\{\omega\in\mathcal{W}(\mathcal M^\prime):\quad
    \omega_{sa}\geq \epsilon,\forall (s,a)\in\mathcal{S}\times\mathcal{A}\right\},
\end{equation*} Since $\tilde{\omega}^*(\mathcal{M})\in \mathcal{W}^{\epsilon_0}(\mathcal{M})$, enforcing this lower bound constraint does not induce any optimality gap for the true model $\mathcal M$. 
Fix $\epsilon\in(0,1]$ and consider any vector $\omega\in\mathbb R^{SA}$ such that $\omega_{sa}\ge\epsilon$ for all $(s,a)\in \mathcal{S}\times \mathcal{A}$. Then, $\omega_o\ge\epsilon$.

Consider the convex function $\varphi(\omega):=1/\omega_o$ on the domain $\{\omega:\omega_o>0\}$. Since
\begin{equation*}
\frac{1}{\omega_o}
=\max_{s\in\mathcal S}\frac{1}{\omega_{s\pi^*_{\mathcal M}(s)}},
\end{equation*} $\varphi$ is the point-wise maximum of finitely many differentiable convex functions. Therefore,
\begin{equation*}
\partial \varphi(\omega)
=
\mathrm{conv}\left\{-\frac{1}{\omega_{s\pi^*_{\mathcal M}(s)}^2}\,e_{s\pi^*_{\mathcal M}(s)}:\ 
s\in\arg\min_{u\in\mathcal S}\omega_{u\pi^*_{\mathcal M}(u)}\right\},
\end{equation*}
where $e_{sa}$ denotes the standard basis vector. Because $\omega_{s\pi^*_{\mathcal M}(s)}\ge\epsilon$, any $v\in\partial\varphi(\omega)$ satisfies
$\|v\|_2\le 1/\epsilon^2$. Similarly, the function $\omega\mapsto 1/\omega_{sa}$ is differentiable with gradient
$-(1/\omega_{sa}^2)e_{sa}$, whose norm is also bounded by $1/\epsilon^2$ whenever $\omega_{sa}\ge \epsilon$. Thus, for the true model $\mathcal M$, there exists a finite constant $C(\mathcal M)<\infty$ such that
$$
\sup_{h\in\partial_\omega L_{sa}(\omega,\mathcal M)}\|h\|_2
\le \frac{C(\mathcal M)}{\epsilon^2},
\qquad \forall s\in\mathcal{S}, a\in\mathcal{A}\setminus \{\pi^*_{\mathcal{M}}(s)\}.
$$
for every $\omega$ satisfying $\omega_{sa}\ge \epsilon$ for all $(s,a)$.

Next, since the true MDP $\mathcal M$ has a unique optimal policy, all suboptimality gaps are strictly positive:
$
\Delta_{sa}(\mathcal M)>0,
\forall a\neq \pi^*_{\mathcal M}(s).
$ By continuity of the value and $Q$-functions with respect to the model parameters, there exists a sufficiently small compact neighborhood $\mathcal U$ of $\mathcal M$ such that, for every $\mathcal M'\in\mathcal U$,
$$
\pi^*_{\mathcal M'}=\pi^*_{\mathcal M},
\qquad
\inf_{\mathcal M'\in\mathcal U}\Delta_{sa}(\mathcal M')>0,
\qquad
\forall a\neq \pi^*_{\mathcal M}(s).
$$
Moreover, the variance terms and all coefficients appearing in $L_{sa}(\cdot,\mathcal M')$ depend continuously on $\mathcal M'$. Hence, since $\mathcal U$ is compact, there exists a finite constant
$
\bar C_\epsilon:=\sup_{\mathcal M'\in\mathcal U} C(\mathcal M') < \infty
$
such that for every $\mathcal M'\in\mathcal U$ and every $\omega$ satisfying $\omega_{sa}\ge\epsilon$ for all $(s,a)$,
$$
\sup_{h\in\partial_\omega L_{sa}(\omega,\mathcal M')}\|h\|_2
\le
\frac{\bar C_\epsilon}{\epsilon^2},
\qquad
\forall s\in\mathcal S,\ a\in\mathcal A\setminus\{\pi^*_{\mathcal M}(s)\}.
$$

Finally, since 
$$
F(\omega,\mathcal M')
=
\max_{s\in\mathcal S,\ a\in\mathcal A\setminus\{\pi^*_{\mathcal M}(s)\}}
L_{sa}(\omega,\mathcal M')
$$
is the pointwise maximum of finitely many convex functions, we have
$$
\partial_\omega F(\omega,\mathcal M^\prime)
=
\mathrm{conv}\Big\{\partial_\omega L_{sa}(\omega,\mathcal M^\prime):\ 
(s,a)\in\arg\max_{u\in\mathcal{S},b\in\mathcal{A}\setminus\{\pi_{\mathcal M}^*(u)\}}L_{ub}(\omega,\mathcal M^\prime)\Big\}.
$$
Therefore, for every $\mathcal M'\in\mathcal U$ and every $\omega$ with all coordinates at least $\epsilon$,
$$
\sup_{g\in\partial_\omega F(\omega,\mathcal M')}\|g\|_2
\le
\frac{\bar C_\epsilon}{\epsilon^2}.
$$

Since $\bar{\mathcal M}(t_n)\to\mathcal M$ almost surely, on the event of convergence there exists $n_0$ such that $\bar{\mathcal M}(t_n)\in\mathcal U$ for all $n\ge n_0$. Moreover, by construction of the algorithm, $x_{n-1}\in \mathcal W^\epsilon(\bar{\mathcal M}(t_{n-1})),$ and hence every coordinate of $x_{n-1}$ is at least $\epsilon$. Therefore, for all $n\ge n_0+1$,
$$
\|\bar g_n\|_2
\le
\frac{\bar C_\epsilon}{\epsilon^2}.
$$
For the finitely many indices $n\le n_0$, each $\|\bar g_n\|_2$ is finite. Hence,
$$
G
:=
\max\left\{
\max_{1\le n\le n_0}\|\bar g_n\|_2,\ 
\frac{\bar C_\epsilon}{\epsilon^2}
\right\}
<\infty
\qquad \text{a.s.}
$$
Thus,
$
\|\bar g_n\|_2\le G, \forall n\ge 1.
$

\begin{lemma}[Model perturbation bound]
\label{lemma: model_perturbation} 
Under the condition $\bar{\mathcal M}(t_n)\to\mathcal M$ a.s., define
$$
\Omega^\epsilon
:=
\left\{
\omega\in\Omega:\ \omega_{sa}\ge \epsilon,\ \forall (s,a)\in\mathcal S\times\mathcal A
\right\},
E_n
:=
\sup_{\omega\in\Omega^\epsilon}
\big|F(\omega,\bar{\mathcal M}(t_n))-F(\omega,\mathcal M)\big|.
$$
Then $E_n\to 0$ almost surely as $n\to\infty$.
Moreover, for any $\omega\in\Omega^\epsilon$,
$$
F(\omega,\bar{\mathcal M}(t_n))\le F(\omega,\mathcal M)+E_n,
\qquad
F(\omega,\bar{\mathcal M}(t_n))\ge F(\omega,\mathcal M)-E_n.
$$
\end{lemma}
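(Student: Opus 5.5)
The claimed bound is a uniform continuity statement for $F(\cdot,\mathcal M')$ in the model argument, over the compact set $\Omega^\epsilon$ on which every coordinate is at least $\epsilon$. We prove $E_n\to0$ pathwise on the almost-sure event $\{\bar{\mathcal M}(t_n)\to\mathcal M\}$. Once $E_n\to0$ is known, the two displayed inequalities are immediate from the definition of $E_n$ as a supremum of absolute differences over $\Omega^\epsilon$.

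\textbf{Step 1: the index set stabilizes.} As in the proof of Lemma~\ref{lemma: subgradients bound}, take a compact neighbourhood $\mathcal U$ of $\mathcal M$ on which
\[
\pi^*_{\mathcal M'}=\pi^*_{\mathcal M}
\quad\text{and}\quad
\inf_{\mathcal M'\in\mathcal U}\Delta_{sa}(\mathcal M')\ge \underline\Delta>0 .
\]
On the convergence event there is an $n_0$ such that $\bar{\mathcal M}(t_n)\in\mathcal U$ for all $n\ge n_0$. For such $n$, the max in $F(\cdot,\bar{\mathcal M}(t_n))$ runs over the same finite set $\mathcal Z$ of suboptimal pairs as $F(\cdot,\mathcal M)$. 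Since $|\max_i a_i-\max_i b_i|\le\max_i|a_i-b_i|$, we get
\[
E_n\le \max_{(s,a)\in\mathcal Z}\ \sup_{\omega\in\Omega^\epsilon}\bigl|L_{sa}(\omega,\bar{\mathcal M}(t_n))-L_{sa}(\omega,\mathcal M)\bigr| .
\]

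\textbf{Step 2: separate $\omega$ from the model.} Write
\[
L_{sa}(\omega,\mathcal M')=\frac{A_{sa}(\mathcal M')}{\omega_o}+\frac{B_{sa}(\mathcal M')}{\omega_{sa}},
\]
where
\[
A_{sa}(\mathcal M')=\frac{2(1+\gamma)^2\bigl(\bar{\mathbb V}[R_{\mathcal M'}]+\gamma^2\bar{\mathbb V}[V^{\pi^*_{\mathcal M}}_{\mathcal M'}]\bigr)}{\Delta_{sa}(\mathcal M')^2(1-\gamma)^2},
\qquad
B_{sa}(\mathcal M')=\frac{2\bigl(\mathbb V[R_{\mathcal M'}(s,a)]+\gamma^2\mathbb V_{P_{\mathcal M'}(s,a)}[V^{\pi^*_{\mathcal M}}_{\mathcal M'}]\bigr)}{\Delta_{sa}(\mathcal M')^2}.
\]
The index $\omega_o$ depends only on $\pi^*_{\mathcal M}$, which is the same for all $\mathcal M'\in\mathcal U$. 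On $\Omega^\epsilon$ we have $1/\omega_o\le1/\epsilon$ and $1/\omega_{sa}\le1/\epsilon$. Hence
\[
E_n\le \frac1\epsilon\max_{(s,a)\in\mathcal Z}\Bigl(\bigl|A_{sa}(\bar{\mathcal M}(t_n))-A_{sa}(\mathcal M)\bigr|+\bigl|B_{sa}(\bar{\mathcal M}(t_n))-B_{sa}(\mathcal M)\bigr|\Bigr).
\]
This removes the supremum over $\omega$ entirely.

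\textbf{Step 3: continuity of the coefficients.} It remains to show that $A_{sa}$ and $B_{sa}$ are continuous at $\mathcal M$; the bound above then gives $E_n\to0$.
\begin{itemize}
\item $V^{\pi^*_{\mathcal M}}_{\mathcal M'}=(I-\gamma P^{\pi^*}_{\mathcal M'})^{-1}r^{\pi^*}_{\mathcal M'}$ is continuous, because the matrix inverse is continuous and the matrix is invertible for $\gamma<1$.
\item Consequently $Q^{\pi^*_{\mathcal M}}_{\mathcal M'}$ and $\Delta_{sa}(\mathcal M')$ are continuous.
\item The transition variances $\mathbb V_{P_{\mathcal M'}(s,a)}[V]$ are polynomial in $P_{\mathcal M'}$ and $V$, hence continuous.
\item The maxima over states defining $\bar{\mathbb V}$ are continuous, as finite maxima of continuous functions.
\item Division by $\Delta_{sa}^2\ge\underline\Delta^2>0$ preserves continuity on $\mathcal U$.
\end{itemize}

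\textbf{Main obstacle: the reward variances.} The terms $\mathbb V[R_{\mathcal M'}(s,a)]$ are not determined by the empirical mean reward alone. We interpret the plug-in model $\bar{\mathcal M}(t_n)$ as also carrying empirical reward variances. We then need $\mathbb V[R_{\bar{\mathcal M}(t_n)}(s,a)]\to\mathbb V[R_{\mathcal M}(s,a)]$ almost surely, so the convergence $\bar{\mathcal M}(t_n)\to\mathcal M$ must be understood to include these second moments. This follows from the strong law applied along the visits to $(s,a)$: rewards are bounded, and Lemma~\ref{lemma: consistency} guarantees that $N(s,a;t)\to\infty$ for every pair. With this in place, all coefficients converge and $E_n\to0$ almost surely.
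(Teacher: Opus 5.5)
Your proof is correct. Its skeleton matches the paper's: both first restrict to a neighbourhood $\mathcal U$ of $\mathcal M$ on which $\pi^*_{\mathcal M'}=\pi^*_{\mathcal M}$, so that $F(\cdot,\mathcal M')$ is a maximum over the same index set with the same $\omega_o$. Both then rest on continuity of the model-dependent ingredients.

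The difference is in how you obtain uniformity over $\omega$. The paper argues that $F$ is jointly continuous on the compact set $\Omega^\epsilon\times\mathcal U$ and invokes Heine--Cantor to conclude $\sup_{\omega\in\Omega^\epsilon}|F(\omega,\mathcal M')-F(\omega,\mathcal M)|\to0$. You instead use the separable form $L_{sa}=A_{sa}(\mathcal M')/\omega_o+B_{sa}(\mathcal M')/\omega_{sa}$ together with $\omega_o,\omega_{sa}\ge\epsilon$. This gives the explicit bound
$E_n\le\epsilon^{-1}\max_{(s,a)}\bigl(|A_{sa}(\bar{\mathcal M}(t_n))-A_{sa}(\mathcal M)|+|B_{sa}(\bar{\mathcal M}(t_n))-B_{sa}(\mathcal M)|\bigr)$.

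What your route buys:
\begin{itemize}
\item It needs only continuity of finitely many scalar coefficients at the single point $\mathcal M$, with no compactness of $\mathcal U$ and no joint continuity in $(\omega,\mathcal M')$.
\item The coefficients are locally Lipschitz on $\mathcal U$, since $\Delta_{sa}$ is bounded below there. So the bound would also yield a rate $E_n=O(\|\bar{\mathcal M}(t_n)-\mathcal M\|)$, given a rate for the model error that includes the reward second moments. Only $E_n\to0$ is used downstream, so this is a bonus rather than a necessity.
\end{itemize}

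Your remark on the reward variances is a genuine improvement in care. The paper's $\bar{\mathcal M}(t)$ is built only from empirical transitions and empirical mean rewards, yet $L_{sa}(\omega,\bar{\mathcal M}(t_n))$ requires reward variances. The paper covers this only by saying that $L_{sa}$ depends on $\mathcal M'$ ``only through finitely many bounded model parameters.'' Your convention is to plug in empirical second moments, which converge almost surely along the visit times by the strong law, since $N(s,a;t)\to\infty$ by Lemma~\ref{lemma: consistency}. That convention is exactly what makes the hypothesis $\bar{\mathcal M}(t_n)\to\mathcal M$ meaningful for $F$.
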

\proof{Proof of Lemma \ref{lemma: model_perturbation}}
Since $\bar{\mathcal M}(t_n)\to\mathcal M$ almost surely and the optimal policy $\pi^*_{\mathcal{M}}$ is unique, the optimality gap at each state is strictly positive. By continuity of the optimality comparisons in the model parameters, there exists an open neighborhood $\mathcal{U}$ of $\mathcal M$ such that $\pi^*_{\mathcal{M}^\prime} = \pi^*_{\mathcal{M}}$ for all $\mathcal{M}^\prime \in \mathcal{U}$. Hence, on the event of convergence, $\bar{\mathcal M}(t_n)\in\mathcal U$ for all sufficiently large $n$.

Since $\Omega$ is compact and $\Omega^\epsilon$ is closed, $\Omega^\epsilon$ is compact. Moreover, for any $\omega\in\Omega^\epsilon$, we have $\omega_{sa}\ge \epsilon$ for all $(s,a)$, and hence $\omega_o\ge \epsilon$. Therefore, all denominators appearing in the definition of $F(\omega,\cdot)$ are uniformly bounded away from zero on $\Omega^\epsilon$.

Fix the neighborhood $\mathcal U$ above. By possibly replacing $\mathcal U$ with the intersection of $\mathcal U$, a closed ball around $\mathcal M$, and the compact model parameter space, we may assume that $\mathcal U$ is a compact neighborhood of $\mathcal M$ and that $\pi^*_{\mathcal M'}=\pi^*_{\mathcal M}$ for all $\mathcal M'\in\mathcal U$. For each component $L_{sa}(\omega,\mathcal M')$, its dependence on $\mathcal M'$ enters only through finitely many bounded model parameters, and it is jointly continuous in $(\omega,\mathcal M')$ on
$\Omega^\epsilon\times \mathcal U$.
Because $F$ is the pointwise maximum of finitely many such components, with the same index set for all $\mathcal M'\in\mathcal U$ since $\pi^*_{\mathcal M'}=\pi^*_{\mathcal M}$, it follows that $F(\omega,\mathcal M')$ is also jointly continuous on $\Omega^\epsilon\times\mathcal U$.

Since $\Omega^\epsilon\times\mathcal U$ is compact, after possibly shrinking $\mathcal U$, the Heine-Cantor theorem implies that $F$ is uniformly continuous on this set. Consequently,
$$
\|\mathcal M'-\mathcal M\|\to 0
\quad\Longrightarrow\quad
\sup_{\omega\in\Omega^\epsilon}|F(\omega,\mathcal M')-F(\omega,\mathcal M)|\to 0.
$$
Applying this with $\mathcal M'=\bar{\mathcal M}(t_n)$ and using $\bar{\mathcal M}(t_n)\to\mathcal M$ almost surely yields
$E_n\to 0$ almost surely.

For any fixed $\omega\in\Omega^\epsilon$,
$$
|F(\omega,\bar{\mathcal M}(t_n))-F(\omega,\mathcal M)|\le E_n,
$$
which is equivalent to
$
F(\omega,\mathcal M)-E_n\le F(\omega,\bar{\mathcal M}(t_n))\le F(\omega,\mathcal M)+E_n.
$

\begin{lemma}[Telescoping]
\label{lem:telescope}
Let $N\ge 1$ and define
$$
S_N:=\sum_{n=1}^N\Big(\|x_{n-1}-z_n\|_2^2-\|x_n-z_n\|_2^2\Big).
$$
Assume $x_n,z_n\in\Omega$ for all $n$, where $\Omega$ is the probability simplex, and let  $D:=\sup_{u,v\in\Omega}\|u-v\|_2<\infty$. Then
\begin{equation}
\label{eq:telescope_bound}
S_N
\le 
\|x_0-z_1\|_2^2
+\sum_{n=1}^{N-1} 2D\,\|z_{n+1}-z_n\|_2 .
\end{equation}
\end{lemma}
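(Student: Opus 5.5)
The plan is to rewrite $S_N$ so that it telescopes in the iterates $x_n$, paying only for the drift of the moving comparison points $z_n$. First split off the $n=1$ term of the first sum:
\[
S_N=\|x_0-z_1\|_2^2+\sum_{n=1}^{N-1}\Big(\|x_n-z_{n+1}\|_2^2-\|x_n-z_n\|_2^2\Big)-\|x_N-z_N\|_2^2 .
\]
This is an index shift: the $n+1$ term of $\|x_{n-1}-z_n\|_2^2$ becomes $\|x_n-z_{n+1}\|_2^2$, and it pairs with the $n$-th term of $\|x_n-z_n\|_2^2$. The last term is nonpositive, so we drop it.

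Next, each remaining difference is controlled by the difference-of-squares identity:
\[
\|x_n-z_{n+1}\|_2^2-\|x_n-z_n\|_2^2=\big(\|x_n-z_{n+1}\|_2-\|x_n-z_n\|_2\big)\big(\|x_n-z_{n+1}\|_2+\|x_n-z_n\|_2\big).
\]
By the reverse triangle inequality, the first factor is at most $\|z_{n+1}-z_n\|_2$ in absolute value. Since $x_n,z_n,z_{n+1}\in\Omega$, each distance in the second factor is at most $D$, so the second factor is at most $2D$. Hence each summand is bounded by $2D\|z_{n+1}-z_n\|_2$, and summing over $n=1,\dots,N-1$ gives \eqref{eq:telescope_bound}.

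The only point needing care is the membership hypothesis. The iterates $x_n$ lie in $\mathcal W^\epsilon(\bar{\mathcal M}(t_n))\subseteq\Omega$, and the $z_n$ are projections onto the same sets, so they also lie in $\Omega$; both are assumed in the statement anyway. The constant $D$ is finite (indeed $D\le\sqrt2$) because $\Omega$ is the probability simplex. The argument is purely deterministic and has no genuine obstacle; the main bookkeeping risk is aligning indices correctly when shifting the sum.
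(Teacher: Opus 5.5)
Your proposal is correct and follows the paper's proof essentially step for step. Both arguments shift the index to isolate $\|x_0-z_1\|_2^2$, drop the nonpositive term $-\|x_N-z_N\|_2^2$, and bound each remaining difference of squares by $2D\,\|z_{n+1}-z_n\|_2$ using the reverse triangle inequality together with the diameter of $\Omega$.
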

\proof{Proof of Lemma \ref{lem:telescope}}
Start from the definition and shift the index in the first sum:
\begin{align*}
S_N
&=\sum_{n=1}^N\|x_{n-1}-z_n\|_2^2
-\sum_{n=1}^N\|x_n-z_n\|_2^2 \\
&=\sum_{n=0}^{N-1}\|x_n-z_{n+1}\|_2^2
-\sum_{n=1}^N\|x_n-z_n\|_2^2 \\
&=\|x_0-z_1\|_2^2-\|x_N-z_N\|_2^2 \\
&\quad
+\sum_{n=1}^{N-1}
\Big(\|x_n-z_{n+1}\|_2^2-\|x_n-z_n\|_2^2\Big).
\end{align*}
Dropping the non-positive term $-\|x_N-z_N\|_2^2\le 0$ gives
\begin{equation}
\label{eq:telescope_step1}
S_N
\le
\|x_0-z_1\|_2^2
+\sum_{n=1}^{N-1}\Big(\|x_n-z_{n+1}\|_2^2-\|x_n-z_n\|_2^2\Big).
\end{equation}

Fix any $n\in\{1,\dots,N-1\}$. Using $|a^2-b^2|=|a+b||a-b|$ with
$a=\|x_n-z_{n+1}\|_2$ and $b=\|x_n-z_n\|_2$, we have
$$
\big|\|x_n-z_{n+1}\|_2^2-\|x_n-z_n\|_2^2\big|
=
\big|\|x_n-z_{n+1}\|_2+\|x_n-z_n\|_2\big|\,
\big|\|x_n-z_{n+1}\|_2-\|x_n-z_n\|_2\big|.
$$
By the reverse triangle inequality,
$$
\big|\|x_n-z_{n+1}\|_2-\|x_n-z_n\|_2\big|
\le \|z_{n+1}-z_n\|_2.
$$
Moreover, since $x_n,z_n,z_{n+1}\in\Omega$ and $\mathrm{diam}(\Omega)=D$, we have
$\|x_n-z_{n+1}\|_2\le D$ and $\|x_n-z_n\|_2\le D$, hence
$\|x_n-z_{n+1}\|_2+\|x_n-z_n\|_2\le 2D$.
Therefore,
$$
\|x_n-z_{n+1}\|_2^2-\|x_n-z_n\|_2^2
\le
\big|\|x_n-z_{n+1}\|_2^2-\|x_n-z_n\|_2^2\big|
\le 2D\,\|z_{n+1}-z_n\|_2.
$$
Summing this bound over $n=1,\dots,N-1$ and substituting into \eqref{eq:telescope_step1}
yields \eqref{eq:telescope_bound}.

We combine the above lemmas to prove Theorem \ref{thm: sub-grad converge}. Recall the notation $\mathcal{C}^*(\mathcal{M})$, $z_n:=\Pi_{\mathcal W^{\epsilon}(\bar{\mathcal M}(t_n))}(\tilde\omega^*(\mathcal M))$, and define $F^*:=\min_{\omega\in \mathcal{W}}F(\omega,\mathcal{M})$. By Lemma~\ref{lemma: convexity}, $\mathcal C^*(\mathcal M)$ is nonempty and convex. Fix an arbitrary representative $\tilde\omega^*(\mathcal M)\in\mathcal C^*(\mathcal M)$ (the choice is immaterial since
$F(\tilde\omega^*(\mathcal M),\mathcal M)=F^*$). 

Starting from Lemma~\ref{lemma: subgradient_inequal},
$$
\|x_n-z_n\|_2^2
\le 
\|x_{n-1}-z_n\|_2^2 + \eta_n^2\|\bar g_n\|_2^2
+2\eta_n\big(F(z_n, \bar{\mathcal{M}}(t_n))-F(x_{n-1},\bar{\mathcal{M}}(t_n))\big),
$$
and applying Lemma~\ref{lemma: model_perturbation} at $\omega=z_n$ and $\omega=x_{n-1}$, we obtain
\begin{equation}
\label{eq:gap_recursion_pre_new}
\|x_n-z_n\|_2^2
\le 
\|x_{n-1}-z_n\|_2^2 + \eta_n^2\|\bar g_n\|_2^2
+2\eta_n\big(F(z_n,\mathcal{M})-F(x_{n-1},\mathcal{M})\big)
+4\eta_n E_n .
\end{equation}

Next, the proof of Lemma~\ref{lemma: subgradients bound} establishes that
$F(\cdot,\mathcal M)$ has uniformly bounded subgradients on $\Omega^\epsilon$. Hence,
$F(\cdot,\mathcal M)$ is Lipschitz on this compact set with some finite Lipschitz constant
$G_\epsilon(\mathcal M)$. Therefore, for all $\omega\in\Omega^\epsilon$,
\begin{equation}
\label{eq:lipschitz_to_optset}
F(\omega,\mathcal M)-F^*
\le
G_\epsilon(\mathcal M)\,\mathrm{dist}(\omega,\mathcal C^*(\mathcal M)),
\end{equation}
where $\mathrm{dist}(\omega,\mathcal C^*(\mathcal M)):=\inf_{u\in\mathcal C^*(\mathcal M)}\|\omega-u\|_2$.
In particular, since $\tilde\omega^*(\mathcal M)\in\mathcal C^*(\mathcal M)$ and $z_n\in\Omega^\epsilon$, we have
$F(z_n,\mathcal M)\le F^*+G_{\epsilon}(\mathcal{M})\|z_n-\tilde\omega^*(\mathcal M)\|_2$. 
Using Lemma~\ref{lemma: set stability}, which applies to our fixed representative $\tilde\omega^*(\mathcal M)$, for all sufficiently large $n$,
$$
\|z_n-\tilde\omega^*(\mathcal M)\|_2\le L_{\mathcal M}\|\bar{\mathcal M}(t_n)-\mathcal M\|.
$$
Substituting the bound $F(z_n,\mathcal M)
\le
F^*+G_\epsilon(\mathcal M)L_{\mathcal M}\|\bar{\mathcal M}(t_n)-\mathcal M\|$
into \eqref{eq:gap_recursion_pre_new} and rearranging yields, for all sufficiently large $n$,
\begin{equation}
\label{eq:gap_recursion_new}
2\eta_n\big(F(x_{n-1},\mathcal M)-F^*\big)
\le 
\|x_{n-1}-z_n\|_2^2-\|x_n-z_n\|_2^2
+\eta_n^2\|\bar g_n\|_2^2
+2\eta_n G_\epsilon(\mathcal M)L_{\mathcal M}\|\bar{\mathcal M}(t_n)-\mathcal M\|
+4\eta_n E_n .
\end{equation}

Let $T$ denote the total budget in the original time scale, and let
$0<t_1<t_2<\cdots$ be the update times with increments $\Gamma_n:=t_n-t_{n-1}$ (with $t_0:=0$).
Define $N(T)$ as the number of completed updates by time $T$:
$$
N(T):=\max\Big\{n\ge 0:\ \sum_{k=1}^n \Gamma_k \le T\Big\},
$$
Since $\Gamma_n\in\mathbb N$ and $\Gamma_n\ge 1$ for all $n$, we have
$\sum_{k=1}^n \Gamma_k \ge n$, hence $\sum_{k=1}^n \Gamma_k\to\infty$ and therefore $N(T)\to\infty$ as $T\to\infty$.

Summing \eqref{eq:gap_recursion_new} over $n=1,\dots,N(T)$ gives
\begin{equation}
\label{eq:summed_new}
\sum_{n=1}^{N(T)} 2\eta_n\big(F(x_{n-1},\mathcal{M})-F^*\big)
\le 
\underbrace{\sum_{n=1}^{N(T)}\big(\|x_{n-1}-z_n\|_2^2-\|x_n-z_n\|_2^2\big)}_{=:S_{N(T)}}
+\sum_{n=1}^{N(T)} \eta_n^2\|\bar g_n\|_2^2 \nonumber
+\sum_{n=1}^{N(T)} \eta_n \delta_n,
\end{equation}
where we define
$$
\delta_n
:=
2 G_\epsilon(\mathcal M)L_{\mathcal M}\|\bar{\mathcal M}(t_n)-\mathcal M\|
+4E_n .
$$
By Lemma~\ref{lem:telescope}, $S_{N(T)}\le \|x_0-z_1\|_2^2 + 2D\sum_{n=1}^{{N(T)}-1}\|z_{n+1}-z_n\|_2$.

By the triangle inequality and Lemma \ref{lemma: set stability}, for $n\ge n_1$,
$$
\|z_{n+1}-z_n\|_2
\le \|z_{n+1}-\tilde\omega^*(\mathcal{M})\|_2+\|z_n-\tilde\omega^*(\mathcal{M})\|_2
\le L_{\mathcal M}\big(\|\bar{\mathcal M}(t_{n+1})-\mathcal M\|+\|\bar{\mathcal M}(t_n)-\mathcal M\|\big).
$$
Summing over $n=n_1,\dots,N(T)-1$ yields
$$
\sum_{n=n_1}^{N(T)-1}\|z_{n+1}-z_n\|_2
\le 2L_{\mathcal M}\sum_{n=n_1}^{N(T)}\|\bar{\mathcal M}(t_n)-\mathcal M\|.
$$
The finitely many initial terms $\sum_{n=1}^{n_1-1}\|z_{n+1}-z_n\|_2$ are $O(1)$ and thus negligible after dividing by $\sum_{n=1}^{N(T)} \eta_n$. Hence, it suffices to show
\begin{equation}
\label{eq:key_reduce}
\lim_{N(T)\rightarrow\infty}\frac{\sum_{n=1}^{N(T)}\|\bar{\mathcal M}(t_n)-\mathcal M\|}{\sum_{n=1}^{N(T)} \eta_n}= 0.
\end{equation}

By \eqref{eq: sample_lb}, there exists a constant $c_*>0$
and an almost surely finite random time $t_0$ such that
$
N(s,a;t)\ge c_*\, t^{1-\alpha},
\forall (s,a)\in\mathcal S\times\mathcal A,\ \forall t\ge t_0,
$
almost surely.
For each $(s,a)\in\mathcal S\times\mathcal A$, let
$$
A_t(s,a):=\{N(s,a;t)\ge c_*\, t^{1-\alpha}\}.
$$
Fix $(s,a)$ and a next state $s'$. Conditional on $N(s,a;t)=m$, by Hoeffding's inequality, for any $\lambda>0$,
$$
\mathbb P\!\left(
|\bar P_t(s'|s,a)-P(s'|s,a)|>\lambda \,\middle|\, N(s,a;t)=m
\right)
\le
2e^{-2m\lambda^2},
$$
and similarly,
$$
\mathbb P\!\left(
|\bar R_t(s,a)-R(s,a)|>\lambda \,\middle|\, N(s,a;t)=m
\right)
\le
2e^{-2 m\lambda^2}.
$$

Take
$
\lambda_t:=K_\beta\sqrt{\frac{\log t}{t^{1-\alpha}}},
$
with $K_\beta>0$ to be specified later. Then, by conditioning on $N(s,a;t)$ and restricting to the event
$A_t(s,a)$, we obtain
\begin{align*}
&\mathbb P\!\left(
|\bar P_t(s'|s,a)-P(s'|s,a)|>\lambda_t,\ A_t(s,a)
\right) \\
&\qquad\le
\sum_{m\ge c_* t^{1-\alpha}} 2e^{-2m\lambda_t^2}\,\mathbb P(N(s,a;t)=m) \\
&\qquad\le
2e^{-2c_* t^{1-\alpha}\lambda_t^2}
=
2e^{-2c_*K_\beta^2\log t}
=
2t^{-2c_*K_\beta^2}.
\end{align*}
Choosing $K_\beta$ sufficiently large such that $2c_*K_\beta^2\ge \beta+1$, we get
$$
\mathbb P\!\left(
|\bar P_t(s'|s,a)-P(s'|s,a)|>\lambda_t,\ A_t(s,a)
\right)
\le 2t^{-(\beta+1)}.
$$
Therefore,
$$
\mathbb P\!\left(
|\bar P_t(s'|s,a)-P(s'|s,a)|>\lambda_t
\right)
\le
\mathbb P(A_t(s,a)^c)+2t^{-(\beta+1)}.
$$
By \eqref{eq: error up bound}, for each $(s,a)$ there exist constants $C_1,C_2,C_3,C_4>0$ such that
$$
\mathbb P(A_t(s,a)^c)\le C_1 \exp(-C_2 t^{1-2\alpha})+ C_3\exp(-C_4t)
$$
for all sufficiently large $t$. Hence,
$$
\mathbb P\!\left(
|\bar P_t(s'|s,a)-P(s'|s,a)|>\lambda_t
\right)
\le
C_1 \exp(-C_2 t^{1-2\alpha})+ C_3\exp(-C_4t)+2t^{-(\beta+1)}.
$$
The same argument yields
$$
\mathbb P\!\left(
|\bar R_t(s,a)-R(s,a)|>\lambda_t
\right)
\le
C_1 \exp(-C_2 t^{1-2\alpha})+ C_3\exp(-C_4t)+2t^{-(\beta+1)}.
$$

Since the state-action space and state space are finite, a union bound over all transition and reward coordinates gives
$$
\mathbb P\!\left(
\|\bar{\mathcal M}(t)-\mathcal M\|_\infty
>
C_\beta \sqrt{\frac{\log t}{t^{1-\alpha}}}
\right)
\le
C\exp(-ct^{1-2\alpha}) + C' \exp(-c't)+C'' t^{-(\beta+1)}
$$
for all sufficiently large $t$, where $C_\beta,C,C',C'',c,c'>0$ are constants.
Because
$$
\sum_{t=1}^\infty \big(C\exp(-ct^{1-2\alpha})+C' \exp(-c't)+C'' t^{-(\beta+1)}\big)<\infty,
$$
the Borel-Cantelli lemma yields
\begin{equation}
\label{eq:as_rate}
\|\bar{\mathcal M}(t)-\mathcal M\|_\infty = O\!\left(\sqrt{\frac{\log t}{t^{1-\alpha}}}\right)
\qquad\text{a.s.}
\end{equation}
In particular, along the subsequence $t=t_n$,
$$
\|\bar{\mathcal M}(t_n)-\mathcal M\|
\le C\,\sqrt{\frac{\log t_n}{t^{1-\alpha}_n}}
\qquad\text{a.s. for all large }n,
$$
for some finite random constant $C$ with probability one.

Under $\Gamma_n\ge \tilde c n$, we have
$$
t_n=\sum_{k=1}^n \Gamma_k \ge \tilde c\sum_{k=1}^n k = \frac{\tilde c}{2}n(n+1)\ge c\,n^2
$$
for some $c>0$. Therefore, for all large $n$,
$$
\sqrt{\frac{\log t_n}{t^{1-\alpha}_n}}
\le
\sqrt{\frac{\log(cn^2)}{(c n^2)^{1-\alpha}}}
\le
\frac{C'}{n^{1-\alpha}}\sqrt{\log n}
$$
for some constant $C'>0$. Combining with \eqref{eq:as_rate} gives, almost surely for all large $n$,
$$
\|\bar{\mathcal M}(t_n)-\mathcal M\|
\le
\frac{C''\sqrt{\log n}}{n^{1-\alpha}}
$$
for some finite random $C''$.
Hence
$$
\sum_{n=1}^{N(T)}\|\bar{\mathcal M}(t_n)-\mathcal M\|
\le
C''\sum_{n=2}^{N(T)}\frac{\sqrt{\log n}}{n^{1-\alpha}}
= O(N(T)^\alpha\log N(T)^{1/2}).
$$
Since $\|x_0-z_1\|_2^2$ and $D$ is finite, it holds that
\begin{equation}
    \lim_{N(T) \rightarrow \infty} \frac{\|x_0-z_1\|_2^2 + 2D\sum_{n=1}^{N(T)-1}\|z_{n+1}-z_n\|_2}{ \sum_{n=1}^{N(T)} \eta_n}=0,
\end{equation}
where we use $\alpha\in(0,1/2)$.

Moreover, by Lemma~\ref{lemma: model_perturbation}, $E_n\to 0$ a.s., hence $\delta_n\to 0$ a.s.
Since $\eta_n=1/\sqrt{N(T)}$ for $n=1,\dots,N(T)$, we have
$$
\frac{\sum_{n=1}^{N(T)}\eta_n\delta_n}{\sum_{n=1}^{N(T)}\eta_n}
=
\frac{\frac1{\sqrt{N(T)}}\sum_{n=1}^{N(T)}\delta_n}{\sqrt{N(T)}}
=
\frac1{N(T)}\sum_{n=1}^{N(T)}\delta_n.
$$
Therefore, by Cesàro's theorem,
\begin{equation}
\label{eq:toeplitz}
\lim_{N(T)\to\infty}\frac{\sum_{n=1}^{N(T)}\eta_n\delta_n}{\sum_{n=1}^{N(T)}\eta_n}=0\qquad\text{a.s.}
\end{equation}

According to Lemma \ref{lemma: subgradients bound}, $\|\bar g_n\|_2 \le G$ for all $n$, almost surely. Then,
$$
\frac{\sum_{n=1}^{N(T)} \eta_n^2\|\bar g_n\|_2^2}{\sum_{n=1}^{N(T)}\eta_n}
\le
G^2\frac{\sum_{n=1}^{N(T)}\eta_n^2}{\sum_{n=1}^{N(T)}\eta_n}
\to 0,
$$
Finally, $\|x_0-z_1\|_2^2$ is constant and $D<\infty$, so
$\frac{\|x_0-z_1\|_2^2}{\sum_{n=1}^{N(T)}\eta_n}\to 0$.

Divide both sides of \eqref{eq:summed_new} by $2\sum_{n=1}^{N(T)}\eta_n$.
Using Lemma~\ref{lem:telescope} and the limits above, we obtain
$$
\limsup_{{N(T)}\to\infty}
\frac{\sum_{n=1}^{N(T)}\eta_n\big(F(x_{n-1},\mathcal{M})-F^*\big)}{\sum_{n=1}^{N(T)}\eta_n}
\le 0
\qquad\text{a.s.}
$$

Since $\eta_n=1/\sqrt{N(T)}$ for $n=1,\dots,N(T)$, this implies
$$
\limsup_{N(T)\to\infty}
\frac1{N(T)}\sum_{n=1}^{N(T)}F(x_{n-1},\mathcal M)
\le F^*
\qquad\text{a.s.}
$$

Next, because $x_n\in\mathcal W^\epsilon(\bar{\mathcal M}(t_n))$ and
$\bar{\mathcal M}(t_n)\to\mathcal M$ almost surely, the violation of the true flow constraints satisfies
$$
\|A(\mathcal M)x_n\|_2
=
\|(A(\mathcal M)-A(\bar{\mathcal M}(t_n)))x_n\|_2
\le
\|A(\mathcal M)-A(\bar{\mathcal M}(t_n))\|
\to 0
\qquad\text{a.s.}
$$

Since $x_n\in\Omega^\epsilon$ for all $n$ and the true flow residual satisfies
$
\|A(\mathcal M)x_n\|_2\to 0,
$
every limit point of $\{x_n\}$ belongs to $\mathcal W(\mathcal M)$. By continuity of $F(\cdot,\mathcal M)$ on $\Omega^\epsilon$, it follows that
$$
\liminf_{n\to\infty}F(x_n,\mathcal M)\ge F^* \qquad\text{a.s.}
$$
Hence, by the elementary fact that the Ces\`aro average of a sequence is bounded below by its liminf,
$$
\liminf_{N(T)\to\infty}\frac1{N(T)}\sum_{n=1}^{N(T)}F(x_n,\mathcal M)\ge F^*
\qquad\text{a.s.}
$$

Since $F(\cdot,\mathcal M)$ is bounded on $\Omega^\epsilon$, there exists $B<\infty$ such that
$
|F(x_n,\mathcal M)|\le B,\forall n.
$
Therefore,
$$
\left|
\frac1{N(T)}\sum_{n=1}^{N(T)}F(x_n,\mathcal M)
-
\frac1{N(T)}\sum_{n=1}^{N(T)}F(x_{n-1},\mathcal M)
\right|
=
\frac{|F(x_{N(T)},\mathcal M)-F(x_0,\mathcal M)|}{N(T)}
\le \frac{2B}{N(T)}\to 0.
$$
Combining the previous bounds, we conclude that
$$
\lim_{N(T)\to\infty}\frac1{N(T)}\sum_{n=1}^{N(T)}F(x_n,\mathcal M)=F^*
\qquad\text{a.s.}
$$

By definition of the $\omega_n$, we know that $\omega_{N(T)} =\frac{1}{N(T)}\sum_{n=1}^{N(T)}x_n$. By convexity of $F(\cdot,\mathcal{M})$ and Jensen's inequality,
$$
F(\omega_{N(T)},\mathcal M)
\le
\frac1{N(T)}\sum_{n=1}^{N(T)}F(x_n,\mathcal M).
$$
Since $\Omega^\epsilon$ is convex and $x_n\in\Omega^\epsilon$ for all $n$, we have
$$
\omega_{N(T)}=\frac1{N(T)}\sum_{n=1}^{N(T)}x_n \in \Omega^\epsilon.
$$
Moreover,
$$
A(\mathcal M)\omega_{N(T)}
=
\frac1{N(T)}\sum_{n=1}^{N(T)}A(\mathcal M)x_n \to 0
\qquad\text{a.s.},
$$
since $\|A(\mathcal M)x_n\|_2\to 0$ and Ces\`aro averages preserve convergence.
Therefore, every limit point of $\{\omega_{N(T)}\}$ belongs to $\mathcal W(\mathcal M)\cap\Omega^\epsilon$.
By continuity of $F(\cdot,\mathcal M)$ on $\Omega^\epsilon$, it follows that
$$
\liminf_{N(T)\to\infty}F(\omega_{N(T)},\mathcal M)\ge F^*
\qquad\text{a.s.}
$$
Therefore,
$$
F^*
\le
\liminf_{N(T)\to\infty}F(\omega_{N(T)},\mathcal M)
\le
\limsup_{N(T)\to\infty}F(\omega_{N(T)},\mathcal M)
\le
\lim_{N(T)\to\infty}\frac1{N(T)}\sum_{n=1}^{N(T)}F(x_n,\mathcal M)
=
F^*,
$$
which implies
$$
\lim_{N(T)\to\infty}\big(F(\omega_{N(T)},\mathcal M)-F^*\big)=0
\qquad\text{a.s.}
$$

\section{Proof of Lemma~\ref{lemma: ratio convergence}}
\label{sec: ratio converge}
Let $\mathcal{E}_n(\epsilon)$ denote the event that the empirical sampling ratio deviates from the stationary distribution $\beta^n$ by more than $\epsilon$ (in the infinity norm) during the $n$-th time interval $[t_n, t_{n+1})$:
\begin{equation*}
\mathcal{E}_n(\epsilon) = \left\{ \max_{(s,a)\in\mathcal{S}\times\mathcal{A}} \left|\frac{1}{\Gamma_n}N^n(s,a)-\beta^n(s,a)\right| > \epsilon \right\},
\end{equation*}
where $\Gamma_n:=t_{n+1}-t_n$, $N^n(s,a)$ is the number of visits to $(s,a)$ during $[t_n,t_{n+1})$,
and $\beta^n$ is the stationary distribution of the Markov chain induced by the fixed behavior policy
$\pi_{t_n}$ on the $n$-th interval.

According to Algorithm \ref{alg:algorithm1}, during the time interval $[t_n,t_{n+1})$, the behavior policy is fixed as $\pi_{t_n}$. Consequently, the process $\{(s_l,a_l)\}_{l=t_n}^{t_{n+1}-1}$ evolves as a time-homogeneous ergodic Markov chain on the finite space $\mathcal{S}\times\mathcal{A}$ with the unique stationary distribution $\beta^n$.

We invoke the large deviations principle for the empirical measure of Markov chains (Theorem 1.2 in~\citealt{ellis1988large}). For any $\epsilon>0$, there exists a rate function $I_n(\epsilon,\beta^n)$ depending on the transition dynamics induced by $\pi_{t_n}$. Specifically, for the finite state-action space $\mathcal{S}\times \mathcal{A}$, the deviation probability is uniformly bounded over all possible initial state-action pairs $(s_{t_n},a_{t_n})\in\mathcal{S}\times\mathcal{A}$. That is, there exist  a constant $C(\epsilon)>0$ such that for sufficiently large $\Gamma_n$ \citep{zhu2024uncertainty}:
\begin{equation}
\label{eq: ldt bound}
    \mathbb{P}(\mathcal{E}_n(\epsilon)\mid s_{t_n},a_{t_n}) \le C(\epsilon)\exp(-\Gamma_n I_n(\epsilon,\beta^n)),\quad \forall (s_{t_n},a_{t_n}).
\end{equation}

Next, we establish a uniform lower bound for the rate function.
By construction of the algorithm, the target allocation satisfies
$
x_n\in\mathcal W^\epsilon(\bar{\mathcal M}(t_n))
$.
Therefore, $\beta^n$ stay in a compact set $K\subset \mathrm{int}(\Delta(\mathcal S\times\mathcal A))$. The rate function $I_n(\epsilon,\beta^n)$ is continuous in the induced transition kernel. 
Fix $\epsilon>0$. For each $\beta\in K$, the LDP rate satisfies $I(\epsilon,\beta)>0$.
By continuity of $I(\epsilon,\cdot)$ on the compact set $K$, we have
$$
I^*(\epsilon):=\min_{\beta\in K} I(\epsilon,\beta) > 0.
$$
Therefore, for all $n\ge n_0(\epsilon)$,
$
I_n(\epsilon,\beta^n)\ge I^*(\epsilon).
$
Moreover, by \eqref{eq: ldt bound}, there exists $\Gamma_0(\epsilon)$ such that
for all $n$ with $\Gamma_n\ge \Gamma_0(\epsilon)$,
$$
\sup_{(s_{t_n},a_{t_n})}\mathbb{P}(\mathcal{E}_n(\epsilon)\mid s_{t_n},a_{t_n})
\le C(\epsilon)\,\exp\!\big(-\Gamma_n I_n(\epsilon,\beta^n)\big).
$$
Since $\Gamma_n\to\infty$, there exists $n_1(\epsilon)$ such that
$\Gamma_n\ge \Gamma_0(\epsilon)$ for all $n\ge n_1(\epsilon)$.
Let $N(\epsilon):=\max\{n_0(\epsilon),n_1(\epsilon)\}$.
Then for all $n\ge N(\epsilon)$
$$
\mathbb{P}(\mathcal{E}_n(\epsilon))
\le \sup_{(s_{t_n},a_{t_n})}\mathbb{P}(\mathcal{E}_n(\epsilon)\mid s_{t_n},a_{t_n})
\le C(\epsilon)\exp\!\big(-\Gamma_n I^*(\epsilon)\big).
$$
Therefore,
$$
\sum_{n=1}^{\infty}\mathbb{P}(\mathcal{E}_n(\epsilon))
\le \sum_{n=1}^{N(\epsilon)-1}\mathbb{P}(\mathcal{E}_n(\epsilon))
+\sum_{n=N(\epsilon)}^{\infty} C(\epsilon)\exp\!\big(-\Gamma_n I^*(\epsilon)\big)
<\infty,
$$
where the last inequality follows from the growth condition $\Gamma_n = \lceil \tilde c n \rceil$ for some $\tilde c>0$.
Applying the Borel-Cantelli lemma, the event $\mathcal E_n(\epsilon)$ occurs infinitely often with probability $0$. Thus, for each fixed $\epsilon>0$,
$$
\max_{(s,a) \in \mathcal S\times \mathcal A}
\left| \frac{1}{\Gamma_n} N^n(s,a) - \beta^n(s,a) \right|
\le \epsilon
$$
for all sufficiently large $n$, almost surely.
Now let $\epsilon_m:=1/m$ for $m\in\mathbb N$. By the above argument, for each $m$,
$
\mathbb P\big(\mathcal E_n(1/m)\ \text{i.o.}\big)=0.
$
Taking the countable intersection over all $m\in\mathbb N$, we conclude that
\begin{equation*}
\max_{(s,a) \in \mathcal{S}\times \mathcal{A}}
\left| \frac{1}{\Gamma_n} N^n(s,a) - \beta^n(s,a) \right| \rightarrow 0
\end{equation*}
almost surely as $n\to\infty$.

\section{Proof of Theorem~\ref{thm: robust optimality}}
\label{sec: robust optimality}
According to Lemma \ref{lemma: consistency}, every state-action pair is visited infinitely often almost surely:
\begin{equation*}
    \lim_{t \rightarrow \infty} N(s,a;t) = \infty, \quad \forall s \in \mathcal{S}, a \in \mathcal{A}.
\end{equation*}
Therefore, by Azuma--Hoeffding together with Borel--Cantelli (or the martingale SLLN),
the empirical transition probabilities and rewards converge coordinate-wise almost surely along visit counts,
and since $N(s,a;t)\to\infty$, we obtain
$\bar{\mathcal M}(t)\to\mathcal M$ almost surely as $t\to\infty$ (equivalent to $T\rightarrow \infty$).

Since the optimal policy $\pi^*_{\mathcal M}$ is unique (equivalently, all state-wise optimality gaps are strictly positive).
Then by continuity of $Q^*_{\mathcal M}$ in the model parameters, there exists a neighborhood $\mathcal U$ of $\mathcal M$
such that $\pi^*_{\mathcal M'}=\pi^*_{\mathcal M}$ for all $\mathcal M'\in\mathcal U$.
Since $\bar{\mathcal M}(T)\to\mathcal M$ a.s., we have $\bar{\mathcal M}(T)\in\mathcal U$ for all large $T$,
hence $\hat\pi_T=\pi^*_{\bar{\mathcal M}(T)}=\pi^*_{\mathcal M}$ eventually, and thus
$\hat\pi_T\to\pi^*_{\mathcal M}$ almost surely.

As the total budget $T\to\infty$, the number of algorithm updates $N(T)\to\infty$.
According to Theorem~\ref{thm: sub-grad converge}, we have $F(\omega_{N(T)},\mathcal M)\to F^*$ almost surely,
where $\omega_{N(T)}=\frac1{N(T)}\sum_{n=1}^{N(T)} x_n$.
In particular, $\mathrm{dist}(\omega_{N(T)},\mathcal C^*(\mathcal M))\to0$ a.s.; hence every limit point belongs to
$\mathcal C^*(\mathcal M)$. 

At update times $t_n$, the behavior policy satisfies
$\pi_{t_n}=\epsilon_{t_n}\pi^u + (1-\epsilon_{t_n})\pi^e_{\bar{\mathcal M}(t_n)}$, and $\pi_t$ is constant between updates.
Since $\epsilon_t\to0$ and $\bar{\mathcal M}(t_n)\to\mathcal M$ a.s., and since the map
$\omega\mapsto \pi^e(\cdot|s)$ defined by
$\pi^e(a|s)=\omega_{sa}/\sum_{b}\omega_{sb}$ is continuous on $\{\omega:\min_{s,a}\omega_{sa}\ge\epsilon\}$, every limit point of the sequence
$\{\pi_{t_n}\}$ is induced by some element of $\mathcal C^*(\mathcal M)$.

Let $\beta^n$ denote the unique stationary distribution on $\mathcal S\times\mathcal A$
of the Markov chain induced by the fixed policy $\pi_{t_n}$ on the true MDP $\mathcal M$.
By continuity of the stationary distribution for finite ergodic chains,
every limit point of
$\{\beta^n\}$ belongs to $\mathcal C^*(\mathcal M)$. Equivalently,
$
\mathrm{dist}(\beta^n,\mathcal C^*(\mathcal M))\to 0,\text{a.s.}
$

By definition,  
$$
N(s,a;T)=\sum_{n=1}^{N(T)} N^n(s,a) + R_T(s,a),
\qquad 0\le R_T(s,a)\le \Gamma_{N(T)+1},
$$
hence $\|R_T\|_\infty/T\to0$ because $\Gamma_{N(T)+1}/T\to0$ for $\Gamma_n=\lceil\tilde c n\rceil$.
Therefore,
$$
\frac1T N(s,a;T)
=
\sum_{n=1}^{N(T)}\frac{\Gamma_n}{T}\Big(\frac{1}{\Gamma_n}N^n(s,a)\Big) + o(1).
$$
Add and subtract $\beta^n(s,a)$:
$$
\frac1T N(s,a;T)
=
\sum_{n=1}^{N(T)}\frac{\Gamma_n}{T}\Big(\frac{1}{\Gamma_n}N^n(s,a)-\beta^n(s,a)\Big)
+
\sum_{n=1}^{N(T)}\frac{\Gamma_n}{T}\beta^n(s,a)
+o(1).
$$

By Lemma~\ref{lemma: ratio convergence},
$\max_{s,a}\big|\frac{1}{\Gamma_n}N^n(s,a)-\beta^n(s,a)\big|\to0$ a.s.
Moreover, since $\sum_{n\le\tau(T)}\Gamma_n/T\to1$ and $\max_{n\le\tau(T)}\Gamma_n/T\to0$,
the weighted Toeplitz lemma implies
$$
\mathrm{dist}\!\left(
\sum_{n=1}^{N(T)}\frac{\Gamma_n}{T}\beta^n,\,
\mathcal C^*(\mathcal M)
\right)\to0
\qquad\text{a.s.}
$$
Combining the above displays yields
$$
\mathrm{dist}\!\left(
\left(\frac1T N(s,a;T)\right)_{s,a},
\mathcal C^*(\mathcal M)
\right)\to0
\qquad\text{a.s. as }T\to\infty.
$$

Let $\alpha(T):=(N(s,a;T)/T)_{s,a}$. Since $F(\cdot,\mathcal M)$ is continuous and equals $F^*$ on $\mathcal C^*(\mathcal M)$, we obtain $F(\alpha(T),\mathcal M)\to F^*$ almost surely. 

We now prove the robust optimality. Throughout the worst-case comparison, the infimum over $\mathcal M$ is taken over the class $\mathfrak M(\Delta_0)$. In the generative model setting, the feasible allocation set is the probability simplex $\Omega$. Let
\[
\mathcal C^*_{\Omega}(\mathcal M)
:=
\arg\min_{\omega\in\Omega}F(\omega,\mathcal M)
\]
denote the set of surrogate optimal allocations under the generative model setting. The convergence argument in Theorem~\ref{thm: sub-grad converge} implies
\[
\mathrm{dist}\!\left(\alpha(T),\mathcal C^*_{\Omega}(\mathcal M)\right)\to 0
\qquad\text{a.s.}
\]
Thus, by continuity,
$
F(\alpha(T),\mathcal M)
\to
\min_{\omega\in\Omega}F(\omega,\mathcal M)
$ almost surely. By optimality over $\Omega$, and since the uniform allocation $
\omega^u_{sa}=\frac1{C(S,A)}
$ belongs to $\Omega$, we have $\min_{\omega\in\Omega}F(\omega,\mathcal M)
\le
F(\omega^u,\mathcal M).$
Using
$$
\mathbb V[R_{\mathcal M}(s,a)]\le \frac14,\qquad
\mathbb V_{P_{\mathcal M}(s,a)}
\!\left[V^{\pi^*_{\mathcal M}}_{\mathcal M}\right]
\le \frac1{(1-\gamma)^2},
\qquad
\Delta_{sa}\ge \Delta_{\min}(\mathcal M)\ge \Delta_0,
$$
we obtain, for some universal constant $c>0$,
$$
F(\omega^u,\mathcal M)
\le
c\,\frac{C(S,A)}{(1-\gamma)^4\Delta_{\min}(\mathcal M)^2}.
$$
Therefore,
\[
\limsup_{T\to\infty}F(\alpha(T),\mathcal M)
\le
c\,\frac{C(S,A)}{(1-\gamma)^4\Delta_0^2}
\qquad\text{a.s.}
\]
By the leading-order lower bound in Theorem~\ref{thm: rate lower bound opt},
\[
\mathcal R(\mathcal M,\alpha(T))
\ge
\big(F(\alpha(T),\mathcal M)\big)^{-1}.
\]
Hence,
\[
\liminf_{T\to\infty}
\mathcal R(\mathcal M,\alpha(T))
=
\Omega\!\left(
\frac{(1-\gamma)^4\Delta_0^2}{C(S,A)}
\right)
\qquad\text{a.s.}
\]

Since the constants are uniform over $\mathfrak M(\Delta_0)$, we obtain the worst-case guarantee
\[
\inf_{\mathcal M\in\mathfrak M(\Delta_0)}\liminf_{T\to\infty}
\mathcal R(\mathcal M,\alpha(T))
=
\Omega\!\left(
\frac{(1-\gamma)^4\Delta_0^2}{C(S,A)}
\right)
\]

On the other hand, Appendix~\ref{sec: robust opt} shows that the robust optimal value over this hard-instance class satisfies
\[
\mathcal R^*
=
O\!\left(
\frac{(1-\gamma)^3\Delta_0^2}{C(S,A)}
\right).
\]
Therefore, Algorithm~\ref{alg:algorithm1} achieves the robust-optimal scaling up to an additional factor of $(1-\gamma)$.

\section{Proof of Lemma~\ref{lemma: linear rate}}
Recall that $I_1(x(s,a))$ is the Fenchel-Legendre transform of the logarithmic moment generating function of $X_{\mathcal{M}}(s,a)$ and is defined as
$$
     I_1(x(s,a)) =
     \sup_{\rho(s,a)}
     \left(
     {\rho(s,a)^\top}x(s,a)
     -
     \log\mathbb{E}_{P_{\mathcal{M}}(s,a)}
     \left[
     \exp(\rho(s,a)^\top X_{\mathcal{M}}(s,a))
     \right]
     \right).
$$
By choosing $\rho(s,a)=\eta_1\gamma v$ for some constant $\eta_1\in\mathbb R$, it holds that
\begin{equation}
\label{eq: linear bound 1}
\begin{aligned}
    I_1(x(s,a))
    \geq\;&
    \eta_1\gamma v^\top x(s,a)
    -
    \log \mathbb{E}
    \left[
    \exp\left(
    \eta_1\gamma v^\top X_{\mathcal{M}}(s,a)
    \right)
    \right]
\end{aligned}
\end{equation}

Since $\mathbb{E}[X_{\mathcal M}(s,a)]=P_{\mathcal M}(s,a)$, we upper-bound the logarithmic moment generating function as follows:
\begin{equation*}
\begin{aligned}
&\log \mathbb{E}
\left[
\exp\left(
\eta_1\gamma v^\top X_{\mathcal{M}}(s,a)
\right)
\right] \\
=\;&
\eta_1\gamma v^\top P_{\mathcal M}(s,a)
+
\log \mathbb{E}
\left[
\exp\left(
\eta_1\gamma v^\top
\big(X_{\mathcal M}(s,a)-P_{\mathcal M}(s,a)\big)
\right)
\right] \\
=\;&
\eta_1\gamma v^\top P_{\mathcal M}(s,a)
+
\log
\bigg[
1
+
\sum_{k=2}^{\infty}
\frac{
\eta_1^k
\mathbb{E}
\left[
\left(
\gamma v^\top
\big(X_{\mathcal M}(s,a)-P_{\mathcal M}(s,a)\big)
\right)^k
\right]
}{k!}
\bigg] \\
\le\;&
\eta_1\gamma v^\top P_{\mathcal M}(s,a)
+
\log
\bigg[
1
+
\sum_{k=2}^{\infty}
\frac{
|\eta_1|^k
\mathbb{E}
\left[
\left|
\gamma v^\top
\big(X_{\mathcal M}(s,a)-P_{\mathcal M}(s,a)\big)
\right|^k
\right]
}{k!}
\bigg] \\
\le\;&
\eta_1\gamma v^\top P_{\mathcal M}(s,a)
+
\log
\bigg[
1
+
\left(
\exp\left(
|\eta_1|
\left\lVert
\gamma v^\top X_{\mathcal M}(s,a)
\right\rVert_\infty
\right)
-
|\eta_1|
\left\lVert
\gamma v^\top X_{\mathcal M}(s,a)
\right\rVert_\infty
-1
\right)
\bigg] \\
\le\;&
\eta_1\gamma v^\top P_{\mathcal M}(s,a)
+
\left(
\exp\left(
|\eta_1|
\left\lVert
\gamma v^\top X_{\mathcal M}(s,a)
\right\rVert_\infty
\right)
-
|\eta_1|
\left\lVert
\gamma v^\top X_{\mathcal M}(s,a)
\right\rVert_\infty
-1
\right),
\end{aligned}
\end{equation*}
where the second equality uses the Taylor expansion and the fact that
$$
\mathbb{E}\left[
\gamma v^\top
\big(X_{\mathcal M}(s,a)-P_{\mathcal M}(s,a)\big)
\right]=0,
$$
and the inequalities use
$$
\left|
\gamma v^\top
\big(X_{\mathcal M}(s,a)-P_{\mathcal M}(s,a)\big)
\right|
\le
\left\lVert
\gamma v^\top X_{\mathcal M}(s,a)
\right\rVert_\infty
$$
and $\log(1+u)\le u$.

Substituting this upper bound into \eqref{eq: linear bound 1}, we obtain
\begin{equation*}
I_1(x(s,a))
\ge
\eta_1 \Delta
-
\left(e^{|\eta_1|M}-|\eta_1|M-1\right),
\end{equation*}
where
$$
\Delta
:=
\gamma v^\top
\big(x(s,a)-P_{\mathcal M}(s,a)\big),
\qquad
M
:=
\left\lVert
\gamma v^\top X_{\mathcal M}(s,a)
\right\rVert_\infty .
$$
If \(I_1(x(s,a))=\infty\), the desired inequality holds trivially. Hence, it suffices to consider the case \(I_1(x(s,a))<\infty\), which implies that \(x(s,a)\) is absolutely continuous with respect to \(P_{\mathcal M}(s,a)\).
If $M=0$, then, under the above support condition, $\Delta=0$, and the desired bound is trivial. Hence, assume $M>0$.

Since the right-hand side depends on $\eta_1$ only through $\eta_1\Delta$ and $|\eta_1|$, it is optimized by choosing $\eta_1$ with the same sign as $\Delta$. Thus,
\begin{equation*}
I_1(x(s,a))
\ge
\sup_{t\ge 0}
\left\{
t|\Delta|
-
\left(e^{tM}-tM-1\right)
\right\}.
\end{equation*}
The supremum is attained at
$
t
=
\frac1M
\log\left(1+\frac{|\Delta|}{M}\right).
$
Plugging this choice back gives
\begin{equation*}
I_1(x(s,a))
\ge
\left(1+\frac{|\Delta|}{M}\right)
\log\left(1+\frac{|\Delta|}{M}\right)
-
\frac{|\Delta|}{M}.
\end{equation*}
Using the inequality
\begin{equation*}
(1+u)\log(1+u)-u
\ge
\frac{u^2}{2+u},
\qquad u\ge 0,
\end{equation*}
with $u=|\Delta|/M$, we obtain
\begin{equation*}
I_1(x(s,a))
\ge
\frac{\Delta^2}{2M^2+M|\Delta|}.
\end{equation*}
Since $x(s,a)$ and $P_{\mathcal M}(s,a)$ are probability vectors and
$
0
\le
\gamma v^\top X_{\mathcal M}(s,a)
\le
M,
$
we have $|\Delta|\le M$. Therefore,
\begin{equation*}
I_1(x(s,a))
\ge
\frac{\Delta^2}{3M^2}.
\end{equation*}
Substituting the definitions of $\Delta$ and $M$ back, we obtain
\begin{equation*}
I_1(x(s,a))
\ge
\frac{
\left(
\gamma v^\top
(x(s,a)-P_{\mathcal M}(s,a))
\right)^2
}{
3
\left\lVert
\gamma v^\top X_{\mathcal M}(s,a)
\right\rVert_\infty^2
}.
\end{equation*}

\section{Proof of Lemma~\ref{lemma: linear constraint}}
Recall that $\Delta_{sa} := V^{\pi^*_{\mathcal{M}}}_{\mathcal{M}}(s)-Q^{\pi^*_{\mathcal{M}}}_{\mathcal{M}}(s,a)$ denotes the optimality gap of the state-action pair $(s,a)$. Conditioned on the set $ \mathcal{E}_{s,a}$, the perturbed value function satisfies $\tilde{\mathcal{M}}$ satisfies $Q^{\pi^*_{\mathcal{M}}}_{\tilde{\mathcal{M}}}(s,a) >V^{\pi^*_{\mathcal{M}}}_{\tilde{\mathcal{M}}}(s)$. We can bound the optimality gap as follows:
\begin{equation*}
\begin{aligned}
\Delta_{sa}
&=
V^{\pi^*_{\mathcal{M}}}_{\mathcal{M}}(s)
-
Q^{\pi^*_{\mathcal{M}}}_{\mathcal{M}}(s,a) \\
&\le
V^{\pi^*_{\mathcal{M}}}_{\mathcal{M}}(s)
-
Q^{\pi^*_{\mathcal{M}}}_{\mathcal{M}}(s,a)
+
Q^{\pi^*_{\mathcal{M}}}_{\tilde{\mathcal{M}}}(s,a)
-
V^{\pi^*_{\mathcal{M}}}_{\tilde{\mathcal{M}}}(s) \\
&=
Q^{\pi^*_{\mathcal{M}}}_{\tilde{\mathcal{M}}}(s,a)
-
Q^{\pi^*_{\mathcal{M}}}_{\mathcal{M}}(s,a)
-
\Big(
V^{\pi^*_{\mathcal{M}}}_{\tilde{\mathcal{M}}}(s)
-
V^{\pi^*_{\mathcal{M}}}_{\mathcal{M}}(s)
\Big) \\
&=
Q^{\pi^*_{\mathcal{M}}}_{\tilde{\mathcal{M}}}(s,a)
-
Q^{\pi^*_{\mathcal{M}}}_{\mathcal{M}}(s,a) -
\Big(
Q^{\pi^*_{\mathcal{M}}}_{\tilde{\mathcal{M}}}
(s,\pi_{\mathcal{M}}^*(s))
-
Q^{\pi^*_{\mathcal{M}}}_{\mathcal{M}}
(s,\pi_{\mathcal{M}}^*(s))
\Big).
\end{aligned}
\end{equation*}
For any action $a\in\mathcal{A}$, by the definition of the Q-function:
$$
Q^{\pi^*_{\mathcal{M}}}_{{\tilde{\mathcal{M}}}}(s,a) - Q^{\pi^*_{\mathcal{M}}}_{{\mathcal{M}}}(s,a) = r_{\tilde{\mathcal{M}}}(s,a) + \gamma P_{\tilde{\mathcal{M}}}(s,a)^\top V_{\tilde{\mathcal{M}}}^{\pi^*_{\mathcal{M}}} - r_{{\mathcal{M}}}(s,a) - \gamma P_{{\mathcal{M}}}(s,a)^\top V_{{\mathcal{M}}}^{\pi^*_{\mathcal{M}}}.
$$
Under the linear MDP assumption, this expands to:
\begin{equation*}
\begin{aligned}
&Q^{\pi^*_{\mathcal{M}}}_{\tilde{\mathcal{M}}}(s,a)
- Q^{\pi^*_{\mathcal{M}}}_{\mathcal{M}}(s,a) \\
&=
r_{\tilde{\mathcal{M}}}(s,a)-r_{\mathcal{M}}(s,a)
+\gamma
\big(P_{\tilde{\mathcal{M}}}(s,a)-P_{\mathcal{M}}(s,a)\big)^\top
V_{\tilde{\mathcal{M}}}^{\pi^*_{\mathcal{M}}}
+\gamma P_{\mathcal{M}}(s,a)^\top
\big(
V_{\tilde{\mathcal{M}}}^{\pi^*_{\mathcal{M}}}
-
V_{\mathcal{M}}^{\pi^*_{\mathcal{M}}}
\big) \\
&=
\phi(s,a)^\top
(\theta_{\tilde{\mathcal{M}}}-\theta_{\mathcal{M}})
+\gamma \phi(s,a)^\top
(\mu_{\tilde{\mathcal{M}}}-\mu_{\mathcal{M}})
V_{\tilde{\mathcal{M}}}^{\pi^*_{\mathcal{M}}}
+\gamma P_{\mathcal{M}}(s,a)^\top
\big(
V_{\tilde{\mathcal{M}}}^{\pi^*_{\mathcal{M}}}
-
V_{\mathcal{M}}^{\pi^*_{\mathcal{M}}}
\big) \\
&=
\phi(s,a)^\top
\Big(
\theta_{\tilde{\mathcal{M}}}-\theta_{\mathcal{M}}
+\gamma
(\mu_{\tilde{\mathcal{M}}}-\mu_{\mathcal{M}})
V_{\tilde{\mathcal{M}}}^{\pi^*_{\mathcal{M}}}
\Big)
+\gamma P_{\mathcal{M}}(s,a)^\top
\big(
V_{\tilde{\mathcal{M}}}^{\pi^*_{\mathcal{M}}}
-
V_{\mathcal{M}}^{\pi^*_{\mathcal{M}}}
\big).
\end{aligned}
\end{equation*}
Applying the same logic to the Q function $Q^{\pi^*_{\mathcal{M}}}_{\mathcal{M}}(s,\pi_{\mathcal{M}}^*(s))$, we get:
\[
\begin{aligned}
&Q^{\pi^*_{\mathcal{M}}}_{\tilde{\mathcal{M}}}(s,\pi_{\mathcal{M}}^*(s))- Q^{\pi^*_{\mathcal{M}}}_{\mathcal{M}}(s,\pi_{\mathcal{M}}^*(s))
\\&=\phi(s,\pi_{\mathcal{M}}^*(s))^\top(\theta_{\tilde{\mathcal{M}}}-\theta_{\mathcal{M}}+\gamma(\mu_{\tilde{\mathcal{M}}}-\mu_{{\mathcal{M}}}) V_{\tilde{\mathcal{M}}}^{\pi^*_{\mathcal{M}}}) + \gamma P_{{\mathcal{M}}}(s,\pi_{\mathcal{M}}^*(s))^\top (V_{\tilde{\mathcal{M}}}^{\pi^*_{\mathcal{M}}}-V_{{\mathcal{M}}}^{\pi^*_{\mathcal{M}}})
\end{aligned}
\]
Therefore, we have that:
\[
\begin{aligned}
    \Delta_{sa} &\le (\phi(s,a)-\phi(s,\pi_{\mathcal{M}}^*(s)))^\top(\theta_{\tilde{\mathcal{M}}}-\theta_{\mathcal{M}}+\gamma(\mu_{\tilde{\mathcal{M}}}-\mu_{{\mathcal{M}}}) V_{\tilde{\mathcal{M}}}^{\pi^*_{\mathcal{M}}}) \\&+ \gamma (P_{{\mathcal{M}}}(s,a)-P_{{\mathcal{M}}}(s,\pi_{\mathcal{M}}^*(s)))^\top (V_{\tilde{\mathcal{M}}}^{\pi^*_{\mathcal{M}}}-V_{{\mathcal{M}}}^{\pi^*_{\mathcal{M}}}).
\end{aligned}
\]
We provide an upper bound for the absolute value of the second term. Note that
$$
    \big| \gamma (P_{{\mathcal{M}}}(s,a)-P_{{\mathcal{M}}}(s,\pi_{\mathcal{M}}^*(s)))^\top (V_{\tilde{\mathcal{M}}}^{\pi^*_{\mathcal{M}}}-V_{{\mathcal{M}}}^{\pi^*_{\mathcal{M}}})\big| \le 2\gamma \lVert V_{\tilde{\mathcal{M}}}^{\pi^*_{\mathcal{M}}}-V_{{\mathcal{M}}}^{\pi^*_{\mathcal{M}}}\rVert_{\infty}.
$$
The $\lVert V_{\tilde{\mathcal{M}}}^{\pi^*_{\mathcal{M}}}-V_{{\mathcal{M}}}^{\pi^*_{\mathcal{M}}}\rVert_{\infty}$ can be upper bounded by:
\[
\begin{aligned}
\lVert V_{\tilde{\mathcal{M}}}^{\pi^*_{\mathcal{M}}}-V_{{\mathcal{M}}}^{\pi^*_{\mathcal{M}}}\rVert_{\infty} &\le \lVert Q_{\tilde{\mathcal{M}}}^{\pi^*_{\mathcal{M}}} - Q_{{\mathcal{M}}}^{\pi^*_{\mathcal{M}}} \rVert_{\infty}\\&\le \max_{(s^\prime,a^\prime)\in\mathcal{S}\times\mathcal{A}} \big|\phi(s^\prime,a^\prime)^\top(\theta_{\tilde{\mathcal{M}}}-\theta_{\mathcal{M}}+\gamma(\mu_{\tilde{\mathcal{M}}}-\mu_{{\mathcal{M}}}) V_{\tilde{\mathcal{M}}}^{\pi^*_{\mathcal{M}}})\big| + \gamma \lVert V_{\tilde{\mathcal{M}}}^{\pi^*_{\mathcal{M}}}-V_{{\mathcal{M}}}^{\pi^*_{\mathcal{M}}}\rVert_{\infty},
\end{aligned}
\]
which further implies that
$$
    \lVert V_{\tilde{\mathcal{M}}}^{\pi^*_{\mathcal{M}}}-V_{{\mathcal{M}}}^{\pi^*_{\mathcal{M}}}\rVert_{\infty}  \le \frac{1}{1-\gamma}\max_{(s^\prime,a^\prime)\in\mathcal{S}\times\mathcal{A}} \big|\phi(s^\prime,a^\prime)^\top(\theta_{\tilde{\mathcal{M}}}-\theta_{\mathcal{M}}+\gamma(\mu_{\tilde{\mathcal{M}}}-\mu_{{\mathcal{M}}}) V_{\tilde{\mathcal{M}}}^{\pi^*_{\mathcal{M}}})\big|.
$$

Therefore, we can upper bound $\Delta_{sa}$ by
\[ 
\begin{aligned}
&\big|(\phi(s,a)-\phi(s,\pi_{\mathcal{M}}^*(s)))^\top(\theta_{\tilde{\mathcal{M}}}-\theta_{\mathcal{M}}+\gamma(\mu_{\tilde{\mathcal{M}}}-\mu_{{\mathcal{M}}}) V_{\tilde{\mathcal{M}}}^{\pi^*_{\mathcal{M}}})\big| \\&+ \frac{2\gamma}{1-\gamma}\max_{(s^\prime,a^\prime)\in\mathcal{S}\times \mathcal{A}}\big| \phi(s^\prime,a^\prime)^\top(\theta_{\tilde{\mathcal{M}}}-\theta_{\mathcal{M}}+\gamma(\mu_{\tilde{\mathcal{M}}}-\mu_{{\mathcal{M}}}) V_{\tilde{\mathcal{M}}}^{\pi^*_{\mathcal{M}}})\big|.
\end{aligned}
\]
Finally, applying H\"{o}lder's inequality with the weighted norm $\lVert \cdot \rVert_{\Lambda(\omega)}$ and its dual $\lVert \cdot \rVert_{\Lambda(\omega)^{-1}}$:
\begin{equation*}
\begin{aligned}
&\Big|
\big(\phi(s,a)-\phi(s,\pi_{\mathcal{M}}^*(s))\big)^\top
\Big(
\theta_{\tilde{\mathcal{M}}}-\theta_{\mathcal{M}}
+\gamma(\mu_{\tilde{\mathcal{M}}}-\mu_{\mathcal{M}})
V_{\tilde{\mathcal{M}}}^{\pi^*_{\mathcal{M}}}
\Big)
\Big| \\
&\quad\le
\big\|
\phi(s,a)-\phi(s,\pi_{\mathcal{M}}^*(s))
\big\|_{\Lambda(\omega)^{-1}}
\Big\|
\theta_{\tilde{\mathcal{M}}}-\theta_{\mathcal{M}}
+\gamma(\mu_{\tilde{\mathcal{M}}}-\mu_{\mathcal{M}})
V_{\tilde{\mathcal{M}}}^{\pi^*_{\mathcal{M}}}
\Big\|_{\Lambda(\omega)},
\end{aligned}
\end{equation*}
and 
\[
\begin{aligned}
&\max_{(s^\prime,a^\prime)\in\mathcal{S}\times \mathcal{A}}\big| \phi(s^\prime,a^\prime)^\top(\theta_{\tilde{\mathcal{M}}}-\theta_{\mathcal{M}}+\gamma(\mu_{\tilde{\mathcal{M}}}-\mu_{{\mathcal{M}}}) V_{\tilde{\mathcal{M}}}^{\pi^*_{\mathcal{M}}})\big|
\\&\le  \max_{(s^\prime,a^\prime)\in\mathcal{S}\times \mathcal{A}} \lVert \phi(s^\prime,a^\prime)\rVert_{\Lambda(\omega)^{-1}} \lVert \theta_{\tilde{\mathcal{M}}}-\theta_{\mathcal{M}}+\gamma(\mu_{\tilde{\mathcal{M}}}-\mu_{{\mathcal{M}}}) V_{\tilde{\mathcal{M}}}^{\pi^*_{\mathcal{M}}}\rVert_{\Lambda(\omega)}.
\end{aligned}
\]
Therefore, we conclude that
\[
\begin{aligned}
    &\Delta_{sa} \le \left( \lVert \phi(s,a)-\phi(s,\pi_{\mathcal{M}}^*(s)) \rVert_{\Lambda(\omega)^{-1}}+\frac{2\gamma}{1-\gamma} \max_{(s^\prime,a^\prime)\in\mathcal{S}\times \mathcal{A}}\lVert \phi(s^\prime,a^\prime)\rVert_{\Lambda(\omega)^{-1}}\right) \\&\times
\lVert \theta_{\tilde{\mathcal{M}}}-\theta_{\mathcal{M}}+\gamma(\mu_{\tilde{\mathcal{M}}}-\mu_{{\mathcal{M}}}) V_{\tilde{\mathcal{M}}}^{\pi^*_{\mathcal{M}}}\rVert_{\Lambda(\omega)},
\end{aligned}
\]
where the design matrix is defined as:
\begin{equation*}
   \Lambda(\omega) =  \sum_{s^\prime \in \mathcal{S}, a^\prime \in \mathcal{A}} \omega_{s^\prime a^\prime} \phi(s^\prime,a^\prime) \phi(s^\prime,a^\prime)^\top.
\end{equation*}

\section{Proof of Theorem~\ref{thm: linear rate lower bound opt}}
For any state-action pair $(s,a)\in\mathcal{S}\times \mathcal{A}$, $R_{\mathcal{M}}(s,a)$ is a bounded random variable with support $[0,1]$. Thus, it is sub-Gaussian, and we use the bound 
$
\mathbb{E}[\exp(\theta(R_{\mathcal{M}}(s,a)-r_{\mathcal{M}}(s,a)))] \le \exp(\theta^2/2),   
$
which means 
$
    \log \mathbb{E}[\exp(\theta(R_{\mathcal{M}}(s,a)))] \le \theta r_{\mathcal{M}}(s,a) +\frac{\theta^2}{2}.
$
Since the rate function $I_2(y(s,a))$ of $R_{\mathcal{M}}(s,a)$ is defined as 
\begin{equation}
\label{eq: I2 lb}
\begin{aligned}
    I_2(y(s,a)) &= \sup_{\theta}\left\{\theta y(s,a) - \log\mathbb{E}[\exp(\theta(R_{\mathcal{M}}(s,a)))]\right\}\\
    &\ge \sup_{\theta}\left\{\theta (y(s,a)-r_{\mathcal{M}}(s,a)) - \frac{\theta^2}{2} \right\}\\
    &=\frac{(y(s,a)-r_{\mathcal{M}}(s,a))^2}{2}.
\end{aligned}
\end{equation}

Applying Lemma~\ref{lemma: linear rate} with $v=V_{\tilde{\mathcal{M}}}^{\pi^*_{\mathcal{M}}}$, and combining it with
\eqref{eq: I2 lb}, we obtain
$$
\sum_{\substack{s^\prime \in \mathcal{S}, a^\prime \in \mathcal{A}}}
 \omega_{s^\prime a^\prime}
 \left(
 I_1(x(s^\prime,a^\prime))
 +I_2(y(s^\prime,a^\prime))
 \right)\ge
\frac{(1-\gamma)^2}{6}
\left\lVert
\theta_{\tilde{\mathcal{M}}}-\theta_{\mathcal{M}}
+
\gamma(\mu_{\tilde{\mathcal{M}}}-\mu_{\mathcal{M}})
V_{\tilde{\mathcal{M}}}^{\pi^*_{\mathcal{M}}}
\right\rVert_{\Lambda(\omega)}^2,
$$
where the third inequality follows from $a^2+b^2 \ge \frac{1}{2}(a+b)^2$.

Therefore, the original inner layer's optimization
\begin{equation*}
    \inf_{(x,y) \in \mathcal{E}_{s,a}}
    \sum_{s^\prime \in \mathcal{S}, a^\prime \in \mathcal{A}}
    \omega_{s^\prime a^\prime}
    \left(I_1(x(s^\prime,a^\prime))+I_2(y(s^\prime,a^\prime))\right)
\end{equation*}
is lower bounded by the following optimization problem:
\begin{equation*}
\begin{aligned}
\inf_{\tilde{\mathcal{M}}}\quad
&
\frac{(1-\gamma)^2}{6}
\Big\|
\theta_{\tilde{\mathcal{M}}}-\theta_{\mathcal{M}}
+\gamma(\mu_{\tilde{\mathcal{M}}}-\mu_{\mathcal{M}})
V_{\tilde{\mathcal{M}}}^{\pi^*_{\mathcal{M}}}
\Big\|_{\Lambda(\omega)}^2
\\[0.5em]
\text{s.t.}\quad
\Delta_{sa}
&\le
\Bigg(
\big\|
\phi(s,a)-\phi\!\left(s,\pi_{\mathcal{M}}^*(s)\right)
\big\|_{\Lambda(\omega)^{-1}}
\\
&\qquad\quad
+
\frac{2\gamma}{1-\gamma}
\max_{(s',a')\in\mathcal{S}\times\mathcal{A}}
\big\|
\phi(s',a')
\big\|_{\Lambda(\omega)^{-1}}
\Bigg)
\\
&\qquad\qquad
\times
\Big\|
\theta_{\tilde{\mathcal{M}}}-\theta_{\mathcal{M}}
+\gamma(\mu_{\tilde{\mathcal{M}}}-\mu_{\mathcal{M}})
V_{\tilde{\mathcal{M}}}^{\pi^*_{\mathcal{M}}}
\Big\|_{\Lambda(\omega)} .
\end{aligned}
\end{equation*}

The optimal value of this relaxed problem is
$$
    \frac{(1-\gamma)^2}{6}
    \left(
    \frac{\Delta_{sa}}{
    \big\|
    \phi(s,a)-\phi\!\left(s,\pi_{\mathcal{M}}^*(s)\right)
    \big\|_{\Lambda(\omega)^{-1}}
    +
    \frac{2\gamma}{1-\gamma}
    \max_{(s^\prime,a^\prime)\in\mathcal{S}\times\mathcal{A}}
    \big\|
    \phi(s^\prime,a^\prime)
    \big\|_{\Lambda(\omega)^{-1}}
    }
    \right)^2.
$$
Taking the minimum over all suboptimal state-action pairs and maximizing over
$\omega\in\mathcal W$ gives the lower bound.

%%%%%%%%%%%%%%%%%
\end{document}